\documentclass{sig-alternate}
\usepackage{amsmath}
\usepackage{mathrsfs}
\usepackage[tight,footnotesize]{subfigure}
\usepackage{graphicx}
\usepackage{multirow}
\usepackage{array}
\usepackage{multirow}
\usepackage{mdwmath}
\usepackage{mdwtab}
\usepackage{amsfonts}
\usepackage{algorithm}
\usepackage{algorithmic}
\usepackage{url}
\usepackage{enumerate}

\newtheorem{definition}{Definition}
\newtheorem{theorem}{Theorem}
\newtheorem{lemma}{Lemma}
\newtheorem{corollary}{Corollary}
\newtheorem{remark}{Remark}

\newtheorem{proposition}{Proposition}

\begin{document}

% --- Author Metadata here ---
%\conferenceinfo{KDD'17,}{August 13-17, 2017, Halifax, Nova Scotia, Canada.}
\CopyrightYear{2007} % Allows default copyright year (20XX) to be over-ridden - IF NEED BE.
%\crdata{0-12345-67-8/90/01}  % Allows default copyright data (0-89791-88-6/97/05) to be over-ridden - IF NEED BE.
% --- End of Author Metadata ---

\title{Fast Stochastic Variance Reduced Gradient Method with Momentum Acceleration for Machine Learning}

\numberofauthors{4} %  in this sample file, there are a *total*
% of EIGHT authors. SIX appear on the 'first-page' (for formatting
% reasons) and the remaining two appear in the \additionalauthors section.
%
\author{
Fanhua Shang, Yuanyuan Liu, James Cheng, Jiacheng Zhuo\\
\affaddr{Department of Computer Science and Engineering, The Chinese University of Hong Kong}\\
\email{\{fhshang, yyliu, jcheng, jczhuo4\}@cse.cuhk.edu.hk}\\
}

\maketitle
\begin{abstract}
Recently, research on accelerated stochastic gradient descent methods (e.g., SVRG) has made exciting progress (e.g., linear convergence for strongly convex problems). However, the best-known methods (e.g., Katyusha) requires at least two auxiliary variables and two momentum parameters. In this paper, we propose a fast stochastic variance reduction gradient (FSVRG) method, in which we design a novel update rule with the Nesterov's momentum and incorporate the technique of growing epoch size. FSVRG has only one auxiliary variable and one momentum weight, and thus it is much simpler and has much lower per-iteration complexity. We prove that FSVRG achieves linear convergence for strongly convex problems and the optimal $\mathcal{O}(1/T^2)$ convergence rate for non-strongly convex problems, where $T$ is the number of outer-iterations. We also extend FSVRG to directly solve the problems with non-smooth component functions, such as SVM. Finally, we empirically study the performance of FSVRG for solving various machine learning problems such as logistic regression, ridge regression, Lasso and SVM. Our results show that FSVRG outperforms the state-of-the-art stochastic methods, including Katyusha.
\end{abstract}

\keywords{Stochastic optimization, variance reduction, momentum acceleration, non-strongly convex, non-smooth}

\section{Introduction}
\noindent In this paper, we consider the following finite-sum composite convex optimization problem:
\vspace{-2mm}
\begin{equation}\label{equ1}
\min_{x\in\mathbb{R}^{d}} \phi(x):=f(x)+g(x)=\frac{1}{n}\!\sum\nolimits_{i=1}^{n}\!f_{i}(x)+g(x),
\vspace{-2mm}
\end{equation}
where $f(x)\!:=\!\frac{1}{n}\!\sum^{n}_{i=1}f_{i}(x)$ is a convex function that is a finite average of $n$ convex functions $f_{i}(x)\!:\!\mathbb{R}^{d}\!\rightarrow\!\mathbb{R}$, and $g(x)$ is a ``simple" possibly non-smooth convex function (referred to as a regularizer, e.g.\ $\lambda_{1}\|x\|^{2}$, the $\ell_{1}$-norm regularizer $\lambda_{2}\|x\|_{1}$, and the elastic net regularizer $\lambda_{1}\|x\|^{2}\!+\!\lambda_{2}\|x\|_{1}$, where $\lambda_{1},\lambda_{2}\!\geq\!0$ are the regularization parameters). Such a composite problem~\eqref{equ1} naturally arises in many applications of machine learning and data mining, such as regularized empirical risk minimization (ERM) and eigenvector computation~\cite{shamir:pca,garber:svd}. As summarized in~\cite{zhu:Katyusha,zhu:box}, there are mainly four interesting categories of Problem~\eqref{equ1} as follows:

\vspace{-2mm}
\begin{itemize}
\item Case 1: Each $f_{i}(x)$ is $L$-smooth and $\phi(x)$ is $\mu$-strongly convex ($\mu$-SC). Examples: ridge regression and elastic net regularized logistic regression.
\vspace{-2mm}
\item Case 2: Each $f_{i}(x)$ is $L$-smooth and $\phi(x)$ is non-strongly convex (NSC). Examples: Lasso and $\ell_{1}$-norm regularized logistic regression.
\vspace{-2mm}
\item Case 3: Each $f_{i}(x)$ is non-smooth (but Lipschitz continuous) and $\phi(x)$ is $\mu$-SC. Examples: linear support vector machine (SVM).
\vspace{-2mm}
\item Case 4: Each $f_{i}(x)$ is non-smooth (but Lipschitz continuous) and $\phi(x)$ is NSC. Examples: $\ell_{1}$-norm regularized SVM.
\vspace{-2mm}
\end{itemize}

To solve Problem~\eqref{equ1} with a large sum of $n$ component functions, computing the full (sub)gradient of $f(x)$ (e.g.\ $\nabla\!f(x)\!=\!\frac{1}{n}\!\sum^{n}_{i=1}\!\nabla\!f_{i}(x)$ for the smooth case) in first-order methods is expensive, and hence stochastic (sub)gradient descent (SGD), also known as incremental gradient descent, has been widely used in many large-scale problems~\cite{sutskever:sgd,zhang:sgd}. SGD approximates the gradient from just one example or a mini-batch, and thus it enjoys a low per-iteration computational complexity. Moreover, SGD is extremely simple and highly scalable, making it particularly suitable for large-scale machine learning, e.g., deep learning~\cite{sutskever:sgd}. However, the variance of the stochastic gradient estimator may be large in practice~\cite{johnson:svrg,zhao:sampling}, which leads to slow convergence and poor performance. Even for Case 1, standard SGD can only achieve a sub-linear convergence rate~\cite{rakhlin:sgd,shamir:sgd}.

Recently, the convergence speed of standard SGD has been dramatically improved with various variance reduced methods, such as SAG~\cite{roux:sag}, SDCA~\cite{shalev-Shwartz:sdca}, SVRG~\cite{johnson:svrg}, SAGA~\cite{defazio:saga}, and their proximal variants, such as~\cite{schmidt:sag}, \cite{shalev-Shwartz:acc-sdca}, \cite{xiao:prox-svrg} and \cite{koneeny:mini}. Indeed, many of those stochastic methods use past full gradients to progressively reduce the variance of stochastic gradient estimators, which leads to a revolution in the area of first-order methods. Thus, they are also called the semi-stochastic gradient descent method~\cite{koneeny:mini} or hybrid gradient descent method~\cite{zhang:svrg}. In particular, these recent methods converge linearly for Case 1, and their overall complexity (total number of component gradient evaluations to find an $\epsilon$-accurate solution) is $\mathcal{O}\!\left((n\!+\!{L}/{\mu})\log({1}/{\epsilon})\right)$, where $L$ is the Lipschitz constant of the gradients of $f_{i}(\cdot)$, and $\mu$ is the strong convexity constant of $\phi(\cdot)$. The complexity bound shows that those stochastic methods always converge faster than accelerated deterministic methods (e.g.\ FISTA~\cite{beck:fista})~\cite{koneeny:mini}. Moreover, \cite{zhu:vrnc} and \cite{reddi:svrnc} proved that SVRG with minor modifications can converge asymptotically to a stationary point in the non-convex case. However, there is still a gap between the overall complexity and the theoretical bound provided in~\cite{woodworth:bound}. For Case 2, they converge much slower than accelerated deterministic algorithms, i.e., $\mathcal{O}(1/T)$ vs.\ $\mathcal{O}(1/T^2)$.

More recently, some accelerated stochastic methods were proposed. Among them, the successful techniques mainly include the Nesterov's acceleration technique~\cite{lan:rpdg,lin:vrsg,nitanda:svrg}, the choice of growing epoch length~\cite{mahdavi:sgd,zhu:univr}, and the momentum acceleration trick~\cite{zhu:Katyusha,hien:asmd}. \cite{lin:vrsg} presents an accelerating Catalyst framework and achieves a complexity of $\mathcal{O}((n\!+\!\!\sqrt{n{L}/{\mu}})\log({L}/{\mu})\log({1}/{\epsilon}))$ for Case 1. However, adding a dummy regularizer hurts the performance of the algorithm both in theory and in practice~\cite{zhu:univr}. The methods~\cite{zhu:Katyusha,hien:asmd} attain the best-known complexity of $\mathcal{O}(n\sqrt{1/\epsilon}+\!\sqrt{nL/\epsilon})$ for Case 2. Unfortunately, they require at least two auxiliary variables and two momentum parameters, which lead to complicated algorithm design and high per-iteration complexity.

\textbf{Contributions:} To address the aforementioned weaknesses of existing methods, we propose a fast stochastic variance reduced gradient (FSVRG) method, in which we design a novel update rule with the Nesterov's momentum~\cite{nesterov:fast}. The key update rule has only one auxiliary variable and one momentum weight. Thus, FSVRG is much simpler and more efficient than~\cite{zhu:Katyusha,hien:asmd}. FSVRG is a direct accelerated method without using any dummy regularizer, and also works for non-smooth and proximal settings. Unlike most variance reduced methods such as SVRG, which only have convergence guarantee for Case 1, FSVRG has convergence guarantees for both Cases 1 and 2. In particular, FSVRG uses a flexible growing epoch size strategy as in~\cite{mahdavi:sgd} to speed up its convergence. Impressively, FSVRG converges much faster than the state-of-the-art stochastic methods. We summarize our main contributions as follows.

\vspace{-2mm}
\begin{itemize}
\item We design a new momentum accelerating update rule, and present two selecting schemes of momentum weights for Cases 1 and 2, respectively.
\vspace{-2mm}
\item We prove that FSVRG attains linear convergence for Case 1, and achieves the convergence rate of $\mathcal{O}(1/T^2)$ and a complexity of $\mathcal{O}(n\sqrt{1/\epsilon}\!+\!\sqrt{nL/\epsilon})$ for Case 2, which is the same as the best known result in~\cite{zhu:Katyusha}.
\vspace{-2mm}
\item Finally, we also extend FSVRG to mini-batch settings and non-smooth settings (i.e., Cases 3 and 4), and provide an empirical study on the performance of FSVRG for solving various machine learning problems.
\end{itemize}

\section{Preliminaries}
Throughout this paper, the norm $\|\!\cdot\!\|$ is the standard Euclidean norm, and $\|\!\cdot\!\|_{1}$ is the $\ell_{1}$-norm, i.e., $\|x\|_{1}\!=\!\sum_{i}\!|x_{i}|$. We denote by $\nabla\!f(x)$ the full gradient of $f(x)$ if it is differentiable, or $\partial\!f(x)$ a sub-gradient of $f(x)$ if $f(x)$ is only Lipschitz continuous. We mostly focus on the case of Problem~\eqref{equ1} when each $f_{i}(x)$ is $L$-smooth\footnote{In fact, we can extend all our theoretical results below for this case (i.e., when the gradients of all component functions have the same Lipschitz constant $L$) to the more general case, when some $f_{i}(x)$ have different degrees of smoothness.}. For non-smooth component functions, we can use the proximal operator oracle~\cite{zhu:box} or the Nesterov's smoothing~\cite{nesterov:smooth} and homotopy smoothing~\cite{xu:hs} techniques to smoothen them, and then obtain the smoothed approximations of all functions $f_{i}(\cdot)$.

When the regularizer $g(\cdot)$ is non-smooth (e.g., $\!g(\cdot)\!=\!\lambda\|\cdot\|_{1}\!$), the update rule of general SGD is formulated as follows:
\vspace{-1mm}
\begin{equation}\label{equ2}
x_{k}=\mathop{\arg\min}_{y\in\mathbb{R}^{d}}\, g(y)\!+\!y^{T}\nabla\! f_{i_{k}}\!(x_{k-\!1})\!+\!({1}/{2\eta_{k}})\!\cdot\!\|y\!-\!x_{k-\!1}\|^2,
\vspace{-2mm}
\end{equation}
where $\eta_{k}\!\propto\!1/k$ is the step size (or learning rate), and $i_{k}$ is chosen uniformly at random from $\{1,\ldots,n\}$. When $g(x)\!\equiv\!0$, the update rule in~\eqref{equ2} becomes $x_{k}\!=\!x_{k-\!1}\!-\!\eta_{k}\nabla\!f_{i_{k}}\!(x_{k-\!1})$. If each $f_{i}(\cdot)$ is non-smooth (e.g., the hinge loss), we need to replace $\nabla\! f_{i_{k}}\!(x_{k-\!1})$ in~\eqref{equ2} with $\partial\!f_{i_{k}}\!(x_{k-\!1})$.

As the representative methods of stochastic variance reduced optimization, SVRG~\cite{johnson:svrg} and its proximal variant, Prox-SVRG~\cite{xiao:prox-svrg}, are particularly attractive because of their low storage requirement compared with~\cite{roux:sag,shalev-Shwartz:sdca,defazio:saga,shalev-Shwartz:acc-sdca}, which need to store all the gradients of the $n$ component functions $f_{i}(\cdot)$ (or dual variables), so that $O(nd)$ storage is required in general problems. At the beginning of each epoch of SVRG, the full gradient $\nabla\! f(\widetilde{x})$ is computed at the snapshot point $\widetilde{x}$. With a constant step size $\eta$, the update rules for the special case of Problem~\eqref{equ1} (i.e., $g(x)\!\equiv\!0$) are given by
\vspace{-1mm}
\begin{equation}\label{equ3}
\begin{split}
\widetilde{\nabla}\! f_{i_{k}}\!(x_{k-1})&=\nabla\! f_{i_{k}}\!(x_{k-1})-\nabla\! f_{i_{k}}\!(\widetilde{x})+\nabla\! f(\widetilde{x}),\\
x_{k}&=x_{k-1}-\eta\widetilde{\nabla}\! f_{i_{k}}\!(x_{k-1}).
\end{split}
\vspace{-2mm}
\end{equation}
\cite{zhu:univr} proposed an accelerated SVRG method, SVRG++~, with doubling-epoch techniques. Moreover, Katyusha~\cite{zhu:Katyusha} is a direct accelerated stochastic variance reduction method, and its main update rules are formulated as follows:
\vspace{-1mm}
\begin{equation}\label{equ4}
\begin{split}
&x_{k}=\theta_{1}y_{k-\!1}+\theta_{2}\widetilde{x}+(1-\theta_{1}-\theta_{2})z_{k-\!1},\\
&y_{k}=\mathop{\arg\min}_{y\in\mathbb{R}^{d}}\, g(y)\!+\!y^{T}\widetilde{\nabla}\! f_{i_{k}}\!(x_{k})\!+\!({1}/{2\eta})\!\cdot\!\|y\!-\!y_{k-\!1}\|^2,\\
\vspace{-2mm}
&z_{k}=\mathop{\arg\min}_{z\in\mathbb{R}^{d}}\, g(z)\!+\!z^{T}\widetilde{\nabla}\! f_{i_{k}}\!(x_{k})\!+\!({3L}/{2})\!\cdot\!\|z\!-\!x_{k}\|^2,
\end{split}
\vspace{-2mm}
\end{equation}
where $\theta_{1},\theta_{2}\!\in\![0,1]$ are two parameters, and $\theta_{2}$ is fixed to $0.5$ in~\cite{zhu:Katyusha} to eliminate the need for parameter tuning.

\section{Fast SVRG with Momentum Acceleration}
In this paper, we propose a fast stochastic variance reduction gradient (FSVRG) method with momentum acceleration for Cases 1 and 2 (e.g., logistic regression) and Cases 3 and 4 (e.g., SVM). The acceleration techniques of the classical Nesterov's momentum and the Katyusha momentum in~\cite{zhu:Katyusha} are incorporated explicitly into the well-known SVRG method~\cite{johnson:svrg}. Moreover, FSVRG also uses a growing epoch size strategy as in~\cite{mahdavi:sgd} to speed up its convergence.

\subsection{Smooth Component Functions}
In this part, we consider the case of Problem (\ref{equ1}) when each $f_{i}(\cdot)$ is smooth, and $\phi(\cdot)$ is SC or NSC (i.e., Case 1 or 2). Similar to existing stochastic variance reduced methods such as SVRG~\cite{zhu:Katyusha} and Prox-SVRG~\cite{xiao:prox-svrg}, we design a simple fast stochastic variance reduction algorithm with momentum acceleration for solving smooth objective functions, as outlined in Algorithm~\ref{alg1}. It is clear that Algorithm~\ref{alg1} is divided into $S$ epochs (which is the same as most variance reduced methods, e.g., SVRG and Katyusha), and each epoch consists of $m_{s}$ stochastic updates, where $m_{s}$ is set to $m_{s}\!=\!\rho^{s-\!1}\!\cdot m_{1}$ as in~\cite{mahdavi:sgd}, where $m_{1}$ is a given initial value, and $\rho\!>\!1$ is a constant. Within each epoch, a full gradient $\nabla\! f(\widetilde{x}^{s})$ is calculated at the snapshot point $\widetilde{x}^{s}$. Note that we choose $\widetilde{x}^{s}$ to be the average of the past $m_{s}$ stochastic iterates rather than the last iterate because it has been reported to work better in practice~\cite{xiao:prox-svrg,zhu:univr,zhu:Katyusha}. Although our convergence guarantee for the SC case depends on the initialization of $x^{s}_{0}\!=\!y^{s}_{0}\!=\!\widetilde{x}^{s-\!1}$, the choices of $x^{s+\!1}_{0}\!=\!x^{s}_{m_{s}}$ and $y^{s+\!1}_{0}\!=\!y^{s}_{m_{s}}$ also work well in practice, especially for the case when the regularization parameter is relatively small (e.g., $10^{-7}$), as suggested in~\cite{shang:vrsgd}.

\begin{algorithm}[t]
\caption{FSVRG for smooth component functions}
\label{alg1}
\renewcommand{\algorithmicrequire}{\textbf{Input:}}
\renewcommand{\algorithmicensure}{\textbf{Initialize:}}
\renewcommand{\algorithmicoutput}{\textbf{Output:}}
\begin{algorithmic}[1]
\REQUIRE the number of epochs $S$ and the step size $\eta$.\\
\ENSURE $\widetilde{x}^{0}$\!, $m_{1}$, $\theta_{1}$, and $\rho>1$.\\
\FOR{$s=1,2,\ldots,S$}
\STATE {$\widetilde{\mu}=\frac{1}{n}\!\sum^{n}_{i=1}\!\nabla\!f_{i}(\widetilde{x}^{s-\!1})$, $x^{s}_{0}=y^{s}_{0}=\widetilde{x}^{s-\!1}$;}
\FOR{$k=1,2,\ldots,m_{s}$}
\STATE {Pick $i^{s}_{k}$ uniformly at random from $\{1,\ldots,n\}$;}
\STATE {$\widetilde{\nabla} f_{i^{s}_{k}}(x^{s}_{k-\!1})=\nabla f_{i^{s}_{k}}(x^{s}_{k-\!1})-\nabla f_{i^{s}_{k}}(\widetilde{x}^{s-\!1})+\widetilde{\mu}$;}
\STATE {$y^{s}_{k}=y^{s}_{k-\!1}-\eta\;\![\widetilde{\nabla}f_{i^{s}_{k}}(x^{s}_{k-\!1})+\nabla g(x^{s}_{k-\!1})]$;}
\STATE {$x^{s}_{k}=\widetilde{x}^{s-1}+\theta_{s}(y^{s}_{k}-\widetilde{x}^{s-1})$;}
\ENDFOR
\STATE {$\widetilde{x}^{s}=\frac{1}{m_{s}}\!\sum^{m_{s}}_{k=1}\!x^{s}_{k}$, $\,m_{s+1}=\lceil\rho^{s}\!\cdot m_{1}\rceil$;}
\ENDFOR
\OUTPUT {$\widetilde{x}^{S}$}
\end{algorithmic}
\end{algorithm}

\subsubsection{Momentum Acceleration}
When the regularizer $g(\cdot)$ is smooth, e.g., the $\ell_{2}$-norm regularizer, the update rule of the auxiliary variable $y$ is
\begin{equation}\label{equ5}
y^{s}_{k}=y^{s}_{k-\!1}-\eta[\widetilde{\nabla}f_{i^{s}_{k}}(x^{s}_{k-\!1})+\nabla g(x^{s}_{k-\!1})].
\end{equation}
When $g(\cdot)$ is non-smooth, e.g., the $\ell_{1}$-norm regularizer, the update rule of $y$ is given as follows:
\vspace{-1mm}
\begin{equation}\label{equ6}
y^{s}_{k}=\textup{prox}_{\,\eta,g}\!\left(y^{s}_{k-\!1}-\eta\widetilde{\nabla}\!f_{i^{s}_{k}}(x^{s}_{k-\!1})\right)\!,
\vspace{-1mm}
\end{equation}
and the proximal operator $\textup{prox}_{\,\eta,g}(\cdot)$ is defined as
\vspace{-1mm}
\begin{equation}\label{equ7}
\textup{prox}_{\,\eta,g}(y):=\mathop{\arg\min}_{x}({1}/{2\eta})\!\cdot\!\|x-y\|^{2}+g(x).
\vspace{-1mm}
\end{equation}
That is, we only need to replace the update rule (\ref{equ5}) in Algorithm~\ref{alg1} with (\ref{equ7}) for the case of non-smooth regularizers.

Inspired by the momentum acceleration trick for accelerating first-order optimization methods~\cite{nesterov:fast,nitanda:svrg,zhu:Katyusha}, we design the following update rule for $x$:
\vspace{-1mm}
\begin{equation}\label{equ8}
x^{s}_{k}=\widetilde{x}^{s-1}+\theta_{s}(y^{s}_{k}-\widetilde{x}^{s-1}),
\end{equation}
where $\theta_{s}\!\in\![0,1]$ is the weight for the key momentum term. The first term of the right-hand side of (\ref{equ8}) is the snapshot point of the last epoch (also called as the Katyusha momentum in~\cite{zhu:Katyusha}), and the second term plays a key role as the Nesterov's momentum in deterministic optimization.

When $\theta_{s}\!\equiv\!1$ and $\rho\!=\!2$, Algorithm~\ref{alg1} degenerates to the accelerated SVRG method, SVRG++~\cite{zhu:univr}. In other words, SVRG++ can be viewed as a special case of our FSVRG method. As shown above, FSVRG only has one additional variable $y$, while existing accelerated stochastic variance reduction methods, e.g., Katyusha~\cite{zhu:Katyusha}, require two additional variables $y$ and $z$, as shown in (\ref{equ4}). In addition, FSVRG only has one momentum weight $\theta_{s}$, compared with the two weights $\theta_{1}$ and $\theta_{2}$ in Katyusha~\cite{zhu:Katyusha}. Therefore, FSVRG is much simpler than existing accelerated methods~\cite{zhu:Katyusha,hien:asmd}.

\subsubsection{Momentum Weight}
For the case of SC objectives, we give a selecting scheme for the momentum weight $\theta_{s}$. As shown in Theorem~\ref{theo1} below, it is desirable to have a small convergence factor $\alpha$, implying fast convergence. The following proposition obtains the optimal $\theta_{\star}$, which can yield the smallest $\alpha$ value.

\begin{proposition}
Given the appropriate learning rate $\eta$, the optimal weight $\theta_{\star}$ is given by
\vspace{-2mm}
\begin{equation}\label{equ10}
\theta_{\star}=\mu\eta m_{s}/2.
\end{equation}
\end{proposition}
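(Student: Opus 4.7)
The plan is to read off the convergence factor $\alpha(\theta_s)$ from Theorem~1 and minimize it over the admissible range of $\theta_s$. In SVRG-style analyses with a single momentum coupling as in~(\ref{equ8}), after bounding the variance of $\widetilde{\nabla}f_{i^{s}_{k}}(x^{s}_{k-1})$ in terms of $\phi(\widetilde{x}^{s-1})-\phi(x^{\star})$, using $\mu$-strong convexity of $\phi$, and telescoping across the inner loop (combined with the averaging step $\widetilde{x}^{s}=\frac{1}{m_{s}}\sum_{k}x^{s}_{k}$), one expects the per-epoch contraction obtained in Theorem~1 to reduce to the schematic form
\begin{equation*}
\alpha(\theta_{s}) \;=\; (1-\theta_{s}) \;+\; \frac{\theta_{s}^{\,2}}{\mu\eta\, m_{s}},
\end{equation*}
where the "appropriate learning rate" hypothesis ($\eta$ small enough relative to $1/L$) is precisely what is needed to absorb the $L\eta$-dependent error terms into this clean expression. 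With $\mu$, $\eta$, and $m_{s}$ treated as fixed positive constants, $\alpha$ is a strictly convex quadratic in $\theta_{s}$.

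Having reduced the problem to a one-dimensional convex minimization, I would then compute $\alpha'(\theta_{s})=-1+2\theta_{s}/(\mu\eta m_{s})$, set it to zero, and read off the unique stationary point $\theta_{\star}=\mu\eta m_{s}/2$. The second derivative $2/(\mu\eta m_{s})>0$ confirms that this is a global minimum, and substituting back gives $\alpha(\theta_{\star})=1-\mu\eta m_{s}/4<1$, the kind of geometric contraction required for the linear-convergence claim of Theorem~\ref{theo1}.

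The main obstacle is verifying that the proposed optimum lies in the admissible interval $[0,1]$ dictated by the convex-combination form of the update~(\ref{equ8}). The condition $\theta_{\star}\in(0,1]$ is equivalent to $\mu\eta m_{s}\leq 2$, which is exactly the role of the \emph{appropriate learning rate} proviso in the statement: because Theorem~1 already imposes $\eta=\mathcal{O}(1/L)$ and the initial epoch length $m_{1}$ is tuned accordingly, one simply folds in the additional mild requirement $\eta\leq 2/(\mu m_{s})$. Should $\mu\eta m_{s}>2$, the interior stationary point would be inadmissible and the constrained minimum would shift to the boundary $\theta_{s}=1$; but this regime is ruled out by the step-size hypothesis of Theorem~1, so the interior optimum $\theta_{\star}=\mu\eta m_{s}/2$ is the relevant answer and the proposition follows.
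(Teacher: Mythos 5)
Your proposal is correct and follows essentially the same route as the paper: both read off the per-epoch contraction factor $\alpha(\theta)=1-\theta+\theta^{2}/(\mu\eta m_{s})$ from Theorem~\ref{theo1} and minimize this convex quadratic in $\theta$ to obtain $\theta_{\star}=\mu\eta m_{s}/2$. Your additional checks (second derivative, admissibility of $\theta_{\star}$ in $[0,1]$, and the boundary case $\mu\eta m_{s}>2$) are sound refinements of the paper's two-line calculus argument but do not change the approach.
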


\begin{proof}
Using Theorem 1 below, we have
\begin{equation*}
\alpha(\theta)=1-\theta+{\theta^{2}}/({\mu\eta m_{s}}).
\end{equation*}
To minimize $\alpha(\theta)$ with given $\eta$, we have $\theta_{\star}\!=\!\mu\eta m_{s}/2$.
\end{proof}
\vspace{-2mm}

In fact, we can fix $\theta_{s}$ to a constant for the case of SC objectives, e.g., $\theta_{s}\!\equiv\!0.9$ as in accelerated SGD~\cite{ruder:sgd}, which works well in practice. Indeed, larger values of $\theta_{s}$ can result in better performance for the case when the regularization parameter is relatively large (e.g., $10^{-4}$).

Unlike the SC case, we initialize $y^{s+\!1}_{0}\!=\!y^{s}_{m_{s}}$ in each epoch for the case of NSC objectives. And the update rule of $\theta_{s}$ is defined as follows: $\theta_{1}\!=\!1\!-\!{L\eta}/({1\!-\!L\eta})$, and for any $s\!>\!1$,
\vspace{-1mm}
\begin{equation}\label{equ11}
\theta_{s}=(\sqrt{\theta^{4}_{s-\!1}+4\theta^{2}_{s-\!1}}-\theta^{2}_{s-\!1})/{2}.
\end{equation}
The above rule is the same as that in some accelerated optimization methods~\cite{nesterov:co,su:nag,liu:sadmm}.

\subsubsection{Complexity Analysis}
The per-iteration cost of FSVRG is dominated by the computation of $\nabla\! f_{i^{s}_{k}}\!(x^{s}_{k-\!1})$, $\nabla\! f_{i^{s}_{k}}\!(\widetilde{x}^{s})$, and $\nabla\!g(x^{s}_{k-\!1})$ or the proximal update~\eqref{equ6}, which is as low as that of SVRG~\cite{johnson:svrg} and SVRG++~\cite{zhu:univr}. For some ERM problems, we can save the intermediate gradients $\nabla\! f_{i}(\widetilde{x}^{s})$ in the computation of $\widetilde{\mu}$, which requires $O(n)$ additional storage in general. In addition, FSVRG has a much lower per-iteration complexity than other accelerated methods such as Katyusha~\cite{zhu:Katyusha}, which have at least one more variable, as analyzed above.

\subsection{Non-Smooth Component Functions}
In this part, we consider the case of Problem (\ref{equ1}) when each $f_{i}(\cdot)$ is non-smooth (e.g., hinge loss and other loss functions listed in~\cite{yang:ssgd}), and $\phi(\cdot)$ is SC or NSC (i.e.\ Case 3 or 4). As stated in Section 2, the two classes of problems can be transformed into the smooth ones as in~\cite{nesterov:smooth,zhu:box,xu:hs}, which can be efficiently solved by Algorithm~\ref{alg1}. However, the smoothing techniques may degrade the performance of the involved algorithms, similar to the case of the reduction from NSC problems to SC problems~\cite{zhu:box}. Thus, we extend Algorithm~\ref{alg1} to the non-smooth setting, and propose a fast stochastic variance reduced sub-gradient algorithm (i.e., Algorithm~\ref{alg2}) to solve such problems directly, as well as the case of Algorithm~\ref{alg1} to directly solve the NSC problems in Case 2.

For each outer iteration $s$ and inner iteration $k$, we denote by $\widetilde{\partial}f_{i^{s}_{k}}\!(x^{s}_{k-\!1})$ the stochastic sub-gradient $\partial f_{i^{s}_{k}}\!(x^{s}_{k-\!1})\!-\!\partial f_{i^{s}_{k}}\!(\widetilde{x}^{s-\!1})\!+\!\widetilde{\xi}$, where $\widetilde{\xi}\!=\!\!\frac{1}{n}\!\!\sum^{n}_{i=1}\!\!\partial f_{i}(\widetilde{x}^{s-\!1})$, and $\partial f_{i}(\widetilde{x}^{s-\!1})$ denotes a sub-gradient of $f_{i}(\cdot)$ at $\widetilde{x}^{s-\!1}$. When the regularizer $g(\cdot)$ is smooth, the update rule of $y$ is given by
\vspace{-1mm}
\begin{equation}\label{equ12}
y^{s}_{k}=\Pi_{\mathcal{K}}\!\!\left[y^{s}_{k-\!1}-\eta\left(\widetilde{\partial}f_{i^{s}_{k}}\!(x^{s}_{k-\!1})+\nabla g(x^{s}_{k-\!1})\right)\right]\!,
\end{equation}
where $\Pi_{\mathcal{K}}$ denotes the orthogonal projection on the convex domain $\mathcal{K}$. Following the acceleration techniques for stochastic sub-gradient methods~\cite{rakhlin:sgd,julien:ssg,shamir:sgd}, a general weighted averaging scheme is formulated as follows:
\vspace{-2mm}
\begin{equation}\label{equ13}
\widetilde{x}^{s}=\frac{1}{\sum_{k}\! w_{k}}\!\sum^{m_{s}}_{k=1}w_{k}x^{s}_{k},
\vspace{-1mm}
\end{equation}
where $w_{k}$ is the given weight, e.g., $w_{k}\!=\!1/m_{s}$.

\begin{algorithm}[t]
\caption{FSVRG for non-smooth component functions}
\label{alg2}
\renewcommand{\algorithmicrequire}{\textbf{Input:}}
\renewcommand{\algorithmicensure}{\textbf{Initialize:}}
\renewcommand{\algorithmicoutput}{\textbf{Output:}}
\begin{algorithmic}[1]
\REQUIRE the number of epochs $S$ and the step size $\eta$.\\
\ENSURE $\widetilde{x}^{0}$\!, $m_{1}$, $\theta_{1}$, $\rho>1$, and $w_{1},\ldots,w_{m}$.\\
\FOR{$s=1,2,\ldots,S$}
\STATE {$\widetilde{\xi}=\frac{1}{n}\!\sum^{n}_{i=1}\!\partial f_{i}(\widetilde{x}^{s-\!1})$, $x^{s}_{0}=y^{s}_{0}=\widetilde{x}^{s-\!1}$;}
\FOR{$k=1,2,\ldots,m_{s}$}
\STATE {Pick $i^{s}_{k}$ uniformly at random from $\{1,\ldots,n\}$;}
\STATE {$\widetilde{\partial}f_{i^{s}_{k}}(x^{s}_{k-\!1})=\partial f_{i^{s}_{k}}(x^{s}_{k-\!1})-\partial f_{i^{s}_{k}}(\widetilde{x}^{s-\!1})+\widetilde{\xi}$;}
\STATE {$y^{s}_{k}=\Pi_{\mathcal{K}}\!\!\left[y^{s}_{k-\!1}-\eta\;\!(\widetilde{\partial}f_{i^{s}_{k}}\!(x^{s}_{k-\!1})+\nabla g(x^{s}_{k-\!1}))\right]$;}
\STATE {$x^{s}_{k}=\widetilde{x}^{s-1}+\theta_{s}(y^{s}_{k}-\widetilde{x}^{s-1})$;}
\ENDFOR
\STATE {$\widetilde{x}^{s}\!=\!\frac{1}{\sum_{k}\! w_{k}}\!\sum^{m_{s}}_{k=1}\!w_{k} x^{s}_{k}$, $\,m_{s+1}=\lceil\rho^{s}\!\cdot m_{1}\rceil$;}
\ENDFOR
\OUTPUT {$\widetilde{x}^{S}$}
\end{algorithmic}
\end{algorithm}

\section{Convergence Analysis}
In this section, we provide the convergence analysis of FSVRG for solving the two classes of problems in Cases 1 and 2. Before giving a key intermediate result, we first introduce the following two definitions.

\begin{definition}[Smoothness]\label{assum1}
A function $h(\cdot):\mathbb{R}^{d}\!\rightarrow\!\mathbb{R}$ is $L$-smooth if its gradient is $L$-Lipschitz, that is, $\|\nabla h(x)-\nabla h(y)\|\leq L\|x-y\|$ for all $x,y\in\mathbb{R}^{d}$.
\end{definition}

\begin{definition}[Strong Convexity]\label{assum2}
A function $\phi(\cdot):\mathbb{R}^{d}\!\rightarrow\!\mathbb{R}$ is $\mu$-strongly convex ($\mu$-SC), if there exists a constant $\mu\!>\!0$ such that for any $x,y\!\in\!\mathbb{R}^{d}$,
\begin{equation}\label{equ30}
\phi(y)\geq \phi(x)+\nabla\phi(x)^{T}(y-x)+\frac{\mu}{2}\|y-x\|^{2}.
\vspace{-2mm}
\end{equation}
\end{definition}
If $\phi(\cdot)$ is non-smooth, we can revise the inequality~\eqref{equ30} by simply replacing $\nabla\phi(x)$ with an arbitrary sub-gradient $\partial\phi(x)$.

\begin{lemma}\label{lemm1}
Suppose each component function $f_{i}(\cdot)$ is $L$-smooth. Let $x_{\star}$ be the optimal solution of Problem~\eqref{equ1}, and $\{\widetilde{x}^{s},y^{s}_{k}\}$ be the sequence generated by Algorithm~\ref{alg1}. Then the following inequality holds for all $s\!=\!1,\ldots,S$:
\begin{equation}\label{equ31}
\begin{split}
&\!\!\!\mathbb{E}\!\left[\phi(\widetilde{x}^{s})\!-\!\phi(x_{\star})\right]\leq(1\!-\!\theta_{s})\mathbb{E}\!\left[\phi(\widetilde{x}^{s-\!1})-\phi(x_{\star})\right]\\
&\quad\qquad\quad\quad\quad\;\;\;\;+\!\frac{\theta^{2}_{s}}{2\eta m_{s}}\mathbb{E}\!\left[\|y^{s}_{0}\!-\!x_{\star}\|^2\!-\!\|y^{s}_{m_{s}}\!\!-\!x_{\star}\|^2\right]\!.
\end{split}
\end{equation}
\end{lemma}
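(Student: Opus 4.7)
The plan is to derive a per-inner-iteration recursion
$\mathbb{E}[\phi(x^{s}_{k})-\phi(x_{\star})] \leq (1-\theta_{s})\mathbb{E}[\phi(\widetilde{x}^{s-1})-\phi(x_{\star})] + \tfrac{\theta_{s}^{2}}{2\eta}\mathbb{E}[\|y^{s}_{k-1}-x_{\star}\|^{2}-\|y^{s}_{k}-x_{\star}\|^{2}]$,
then sum over $k=1,\ldots,m_{s}$ so that the distance terms telescope, divide by $m_{s}$, and apply Jensen's inequality through the averaging $\widetilde{x}^{s}=\tfrac{1}{m_{s}}\sum_{k}x^{s}_{k}$. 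Since $y^{s}_{0}=\widetilde{x}^{s-1}$, the telescoped sum furnishes exactly the $\tfrac{\theta_{s}^{2}}{2\eta m_{s}}\mathbb{E}[\|y^{s}_{0}-x_{\star}\|^{2}-\|y^{s}_{m_{s}}-x_{\star}\|^{2}]$ term stated in the lemma.

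To obtain the per-iteration bound, I first invoke $L$-smoothness of $\phi$ along the one-step displacement $x^{s}_{k}-x^{s}_{k-1}=\theta_{s}(y^{s}_{k}-y^{s}_{k-1})$ implied by line 7, giving $\phi(x^{s}_{k})\leq\phi(x^{s}_{k-1})+\langle\nabla\phi(x^{s}_{k-1}),x^{s}_{k}-x^{s}_{k-1}\rangle+\tfrac{L\theta_{s}^{2}}{2}\|y^{s}_{k}-y^{s}_{k-1}\|^{2}$. Next I linearize $\phi$ at $x^{s}_{k-1}$ toward the convex combination $(1-\theta_{s})\widetilde{x}^{s-1}+\theta_{s}x_{\star}$; the algebraic identity $x^{s}_{k-1}-(1-\theta_{s})\widetilde{x}^{s-1}-\theta_{s}x_{\star}=\theta_{s}(y^{s}_{k-1}-x_{\star})$, immediate from $x^{s}_{k-1}=(1-\theta_{s})\widetilde{x}^{s-1}+\theta_{s}y^{s}_{k-1}$, turns this into $\phi(x^{s}_{k-1})-\phi(x_{\star})\leq(1-\theta_{s})[\phi(\widetilde{x}^{s-1})-\phi(x_{\star})]+\theta_{s}\langle\nabla\phi(x^{s}_{k-1}),y^{s}_{k-1}-x_{\star}\rangle$. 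Summing the two inequalities, the residual is $\theta_{s}\langle\nabla\phi(x^{s}_{k-1}),y^{s}_{k}-x_{\star}\rangle+\tfrac{L\theta_{s}^{2}}{2}\|y^{s}_{k}-y^{s}_{k-1}\|^{2}$.

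For the inner product, set $v^{s}_{k}:=\widetilde{\nabla}f_{i^{s}_{k}}(x^{s}_{k-1})+\nabla g(x^{s}_{k-1})$, for which $\mathbb{E}[v^{s}_{k}\mid\mathcal{F}^{s}_{k-1}]=\nabla\phi(x^{s}_{k-1})$. Split $\langle\nabla\phi(x^{s}_{k-1}),y^{s}_{k}-x_{\star}\rangle=\langle v^{s}_{k},y^{s}_{k}-x_{\star}\rangle+\langle\nabla\phi(x^{s}_{k-1})-v^{s}_{k},y^{s}_{k}-x_{\star}\rangle$. The first summand is handled by the cosine-law expansion for the explicit step $y^{s}_{k}=y^{s}_{k-1}-\eta v^{s}_{k}$, which yields $\tfrac{1}{2\eta}(\|y^{s}_{k-1}-x_{\star}\|^{2}-\|y^{s}_{k}-x_{\star}\|^{2}-\|y^{s}_{k}-y^{s}_{k-1}\|^{2})$ and supplies the telescoping distances. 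For the second summand, substitute $y^{s}_{k}-x_{\star}=(y^{s}_{k-1}-x_{\star})-\eta v^{s}_{k}$; conditioning on $\mathcal{F}^{s}_{k-1}$ annihilates the $(y^{s}_{k-1}-x_{\star})$-piece by unbiasedness and leaves the variance $\eta\theta_{s}\mathbb{E}\|v^{s}_{k}-\nabla\phi(x^{s}_{k-1})\|^{2}$, which is controlled by the standard SVRG bound $\mathbb{E}\|\widetilde{\nabla}f_{i^{s}_{k}}(x^{s}_{k-1})-\nabla f(x^{s}_{k-1})\|^{2}\leq 2L[f(\widetilde{x}^{s-1})-f(x^{s}_{k-1})-\langle\nabla f(x^{s}_{k-1}),\widetilde{x}^{s-1}-x^{s}_{k-1}\rangle]$.

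The main obstacle is the balancing step: the positive $f(\widetilde{x}^{s-1})-f(x^{s}_{k-1})$ contribution from the variance bound, together with the $\tfrac{L\theta_{s}^{2}}{2}\|y^{s}_{k}-y^{s}_{k-1}\|^{2}$ residual from smoothness and the already-extracted $-\tfrac{\theta_{s}}{2\eta}\|y^{s}_{k}-y^{s}_{k-1}\|^{2}$ from the cosine law, must be absorbed into the $(1-\theta_{s})[\phi(\widetilde{x}^{s-1})-\phi(x_{\star})]$ progress without inflating its coefficient. Verifying non-positivity of the leftover quadratic in $\|y^{s}_{k}-y^{s}_{k-1}\|$ pins down the admissible learning rate (essentially $\eta\leq c/L$ for an absolute constant $c$), and the dominance of the variance-bound linear term by $\theta_{s}[\phi(\widetilde{x}^{s-1})-\phi(x_{\star})]$ uses convexity of each $f_{i}$ and the initialization $x^{s}_{0}=y^{s}_{0}=\widetilde{x}^{s-1}$ in an essential way. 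Once this balancing is in place, telescoping in $k$ and Jensen's inequality close the argument exactly as outlined.
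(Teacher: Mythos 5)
Your overall skeleton (a per-inner-iteration recursion, telescoping in $k$, then Jensen through $\widetilde{x}^{s}=\frac{1}{m_{s}}\sum_{k}x^{s}_{k}$) matches the paper's, and your opening moves --- smoothness along $x^{s}_{k}-x^{s}_{k-1}=\theta_{s}(y^{s}_{k}-y^{s}_{k-1})$, linearization toward $\theta_{s}x_{\star}+(1-\theta_{s})\widetilde{x}^{s-1}$, the cosine law for the explicit $y$-step, and unbiasedness to isolate the variance --- are a legitimate alternative to the paper's combination of Young's inequality and the 3-point property. But the step you yourself flag as ``the main obstacle'' is exactly where the argument breaks, and your proposed resolution does not work. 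After invoking the variance bound you are left with a term proportional to the Bregman divergence $f(\widetilde{x}^{s-1})-f(x^{s}_{k-1})-\langle\nabla f(x^{s}_{k-1}),\widetilde{x}^{s-1}-x^{s}_{k-1}\rangle$, and you claim it is dominated by $\theta_{s}[\phi(\widetilde{x}^{s-1})-\phi(x_{\star})]$ ``by convexity of each $f_{i}$ and the initialization.'' It is not: this divergence is evaluated at the pair $(\widetilde{x}^{s-1},x^{s}_{k-1})$, and taking $\widetilde{x}^{s-1}=x_{\star}$ with $x^{s}_{k-1}\neq x_{\star}$ makes your right-hand side zero while the divergence stays strictly positive; the initialization $y^{s}_{0}=\widetilde{x}^{s-1}$ only helps at $k=1$. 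The paper handles this by a cancellation, not a domination: it splits the smoothness constant as $\beta_{1}L-\beta_{2}L$, applies Young's inequality with parameter $\beta_{2}L$ so that the extra quadratic cancels, and then folds the leftover linear piece $\beta_{2}^{-1}\langle\nabla f(x^{s}_{k-1}),x^{s}_{k-1}-\widetilde{x}^{s-1}\rangle$ into a three-point Jensen step at the convex combination $\theta_{s}x_{\star}+(1-\theta_{s}-\beta_{2}^{-1})\widetilde{x}^{s-1}+\beta_{2}^{-1}x^{s}_{k-1}$ (valid when $1-\theta_{s}-\beta_{2}^{-1}\geq 0$), whereupon the $f(\widetilde{x}^{s-1})$ and $f(x^{s}_{k-1})$ contributions from the variance bound cancel exactly. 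You need this (or an equivalent) device; without it the recursion does not close.

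Two further points. First, your cosine-law step produces the telescoping distances with coefficient $\theta_{s}/(2\eta)$, not the $\theta_{s}^{2}/(2\eta)$ you announce and that the lemma states; the extra factor of $\theta_{s}$ in the paper comes from applying the 3-point property to the $\theta_{s}$-rescaled subproblem, and the distinction matters downstream because the proof of Theorem 2 divides by $\theta_{s}^{2}$ and needs the distance coefficient to be independent of $s$ for the cross-epoch telescoping. Second, you apply the descent lemma to $\phi$ with constant $L$, but $L$ is the smoothness constant of the $f_{i}$'s only; the paper applies smoothness to $f$ alone and handles $g$ through its convexity on the combination $x^{s}_{k}=\theta_{s}y^{s}_{k}+(1-\theta_{s})\widetilde{x}^{s-1}$ (or through the proximal 3-point property), which is what lets the same proof cover the non-smooth-regularizer variant.
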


The detailed proof of Lemma~\ref{lemm1} is provided in APPENDIX. To prove Lemma 1, we first give the following lemmas, which are useful for the convergence analysis of FSVRG.

\begin{lemma}[Variance bound, \cite{zhu:Katyusha}]
\label{lemm2}
Suppose each function $f_{i}(\cdot)$ is $L$-smooth. Then the following inequality holds:
\begin{equation*}
\begin{split}
&\mathbb{E}\!\left[\left\|\widetilde{\nabla}\! f_{i^{s}_{k}}(x^{s}_{k-1})-\nabla\! f(x^{s}_{k-1})\right\|^{2}\right]\\
\leq&\,2L\!\left[f(\widetilde{x}^{s-\!1})-f(x^{s}_{k-\!1})+[\nabla\!f(x^{s}_{k-\!1})]^{T}(x^{s}_{k-\!1}-\widetilde{x}^{s-\!1})\right].
\end{split}
\end{equation*}
\end{lemma}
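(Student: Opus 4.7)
I will prove~\eqref{equ31} by establishing a per-inner-iteration recursion of the form
\[
\mathbb{E}[\phi(x^s_k)-\phi(x_\star)]\le(1-\theta_s)\mathbb{E}[\phi(\widetilde{x}^{s-1})-\phi(x_\star)]+\frac{\theta_s^2}{2\eta}\,\mathbb{E}\bigl[\|y^s_{k-1}-x_\star\|^2-\|y^s_k-x_\star\|^2\bigr],
\]
summing it over $k=1,\dots,m_s$, dividing by $m_s$, and applying Jensen's inequality (convexity of $\phi$) to $\widetilde{x}^s=\tfrac{1}{m_s}\sum_{k=1}^{m_s}x^s_k$. The squared-distance terms telescope to $\|y^s_0-x_\star\|^2-\|y^s_{m_s}-x_\star\|^2$, delivering~\eqref{equ31} exactly; so the whole task reduces to producing the one-step recursion above.

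\textbf{Per-step derivation.} Let $\widehat{\nabla}^s_k:=\widetilde{\nabla}f_{i^s_k}(x^s_{k-1})+\nabla g(x^s_{k-1})$. The momentum rule (line 7 of Algorithm~\ref{alg1}) gives two identities that I will use throughout: $x^s_k-x^s_{k-1}=\theta_s(y^s_k-y^s_{k-1})=-\theta_s\eta\,\widehat{\nabla}^s_k$, and $x^s_k=(1-\theta_s)\widetilde{x}^{s-1}+\theta_s y^s_k$. Applying $L$-smoothness of $f$ (together with the corresponding bound on $g$) yields
\[
\phi(x^s_k)\le\phi(x^s_{k-1})+\nabla\phi(x^s_{k-1})^T(x^s_k-x^s_{k-1})+\tfrac{L}{2}\|x^s_k-x^s_{k-1}\|^2.
\]
Writing $u:=(1-\theta_s)\widetilde{x}^{s-1}+\theta_s x_\star$ and noting $x^s_k-u=\theta_s(y^s_k-x_\star)$, I split the linear term as
\[
\nabla\phi(x^s_{k-1})^T(x^s_k-x^s_{k-1})=\theta_s\nabla\phi(x^s_{k-1})^T(y^s_k-x_\star)+\nabla\phi(x^s_{k-1})^T(u-x^s_{k-1}),
\]
and control the $u$-piece by convexity: $\nabla\phi(x^s_{k-1})^T(u-x^s_{k-1})\le(1-\theta_s)\phi(\widetilde{x}^{s-1})+\theta_s\phi(x_\star)-\phi(x^s_{k-1})$. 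For the remaining inner product I take conditional expectation over $i^s_k$, use $\mathbb{E}[\widehat{\nabla}^s_k]=\nabla\phi(x^s_{k-1})$, and invoke the Pythagorean expansion $\|y^s_k-x_\star\|^2=\|y^s_{k-1}-x_\star\|^2-2\eta(\widehat{\nabla}^s_k)^T(y^s_{k-1}-x_\star)+\eta^2\|\widehat{\nabla}^s_k\|^2$; because $y^s_{k-1}$ is independent of $i^s_k$, the cross term vanishes in expectation, generating exactly the telescoping contribution $\tfrac{\theta_s^2}{2\eta}\mathbb{E}[\|y^s_{k-1}-x_\star\|^2-\|y^s_k-x_\star\|^2]$ plus a residual proportional to $\mathbb{E}\|\widehat{\nabla}^s_k\|^2$. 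Decomposing $\mathbb{E}\|\widehat{\nabla}^s_k\|^2$ into the mean-square $\|\nabla\phi(x^s_{k-1})\|^2$ and the centered variance, Lemma~\ref{lemm2} bounds the variance by $2L[f(\widetilde{x}^{s-1})-f(x^s_{k-1})+\nabla f(x^s_{k-1})^T(x^s_{k-1}-\widetilde{x}^{s-1})]$, a Bregman-type quantity that merges with the convexity residual from the $u$-decomposition. A step-size condition of roughly $L\theta_s\eta\le 1$ makes all residuals nonpositive and leaves precisely the target per-step recursion.

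\textbf{Main obstacle.} The delicate part is the joint accounting of the $\|\widehat{\nabla}^s_k\|^2$ residuals. Lemma~\ref{lemm2} controls only the centered variance of $\widetilde{\nabla}f_{i^s_k}$, so the deterministic mean-square $\|\nabla\phi(x^s_{k-1})\|^2$ surviving inside $\mathbb{E}\|\widehat{\nabla}^s_k\|^2$ must be cancelled by a companion negative term that emerges when I trade $(\widehat{\nabla}^s_k)^T(y^s_k-x_\star)$ for $(\widehat{\nabla}^s_k)^T(y^s_{k-1}-x_\star)$ before taking expectations. Coordinating that cancellation with the Bregman-type remainder produced by Lemma~\ref{lemm2}, so that the leftover fits inside the $(1-\theta_s)[\phi(\widetilde{x}^{s-1})-\phi(x^s_{k-1})]$ slot without inflating the coefficient $\theta_s^2/(2\eta)$ of the telescoping distance, is the main technical hurdle and is precisely what pins down the admissible range of $\eta$ in terms of $L$ and $\theta_s$.
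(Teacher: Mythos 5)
There is a fundamental mismatch here: the statement you were asked to prove is Lemma~\ref{lemm2} itself --- the variance bound $\mathbb{E}\|\widetilde{\nabla} f_{i^{s}_{k}}(x^{s}_{k-1})-\nabla f(x^{s}_{k-1})\|^{2}\leq 2L[f(\widetilde{x}^{s-1})-f(x^{s}_{k-1})+\nabla f(x^{s}_{k-1})^{T}(x^{s}_{k-1}-\widetilde{x}^{s-1})]$ --- but your proposal opens with ``I will prove~\eqref{equ31}'' and proceeds to sketch the per-epoch recursion of Lemma~\ref{lemm1}, explicitly \emph{invoking} Lemma~\ref{lemm2} as a black box along the way (``Lemma~\ref{lemm2} bounds the variance by \ldots''). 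So the object you were supposed to establish appears in your argument only as an assumed tool, and nothing in the proposal actually proves it. Whatever the merits of your sketch of Lemma~\ref{lemm1} (which roughly parallels the paper's APPENDIX proof), it cannot be credited as a proof of the stated lemma; using the target as a hypothesis is circular with respect to this task.

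For the record, the paper itself does not reprove this bound; it attributes it to Lemma~3.4 of \cite{zhu:Katyusha}. The direct argument is short and is what your submission should have contained: writing $\widetilde{\nabla} f_{i}(x)=\nabla f_{i}(x)-\nabla f_{i}(\widetilde{x})+\nabla f(\widetilde{x})$, the quantity $\widetilde{\nabla} f_{i}(x)-\nabla f(x)$ is the centered version of the random vector $\nabla f_{i}(x)-\nabla f_{i}(\widetilde{x})$, so $\mathbb{E}\|\widetilde{\nabla} f_{i}(x)-\nabla f(x)\|^{2}\leq\mathbb{E}\|\nabla f_{i}(x)-\nabla f_{i}(\widetilde{x})\|^{2}$ by $\mathbb{E}\|X-\mathbb{E}X\|^{2}\leq\mathbb{E}\|X\|^{2}$; then the co-coercivity inequality for convex $L$-smooth functions, $\|\nabla f_{i}(x)-\nabla f_{i}(\widetilde{x})\|^{2}\leq 2L\left(f_{i}(\widetilde{x})-f_{i}(x)-\langle\nabla f_{i}(x),\widetilde{x}-x\rangle\right)$, averaged over $i$, yields exactly the claimed right-hand side. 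Note also that this step needs convexity of each $f_{i}$ in addition to $L$-smoothness, a hypothesis your write-up never isolates.
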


Lemma~\ref{lemm2} is essentially identical to Lemma 3.4 in~\cite{zhu:Katyusha}. This lemma provides a tighter upper bound on the expected variance of the variance-reduced gradient estimator $\widetilde{\nabla}\! f_{i^{s}_{k}}\!(x^{s}_{k-\!1})$ than that of~\cite{xiao:prox-svrg,zhu:univr}, e.g., Corollary 3.5 in~\cite{xiao:prox-svrg}.

\begin{lemma} [3-point property, \cite{lan:sgd}]
\label{lemm3}
Assume that $z^{*}$ is an optimal solution of the following problem,
\begin{displaymath}
\min_{z}({\tau}/{2})\!\cdot\!\|z-z_{0}\|^{2}+\psi(z),
\end{displaymath}
where $\psi(z)$ is a convex function (but possibly non-differentiable). Then for any $z\!\in\!\mathbb{R}^{d}$, we have
\begin{displaymath}
\psi(z)+\frac{\tau}{2}\|z-z_{0}\|^{2}\geq \psi(z^{*})+\frac{\tau}{2}\|z^{*}-z_{0}\|^{2}+\frac{\tau}{2}\|z-z^{*}\|^{2}.
\end{displaymath}
\end{lemma}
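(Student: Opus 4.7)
The plan is to combine the first-order optimality condition at $z^{*}$ with the convexity of $\psi$ and a polarization-style expansion of the quadratic term; no bound on smoothness or anything iterative is needed, since this is a static variational inequality. Let $F(z) := (\tau/2)\|z-z_{0}\|^{2} + \psi(z)$. The quadratic $(\tau/2)\|z-z_{0}\|^{2}$ is $\tau$-strongly convex and $\psi$ is convex, so $F$ is $\tau$-strongly convex. Since $z^{*}$ is a minimizer of $F$, we have $0 \in \partial F(z^{*}) = \tau(z^{*}-z_{0}) + \partial\psi(z^{*})$, hence there exists $g \in \partial\psi(z^{*})$ with $g = \tau(z_{0}-z^{*})$. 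This is the step where the ``possibly non-differentiable'' hypothesis on $\psi$ gets handled cleanly.

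Next I would exploit convexity of $\psi$ at $z^{*}$ with the subgradient $g$ just identified, yielding
\begin{equation*}
\psi(z) \;\geq\; \psi(z^{*}) + \langle g, z-z^{*}\rangle \;=\; \psi(z^{*}) + \tau\langle z_{0}-z^{*},\, z-z^{*}\rangle.
\end{equation*}
In parallel, I would expand the quadratic via the identity $\|z-z_{0}\|^{2} = \|(z-z^{*})+(z^{*}-z_{0})\|^{2}$, which gives
\begin{equation*}
\tfrac{\tau}{2}\|z-z_{0}\|^{2} \;=\; \tfrac{\tau}{2}\|z-z^{*}\|^{2} + \tau\langle z-z^{*},\, z^{*}-z_{0}\rangle + \tfrac{\tau}{2}\|z^{*}-z_{0}\|^{2}.
\end{equation*}

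Adding these two inequalities, the cross terms $\tau\langle z_{0}-z^{*},\, z-z^{*}\rangle$ and $\tau\langle z-z^{*},\, z^{*}-z_{0}\rangle$ are exact negatives of one another and cancel, leaving precisely
\begin{equation*}
\psi(z) + \tfrac{\tau}{2}\|z-z_{0}\|^{2} \;\geq\; \psi(z^{*}) + \tfrac{\tau}{2}\|z^{*}-z_{0}\|^{2} + \tfrac{\tau}{2}\|z-z^{*}\|^{2},
\end{equation*}
which is the claim. There is essentially no hard obstacle here; the only point requiring care is the non-smooth case, where we must invoke the subdifferential sum rule to write $0 \in \tau(z^{*}-z_{0}) + \partial\psi(z^{*})$ and extract a concrete subgradient $g$ of $\psi$ at $z^{*}$ satisfying the optimality equation, rather than writing a classical gradient identity. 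Everything else is a one-line quadratic expansion and a one-line convexity inequality.
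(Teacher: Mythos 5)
Your proof is correct. Note that the paper does not actually prove Lemma~\ref{lemm3}; it is imported verbatim from the cited reference, so there is no in-paper argument to compare against. Your argument is the standard one: extracting a subgradient $g=\tau(z_{0}-z^{*})\in\partial\psi(z^{*})$ from the optimality condition, applying the subgradient inequality, and expanding the quadratic so the cross terms cancel is exactly right, and your care with the subdifferential sum rule (which applies here since the quadratic is finite and continuous everywhere) handles the non-differentiable case properly. As a minor remark, the same conclusion follows in one line from your own opening observation: $F(z):=(\tau/2)\|z-z_{0}\|^{2}+\psi(z)$ is $\tau$-strongly convex, and any minimizer $z^{*}$ of a $\tau$-strongly convex function satisfies the quadratic growth bound $F(z)\geq F(z^{*})+(\tau/2)\|z-z^{*}\|^{2}$, which is precisely the claimed inequality; your explicit computation is in effect a proof of that growth bound specialized to this $F$.
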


\subsection{Convergence Properties for Case 1}
For SC objectives with smooth component functions (i.e., Case 1), we analyze the convergence property of FSVRG.

\begin{theorem}[Strongly Convex]\label{theo1}
Suppose each $f_{i}(\cdot)$ is $L$-smooth, $\phi(\cdot)$ is $\mu$-SC, $\theta_{s}$ is a constant $\theta$ for Case 1, and $m_{s}$ is sufficiently large\footnote{If $m_{1}$ is not sufficiently large, the first epoch can be viewed as an initialization step.} so that
\vspace{-2mm}
\begin{equation*}
\alpha_{s}:=\,1-\theta+{\theta^{2}}/({\mu\eta m_{s}})< 1.
\vspace{-1mm}
\end{equation*}
Then Algorithm 1 has the convergence in expectation:
\vspace{-2mm}
\begin{equation}\label{equ32}
\mathbb{E}\!\left[\phi(\widetilde{x}^{S})-\phi(x_{\star})\right]\leq\,\left(\prod^{S}_{s=1}\alpha_{s}\right)[\phi(\widetilde{x}^{0})-\phi(x_{\star})].
\end{equation}
\end{theorem}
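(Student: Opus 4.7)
The plan is to reduce Theorem~\ref{theo1} to a one-step contraction on the expected sub-optimality $\mathbb{E}[\phi(\widetilde{x}^s)-\phi(x_\star)]$ and then telescope over the $S$ epochs. Lemma~\ref{lemm1} already does the heavy lifting inside a single epoch, so the only remaining task is to convert the squared-distance terms on its right-hand side back into a function-value gap; this is where the $\mu$-strong convexity hypothesis enters.

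Concretely, I would first invoke Lemma~\ref{lemm1} with $\theta_s \equiv \theta$ and use Step~2 of Algorithm~\ref{alg1}, which initializes $y^s_0 = \widetilde{x}^{s-1}$ at the start of every epoch; this gives $\|y^s_0 - x_\star\|^2 = \|\widetilde{x}^{s-1} - x_\star\|^2$. Next, I would upper-bound the non-positive contribution $-\|y^s_{m_s}-x_\star\|^2$ by zero, which only weakens~\eqref{equ31} in a harmless direction, so that the bracketed term collapses to $\mathbb{E}\|\widetilde{x}^{s-1}-x_\star\|^2$. Finally, since $\phi$ is $\mu$-SC and $x_\star$ is a minimizer of $\phi$ (so $0$ is a (sub)gradient at $x_\star$), Definition~\ref{assum2} yields $\|\widetilde{x}^{s-1}-x_\star\|^2 \leq \frac{2}{\mu}[\phi(\widetilde{x}^{s-1})-\phi(x_\star)]$. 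Substituting into~\eqref{equ31} produces the per-epoch contraction
\[
\mathbb{E}[\phi(\widetilde{x}^s)-\phi(x_\star)] \;\leq\; \left(1-\theta+\frac{\theta^2}{\mu\eta m_s}\right)\mathbb{E}[\phi(\widetilde{x}^{s-1})-\phi(x_\star)] \;=\; \alpha_s\,\mathbb{E}[\phi(\widetilde{x}^{s-1})-\phi(x_\star)],
\]
and iterating this inequality from $s=1$ to $s=S$ immediately gives~\eqref{equ32}. The hypothesis that $m_s$ is sufficiently large is precisely what forces $\alpha_s<1$, and the geometric growth $m_s=\rho^{s-1}m_1$ ensures the product $\prod_s \alpha_s$ decays exponentially, i.e., linear convergence.

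Granted Lemma~\ref{lemm1}, the argument is essentially routine, so I do not expect a genuine obstacle. The one point worth flagging is that simply discarding $-\|y^s_{m_s}-x_\star\|^2$ is sound but loose: a tighter chained analysis (in the spirit of Katyusha) would keep this term as an extra Lyapunov quantity and propagate $\|y^s_0-x_\star\|^2$ across epochs. That refinement is unnecessary for the rate stated in Theorem~\ref{theo1}; what matters here is only that the initialization $y^s_0=\widetilde{x}^{s-1}$ is exactly what lets the recursion close through strong convexity, and that the constant-$\theta$ choice makes the contraction factor structurally uniform enough to telescope cleanly.
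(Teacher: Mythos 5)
Your proposal is correct and takes essentially the same route as the paper's own proof: invoke Lemma~\ref{lemm1} with $y^{s}_{0}=\widetilde{x}^{s-1}$, discard the non-positive term $-\|y^{s}_{m_{s}}-x_{\star}\|^{2}$, convert $\|\widetilde{x}^{s-1}-x_{\star}\|^{2}$ into $\frac{2}{\mu}[\phi(\widetilde{x}^{s-1})-\phi(x_{\star})]$ via strong convexity at the minimizer, and telescope the resulting per-epoch contraction with factor $\alpha_{s}$.
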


\begin{proof}
Since $\phi(x)$ is $\mu$-SC, then there exists a constant $\mu\!>\!0$ such that for all $x\!\in\!\mathbb{R}^{d}$
\vspace{-1mm}
\begin{equation*}
\phi(x)\geq \phi(x_{\star})+[\nabla\!\phi(x_{\star})]^{T}(x-x_{\star})+\frac{\mu}{2}\|x-x_{\star}\|^{2}.
\vspace{-1mm}
\end{equation*}

Since $x_{\star}$ is the optimal solution, we have
\vspace{-2mm}
\begin{equation}\label{equ33}
\nabla\phi(x_{\star})=0\,\footnote{When $\phi(\cdot)$ is non-smooth, there must exist a sub-graident $\partial\phi(x_{\star})$ of $\phi(\cdot)$ at $x_{\star}$ such that $\partial\phi(x_{\star})=0$.},\;\;\;\phi(x)-\phi(x_{\star})\geq \frac{\mu}{2}\|x-x_{\star}\|^{2}.
\vspace{-1mm}
\end{equation}
Using the inequality in \eqref{equ33} and $y^{s}_{0}\!=\!\widetilde{x}^{s-\!1}$, we have
\vspace{-1mm}
\begin{equation*}
\begin{split}
&\mathbb{E}\!\left[\phi(\widetilde{x}^{s})-\phi(x_{\star})\right]\\
\leq&\,(1\!-\!\theta)\mathbb{E}[\phi(\widetilde{x}^{s-\!1})\!-\!\phi(x_{\star})]\!+\!\frac{\theta^{2}\!/\eta}{2 m_{s}\!\!}\mathbb{E}\!\left[\|y^{s}_{0}\!\!-\!x_{\star}\|^2\!-\!\|y^{s}_{m_{s}}\!\!-\!x_{\star}\|^2\right]\\
\leq&\,(1\!-\!\theta)\mathbb{E}[\phi(\widetilde{x}^{s-\!1})\!-\!\phi(x_{\star})]\!+\!\frac{\theta^{2}\!/\eta}{\mu m_{s}}\mathbb{E}\!\left[\phi(\widetilde{x}^{s-\!1})\!-\!\phi(x_{\star})\right]\\
=&\,\left(1\!-\!\theta+\frac{\theta^{2}\!/\eta}{\mu m_{s}}\right)\mathbb{E}\!\left[\phi(\widetilde{x}^{s-\!1})-\phi(x_{\star})\right]\!,
\end{split}
\end{equation*}
where the first inequality holds due to Lemma 1, and the second inequality follows from the inequality in \eqref{equ33}.
\end{proof}

From Theorem~\ref{theo1}, it is clear that $\alpha_{s}$ decreases as $s$ increases, i.e., $1\!>\!\alpha_{1}\!>\!\alpha_{2}\!>\!\ldots\!>\!\alpha_{S}$. Therefore, there exists a positive constant $\gamma\!<\!1$ such that $\alpha_{s}\!\leq\!\alpha_{1}\gamma^{s-\!1}$ for all $s\!=\!1,\ldots,S$. Then the inequality in \eqref{equ32} can be rewritten as $\mathbb{E}\!\left[\phi(\widetilde{x}^{S})\!-\!\phi(x_{\star})\right]\!\leq\!(\alpha_{1}\sqrt{\gamma^{S-\!1}})^{S}[\phi(\widetilde{x}^{0})\!-\!\phi(x_{\star})]$, which implies that FSVRG attains linear (geometric) convergence.

\subsection{Convergence Properties for Case 2}
\label{sect42}
For NSC objectives with smooth component functions (i.e., Case 2), the following theorem gives the convergence rate and overall complexity of FSVRG.

\begin{theorem}[Non-Strongly Convex]\label{theo2}
Suppose each $f_{i}(\cdot)$ is $L$-smooth. Then the following inequality holds:
\vspace{-1mm}
\begin{equation*}
\begin{split}
&\mathbb{E}\!\left[\phi(\widetilde{x}^{S})-\phi(x_{\star})\right]\\
\leq&\,\frac{4(1\!-\!\theta_{1})}{\theta^{2}_{1}(S\!+\!2)^{2}}\![\phi(\widetilde{x}^{0})\!-\!\phi(x_{\star})]\!+\!\frac{2/\eta}{m_{1}(S\!+\!2)^{2}}\|x_{\star}\!-\!\widetilde{x}^{0}\|^2.
\end{split}
\end{equation*}
In particular, choosing $m_{1}\!=\!\Theta(n)$, Algorithm~\ref{alg1} achieves an $\varepsilon$-accurate solution, i.e., $\mathbb{E}[\phi(\widetilde{x}^{S})]\!-\!\phi(x_{\star})\leq \varepsilon$ using at most $\mathcal{O}(\frac{n\sqrt{\phi(\widetilde{x}^{0})\!-\!\phi(x_{\star})}}{\sqrt{\varepsilon}}\!+\!\frac{\sqrt{nL}\|\widetilde{x}^{0}\!-\!x_{\star}\|}{\sqrt{\varepsilon}})$ iterations.
\end{theorem}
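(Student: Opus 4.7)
The plan is to convert the per-epoch inequality of Lemma~\ref{lemm1} into a telescoping sum by exploiting the Nesterov-type recursion $\theta_s^2 = (1-\theta_s)\theta_{s-1}^2$, which is equivalent to the update rule~\eqref{equ11} (square $2\theta_s + \theta_{s-1}^2 = \sqrt{\theta_{s-1}^4 + 4\theta_{s-1}^2}$ and cancel). Rewritten as $(1-\theta_s)/\theta_s^2 = 1/\theta_{s-1}^2$, this identity lets me divide the bound of Lemma~\ref{lemm1} through by $\theta_s^2$ to obtain, for $s\ge 2$,
\begin{equation*}
\frac{1}{\theta_s^2}\mathbb{E}[\phi(\widetilde{x}^s)-\phi(x_\star)] - \frac{1}{\theta_{s-1}^2}\mathbb{E}[\phi(\widetilde{x}^{s-1})-\phi(x_\star)] \leq \frac{1}{2\eta m_s}\mathbb{E}\!\left[\|y^s_0-x_\star\|^2 - \|y^s_{m_s}-x_\star\|^2\right]\!,
\end{equation*}
while the $s=1$ instance retains the boundary coefficient $(1-\theta_1)/\theta_1^2$ in front of $\phi(\widetilde{x}^0)-\phi(x_\star)$, which explains the leading factor of the target bound.

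Next I sum these inequalities from $s=1$ to $S$. The functional-gap side collapses to $\frac{1}{\theta_S^2}\mathbb{E}[\phi(\widetilde{x}^S)-\phi(x_\star)] - \frac{1-\theta_1}{\theta_1^2}[\phi(\widetilde{x}^0)-\phi(x_\star)]$. For the iterate-distance side I use the NSC-case re-initialization $y_0^{s+1} = y_{m_s}^s$, so that with $a_s := \mathbb{E}\|y_0^s - x_\star\|^2$ the right-hand sum has the form $\sum_{s=1}^S \frac{1}{m_s}(a_s - a_{s+1})$. An Abel rearrangement yields
\begin{equation*}
\sum_{s=1}^S \frac{1}{m_s}(a_s - a_{s+1}) = \frac{a_1}{m_1} - \frac{a_{S+1}}{m_S} + \sum_{s=2}^S a_s\!\left(\frac{1}{m_s} - \frac{1}{m_{s-1}}\right)\!,
\end{equation*}
and the key observation is that $m_s = \lceil \rho^{s-1} m_1 \rceil$ is non-decreasing, so the bracketed coefficients are non-positive while $a_s,a_{S+1}\geq 0$; the whole expression is therefore at most $a_1/m_1 = \|\widetilde{x}^0 - x_\star\|^2/m_1$.

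The last analytical piece is the classical bound $\theta_S \leq 2/(S+2)$ for the Nesterov sequence. I prove it by induction on $1/\theta_s - 1/\theta_{s-1} \geq 1/2$, which follows from $1/\theta_{s-1} = \sqrt{1-\theta_s}/\theta_s$ together with $\sqrt{1-x} \leq 1 - x/2$. Multiplying through by $\theta_S^2 \leq 4/(S+2)^2$ delivers the displayed $\mathcal{O}(1/S^2)$ rate. For the complexity statement, I fix $\eta = \Theta(1/L)$ and $m_1 = \Theta(n)$; the per-epoch oracle cost is $\mathcal{O}(n + m_s)$, and inverting each of the two summands of the rate separately gives $S = \mathcal{O}(\sqrt{(\phi(\widetilde{x}^0)-\phi(x_\star))/\epsilon})$ from the first term and $S = \mathcal{O}(\sqrt{L/(n\epsilon)}\,\|\widetilde{x}^0 - x_\star\|)$ from the second. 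Multiplying by a per-epoch cost of $\mathcal{O}(n)$ produces the claimed $\mathcal{O}(n\sqrt{(\phi(\widetilde{x}^0) - \phi(x_\star))/\epsilon} + \sqrt{nL}\,\|\widetilde{x}^0 - x_\star\|/\sqrt{\epsilon})$ bound.

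The hard part will be the Abel-summation step with growing $m_s$: one must verify the sign of $1/m_s - 1/m_{s-1}$ so the residual terms can be discarded rather than inflated, and it is precisely the monotonicity of $m_s$ that makes the final distance bound depend only on $m_1$ (and not on $\rho$ or $S$). A secondary subtlety is matching the $(S+2)^{-2}$ factor in the theorem to the concrete initialization $\theta_1 = 1 - L\eta/(1-L\eta)$ produced by the schedule~\eqref{equ11}, which requires $\eta$ to be small enough that $\theta_1 \leq 1$ so the induction base case is valid.
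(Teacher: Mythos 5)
Your proposal is correct and follows essentially the same route as the paper's proof: divide the bound of Lemma~\ref{lemm1} by $\theta_s^2$, telescope using the recursion behind~\eqref{equ11} (your exact identity $(1-\theta_s)/\theta_s^2=1/\theta_{s-1}^2$ is the equality form of the paper's inequality~\eqref{equ34}), discard the distance terms using $y_0^{s+1}=y_{m_s}^s$, and multiply back by $\theta_S^2\le 4/(S+2)^2$. Your Abel-summation step actually makes rigorous a point the paper passes over silently (the paper telescopes the $\frac{1}{2\eta m_s}(a_s-a_{s+1})$ terms as if $m_s$ were constant, and monotonicity of $m_s$ is indeed what justifies the final $1/m_1$ coefficient); the one small slip is at the very end, where the base case of the induction for $\theta_s\le 2/(s+2)$ needs $\theta_1\le 2/3$, which for $\theta_1=1-L\eta/(1-L\eta)$ requires $\eta\ge 1/(4L)$ (satisfied by the paper's choice $\eta=1/(3L)$) rather than $\eta$ being ``small enough that $\theta_1\le 1$.''
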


\begin{proof}
Using the update rule of $\theta_{s}$ in (\ref{equ11}), it is easy to verify that
\begin{equation}\label{equ34}
({1-\theta_{s+1}})/{\theta^{2}_{s+1}}\leq{1}/{\theta^{2}_{s}},\;\;\theta_{s}\leq2/(s+2).
\end{equation}
Dividing both sides of the inequality in (\ref{equ31}) by $\theta^{2}_{s}$, we have
\vspace{-3mm}

\begin{equation*}
\begin{split}
&\mathbb{E}[\phi(\widetilde{x}^{s})-\phi(x_{\star})]/\theta^{2}_{s}\\
\leq&\frac{1\!-\!\theta_{s}\!}{\theta^{2}_{s}}\mathbb{E}[\phi(\widetilde{x}^{s-\!1})\!-\!\phi(x_{\star})]\!+\!\frac{1/\eta}{2m_{s}\!}\mathbb{E}[\|x_{\star}\!\!-\!y^{s}_{0}\|^2\!-\!\|x_{\star}\!\!-\!y^{s}_{m_{s}}\!\|^2],
\end{split}
\end{equation*}
for all $s\!=\!1,\ldots,S$. By $y^{s+\!1}_{0}\!=\!y^{s}_{m_{s}}$ and the inequality in (\ref{equ34}), and summing the above inequality over $s\!=\!1,\ldots,S$, we have
\vspace{-1mm}
\begin{equation*}
\begin{split}
&\mathbb{E}\!\left[\phi(\widetilde{x}^{S})-\phi(x_{\star})\right]/\theta^{2}_{S}\\
\leq&\frac{1\!-\!\theta_{1}}{\theta^{2}_{1}}[\phi(\widetilde{x}^{0})\!-\!\phi(x_{\star}\!)]\!+\!\frac{1/\eta}{2m_{1}}\mathbb{E}\!\!\left[\|x_{\star}\!-\!y^{1}_{0}\|^2\!-\!\|x_{\star}\!-\!y^{S}_{m_{S}}\!\|^2\right]\!.
\end{split}
\end{equation*}
\vspace{-2mm}

Then
\vspace{-1mm}
\begin{equation*}
\begin{split}
&\mathbb{E}\!\left[\phi(\widetilde{x}^{S})-\phi(x_{\star}\!)\right]\\
\leq&\frac{4(1\!-\!\theta_{1})}{\theta^{2}_{1}(S\!\!+\!\!2)^{2}}\![\phi(\widetilde{x}^{0})\!-\!\phi(x_{\star}\!)]\!+\!\!\frac{2/\eta}{m_{1}\!(S\!\!+\!\!2)^{2}}\!\mathbb{E}\!\!\left[\!\|x_{\star}\!\!-\!y^{1}_{0}\|^2\!\!-\!\|x_{\star}\!\!-\!y^{S}_{m_{\!S}}\!\|^2\!\right]\\
\leq&\frac{4(1\!-\!\theta_{1})}{\theta^{2}_{1}(S\!+\!2)^{2}}\![\phi(\widetilde{x}^{0})\!-\!\phi(x_{\star}\!)]\!+\!\frac{2/\eta}{m_{1}(S\!+\!2)^{2}}\!\left[\|x_{\star}\!-\!\widetilde{x}^{0}\|^2\right].
\end{split}
\end{equation*}
This completes the proof.
\end{proof}

\begin{figure*}[t]
\centering
\includegraphics[width=0.496\columnwidth]{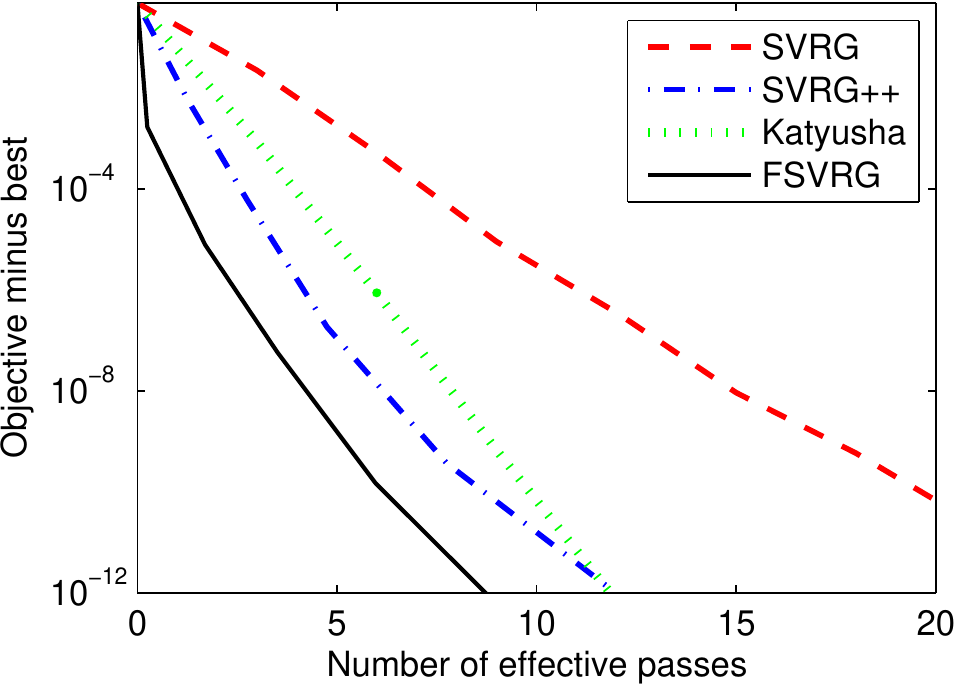}
\includegraphics[width=0.496\columnwidth]{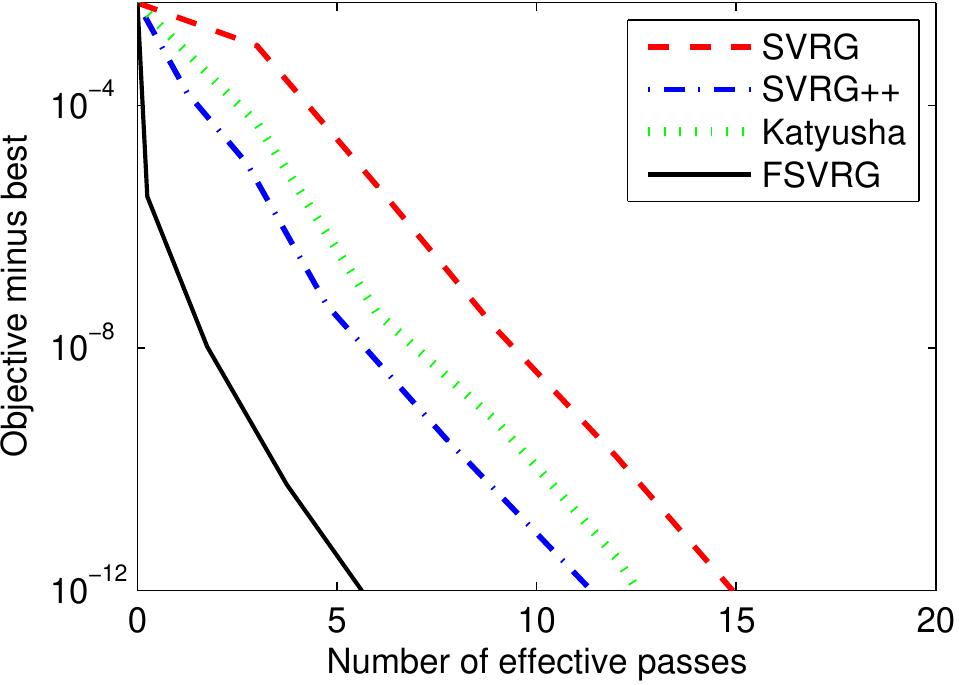}
\includegraphics[width=0.496\columnwidth]{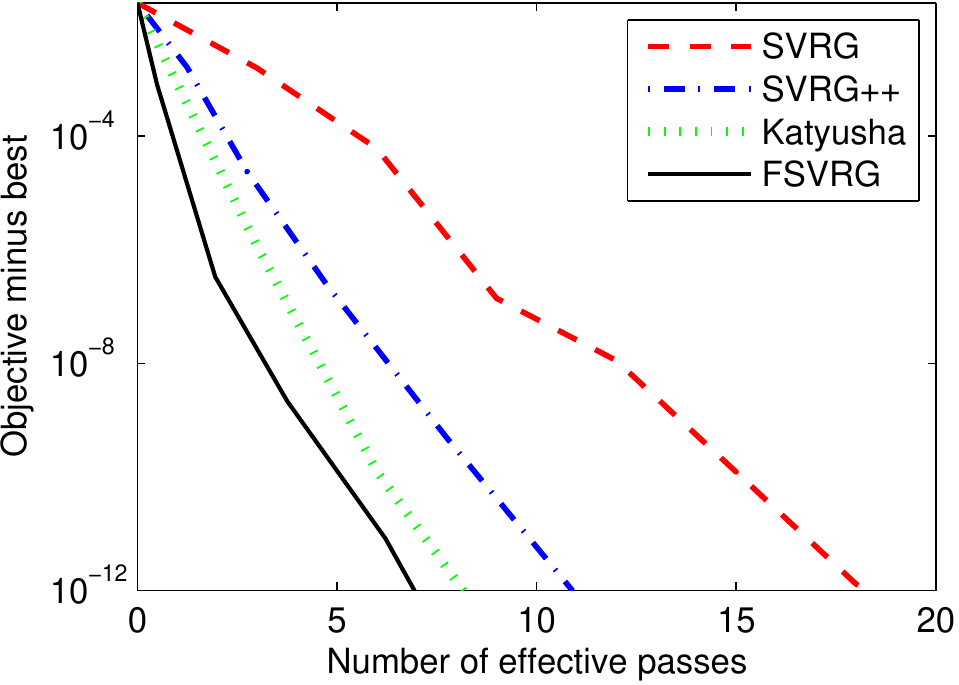}
\includegraphics[width=0.496\columnwidth]{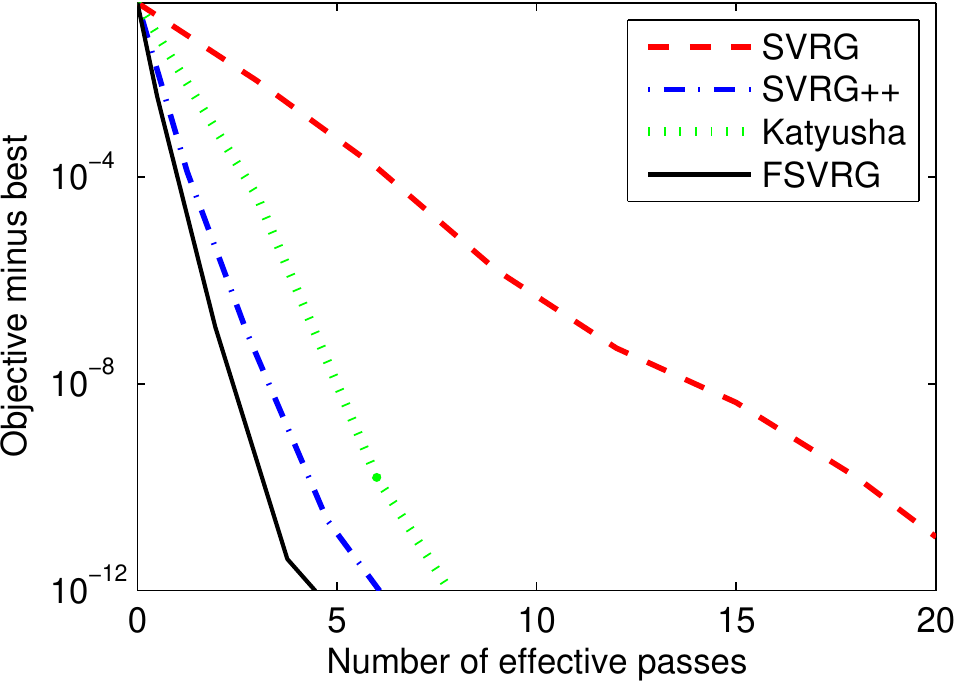}
\vspace{-1.6mm}

\subfigure[IJCNN: $\lambda\!=\!10^{-4}$]{\includegraphics[width=0.496\columnwidth]{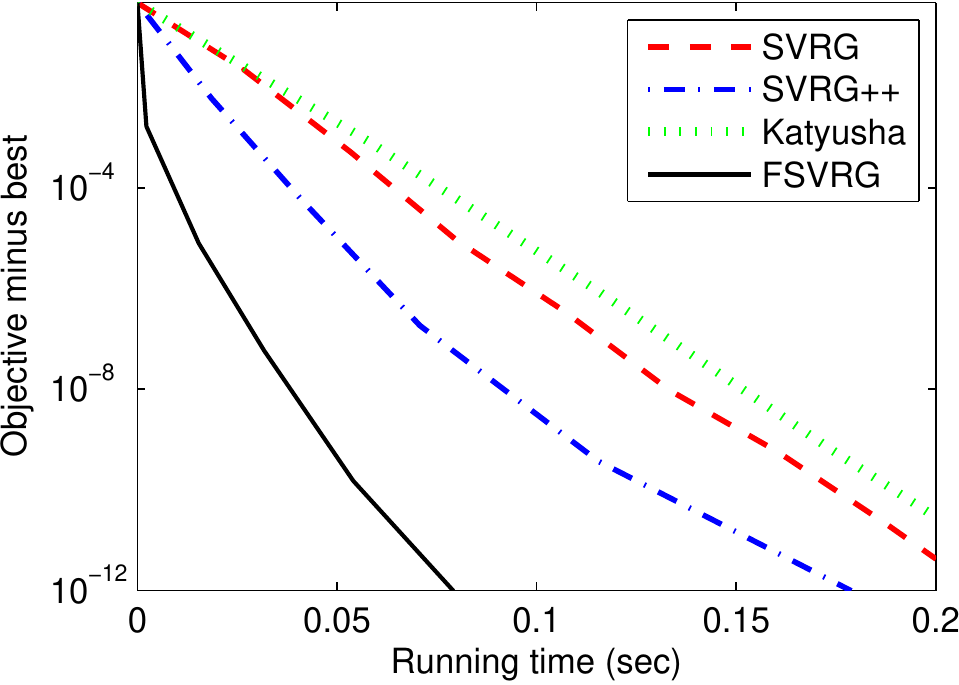}}
\subfigure[Protein: $\lambda\!=\!10^{-4}$]{\includegraphics[width=0.496\columnwidth]{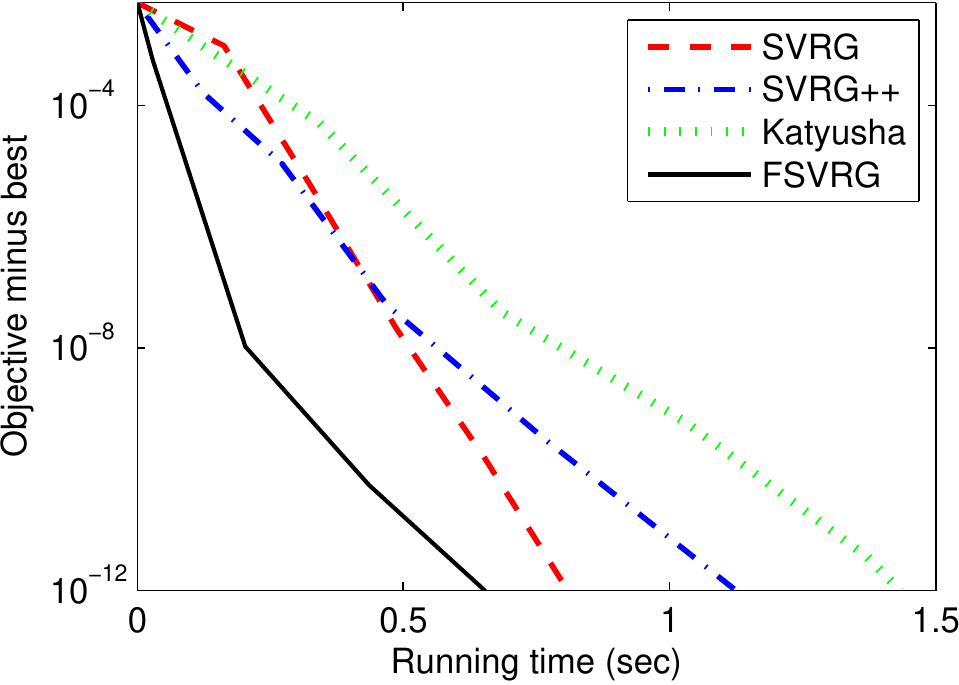}}
\subfigure[Covtype: $\lambda\!=\!10^{-5}$]{\includegraphics[width=0.496\columnwidth]{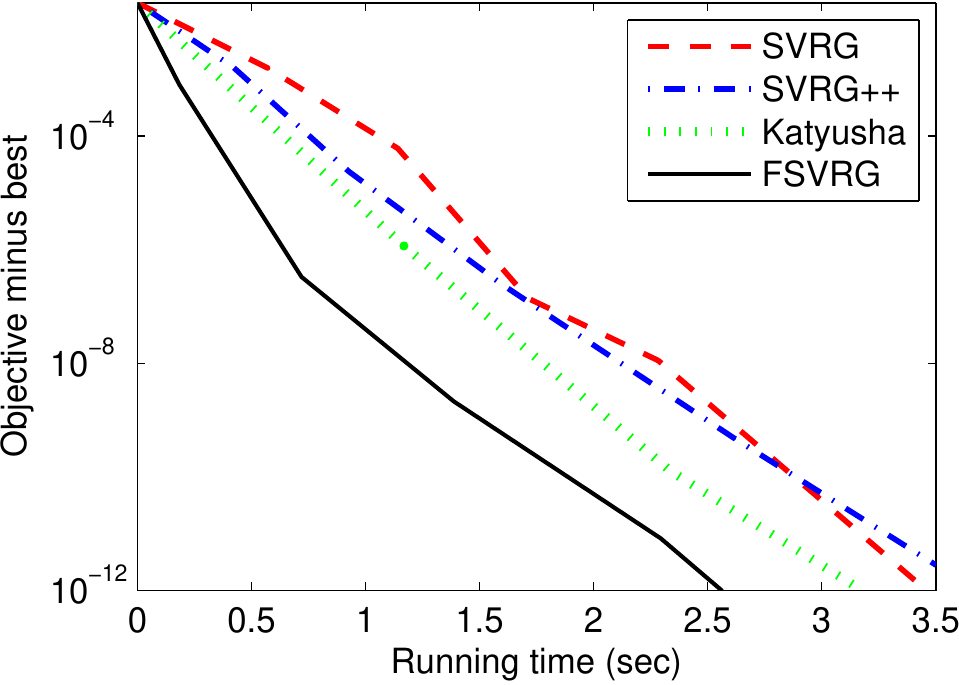}}
\subfigure[SUSY: $\lambda\!=\!10^{-6}$]{\includegraphics[width=0.496\columnwidth]{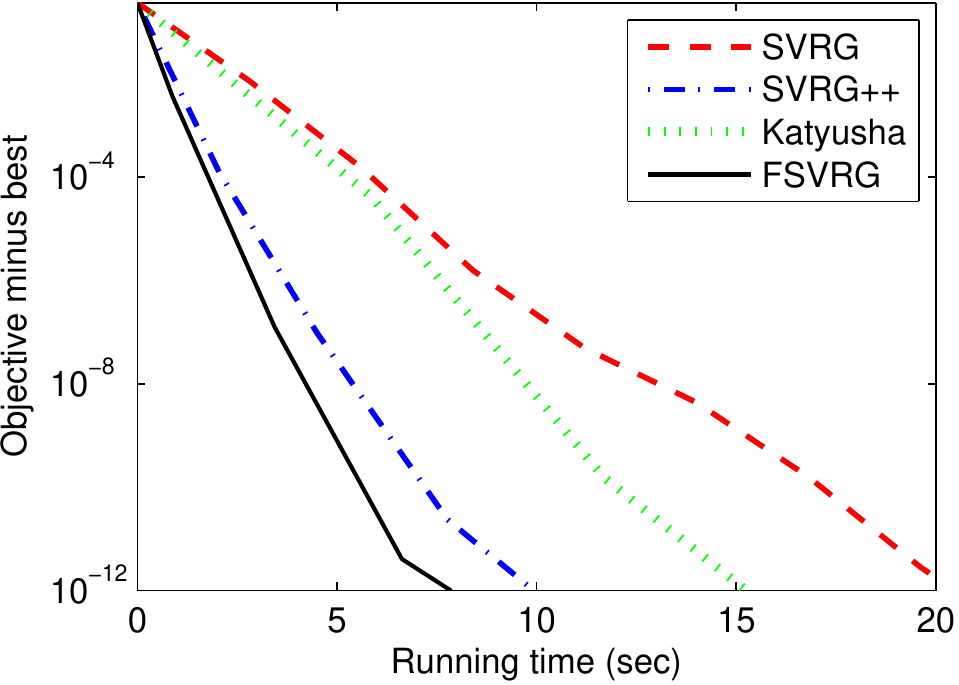}}
\vspace{-3mm}

\caption{Comparison of SVRG~\cite{johnson:svrg}, SVRG++~\cite{zhu:vrnc}, Katyusha~\cite{zhu:Katyusha}, and our FSVRG method for $\ell_{2}$-norm (i.e., $\lambda\|x\|^{2}$) regularized logistic regression problems. The $y$-axis represents the objective value minus the minimum, and the $x$-axis corresponds to the number of effective passes (top) or running time (bottom).}
\label{figs1}
\end{figure*}

From Theorem 2, we can see that FSVRG achieves the optimal convergence rate of $\mathcal{O}(1/T^2)$ and the complexity of $\mathcal{O}(n\sqrt{1/\epsilon}\!+\!\sqrt{nL/\epsilon})$ for NSC problems, which is consistent with the best known result in~\cite{zhu:Katyusha,hien:asmd}. By adding a proximal term into the problem of Case 2 as in~\cite{lin:vrsg,zhu:box}, one can achieve faster convergence. However, this hurts the performance of the algorithm both in theory and in practice~\cite{zhu:univr}.

\begin{figure*}[!th]
\centering
\subfigure[IJCNN: $\lambda_{1}\!=\!\lambda_{2}\!=\!10^{-4}$]{\includegraphics[width=0.496\columnwidth]{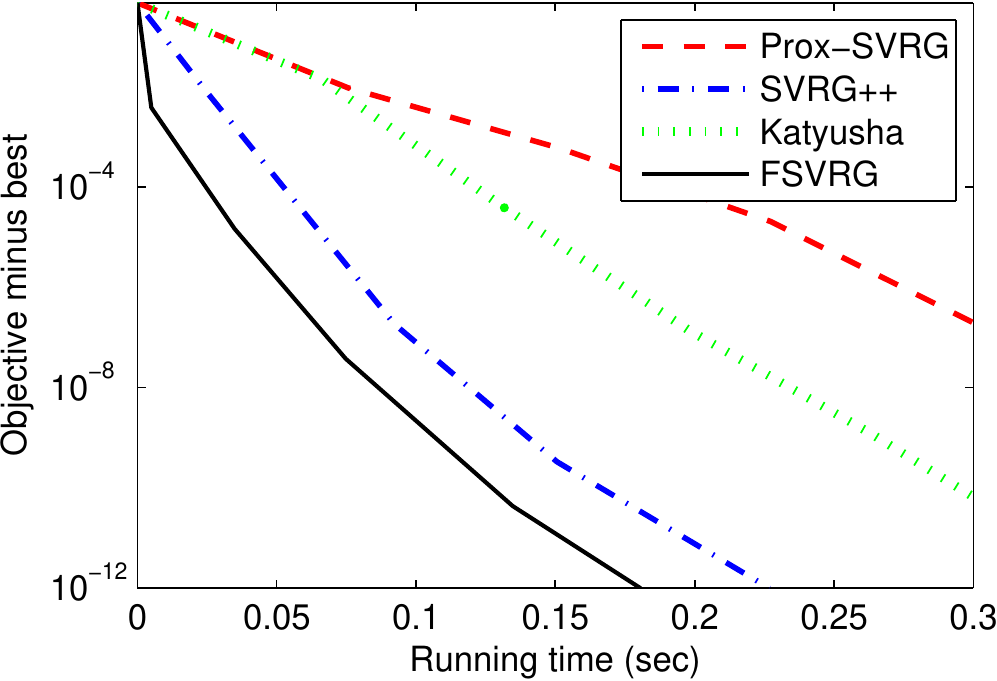}}
\subfigure[Protein: $\lambda_{1}\!=\!\lambda_{2}\!=\!10^{-4}$]{\includegraphics[width=0.496\columnwidth]{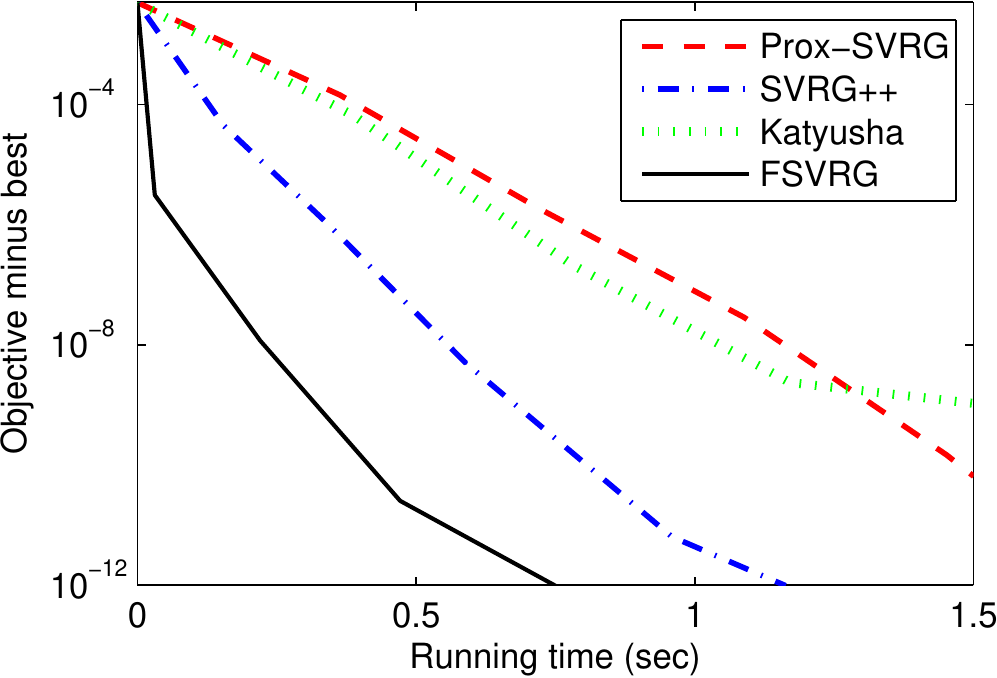}}
\subfigure[Covtype: $\lambda_{1}\!=\!\lambda_{2}\!=\!10^{-5}$]{\includegraphics[width=0.496\columnwidth]{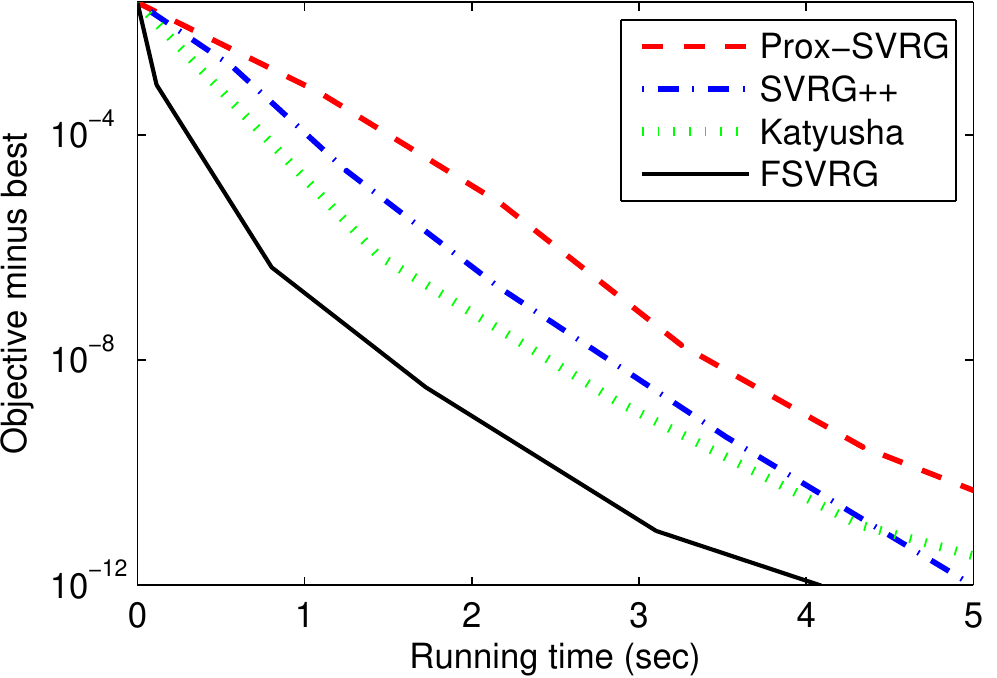}}
\subfigure[SUSY: $\lambda_{1}\!=\!\lambda_{2}\!=\!10^{-6}$]{\includegraphics[width=0.496\columnwidth]{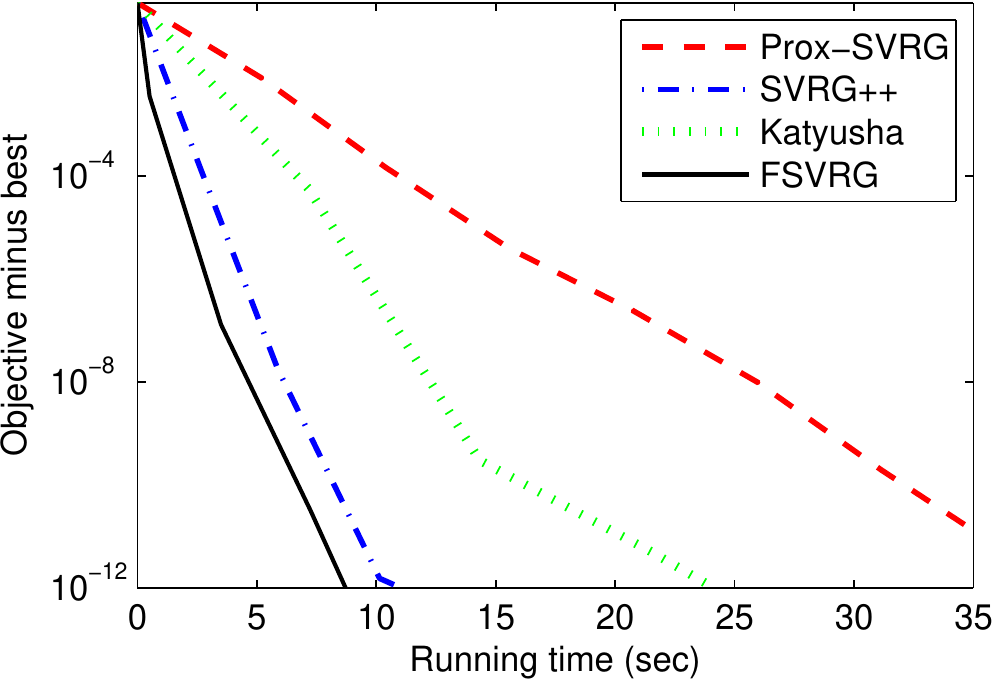}}
\vspace{-3mm}

\caption{Comparison of Prox-SVRG~\cite{xiao:prox-svrg}, SVRG++~\cite{zhu:vrnc}, Katyusha~\cite{zhu:Katyusha}, and our FSVRG method for elastic net (i.e., $\lambda_{1}\|x\|^{2}\!+\!\lambda_{2}\|x\|_{1}$) regularized logistic regression problems.}
\label{figs2}
\end{figure*}

\subsection{Convergence Properties for Mini-Batch Settings}
It has been shown in~\cite{shwartz:svm,nitanda:svrg,koneeny:mini} that mini-batching can effectively decrease the variance of stochastic gradient estimates. So, we extend FSVRG and its convergence results to the mini-batch setting. Here, we denote by $b$ the mini-batch size and $I^{s}_{k}$ the selected random index set $I_{k}\!\subset\![n]$ for each outer-iteration $s\!\in\![S]$ and inner-iteration $k\!\in\!\{1,\ldots,m_{s}\}$. Then the stochastic gradient estimator $\widetilde{\nabla}\!f_{i^{s}_{k}}\!(x^{s}_{k-\!1})$ becomes
\begin{equation}\label{equ36}
\widetilde{\nabla}\! f_{I^{s}_{k}}(x^{s}_{k-\!1})\!=\!\frac{1}{b}\!\sum_{i\in I^{s}_{k}}\!\!\left[\nabla\!f_{i}(x^{s}_{k-\!1})\!-\!\nabla\! f_{i}(\widetilde{x}^{s-\!1})\right]\!+\!{\nabla}\! f(\widetilde{x}^{s-\!1}).
\vspace{-2mm}
\end{equation}
And the momentum weight $\theta_{s}$ is required to satisfy $\theta_{s}\!\leq\! 1\!-\!\frac{\rho(b)L\eta}{1-L\eta}$ for SC and NSC cases, where $\rho(b)\!=\!(n\!-\!b)/[(n\!-\!1)b]$. The upper bound on the variance of $\widetilde{\nabla}\!f_{i^{s}_{k}}(x^{s}_{k-\!1})$ in Lemma~\ref{lemm2} is extended to the mini-batch setting as follows~\cite{liu:sadmm}.

\begin{corollary}[Variance bound of Mini-Batch]
\label{cor11}
\begin{displaymath}
\begin{split}
&\mathbb{E}\!\left[\left\|\widetilde{\nabla}\! f_{I^{s}_{k}}(x^{s}_{k-1})-\nabla\! f(x^{s}_{k-1})\right\|^{2}\right]\\
\leq&2L\rho(b)\!\left[\nabla\! f(x^{s}_{k-\!1})^{T}\!(x^{s}_{k-\!1}\!-\!\widetilde{x}^{s-\!1})\!-\!f(x^{s}_{k-\!1})\!+\!f(\widetilde{x}^{s-\!1})\right]\!.
\end{split}
\end{displaymath}
\end{corollary}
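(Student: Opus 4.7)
The plan is to reduce the mini-batch variance bound to the single-sample variance bound of Lemma~\ref{lemm2} via a standard covariance calculation for uniform sampling without replacement, where the factor $\rho(b)=(n\!-\!b)/[(n\!-\!1)b]$ is exactly the finite-population correction.

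First, I would define the centered per-index ``correction'' vectors
\begin{equation*}
Y_{i}:=\nabla\!f_{i}(x^{s}_{k-\!1})-\nabla\!f_{i}(\widetilde{x}^{s-\!1})-\bigl[\nabla\!f(x^{s}_{k-\!1})-\nabla\!f(\widetilde{x}^{s-\!1})\bigr],\qquad i=1,\dots,n,
\end{equation*}
which by construction satisfy $\sum_{i=1}^{n}Y_{i}=0$. Using the definition of $\widetilde{\nabla}\!f_{I^{s}_{k}}$ in~\eqref{equ36}, a direct substitution yields
\begin{equation*}
\widetilde{\nabla}\!f_{I^{s}_{k}}(x^{s}_{k-\!1})-\nabla\!f(x^{s}_{k-\!1})=\frac{1}{b}\sum_{i\in I^{s}_{k}}Y_{i},
\end{equation*}
so the object to be bounded is just the squared norm of a $b$-term uniform sub-sample average of mean-zero vectors.

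Next I would carry out the covariance computation under uniform sampling of $I^{s}_{k}\subset[n]$ without replacement, conditional on the randomness of the preceding iterates. Using $P(i\in I^{s}_{k})=b/n$ and $P(i,j\in I^{s}_{k})=b(b\!-\!1)/[n(n\!-\!1)]$ for $i\neq j$, I would expand
\begin{equation*}
\mathbb{E}\!\left\|\frac{1}{b}\!\sum_{i\in I^{s}_{k}}\!\!Y_{i}\right\|^{2}=\frac{1}{b^{2}}\!\left[\frac{b}{n}\!\sum_{i}\!\|Y_{i}\|^{2}+\frac{b(b\!-\!1)}{n(n\!-\!1)}\!\sum_{i\neq j}\!\langle Y_{i},Y_{j}\rangle\right]\!.
\end{equation*}
The constraint $\sum_{i}Y_{i}=0$ gives $\sum_{i\neq j}\langle Y_{i},Y_{j}\rangle=\|\sum_{i}Y_{i}\|^{2}-\sum_{i}\|Y_{i}\|^{2}=-\sum_{i}\|Y_{i}\|^{2}$, and telescoping the coefficients collapses the expression to
\begin{equation*}
\mathbb{E}\!\left\|\frac{1}{b}\!\sum_{i\in I^{s}_{k}}\!\!Y_{i}\right\|^{2}=\frac{n-b}{(n-1)b}\cdot\frac{1}{n}\!\sum_{i=1}^{n}\!\|Y_{i}\|^{2}=\rho(b)\cdot\mathbb{E}_{i}\!\left\|\widetilde{\nabla}\!f_{i}(x^{s}_{k-\!1})-\nabla\!f(x^{s}_{k-\!1})\right\|^{2},
\end{equation*}
where the last equality rewrites $\frac{1}{n}\sum_{i}\|Y_{i}\|^{2}$ as the single-index expectation of the SVRG estimator's deviation (which is exactly what Lemma~\ref{lemm2} bounds, since the $Y_{i}$ are the centered versions of $\nabla\!f_{i}(x^{s}_{k-\!1})-\nabla\!f_{i}(\widetilde{x}^{s-\!1})$ and centering only decreases the second moment).

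Finally, I would invoke Lemma~\ref{lemm2} to upper-bound this single-index variance by $2L\,[f(\widetilde{x}^{s-\!1})-f(x^{s}_{k-\!1})+\nabla\!f(x^{s}_{k-\!1})^{T}(x^{s}_{k-\!1}-\widetilde{x}^{s-\!1})]$, multiply through by $\rho(b)$, and obtain the claimed inequality. Two sanity checks confirm the form of $\rho(b)$: setting $b=1$ gives $\rho(1)=1$, recovering Lemma~\ref{lemm2} exactly, and setting $b=n$ gives $\rho(n)=0$, consistent with the fact that a full-batch estimator has zero variance. The main (minor) obstacle is being careful about the without-replacement combinatorics in step two; everything else is algebraic manipulation plus one appeal to Lemma~\ref{lemm2}.
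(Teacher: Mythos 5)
Your proof is correct. Note, however, that the paper itself does not prove Corollary~\ref{cor11}: it simply imports the bound by citation (to~\cite{liu:sadmm}), so there is no in-paper argument to compare against. What you supply is the standard self-contained derivation: write $\widetilde{\nabla}f_{I^{s}_{k}}(x^{s}_{k-1})-\nabla f(x^{s}_{k-1})$ as the sub-sample average $\frac{1}{b}\sum_{i\in I^{s}_{k}}Y_{i}$ of the mean-zero vectors $Y_{i}$, compute the second moment under without-replacement sampling using the inclusion probabilities $b/n$ and $b(b-1)/[n(n-1)]$, use $\sum_{i}Y_{i}=0$ to convert the cross terms into $-\sum_{i}\|Y_{i}\|^{2}$, and read off the finite-population correction $\rho(b)=(n-b)/[(n-1)b]$ before invoking Lemma~\ref{lemm2}. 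The algebra checks out, and your sanity checks at $b=1$ and $b=n$ match Remark~1 in the paper. One small simplification: your parenthetical that ``centering only decreases the second moment'' is unnecessary and slightly misleading, because $Y_{i}=\widetilde{\nabla}f_{i}(x^{s}_{k-1})-\nabla f(x^{s}_{k-1})$ holds \emph{exactly} (the centering term $\nabla f(x^{s}_{k-1})-\nabla f(\widetilde{x}^{s-1})$ combined with $\nabla f_i(\widetilde x^{s-1})$ reproduces the SVRG estimator minus the full gradient), so $\frac{1}{n}\sum_{i}\|Y_{i}\|^{2}$ is identically the quantity bounded in Lemma~\ref{lemm2}; no inequality is needed at that step.
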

\vspace{-2mm}

It is easy to verify that $0\!\leq\!\rho(b)\!\leq\!1$, which implies that mini-batching is able to reduce the variance upper bound in Lemma~\ref{lemm2}. Based on the variance upper bound in Corollary~\ref{cor11}, we further analyze the convergence properties of our algorithms for the mini-batch setting. Obviously, the number of stochastic iterations in each epoch is reduced from $m_{s}$ to $\lfloor m_{s}/b\rfloor$. For the case of SC objective functions, the mini-batch variant of FSVRG has almost identical convergence properties to those in Theorem~\ref{theo1}. In contrast, we need to initialize $\theta_{1}\!=\!1\!-\!\frac{\rho(b){L}\eta}{1-{L}\eta}$ and update $\theta_{s}$ by the procedure in (10) for the case of NSC objective functions. Theorem~\ref{theo2} is also extended to the mini-batch setting as follows.

\begin{corollary}\label{cor12}
Suppose $f_{i}(\cdot)$ is $L$-smooth, and let $\theta_{1}\!=\!1\!-\!\frac{\rho(b){L}\eta}{1-{L}\eta}$ and $\beta\!=\!{1}/{L\eta}$, then the following inequality holds:
\vspace{-1mm}
\begin{equation}\label{equ37}
\begin{split}
&\mathbb{E}\!\left[\phi(\widetilde{x}^{S})\!-\phi(x_{\star})\right]\leq\frac{2/\eta}{m_{1}(S\!+\!2)^{2}}\!\left[\|x_{\star}\!-\!\widetilde{x}^{0}\|^2\right]\\
&\quad\;\;+\!\frac{4(\beta-1)\rho(b)}{(\beta\!-\!1\!-\!\rho(b))^{2}(S\!+\!2)^{2}}[\phi(\widetilde{x}^{0})\!-\!\phi(x_{\star}\!)].
\end{split}
\end{equation}
\end{corollary}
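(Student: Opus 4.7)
The plan is to mimic the proof of Theorem~\ref{theo2} step by step, with the only real differences being that (i) the per-iteration variance bound of Lemma~\ref{lemm2} is replaced by the mini-batch bound of Corollary~\ref{cor11}, which carries an extra factor $\rho(b)$, and (ii) the number of inner iterations in each epoch is $\lfloor m_s/b\rfloor$ instead of $m_s$. The effect of the $\rho(b)$ factor is exactly what forces the momentum weight to satisfy $\theta_s\le 1-\rho(b)L\eta/(1-L\eta)$, and this in turn sets the initialization $\theta_1=1-\rho(b)L\eta/(1-L\eta)$.

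First, I would establish a mini-batch analogue of Lemma~\ref{lemm1}, namely
\begin{equation*}
\mathbb{E}[\phi(\widetilde{x}^{s})-\phi(x_{\star})]\le (1-\theta_{s})\,\mathbb{E}[\phi(\widetilde{x}^{s-1})-\phi(x_{\star})]+\frac{\theta_{s}^{2}}{2\eta\lfloor m_{s}/b\rfloor}\mathbb{E}\!\left[\|y^{s}_{0}-x_{\star}\|^{2}-\|y^{s}_{\lfloor m_{s}/b\rfloor}-x_{\star}\|^{2}\right]\!,
\end{equation*}
by repeating the argument that proves Lemma~\ref{lemm1} with $\widetilde{\nabla}\!f_{I^{s}_{k}}$ in place of $\widetilde{\nabla}\!f_{i^{s}_{k}}$ and invoking Corollary~\ref{cor11} in place of Lemma~\ref{lemm2}. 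Because Corollary~\ref{cor11} produces an extra multiplicative $\rho(b)$ in the variance term, the smoothness inequality used to absorb the variance into a progress step on $\phi$ now requires the tighter bound $\theta_{s}\le 1-\rho(b)L\eta/(1-L\eta)$; this is precisely the hypothesis under which $\theta_{1}$ is initialized in the corollary, and $\theta_{s}$ decreases in $s$ under~\eqref{equ11}, so the constraint is preserved.

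Next, set $\beta=1/(L\eta)$. A direct computation gives
\begin{equation*}
\theta_{1}=1-\frac{\rho(b)L\eta}{1-L\eta}=\frac{\beta-1-\rho(b)}{\beta-1},\qquad\frac{1-\theta_{1}}{\theta_{1}^{2}}=\frac{\rho(b)(\beta-1)}{(\beta-1-\rho(b))^{2}},
\end{equation*}
which identifies the second coefficient appearing in~\eqref{equ37}. From here I would repeat the telescoping argument of Theorem~\ref{theo2} verbatim: divide the mini-batch Lemma~\ref{lemm1} by $\theta_{s}^{2}$, use the recursion $(1-\theta_{s+1})/\theta_{s+1}^{2}\le 1/\theta_{s}^{2}$ and the bound $\theta_{s}\le 2/(s+2)$ from~\eqref{equ34}, sum for $s=1,\ldots,S$, and telescope the $\|y^{s}_{0}-x_{\star}\|^{2}$ terms via $y^{s+1}_{0}=y^{s}_{\lfloor m_{s}/b\rfloor}$. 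Dropping the non-positive term $-\mathbb{E}\|x_{\star}-y^{S}_{\lfloor m_{S}/b\rfloor}\|^{2}$, using $y^{1}_{0}=\widetilde{x}^{0}$, and multiplying both sides by $\theta_{S}^{2}\le 4/(S+2)^{2}$ yields the claimed inequality, provided the inner-loop length satisfies $\lfloor m_{s}/b\rfloor\ge m_{1}/b$; this can be absorbed by redefining $m_{1}$ accordingly, so the stated form with denominator $m_{1}$ in the first term is consistent with the proof of Theorem~\ref{theo2}.

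The main obstacle is the first step: carrying the mini-batch variance bound of Corollary~\ref{cor11} through the proof of Lemma~\ref{lemm1} while tracking the precise condition on $\theta_{s}$. Everything else is algebra that parallels Theorem~\ref{theo2}, and once the condition $\theta_{1}=(\beta-1-\rho(b))/(\beta-1)$ is in place, the identity $(1-\theta_{1})/\theta_{1}^{2}=\rho(b)(\beta-1)/(\beta-1-\rho(b))^{2}$ delivers the exact constants shown in~\eqref{equ37}.
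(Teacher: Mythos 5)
Your proposal is correct and follows essentially the same route as the paper: the paper's proof of Corollary~\ref{cor12} simply substitutes $\theta_{1}=1-\rho(b)/(\beta-1)$ into the bound of Theorem~\ref{theo2}, using the identity $(1-\theta_{1})/\theta_{1}^{2}=\rho(b)(\beta-1)/(\beta-1-\rho(b))^{2}$, while relying on the appendix remark that Lemma~\ref{lemm1} carries over to the mini-batch setting by replacing Lemma~\ref{lemm2} with Corollary~\ref{cor11} (i.e., $1/\beta_{2}$ with $\rho(b)/\beta_{2}$). You make explicit the details the paper leaves implicit, including the $\lfloor m_{s}/b\rfloor$ inner-loop length, but the argument is the same.
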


\begin{proof}
Since
\begin{displaymath}
\theta_{1}=1-({\rho(b){L}\eta})/({1-{L}\eta})=1-{\rho(b)}/({\beta-1}),
\end{displaymath}
then we have
\begin{equation*}
\begin{split}
&\mathbb{E}\!\left[\phi(\widetilde{x}^{S})-\phi(x_{\star})\right]\\
\leq&\frac{4(\beta-1)\rho(b)}{(\beta\!-\!1\!-\!\rho(b))^{2}(S\!\!+\!\!2)^{2}}\![\phi(\widetilde{x}^{0})\!-\!\phi(x_{\star}\!)]\!+\!\!\frac{2/\eta}{ m_{1}\!(S\!\!+\!\!2)^{2}}\!\left[\|x_{\star}\!\!-\!\widetilde{x}^{0}\!\|^2\right]\!\!.
\end{split}
\end{equation*}
This completes the proof.
\end{proof}

\begin{remark}
When $b\!=\!1$, we have $\rho(1)\!=\!1$, and then Corollary~\ref{cor12} degenerates to Theorem~\ref{theo2}. If $b\!=\!n$ (i.e., the batch setting), we have $\rho(n)\!=\!0$, and the second term on the right-hand side of \eqref{equ37} diminishes. In other words, FSVRG degenerates to the accelerated deterministic method with the optimal convergence rate of $\mathcal{O}(1/T^{2})$.
\end{remark}

\section{Experimental Results}
\label{sec5}
In this section, we evaluate the performance of our FSVRG method for solving various machine learning problems, such as logistic regression, ridge regression, Lasso and SVM. All the codes of FSVRG and related methods can be downloaded from the first author's website.

\subsection{Experimental Setup}
For fair comparison, FSVRG and related stochastic variance reduced methods, including SVRG~\cite{johnson:svrg}, Prox-SVRG~\cite{xiao:prox-svrg}, SVRG++~\cite{zhu:univr} and Katyusha~\cite{zhu:Katyusha}, were implemented in C++, and the experiments were performed on a PC with an Intel i5-2400 CPU and 16GB RAM. As suggested in~\cite{johnson:svrg,xiao:prox-svrg,zhu:Katyusha}, the epoch size is set to $m\!=\!2n$ for SVRG, Prox-SVRG, and Katyusha. FSVRG and SVRG++ have the similar strategy of growing epoch size, e.g., $m_{1}\!\!=\!n/2$ and $\rho\!=\!1.6$ for FSVRG, and $m_{1}\!\!=\!n/4$ and $\rho\!=\!2$ for SVRG++.  Then for all these methods, there is only one parameter to tune, i.e., the learning rate. Note that we compare their performance in terms of both the number of effective passes (evaluating $n$ component gradients or computing a single full gradient is considered as one effective pass) and running time (seconds). Moreover, we do not compare with other stochastic algorithms such as SAGA~\cite{defazio:saga} and Catalyst~\cite{lin:vrsg}, as they have been shown to be comparable or inferior to Katyusha~\cite{zhu:Katyusha}.

\subsection{Logistic Regression}
In this part, we conducted experiments for both the $\ell_{2}$-norm and elastic net regularized logistic regression problems on the four popular data sets: IJCNN, Covtype, SUSY, and Protein, all of which were obtained from the LIBSVM Data website{\footnote{\url{https://www.csie.ntu.edu.tw/~cjlin/libsvm/}}} and the KDD Cup 2004 website{\footnote{\url{http://osmot.cs.cornell.edu/kddcup}}}. Each example of these date sets was normalized so that they have unit length as in~\cite{xiao:prox-svrg}, which leads to the same upper bound on the Lipschitz constants $L_{i}$ of functions $f_{i}(\cdot)$.

Figures~\ref{figs1} and~\ref{figs2} show the performance of different methods for solving the two classes of logistic regression problems, respectively. It can be seen that SVRG++ and FSVRG consistently converge much faster than the other methods in terms of both running time (seconds) and number of effective passes. The accelerated stochastic variance reduction method, Katyusha, has much better performance than the standard SVRG method in terms of number of effective passes, while it sometimes performs worse in terms of running time. FSVRG achieves consistent speedups for all the data sets, and outperforms the other methods in all the settings. The main reason is that FSVRG not only takes advantage of the momentum acceleration trick, but also can use much larger step sizes, e.g., 1/(3$L$) for FSVRG vs.\ 1/(7$L$) for SVRG++ vs.\ 1/(10$L$) for SVRG. This also confirms that FSVRG has much lower per-iteration cost than Katyusha.

\begin{figure*}[!th]
\centering
\includegraphics[width=0.496\columnwidth]{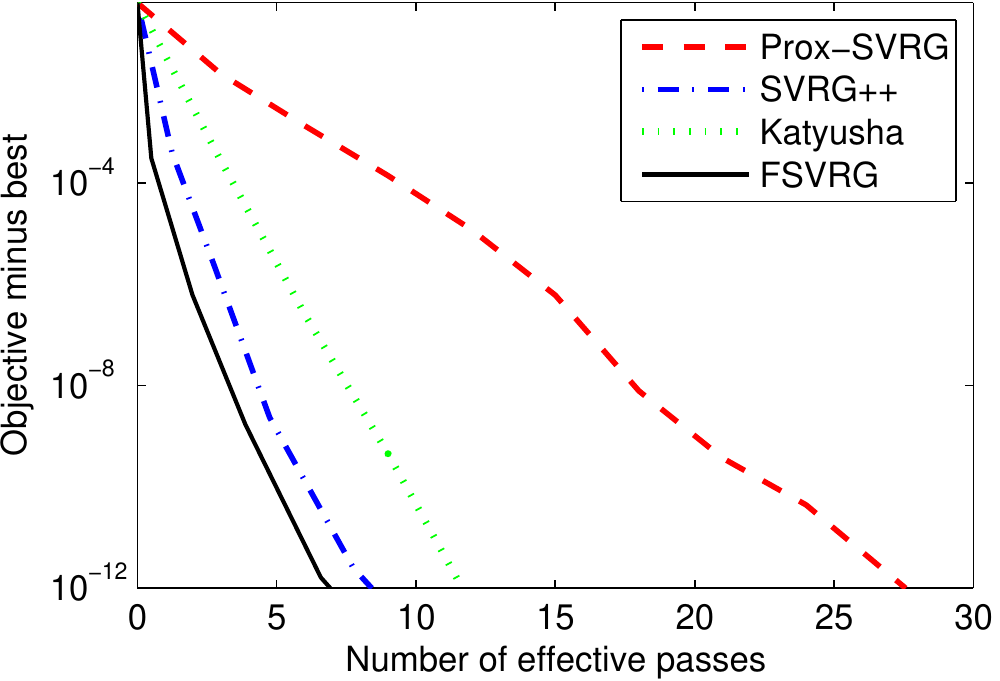}
\includegraphics[width=0.496\columnwidth]{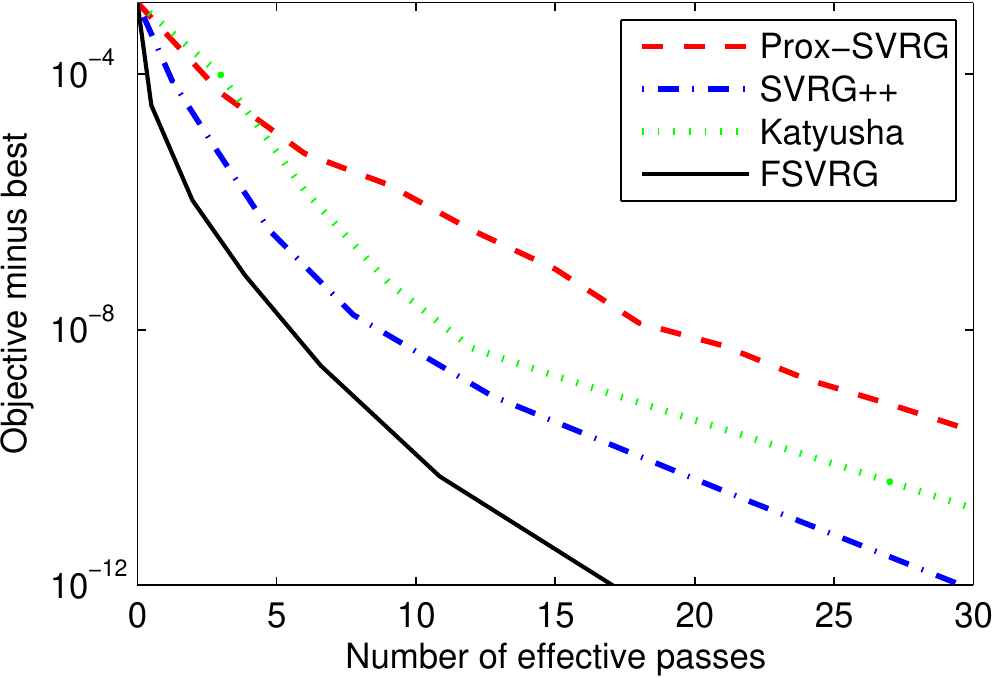}
\includegraphics[width=0.496\columnwidth]{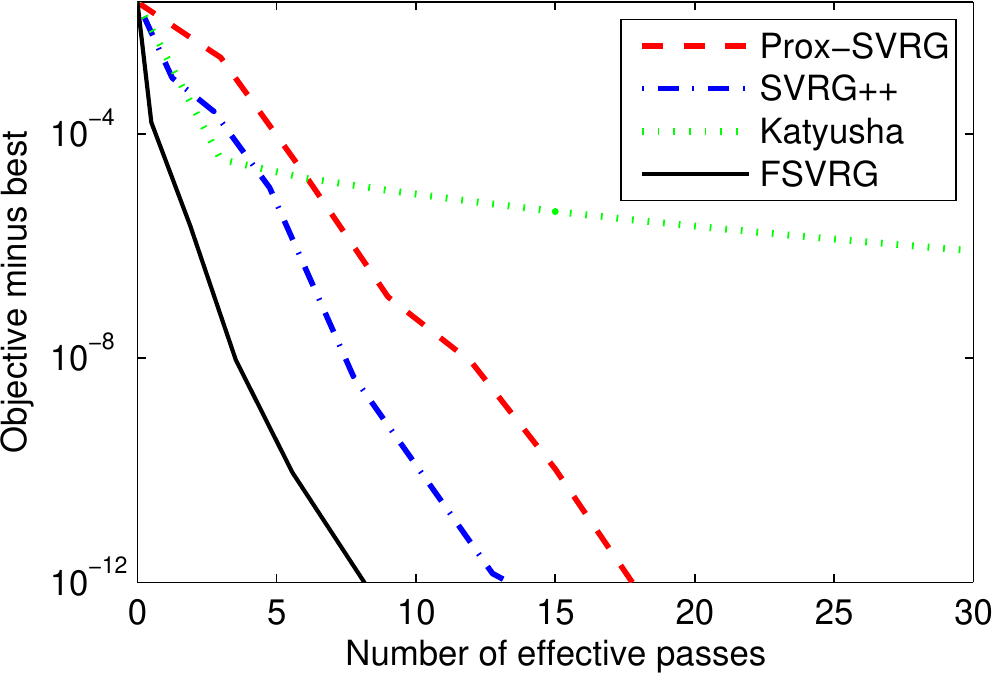}
\includegraphics[width=0.496\columnwidth]{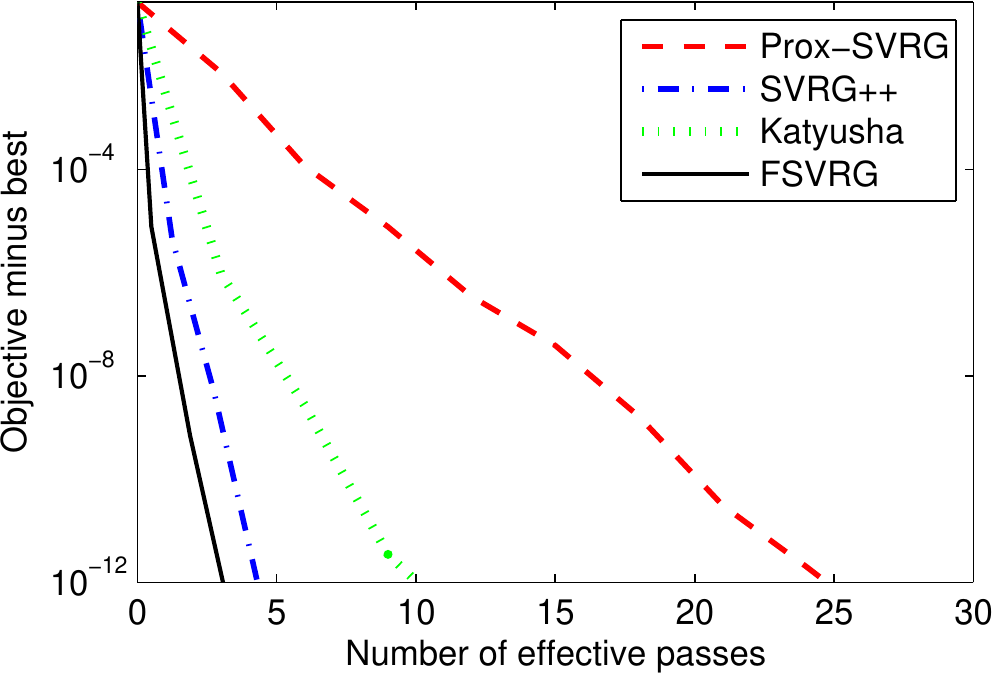}
\vspace{-1.6mm}

\subfigure[IJCNN: $\lambda\!=\!10^{-4}$]{\includegraphics[width=0.496\columnwidth]{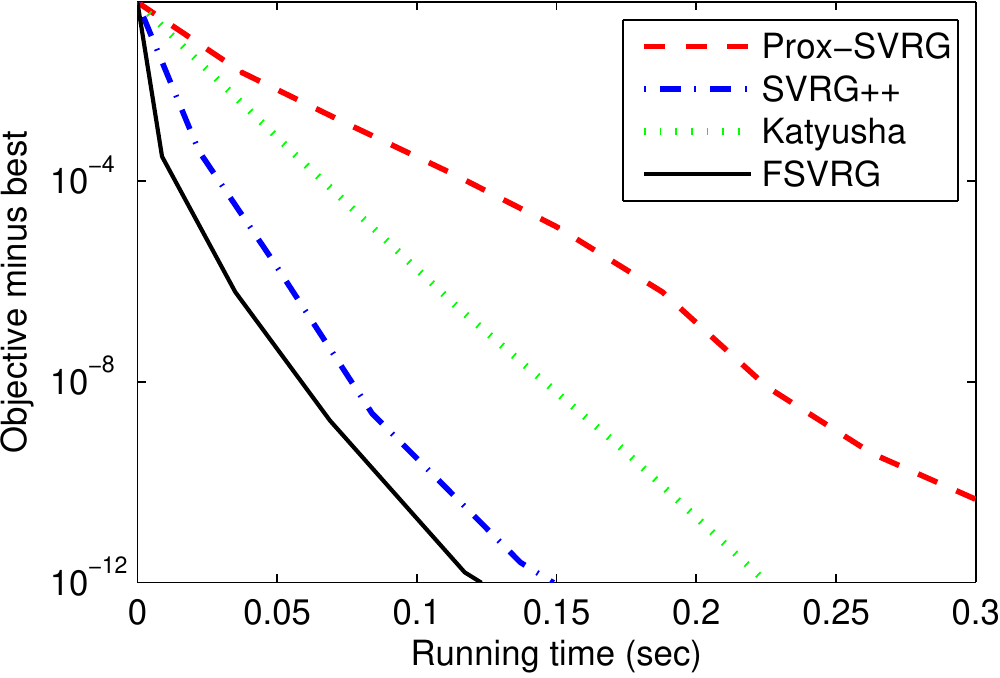}}
\subfigure[Protein: $\lambda\!=\!10^{-4}$]{\includegraphics[width=0.496\columnwidth]{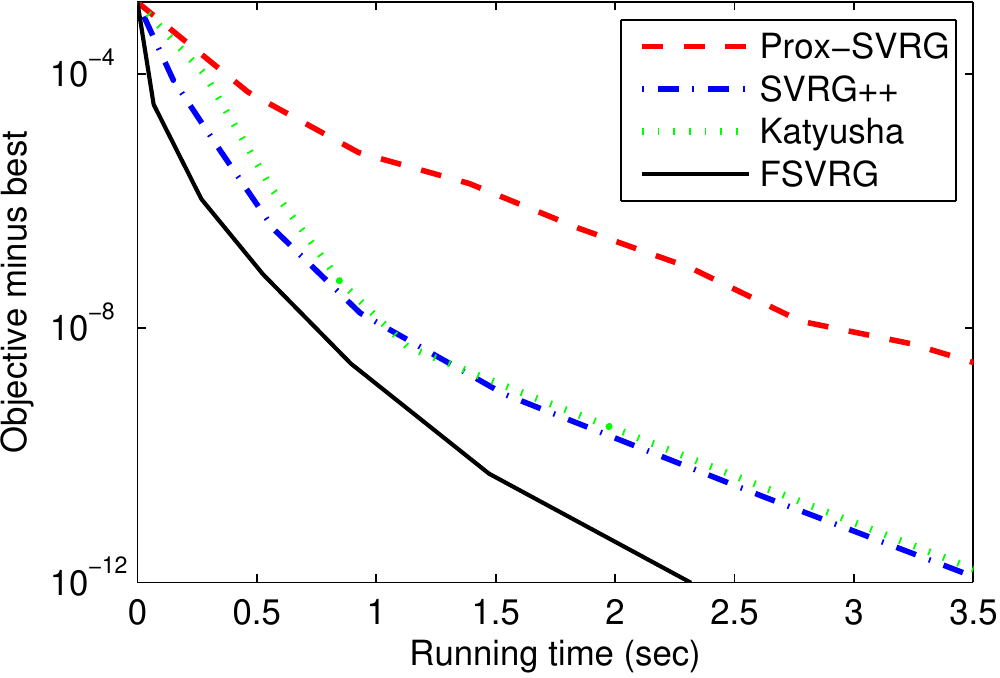}}
\subfigure[Covtype: $\lambda\!=\!10^{-5}$]{\includegraphics[width=0.496\columnwidth]{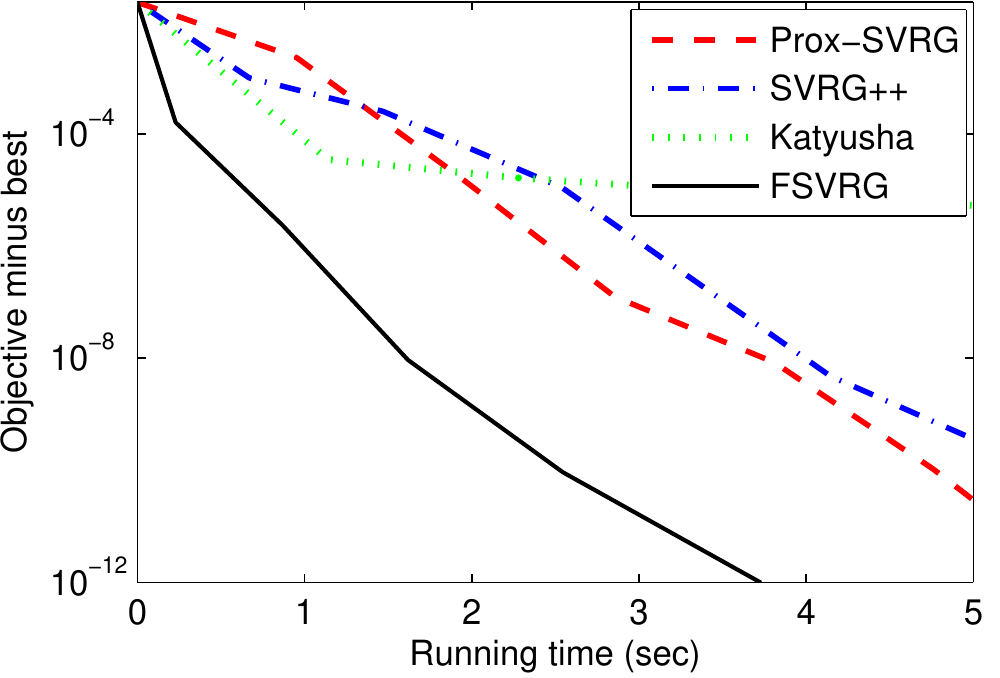}}
\subfigure[SUSY: $\lambda\!=\!10^{-6}$]{\includegraphics[width=0.496\columnwidth]{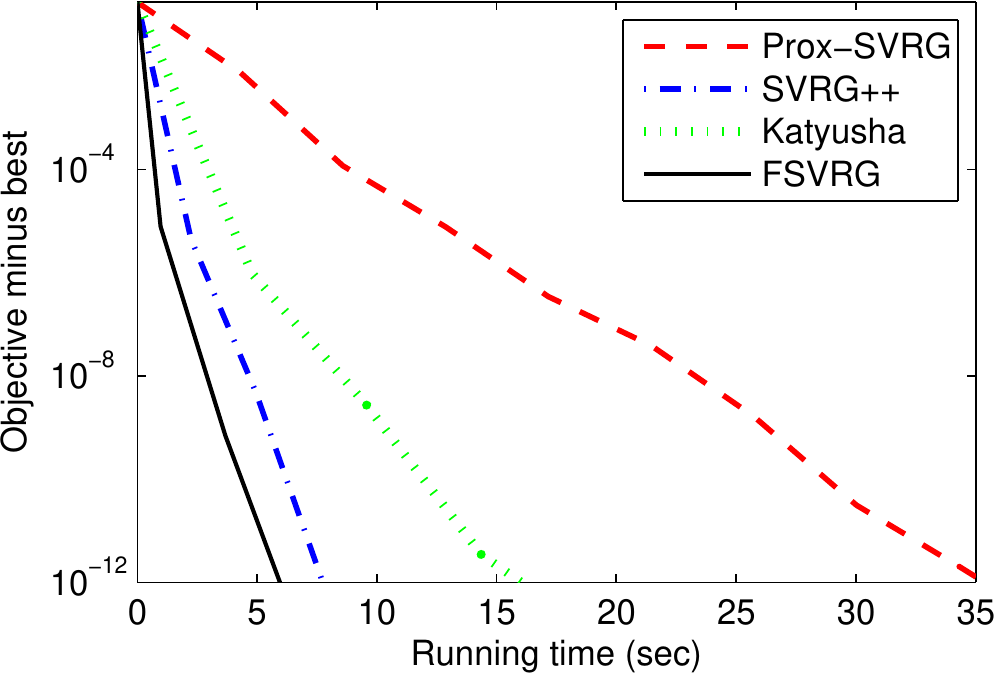}}
\vspace{-3mm}

\caption{Comparison of Prox-SVRG~\cite{xiao:prox-svrg}, SVRG++~\cite{zhu:vrnc}, Katyusha~\cite{zhu:Katyusha}, and FSVRG for Lasso problems.}
\label{figs3}
\end{figure*}

\subsection{Lasso}
In this part, we conducted experiments for the Lasso problem with the regularizer $\lambda\|x\|_{1}$ on the four data sets. We report the experimental results of different methods in Figure~\ref{figs3}, where the regularization parameter is varied from $\lambda\!=\!10^{-4}$ to $\lambda\!=\!10^{-6}$. From all the results, we can observe that FSVRG converges much faster than the other methods, and also outperforms Katyusha in terms of number of effective passes, which matches the optimal convergence rate for the NSC problem. SVRG++ achieves comparable and sometimes even better performance than SVRG and Katyusha. This further verifies that the efficiency of the growing epoch size strategy in SVRG++ and FSVRG.

\subsection{Ridge Regression}
In this part, we implemented all the algorithms mentioned above in C++ for high-dimensional sparse data, and compared their performance for solving ridge regression problems on the two very sparse data sets, Rcv1 and Sido0 (whose sparsity is 99.84\% and 90.16\%), as shown in Figure~\ref{figs4}. The two data sets can be downloaded from the LIBSVM Data website and the Causality Workbench website{\footnote{\url{http://www.causality.inf.ethz.ch/home.php}}}. From the results, one can observe that although Katyusha outperforms SVRG in terms of number of effective passes, both of them usually have similar convergence speed. SVRG++ has relatively inferior performance (maybe due to large value for $\rho$, i.e., $\rho\!=\!2$) than the other methods. It can be seen that the objective value of FSVRG is much lower than those of the other methods, suggesting faster convergence.

\begin{figure}[t]
\centering
\includegraphics[width=0.486\columnwidth]{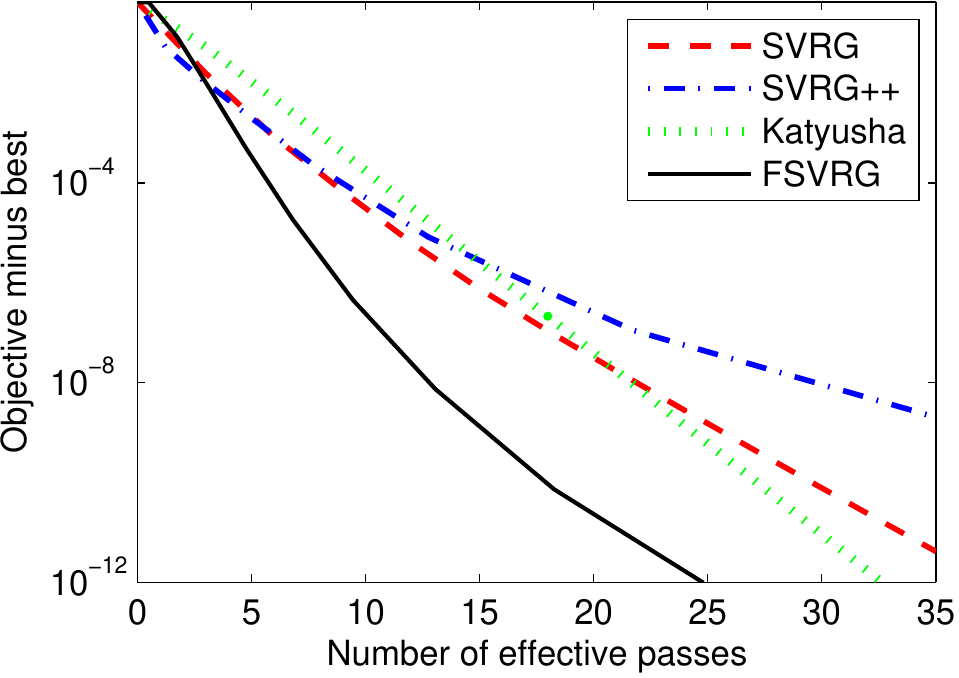}
\includegraphics[width=0.486\columnwidth]{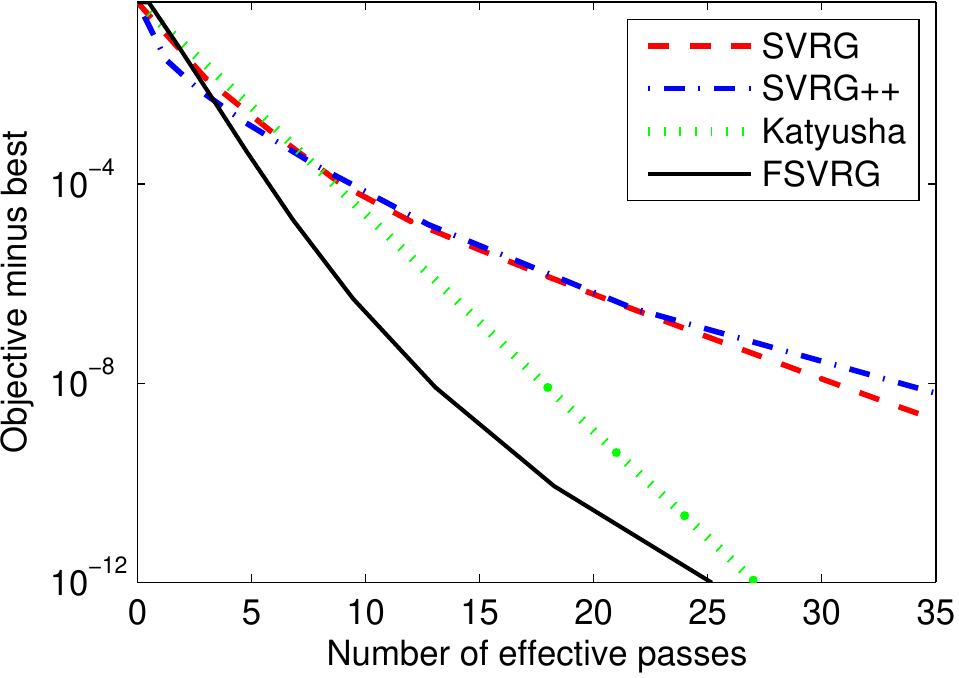}
\vspace{-1.6mm}

\subfigure[Rcv1]{\includegraphics[width=0.486\columnwidth]{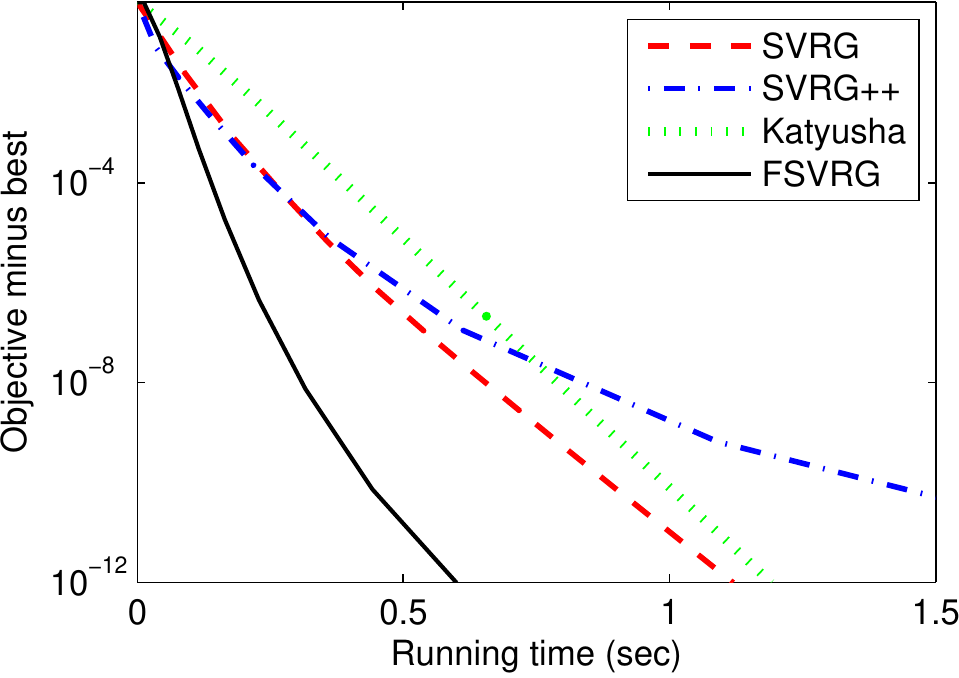}}
\subfigure[Sido0]{\includegraphics[width=0.486\columnwidth]{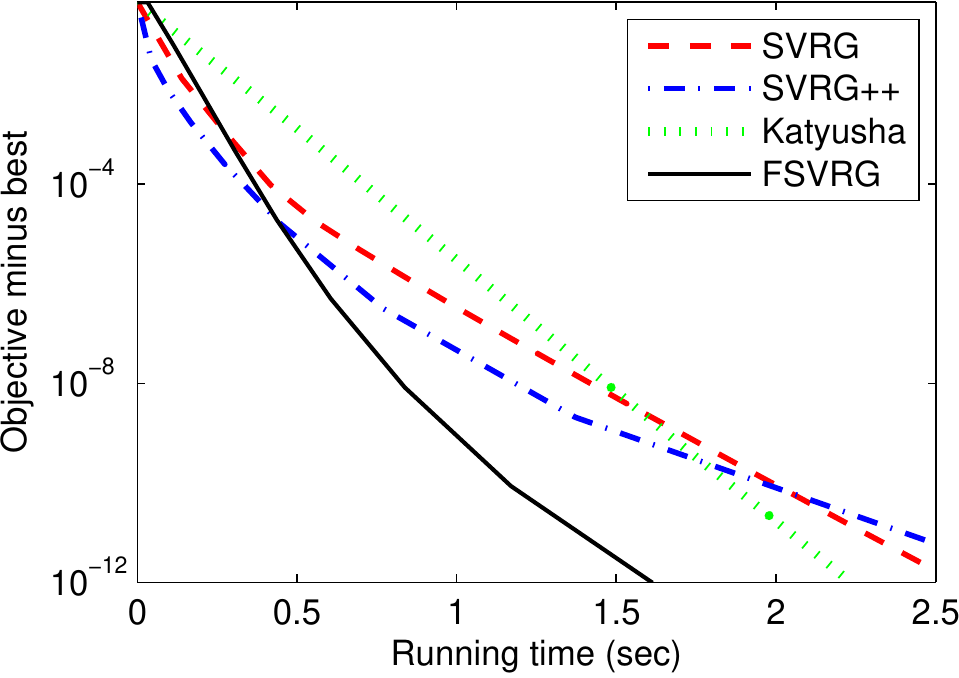}}
\vspace{-3mm}

\caption{Comparison of SVRG~\cite{johnson:svrg}, SVRG++~\cite{zhu:vrnc}, Katyusha~\cite{zhu:Katyusha}, and FSVRG for ridge regression problems with regularization parameter $\lambda\!=\!10^{-4}$.}
\label{figs4}
\end{figure}

Figure~\ref{figs5} compares the performance of our FSVRG method with different mini-batch sizes on the two data sets, IJCNN and Protein. It can be seen that by increasing the mini-batch size to $b\!=\!2,4$, FSVRG has comparable or even better performance than the case when $b\!=\!1$.

\begin{figure}[t]
\centering
\includegraphics[width=0.486\columnwidth]{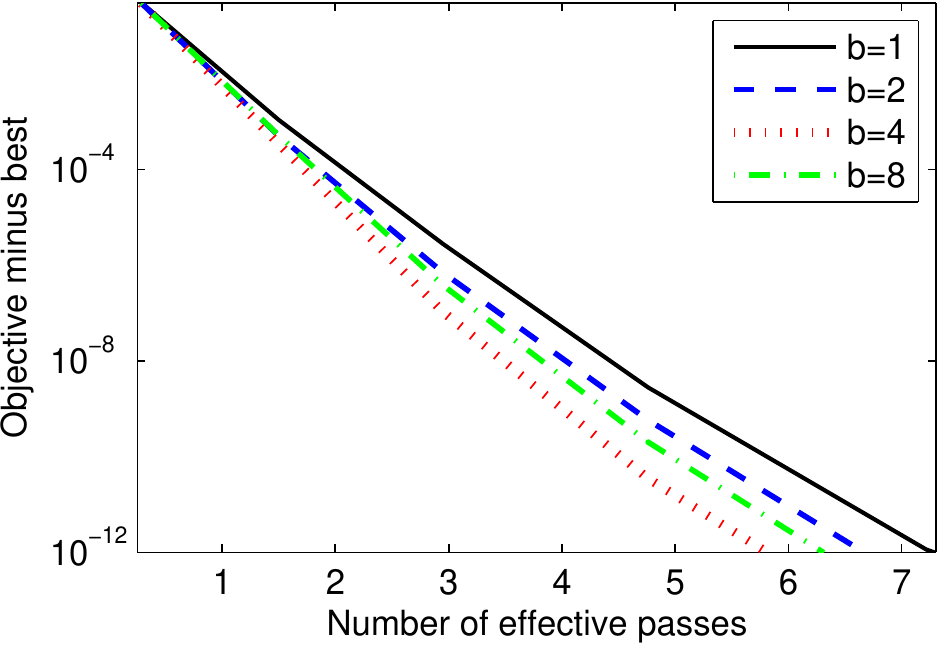}
\includegraphics[width=0.486\columnwidth]{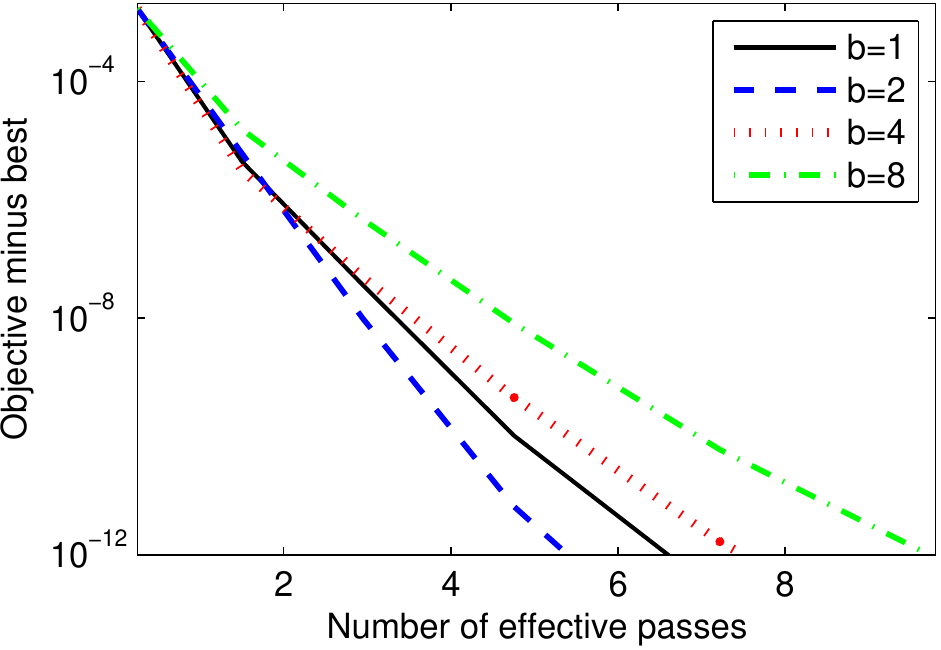}
\vspace{-3mm}

\caption{Results of FSVRG with different mini-batch sizes on IJCNN (left) and Protein (right).}
\label{figs5}
\end{figure}

\subsection{SVM}
Finally, we evaluated the empirical performance of FSVRG for solving the SVM optimization problem
\vspace{-2mm}
\begin{equation*}
\min_{x}\frac{1}{n}\sum^{n}_{i=1}\max\{0,1-b_{i}\langle a_{i},x\rangle\}+\frac{\lambda}{2}\|x\|^{2},
\vspace{-2mm}
\end{equation*}
where $(a_{i},b_{i})$ is the feature-label pair. For the binary classification data set, IJCNN, we randomly divided it into 10\% training set and 90\% test set. We used the standard one-vs-rest scheme for the multi-class data set, the MNIST data set{\footnote{\url{http://yann.lecun.com/exdb/mnist/}}}, which has a training set of 60,000 examples and a test set of 10,000 examples. The regularization parameter is set to $\lambda\!=\!10^{-5}$. Figure~\ref{figs6} shows the performance of the stochastic sub-gradient descent method (SSGD)~\cite{shamir:sgd}, SVRG and FSVRG for solving the SVM problem. Note that we also extend SVRG to non-smooth settings, and use the same scheme in (12). We can see that the variance reduced methods, SVRG and FSVRG, yield significantly better performance than SSGD. FSVRG consistently outperforms SSGD and SVRG in terms of convergence speed and testing accuracy. Intuitively, the momentum acceleration trick in (8) can lead to faster convergence. We leave the theoretical analysis of FSVRG for this case as our future research.

\begin{figure}[t]
\centering
\includegraphics[width=0.486\columnwidth]{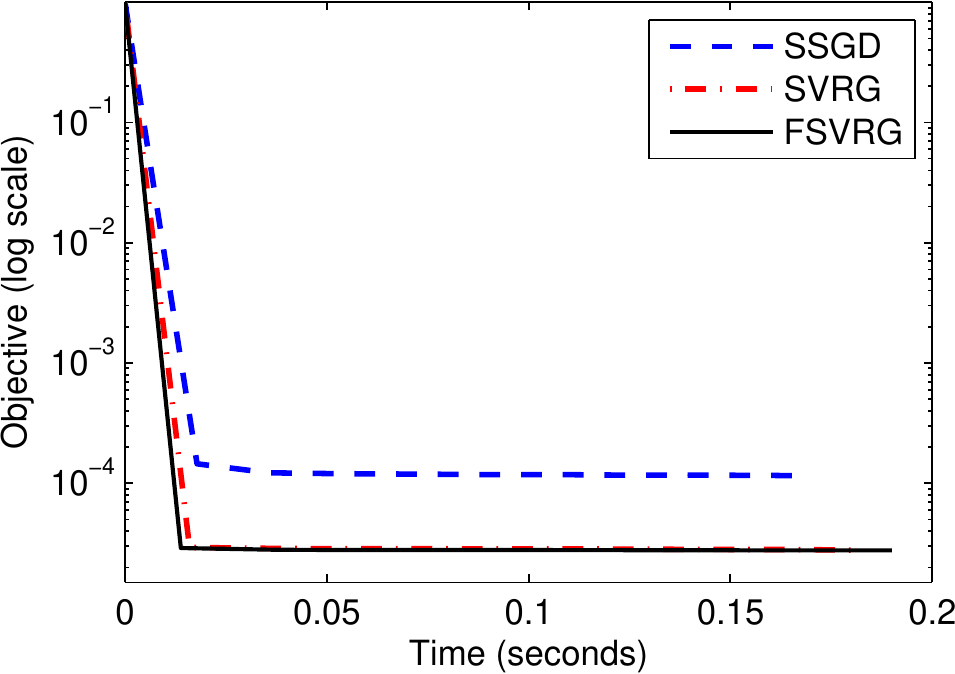}
\includegraphics[width=0.486\columnwidth]{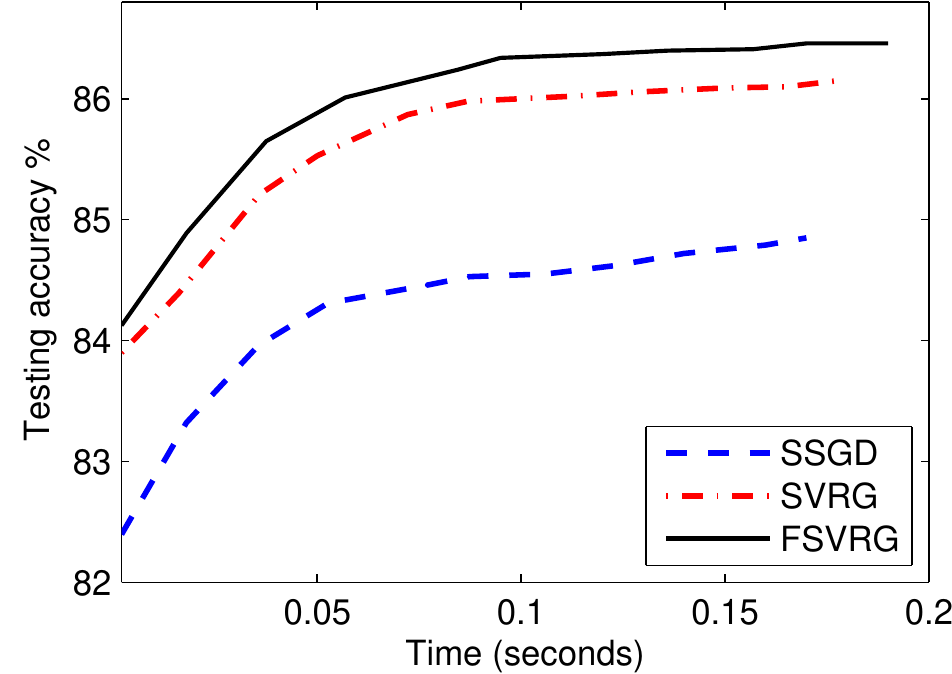}
\includegraphics[width=0.486\columnwidth]{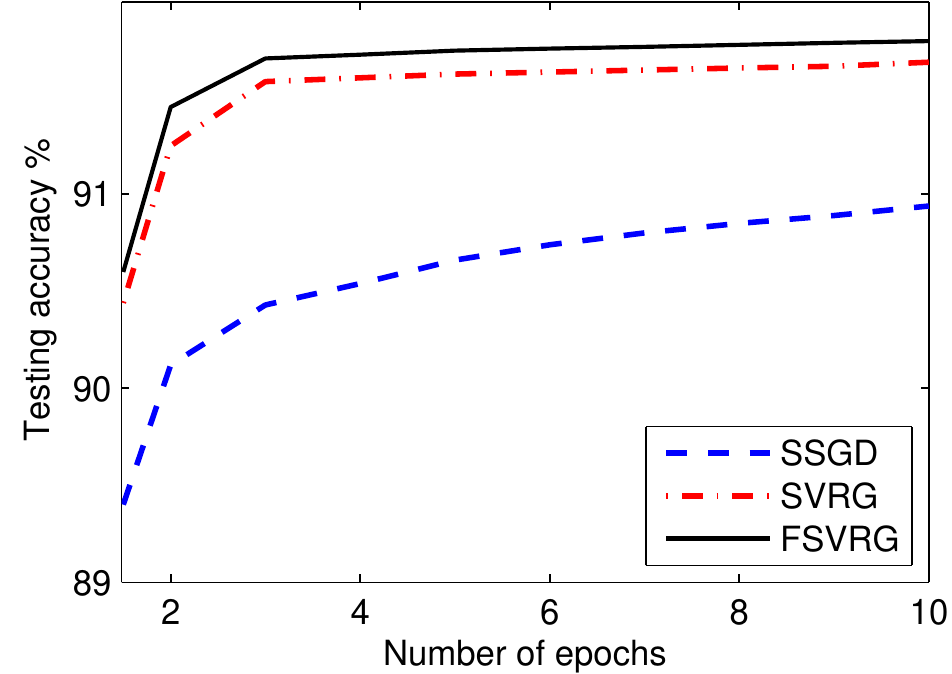}
\includegraphics[width=0.486\columnwidth]{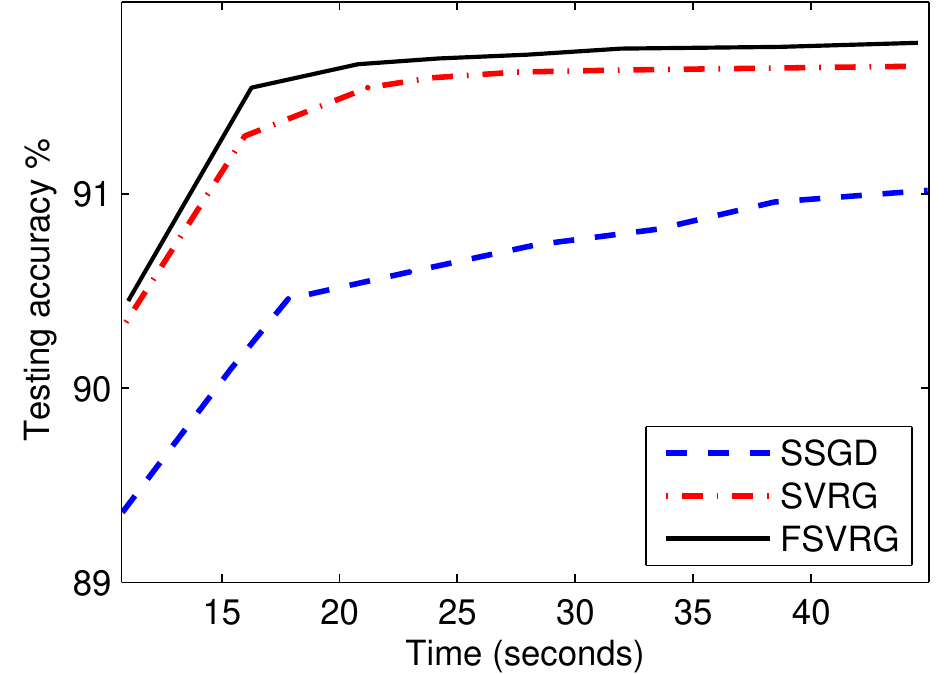}
\vspace{-3mm}

\caption{Comparison of different methods for SVM problems on IJCNN (top) and MNIST (bottom).}
\label{figs6}
\end{figure}

\section{Discussion and Conclusion}
Recently, there is a surge of interests in accelerating stochastic variance reduction gradient optimization. SVRG++~\cite{zhu:vrnc} uses the doubling-epoch technique (i.e., $m_{s+\!1}\!\!=\!2m_{s}$), which can reduce the number of gradient calculations in the early iterations, and lead to faster convergence in general. In contrast, our FSVRG method uses a more general growing epoch size strategy as in~\cite{mahdavi:sgd}, i.e., $m_{s+\!1}\!=\!\rho m_{s}$ with the factor $\rho\!>\!1$, which implies that we can be much more flexible in choosing $\rho$. Unlike SVRG++, FSVRG also enjoys the momentum acceleration trick and has the convergence guarantee for SC problems in Case 1.

The momentum acceleration technique has been used in accelerated SGD~\cite{sutskever:sgd} and variance reduced methods~\cite{lan:rpdg,lin:vrsg,nitanda:svrg,zhu:Katyusha}. Different from other methods~\cite{lan:rpdg,lin:vrsg,nitanda:svrg}, the momentum term of FSVRG involves the snapshot point, i.e., $\widetilde{x}^{s}$, which is also called as the Katyusha momentum in~\cite{zhu:Katyusha}. It was shown in~\cite{zhu:Katyusha} that Katyusha has the best known overall complexities for both SC and NSC problems. As analyzed above, FSVRG is much simpler and more efficient than Katyusha, which has also been verified in our experiments. Therefore, FSVRG is suitable for large-scale machine learning and data mining problems.

In this paper, we proposed a simple and fast stochastic variance reduction gradient (FSVRG) method, which integrates the momentum acceleration trick, and also uses a more flexible growing epoch size strategy. Moreover, we provided the convergence guarantees of our method, which show that FSVRG attains linear convergence for SC problems, and achieves the optimal $\mathcal{O}(1/T^2)$ convergence rate for NSC problems. The empirical study showed that the performance of FSVRG is much better than that of the state-of-the-art stochastic methods. Besides the ERM problems in Section~\ref{sec5}, we can apply FSVRG to other machine learning problems, e.g., deep learning and eigenvector computation~\cite{garber:svd}.

\section*{APPENDIX: Proof of Lemma 1}

\vspace{3mm}
\begin{proof}
The smoothness inequality in Definition 1 has the following equivalent form,
\begin{displaymath}
f(x_{2})\!\leq\! f(x_{1})\!+\!\langle\nabla\! f(x_{1}),x_{2}\!-\!x_{1}\rangle\!+\!0.5L\|x_{2}\!-\!x_{1}\|^{2}\!,\forall x_{1},x_{2}\!\in\!\mathbb{R}^{d}.
\end{displaymath}
Let $\beta_{1}\!\!>\!2$ be an appropriate constant, $\beta_{2}\!\!=\!\beta_{1}\!\!-\!1\!>\!1$, and $\widetilde{\nabla}_{\!i^{s}_{k}}\!:=\!\widetilde{\nabla}\!f_{i^{s}_{k}}\!(x^{s}_{k-\!1})$. Using the above inequality, we have
\begin{equation}\label{equ71}
\begin{split}
\!\!\!f(x^{s}_{k})\!\leq& f(x^{s}_{k\!-\!1})\!+\!\langle\nabla\!f(x^{s}_{k\!-\!1}),x^{s}_{k}\!\!-\!x^{s}_{k\!-\!1}\rangle\!\!+\!0.5L\|x^{s}_{k}\!\!-\!x^{s}_{k\!-\!1}\!\|^{2}\!\\
=& f(x^{s}_{k-\!1})\!+\!\left\langle\nabla\! f(x^{s}_{k-\!1}),x^{s}_{k}\!-\!x^{s}_{k-\!1}\right\rangle\\
&+\!0.5\beta_{1} L\!\left\|x^{s}_{k}\!-\!x^{s}_{k-\!1}\right\|^{2}\!-\!0.5\beta_{2}L\!\left\|x^{s}_{k}\!-\!x^{s}_{k-\!1}\right\|^{2}\\
=& f(x^{s}_{k-\!1})\!+\!\langle \widetilde{\nabla}_{\!i^{s}_{k}},x^{s}_{k}\!-\!x^{s}_{k-\!1}\rangle\!+\!0.5\beta_{1}L\|x^{s}_{k}\!-\!x^{s}_{k-\!1}\|^2\\
&\!+\!\!\langle\nabla\! f(x^{s}_{k\!-\!1})\!-\!\!\widetilde{\nabla}_{\!i^{s}_{k}}\!,x^{s}_{k}\!\!-\!x^{s}_{k\!-\!1}\rangle\!-\!0.5\beta_{2}L\|x^{s}_{k}\!\!-\!x^{s}_{k\!-\!1}\!\|^{2}\!\!.
\end{split}
\end{equation}
\vspace{-5mm}

\begin{equation}\label{equ72}
\begin{split}
&\mathbb{E}\!\left[\langle\nabla\! f(x^{s}_{k-\!1})\!-\!\widetilde{\nabla}_{i^{s}_{k}},\,x^{s}_{k}\!-\!x^{s}_{k-\!1}\rangle\right]\\
\leq\,& \mathbb{E}\!\left[\frac{1}{2\beta_{2}{L}}\|\nabla\!f(x^{s}_{k-\!1})\!-\!\widetilde{\nabla}_{i^{s}_{k}}\|^{2}+\frac{\beta_{2}{L}}{2}\|x^{s}_{k}\!-\!x^{s}_{k-\!1}\|^{2}\right]\\
\leq\,& \frac{1}{\beta_{2}}\!\left[f(\widetilde{x}^{s-\!1})\!-\!f(x^{s}_{k-\!1})\!-\!\left\langle\nabla\! f(x^{s}_{k-\!1}),\,\widetilde{x}^{s-\!1}\!-\!x^{s}_{k-\!1}\right\rangle\right]\\
&+\!{0.5\beta_{2}L}\;\!\mathbb{E}\!\left[\|x^{s}_{k}\!-\!x^{s}_{k-\!1}\|^{2}\right],
\end{split}
\end{equation}
where the first inequality follows from the Young's inequality (i.e.\ $x^{T}y\!\leq\!{\|x\|^2}\!/{(2\beta)}\!+\!{\beta\|y\|^2}\!/{2}$ for all $\beta\!>\!0$), and the second inequality holds due to Lemma~\ref{lemm2}. Note that for the mini-batch setting, Lemma~\ref{lemm2} needs to be replaced with Corollary~\ref{cor11}, and all statements in the proof of this lemma can be revised by simply replacing $1/\beta_{2}$ with $\rho(b)/\beta_{2}$.

Substituting the inequality \eqref{equ72} into the inequality \eqref{equ71}, and taking expectation over $i^{s}_{k}$, we have
\vspace{-1mm}
\begin{equation*}\label{equ73}
\begin{split}
&\;\;\;\;\,\mathbb{E}\left[\phi(x^{s}_{k})\right]-f(x^{s}_{k-\!1})\\
&\leq\!\mathbb{E}\!\left[g(x^{s}_{k})\!+\!\langle\widetilde{\nabla}_{\!i^{s}_{k}}, x^{s}_{k}\!\!-\!x^{s}_{k-\!1}\rangle\!+\!0.5\beta_{1}\!{L}\|x^{s}_{k}\!\!-\!x^{s}_{k-\!1}\|^2\right]\\
&\quad+\!\beta^{-\!1}_{2}\!\left[f(\widetilde{x}^{s-\!1})\!-\!f(x^{s}_{k-\!1})\!+\!\left\langle\nabla\! f(x^{s}_{k-\!1}),\,x^{s}_{k-\!1}\!-\!\widetilde{x}^{s-\!1}\right\rangle\right]\\
&\leq\!\mathbb{E}\!\left[\langle \theta_{\!s}\!\widetilde{\nabla}_{\!i^{s}_{k}}, y^{s}_{k}\!-\!y^{s}_{k-\!1}\rangle\!+\!0.5\beta_{1}\!{L}\theta_{s}^{2}\|y^{s}_{k}\!-\!y^{s}_{k-\!1}\|^2\!+\!\theta_{s} g(y^{s}_{k})\right]\\
&\quad\!\!\!\!\!+\!(1\!\!-\!\theta_{\!s}\!)g(\widetilde{x}^{s\!-\!1}\!)\!+\!\beta^{-\!1}_{2}\!\!\left[f(\widetilde{x}^{s\!-\!1}\!)\!-\!\!f(x^{s}_{k\!-\!1}\!)\!+\!\!\langle\nabla\! f(x^{s}_{k\!-\!1}\!),x^{s}_{k\!-\!1}\!\!-\!\widetilde{x}^{s\!-\!1}\rangle\right]\\
&\leq\!\mathbb{E}\!\!\left[\langle \theta_{\!s}\!\widetilde{\nabla}_{\!i^{s}_{k}}\!,x_{\star}\!\!-\!y^{s}_{k\!-\!1}\rangle\!+\!\frac{\!\beta_{1}\!{L} \theta_{\!s}^{2}\!\!}{2}(\|y^{s}_{k\!-\!1}\!\!-\!x_{\star}\!\|^2\!\!-\!\|y^{s}_{k}\!\!-\!x_{\star}\!\|^2)\!+\!\theta_{\!s} g(x_{\star})\!\right]\\
&\quad\!\!\!\!+\!(1\!\!-\!\theta_{\!s}\!)g(\widetilde{x}^{s\!-\!1}\!)\!+\!\beta^{-\!1}_{2}\!\!\left[f(\widetilde{x}^{s\!-\!1}\!)\!-\!\!f(x^{s}_{k\!-\!1}\!)\!+\!\langle\nabla\! f(x^{s}_{k\!-\!1}\!),x^{s}_{k\!-\!1}\!\!-\!\widetilde{x}^{s\!-\!1}\rangle\right]\\
&=\!\mathbb{E}\!\!\left[\frac{\beta_{1}\!{L} \theta_{\!s}^{2}}{2}\!\!\left(\|y^{s}_{k\!-\!1}\!-\!x_{\star}\!\|^2\!\!-\!\|y^{s}_{k}\!-\!x_{\star}\!\|^2\right)\!+\!\theta_{\!s} g(x_{\star})\right]\!\!+\!(1\!\!-\!\theta_{\!s}\!)g(\widetilde{x}^{s\!-\!1}\!)\\
&\quad\!\!\!\!\!\!+\!\left\langle\!\nabla\! f(x^{s}_{k\!-\!1}\!),\theta_{\!s} x_{\star}\!\!+\!(1\!\!-\!\theta_{\!s}\!)\widetilde{x}^{s\!-\!1}\!\!\!-\!x^{s}_{k\!-\!1}\!\!+\!\!\beta^{-\!1}_{2}\!(x^{s}_{k\!-\!1}\!\!-\!\widetilde{x}^{s\!-\!1}\!)\!\right\rangle\!+\!\!\beta^{-\!1}_{2}\!f(\widetilde{x}^{s\!-\!1}\!)\\
&\quad\!\!\!\!\!\!+\!\mathbb{E}\!\!\left[\left\langle\!-\!\nabla\!f_{i^{s}_{k}}\!(\widetilde{x}^{s\!-\!1}\!)\!+\!\!\nabla\! f(\widetilde{x}^{s\!-\!1}\!),\theta_{\!s} x_{\star}\!\!+\!(1\!-\!\theta_{\!s}\!)\widetilde{x}^{s\!-\!1}\!\!-\!x^{s}_{k\!-\!1}\!\right\rangle\!\right]\!\!-\!\!\beta^{-\!1}_{2}\!f(x^{s}_{\!k\!-\!1}\!)\\
&=\!\mathbb{E}\!\!\left[\!\frac{\beta_{1}\!{L} \theta_{\!s}^{2}}{2}\!\!\left(\|y^{s}_{k-\!1}\!\!-\!x_{\star}\!\|^2\!\!-\!\|y^{s}_{k}\!\!-\!x_{\star}\!\|^2\right)\!+\!\theta_{\!s} g(x_{\star})\!\right]\!\!+\!(1\!-\!\theta_{\!s}\!)g(\widetilde{x}^{s-\!1})\\
&\quad\!+\!\!\left\langle\nabla\! f(x^{s}_{k-\!1}),\theta_{s} x_{\star}\!+\!(1\!-\!\theta_{s})\widetilde{x}^{s-\!1}\!-\!x^{s}_{k-\!1}\!+\!\beta^{-\!1}_{2}(x^{s}_{k-\!1}\!-\!\widetilde{x}^{s-\!1})\right\rangle\\
&\quad\!+\!\beta^{-\!1}_{2}\!\left[f(\widetilde{x}^{s-\!1})\!-\!f(x^{s}_{k-\!1})\right],
\end{split}
\vspace{-2mm}
\end{equation*}
where the second inequality follows from the facts that $x^{s}_{k}\!-\!x^{s}_{k-\!1}\!=\!\theta_{\!s}(y^{s}_{k}\!-\!y^{s}_{k-\!1})$ and
$g(\theta_{\!s}y^{s}_{k}\!+\!(1\!\!-\!\theta_{\!s})\widetilde{x}^{s-\!1})\!\leq\! \theta_{\!s} g(y^{s}_{k})\!+\!(1\!\!-\!\theta_{\!s})g(\widetilde{x}^{s-\!1})$; the third inequality holds due to Lemma~\ref{lemm3} with $z^{*}\!=\!y^{s}_{k}$, $z\!=\!x_{\star}$, $z_{0}\!=\!y^{s}_{k-\!1}$, $\rho\!=\!\beta_{1}L\theta_{\!s}$, and $\psi(z)\!:=\!g(z)+\!\langle \widetilde{\nabla}_{\!i^{s}_{k}},z\!-\!y^{s}_{k-\!1}\rangle$ (or $\psi(z)\!:=\!\langle \widetilde{\nabla}_{\!i^{s}_{k}}\!+\!\nabla\! g(x^{s}_{k-\!1}),z\!-\!y^{s}_{k-\!1}\rangle$). The first equality holds due to the facts that
\vspace{-2mm}
\begin{displaymath}
\begin{split}
&\langle \theta_{s}\!\widetilde{\nabla}_{\!i^{s}_{k}},\, x_{\star}\!\!-\!y^{s}_{k-\!1}\rangle\!=\!\langle \widetilde{\nabla}_{\!i^{s}_{k}},\, \theta_{s} x_{\star}\!\!+\!(1\!-\!\theta_{s})\widetilde{x}^{s-\!1}\!\!-\!x^{s}_{k-\!1}\rangle\\
=&\left\langle\nabla\! f_{i^{s}_{k}}\!(x^{s}_{k-\!1}),\, \theta_{s} x_{\star}\!\!+\!(1\!-\!\theta_{s})\widetilde{x}^{s-\!1}\!\!-\!x^{s}_{k-\!1}\right\rangle\\
&+\!\left\langle-\nabla\! f_{i^{s}_{k}}\!(\widetilde{x}^{s-\!1})\!+\!\nabla\! f(\widetilde{x}^{s-\!1}),\, \theta_{s} x_{\star}\!\!+\!(1\!-\!\theta_{s})\widetilde{x}^{s-\!1}\!\!-\!x^{s}_{k-\!1}\right\rangle,
\end{split}
\end{displaymath}
and $\mathbb{E}[\nabla\! f_{i^{s}_{k}}\!(x^{s}_{k-\!1})]\!=\!\nabla\! f(x^{s}_{k-\!1})$. The last equality follows from
$\mathbb{E}\!\left[\langle-\nabla\! f_{i^{s}_{k}}\!(\widetilde{x}^{s-\!1})\!+\!\nabla\! f(\widetilde{x}^{s-\!1}),\,\theta_{s} x_{\star}\!+\!(1\!-\!\theta_{s})\widetilde{x}^{s-\!1}\!\!-\!x^{s}_{k-\!1}\rangle\right]\!=\!0.$
\vspace{-4mm}

\begin{equation*}\label{equ75}
\begin{split}
&\langle\nabla\! f(x^{s}_{k\!-\!1}),\theta_{s} x_{\star}\!\!+\!(1\!-\!\theta_{s})\widetilde{x}^{s\!-\!1}\!\!-\!x^{s}_{k\!-\!1}\!\!+\!\beta^{-\!1}_{2}\!(x^{s}_{k\!-\!1}\!\!-\!\widetilde{x}^{s\!-\!1})\rangle\\
=&\langle\nabla\! f(x^{s}_{k\!-\!1}),\theta_{s} x_{\star}\!+\!(1\!-\!\theta_{s}\!-\!\beta^{-\!1}_{2})\widetilde{x}^{s\!-\!1}\!\!+\!\beta^{-\!1}_{2}\!x^{s}_{k\!-\!1}\!\!-\!x^{s}_{k\!-\!1}\rangle\\
\leq &f\!\left(\theta_{s} x_{\star}\!+\!(1\!-\!\theta_{s}\!-\!\beta^{-\!1}_{2})\widetilde{x}^{s-\!1}\!+\!\beta^{-\!1}_{2}\!x^{s}_{k-\!1}\right)\!-\!f(x^{s}_{k-\!1})\\
\leq &\theta_{s} f(x_{\star})\!+\!(1\!-\!\theta_{s}\!-\!\beta^{-\!1}_{2})f(\widetilde{x}^{s-\!1})\!+\!\beta^{-\!1}_{2}\!f(x^{s}_{k-\!1})\!-\!f(x^{s}_{k-\!1}),
\end{split}
\end{equation*}
where the first inequality holds due to the fact that $\langle \nabla\! f(x_{1}),$ $x_{2}-x_{1}\rangle\!\leq\! f(x_{2})\!-\!f(x_{1})$, and the last inequality follows from the convexity of $f(\cdot)$ and $1\!-\!\theta_{s}\!-\!\beta^{-\!1}_{2}\!\geq\!0$, which can be easily satisfied. Combining the above two inequalities, we have
\vspace{-1mm}
\begin{equation*}\label{equ76}
\begin{split}
&\mathbb{E}\!\left[\phi(x^{s}_{k})\right]\\
\leq&\theta_{s}\phi(x_{\star})\!+\!(1\!\!-\!\theta_{s})\phi(\widetilde{x}^{s-\!1})\!+\!\frac{\!\beta_{1}\!{L} \theta_{\!s}^{2}\!}{2}\mathbb{E}\!\!\left[\|y^{s}_{k-\!1}\!\!-\!x_{\star}\|^2\!-\!\|y^{s}_{k}\!\!-\!x_{\star}\|^2\right]\!\!,
\end{split}
\end{equation*}
\vspace{-4mm}
\begin{equation*}
\begin{split}
&\;\,\mathbb{E}\!\left[\phi(x^{s}_{k})\!-\!\phi(x_{\star})\right]\\
\vspace{-1mm}
\leq&\,(1\!\!-\!\theta_{s})[\phi(\widetilde{x}^{s-\!1})\!-\!\phi(x_{\star})]\!+\!\frac{\!\beta_{1}\!{L} \theta_{\!s}^{2}\!}{2}\mathbb{E}\!\!\left[\|y^{s}_{k-\!1}\!\!-\!x_{\star}\|^2\!-\!\|y^{s}_{k}\!-\!x_{\star}\|^2\right]\!\!.
\end{split}
\vspace{-1mm}
\end{equation*}
Due to the convexity of $\phi(\cdot)$ and the definition $\widetilde{x}^{s}\!\!=\!\!\frac{1}{m_{s}}\!\!\sum^{m_{s}}_{k=1}\!x^{s}_{k}$, then $\phi\!\left(\frac{1}{m_{s}}\!\!\sum^{m_{s}}_{k=1}\!x^{s}_{k}\right)\!\leq\!\frac{1}{m_{s}}\!\sum^{m_{s}}_{k=1}\!\!\phi(x^{s}_{k})$. Taking expectation over the history of $i_{1},\ldots,i_{m_{s}}$ on the above inequality, and summing it up over $k\!=\!1,\ldots,m_{s}$ at the $s$-th stage, we have
$\mathbb{E}\!\left[\phi(\widetilde{x}^{s})\!-\!\phi(x_{\star})\right]
\leq\frac{\theta_{s}^{2}}{2\eta m_{s}}\mathbb{E}\!\left[\|y^{s}_{0}\!-\!x_{\star}\|^2\!-\!\|y^{s}_{m_{s}}\!-\!x_{\star}\|^2\right]\!+(1\!-\!\theta_{s})\mathbb{E}[\phi(\widetilde{x}^{s-\!1})\!-\!\phi(x_{\star})]$. This completes the proof.
\end{proof}

%\fontsize{8.0pt}{10.2pt}
%\selectfont
\bibliographystyle{abbrv}
\bibliography{sigproc}

\onecolumn

\section*{More Experimental Results}
We first present the detailed descriptions for the seven data sets, as shown in Table~\ref{tab_sim1}. In particular, we report some additional experimental results for $\ell_{2}$-norm and elastic net regularized logistic regression, ridge regression and Lasso problems, respectively. Similar to Algorithm~\ref{alg2}, we also extend the original SVRG~\cite{johnson:svrg} to the setting of non-smooth component functions, and present a stochastic variance reduced sub-gradient descent (SVRSG) algorithm, as outlined in Algorithm~\ref{alg3}.

\begin{algorithm}[!th]
\caption{A variant of SVRG for non-smooth component functions}
\label{alg3}
\renewcommand{\algorithmicrequire}{\textbf{Input:}}
\renewcommand{\algorithmicensure}{\textbf{Initialize:}}
\renewcommand{\algorithmicoutput}{\textbf{Output:}}
\begin{algorithmic}[1]
\REQUIRE the number of epochs $S$, the number of iterations $m$ per epoch, and the step size $\eta$.\\
\ENSURE $\widetilde{x}^{0}$ and $w_{1},\ldots,w_{m}$.\\
\FOR{$s=1,2,\ldots,S$}
\STATE {$\widetilde{\xi}=\frac{1}{n}\!\sum^{n}_{i=1}\!\partial f_{i}(\widetilde{x}^{s-\!1})$, $x^{s}_{0}=\widetilde{x}^{s-\!1}$;}
\FOR{$k=1,2,\ldots,m$}
\STATE {Pick $i^{s}_{k}$ uniformly at random from $\{1,\ldots,n\}$;}
\STATE {$\widetilde{\partial}f_{i^{s}_{k}}(x^{s}_{k-\!1})=\partial f_{i^{s}_{k}}(x^{s}_{k-\!1})-\partial f_{i^{s}_{k}}(\widetilde{x}^{s-\!1})+\widetilde{\xi}$;}
\STATE {$x^{s}_{k}=\Pi_{\mathcal{K}}\!\!\left[x^{s}_{k-\!1}-\eta\;\!(\widetilde{\partial}f_{i^{s}_{k}}\!(x^{s}_{k-\!1})+\nabla g(x^{s}_{k-\!1}))\right]$;}
\ENDFOR
\STATE {$\widetilde{x}^{s}\!=\!\frac{1}{\sum_{k}\! w_{k}}\!\sum^{m}_{k=1}\!w_{k} x^{s}_{k}$;}
\ENDFOR
\OUTPUT {$\widetilde{x}^{S}$}
\end{algorithmic}
\end{algorithm}

\begin{table}[!th]
\centering
\caption{Summary of data sets and regularization parameters used in our experiments.}
\label{tab_sim1}
\setlength{\tabcolsep}{16.5pt}
\renewcommand\arraystretch{1.09}
\begin{tabular}{lcccc}
\hline
\ Data sets   & Sizes $n$    & Dimensions $d$  & $\lambda$ & $\lambda_{1},\;\lambda_{2}$\\
\hline
\ IJCNN      & 49,990         & 22             & $\{10^{-3},10^{-4},10^{-5},10^{-6},10^{-7}\}$    & $\{10^{-4},10^{-5},10^{-6}\}$  \\
\ Protein     & 145,751        & 74             & $\{10^{-3},10^{-4},10^{-5},10^{-6}\}$    & $\{10^{-4},10^{-5},10^{-6}\}$ \\
\ Covtype     & 581,012        & 54             & $\{10^{-3},10^{-4},10^{-5},10^{-6},10^{-7}\}$    & $\{10^{-4},10^{-6},10^{-6}\}$ \\
\ SUSY        & 5,000,000      & 18             & $\{10^{-4},10^{-5},10^{-6},10^{-7}\}$    & $\{10^{-4},10^{-5},10^{-6}\}$ \\
\ Rcv1        & 20,242         & 47,236         &     $10^{-4}$             & --- \\
\ Sido0       & 12,678         & 4,932          &     $10^{-4}$             & --- \\
\ MNIST       & 70,000         & 784            &     $10^{-5}$             & --- \\
\hline
\end{tabular}
\end{table}

\begin{figure}[!th]
\centering
\subfigure[IJCNN: $\lambda\!=\!10^{-3}$]{\includegraphics[width=0.243\columnwidth]{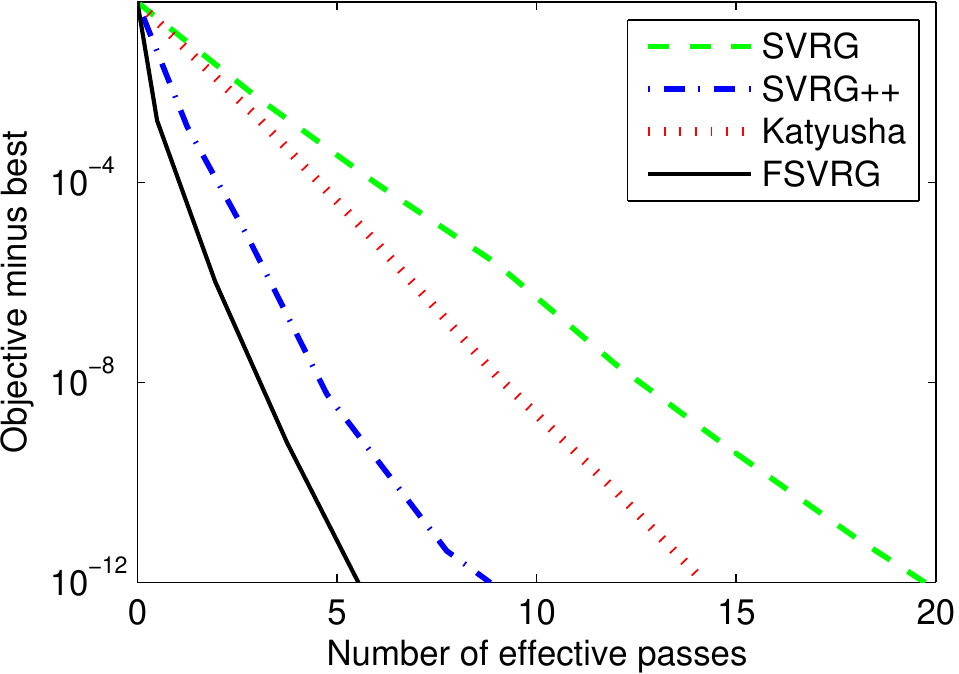}}
\subfigure[Protein: $\lambda\!=\!10^{-3}$]{\includegraphics[width=0.243\columnwidth]{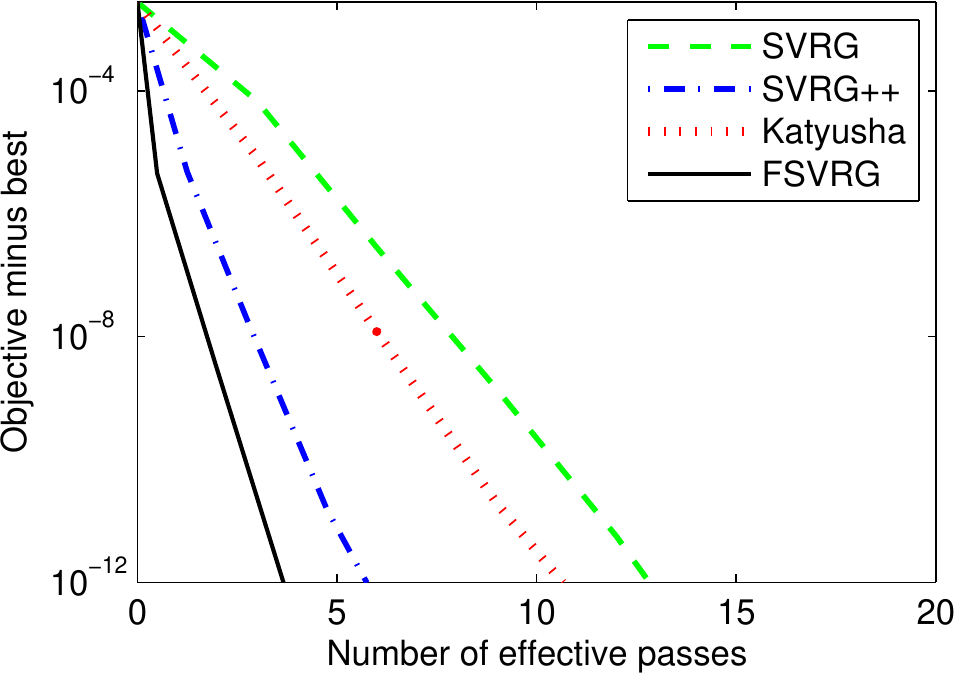}}
\subfigure[Covtype: $\lambda\!=\!10^{-4}$]{\includegraphics[width=0.243\columnwidth]{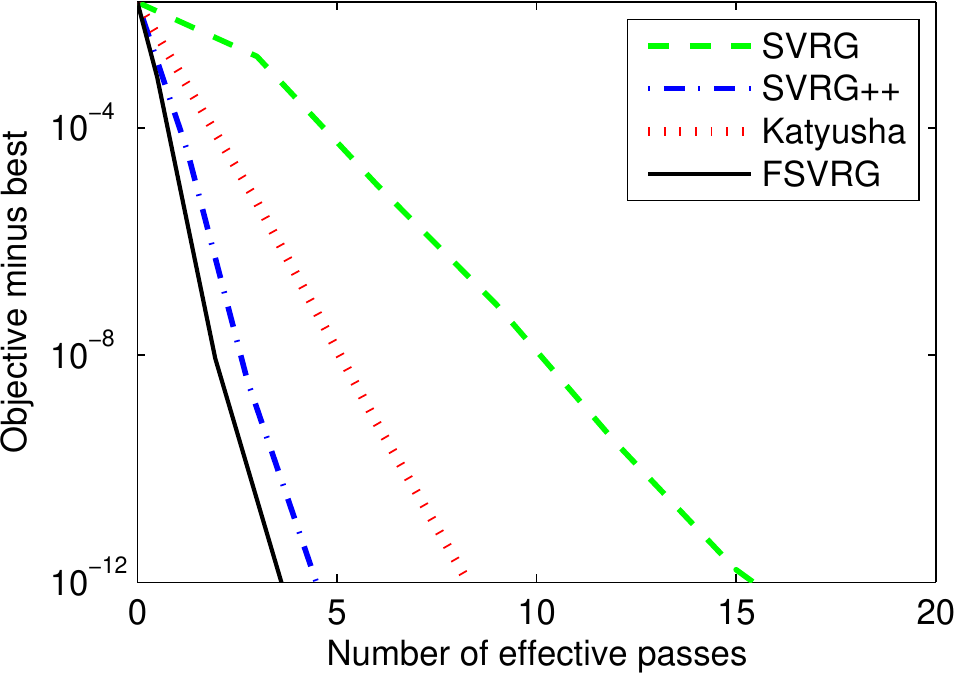}}
\subfigure[SUSY: $\lambda\!=\!10^{-4}$]{\includegraphics[width=0.243\columnwidth]{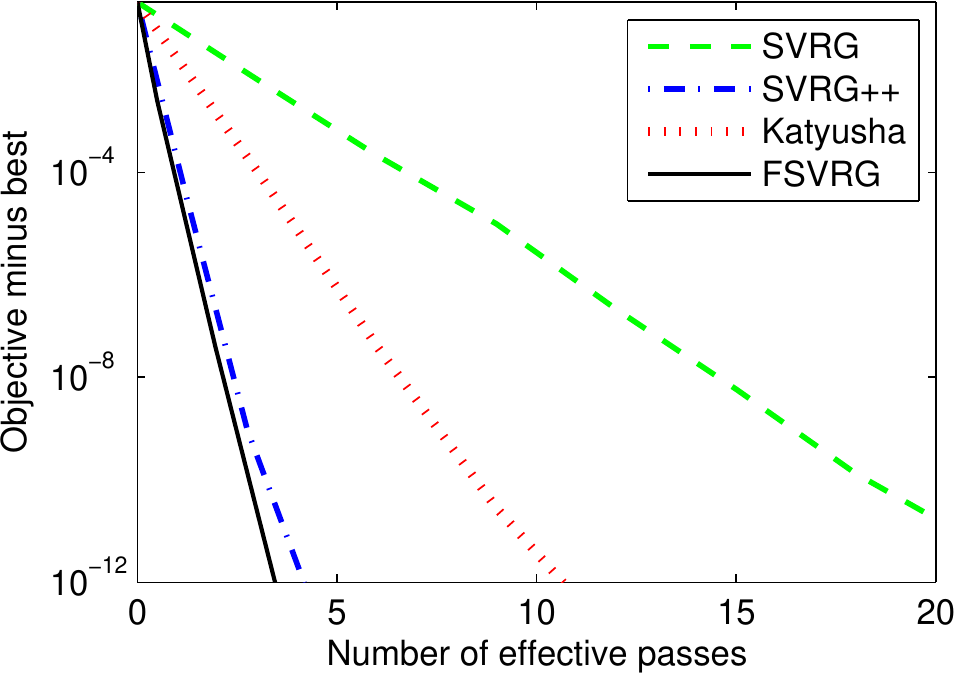}}
\subfigure[IJCNN: $\lambda\!=\!10^{-4}$]{\includegraphics[width=0.243\columnwidth]{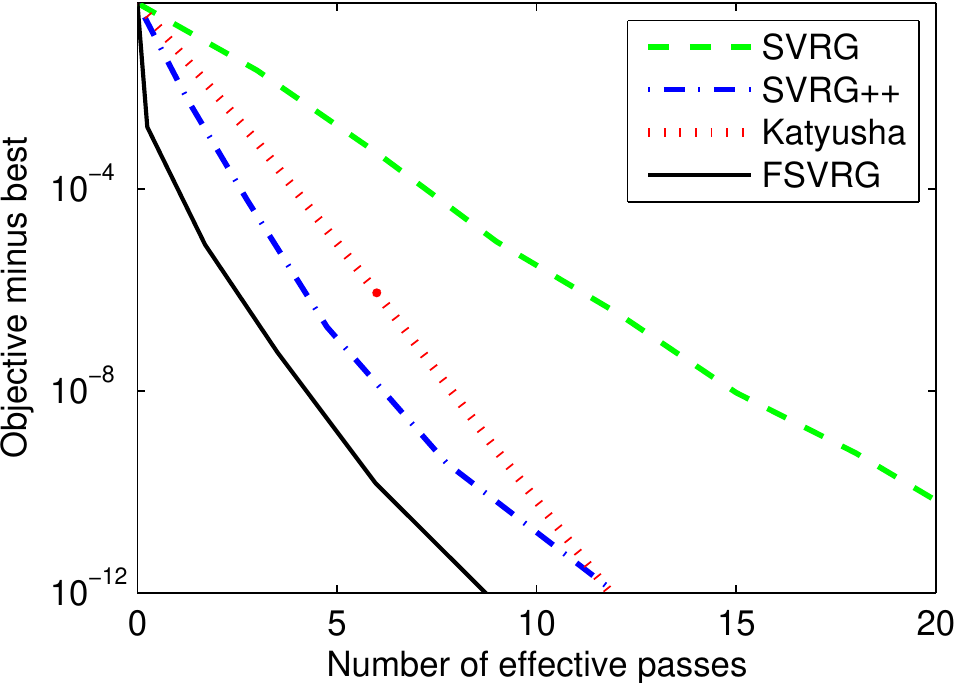}}
\subfigure[Protein: $\lambda\!=\!10^{-4}$]{\includegraphics[width=0.243\columnwidth]{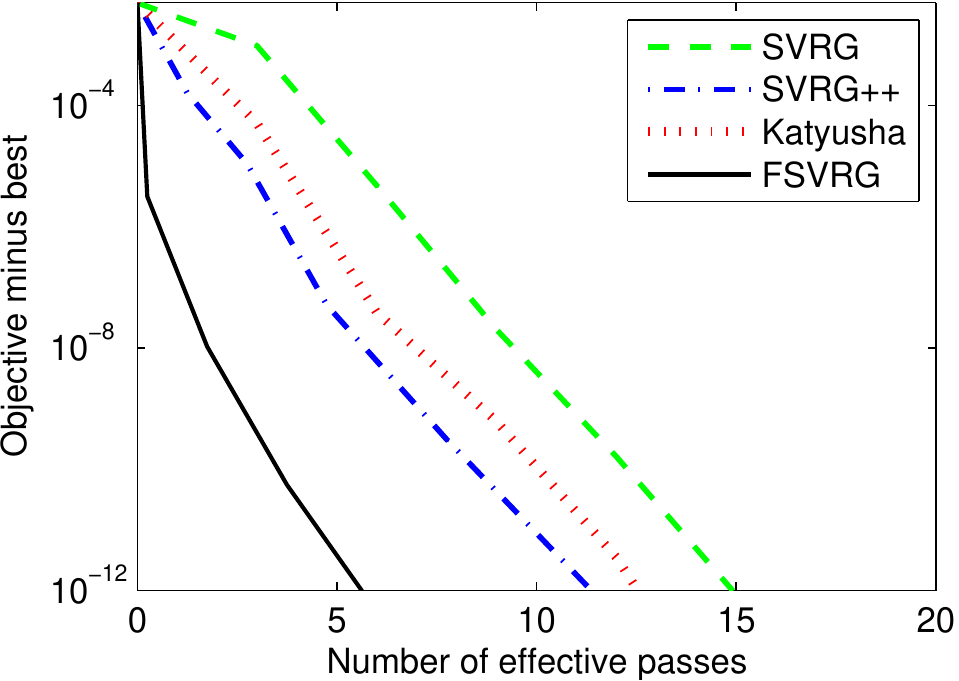}}
\subfigure[Covtype: $\lambda\!=\!10^{-5}$]{\includegraphics[width=0.243\columnwidth]{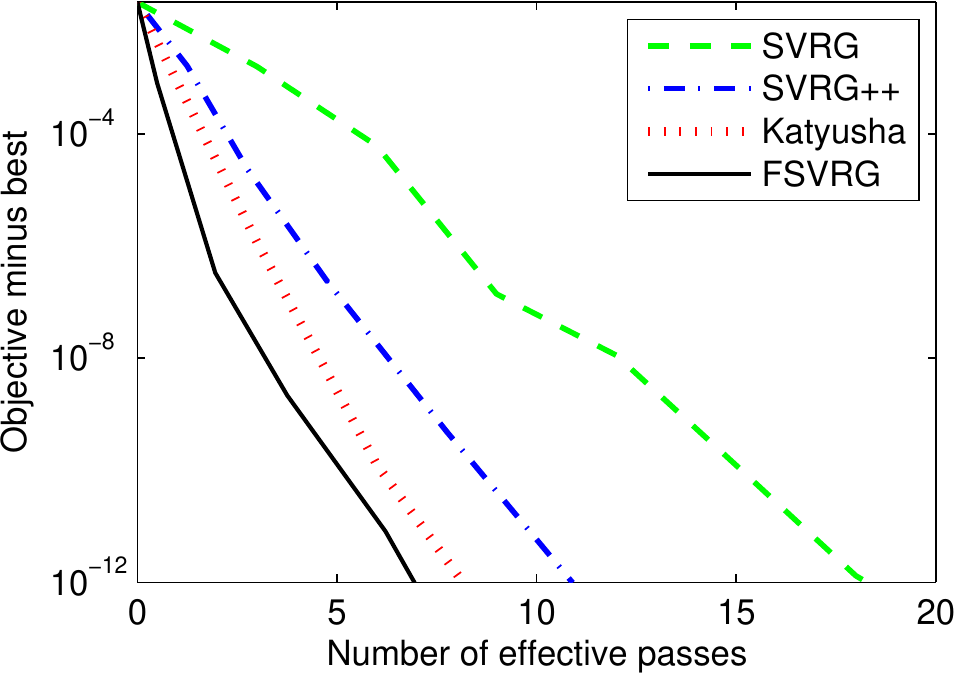}}
\subfigure[SUSY: $\lambda\!=\!10^{-5}$]{\includegraphics[width=0.243\columnwidth]{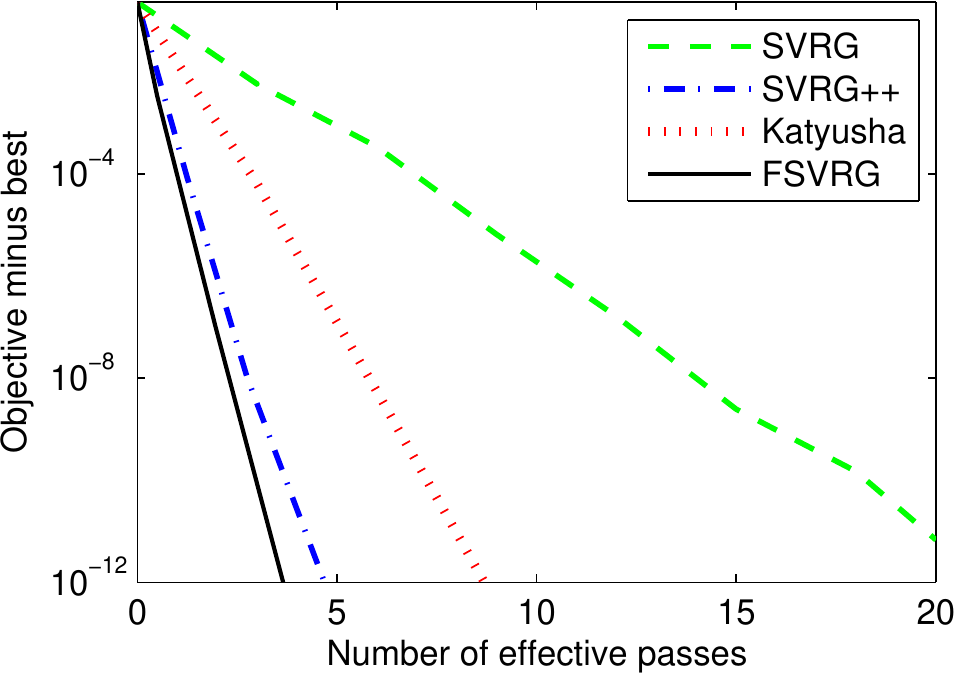}}
\subfigure[IJCNN: $\lambda\!=\!10^{-5}$]{\includegraphics[width=0.243\columnwidth]{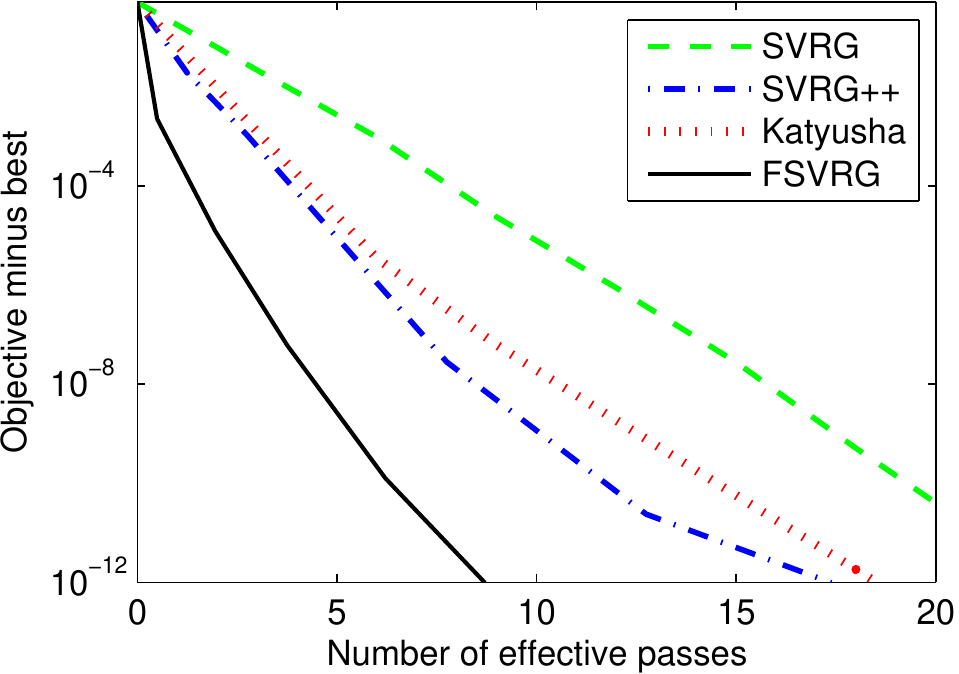}}
\subfigure[Protein: $\lambda\!=\!10^{-5}$]{\includegraphics[width=0.243\columnwidth]{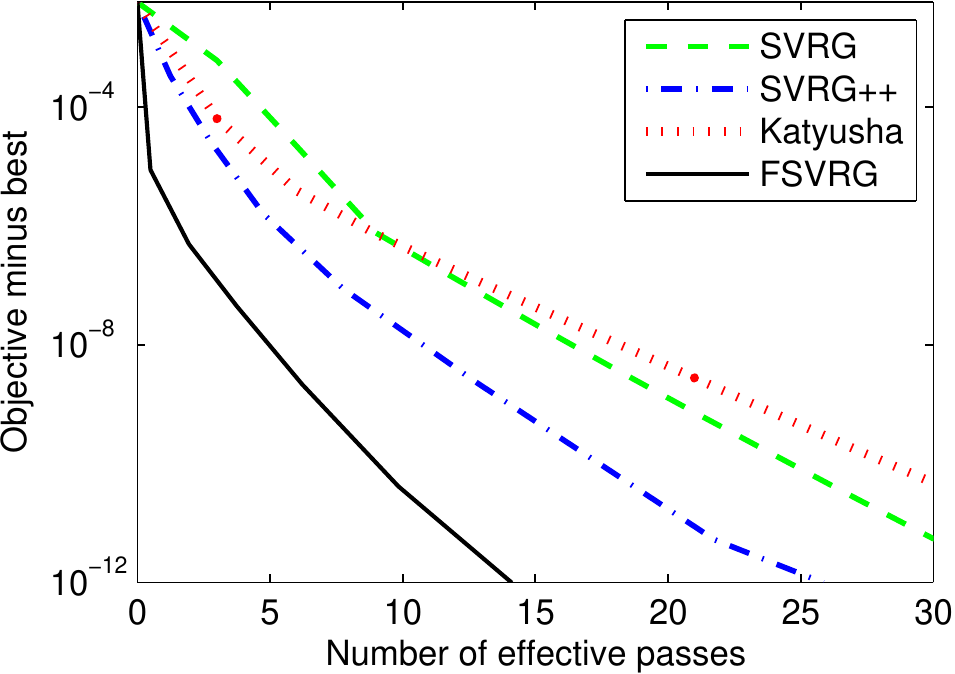}}
\subfigure[Covtype: $\lambda\!=\!10^{-6}$]{\includegraphics[width=0.243\columnwidth]{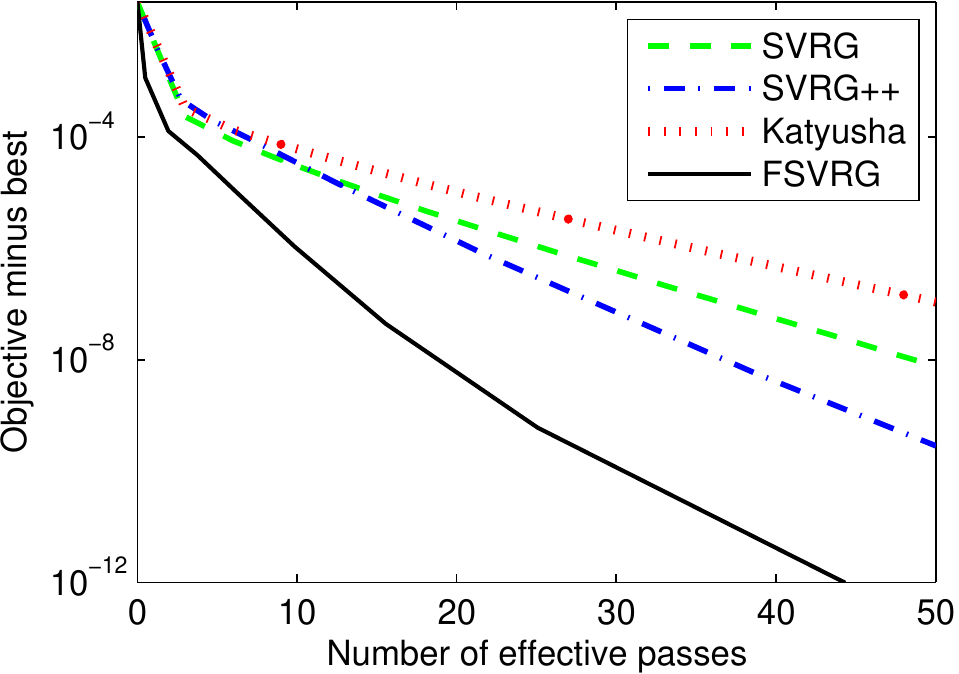}}
\subfigure[SUSY: $\lambda\!=\!10^{-6}$]{\includegraphics[width=0.243\columnwidth]{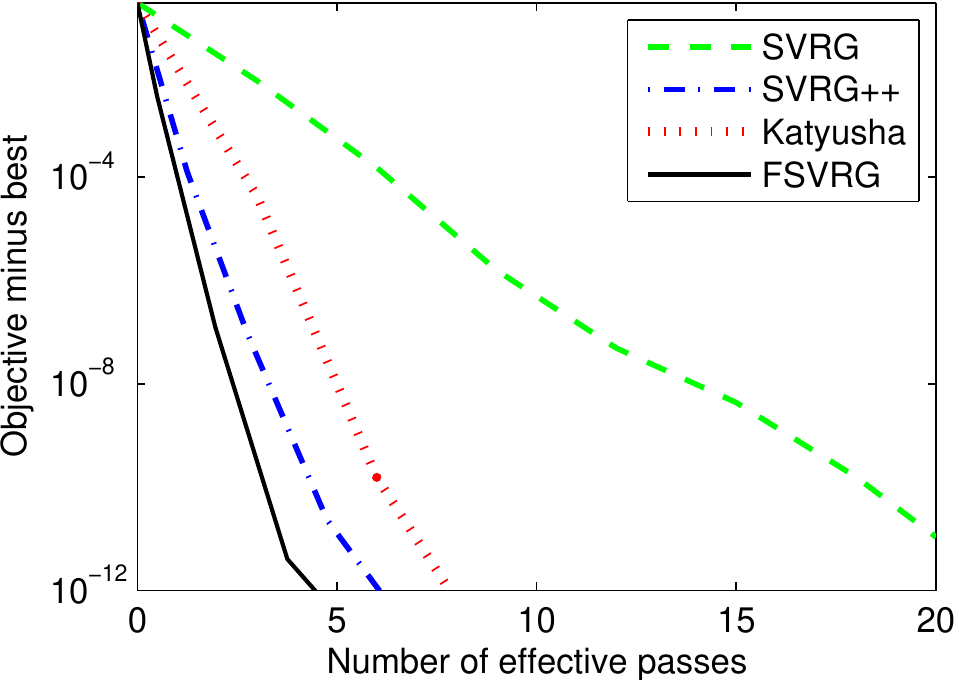}}
\subfigure[IJCNN: $\lambda\!=\!10^{-6}$]{\includegraphics[width=0.243\columnwidth]{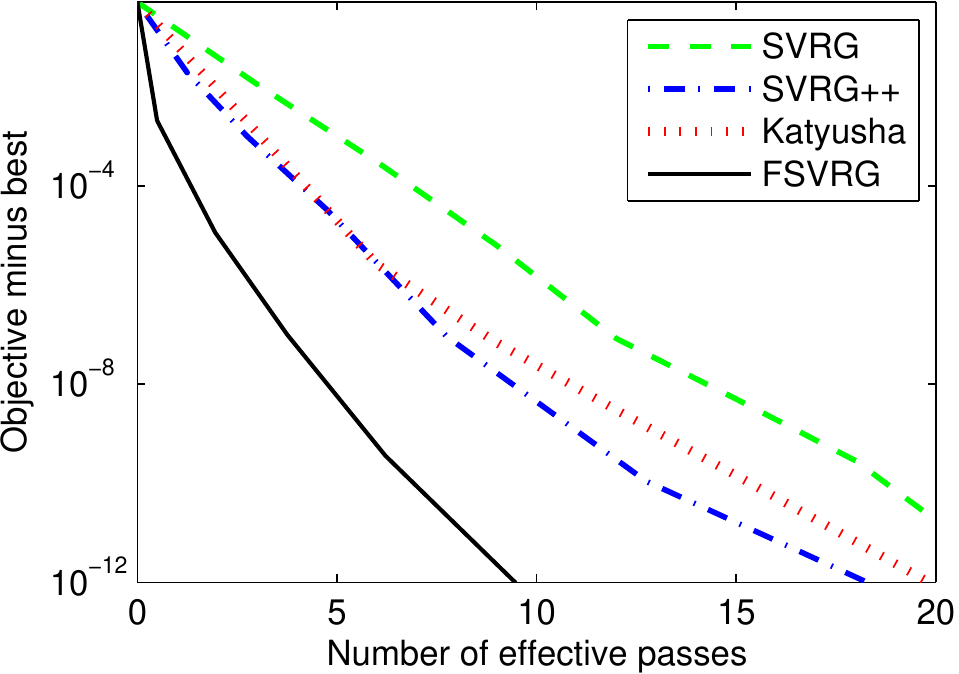}}
\subfigure[Protein: $\lambda\!=\!10^{-6}$]{\includegraphics[width=0.243\columnwidth]{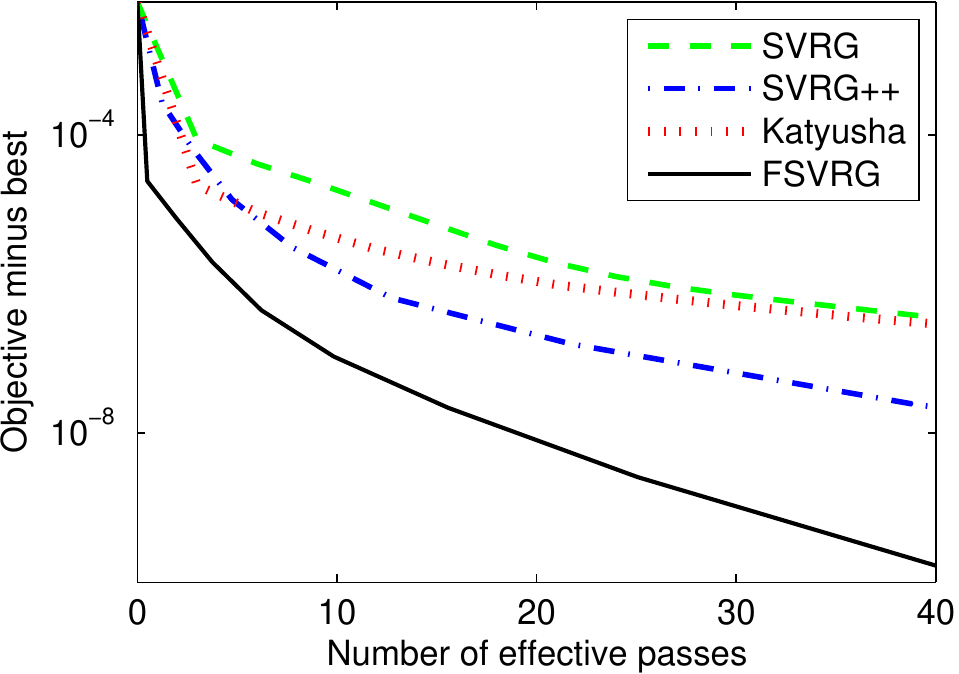}}
\subfigure[Covtype: $\lambda\!=\!10^{-7}$]{\includegraphics[width=0.243\columnwidth]{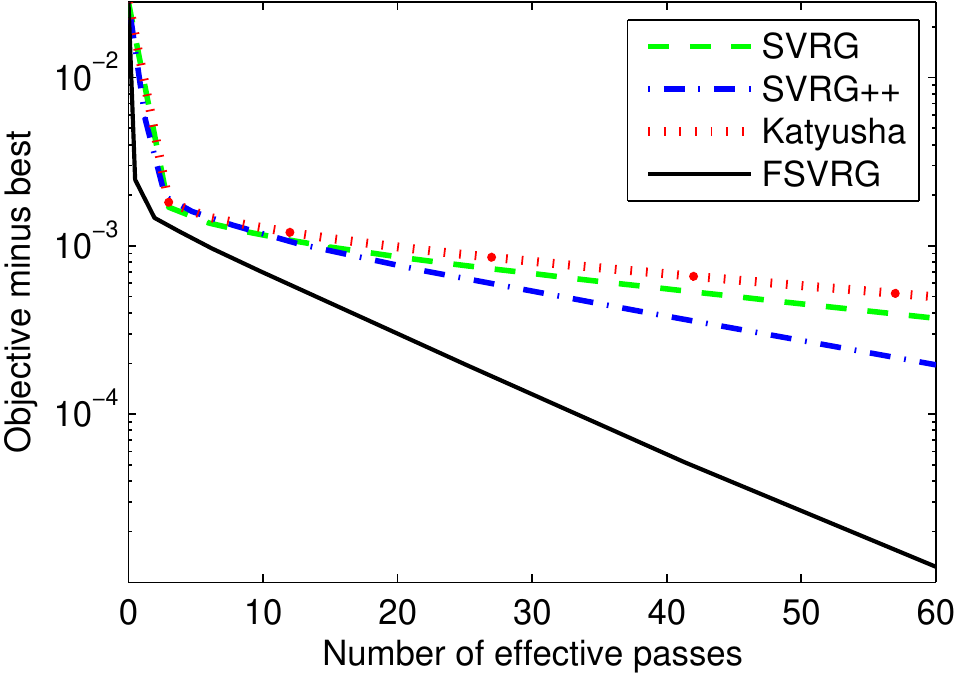}}
\subfigure[SUSY: $\lambda\!=\!10^{-7}$]{\includegraphics[width=0.243\columnwidth]{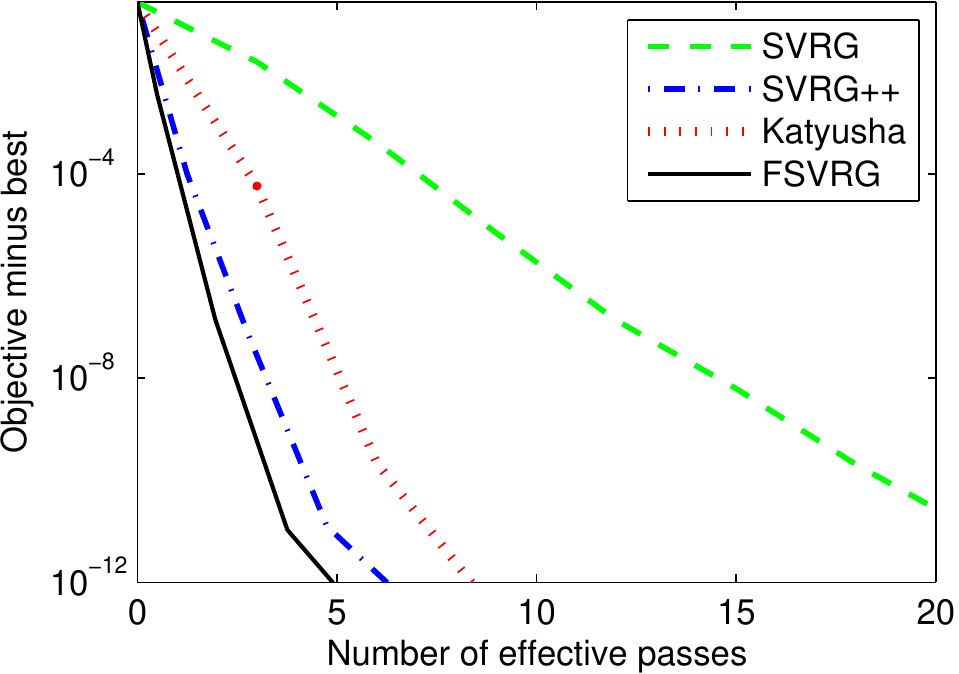}}
\caption{Comparison of SVRG~\cite{johnson:svrg}, SVRG++~\cite{zhu:vrnc}, Katyusha~\cite{zhu:Katyusha}, and FSVRG for solving $\ell_{2}$-norm regularized logistic regression problems with different regularization parameters. The $y$-axis represents the objective value minus the minimum, and the $x$-axis corresponds to the number of effective passes.}
\label{figs7}
\end{figure}

\begin{figure}[!th]
\centering
\subfigure[IJCNN: $\lambda\!=\!10^{-3}$]{\includegraphics[width=0.243\columnwidth]{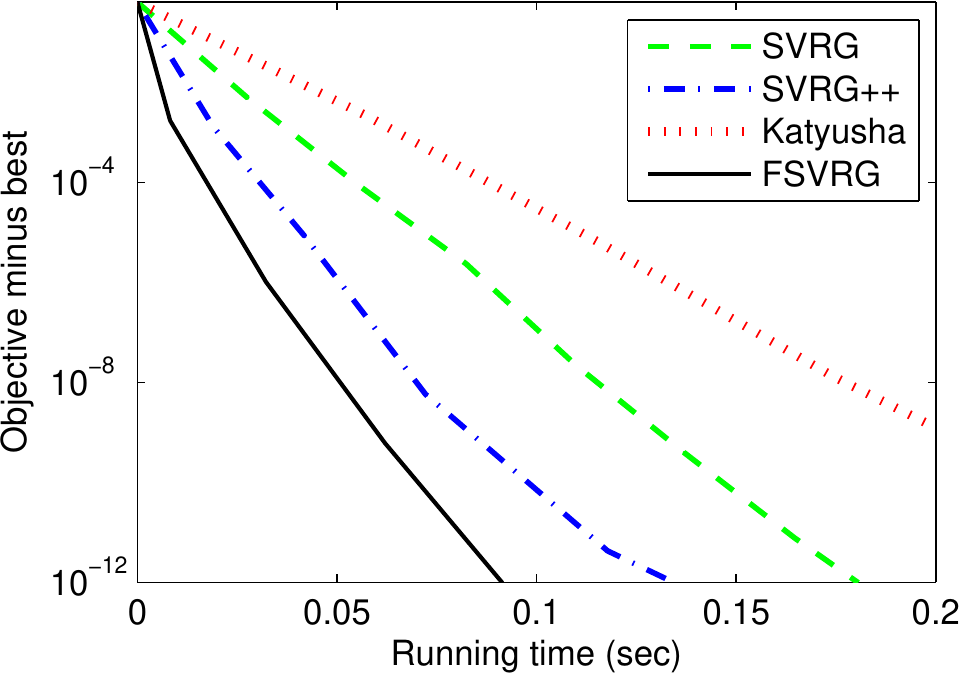}}
\subfigure[Protein: $\lambda\!=\!10^{-3}$]{\includegraphics[width=0.243\columnwidth]{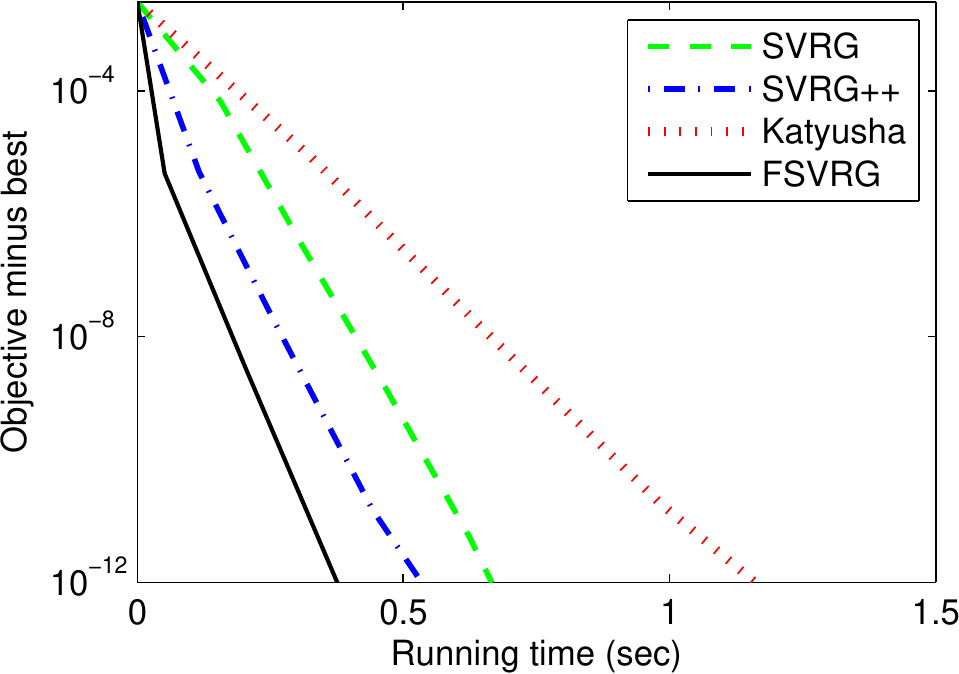}}
\subfigure[Covtype: $\lambda\!=\!10^{-4}$]{\includegraphics[width=0.243\columnwidth]{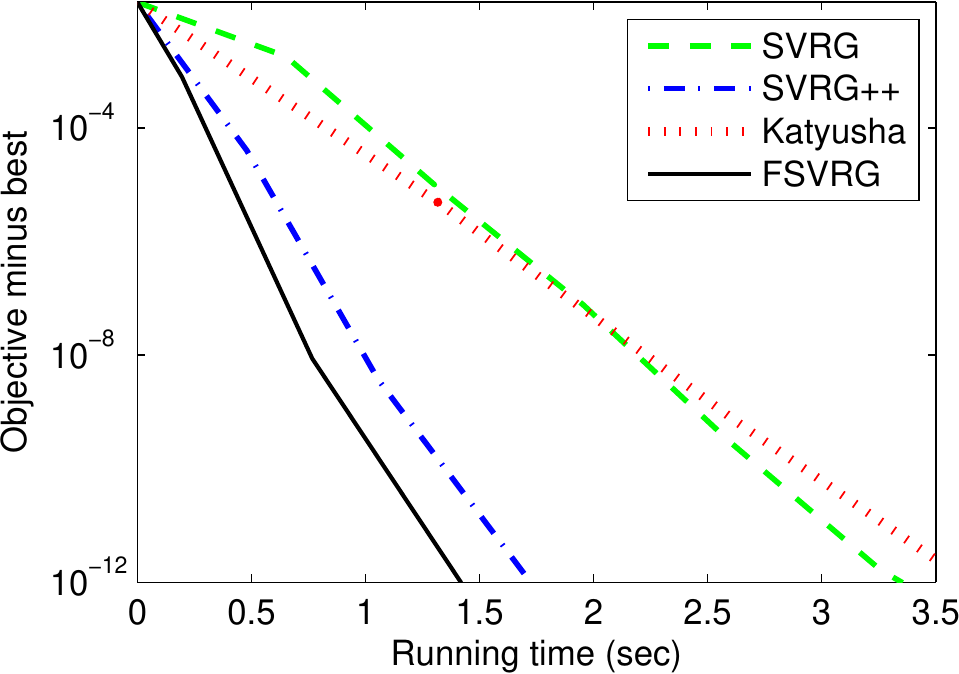}}
\subfigure[SUSY: $\lambda\!=\!10^{-4}$]{\includegraphics[width=0.243\columnwidth]{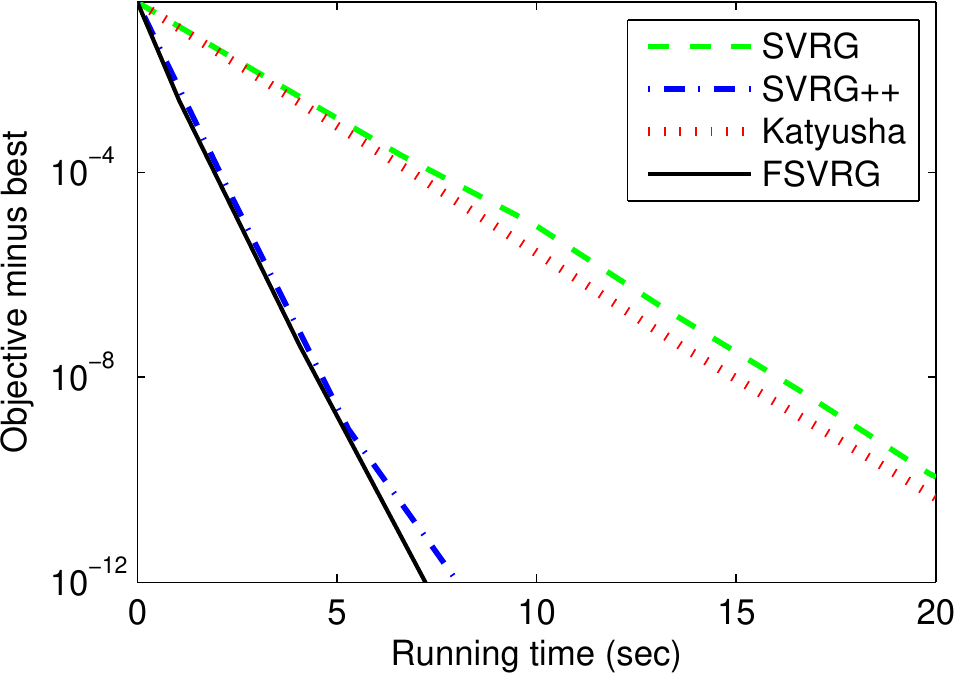}}

\subfigure[IJCNN: $\lambda\!=\!10^{-4}$]{\includegraphics[width=0.243\columnwidth]{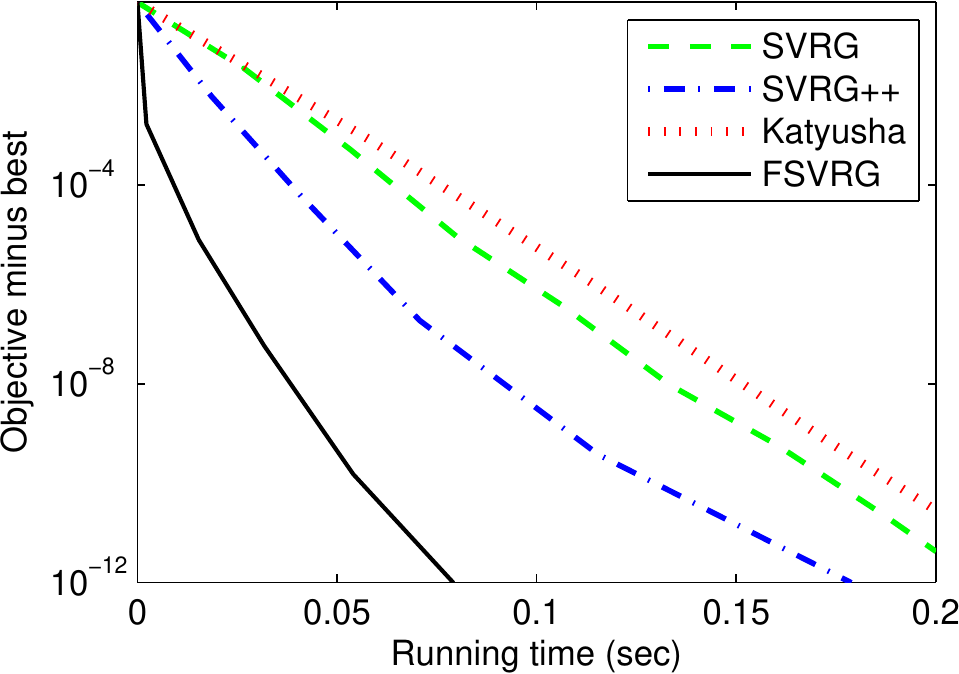}}
\subfigure[Protein: $\lambda\!=\!10^{-4}$]{\includegraphics[width=0.243\columnwidth]{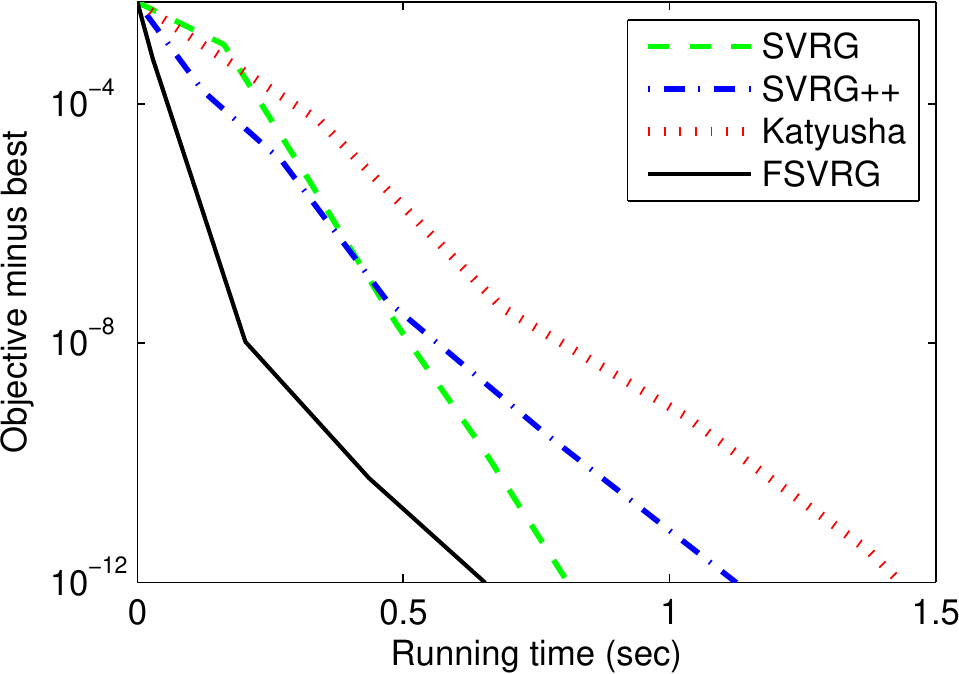}}
\subfigure[Covtype: $\lambda\!=\!10^{-5}$]{\includegraphics[width=0.243\columnwidth]{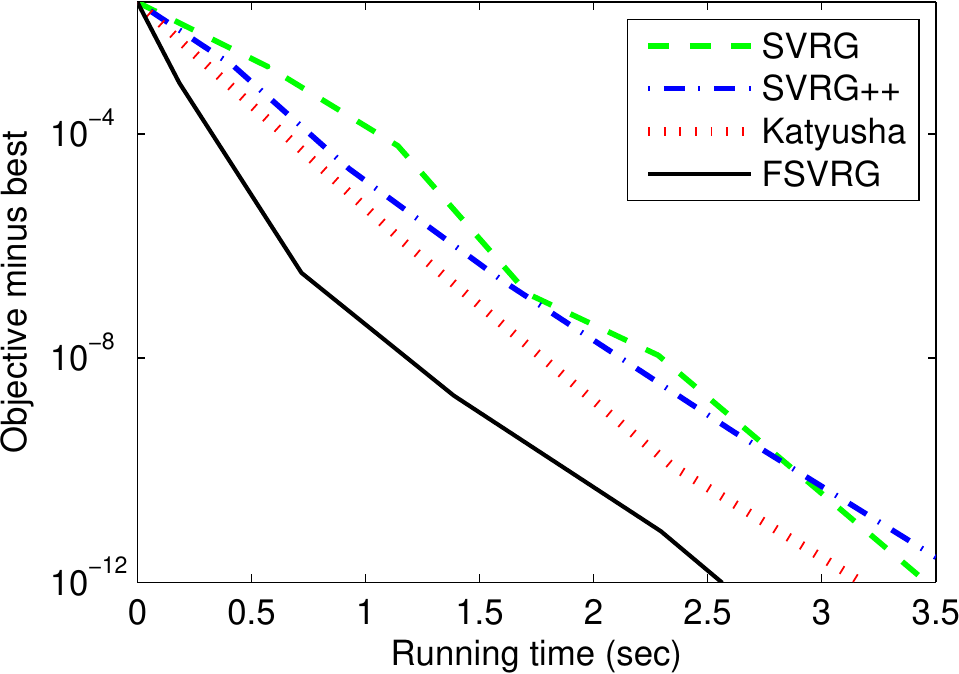}}
\subfigure[SUSY: $\lambda\!=\!10^{-5}$]{\includegraphics[width=0.243\columnwidth]{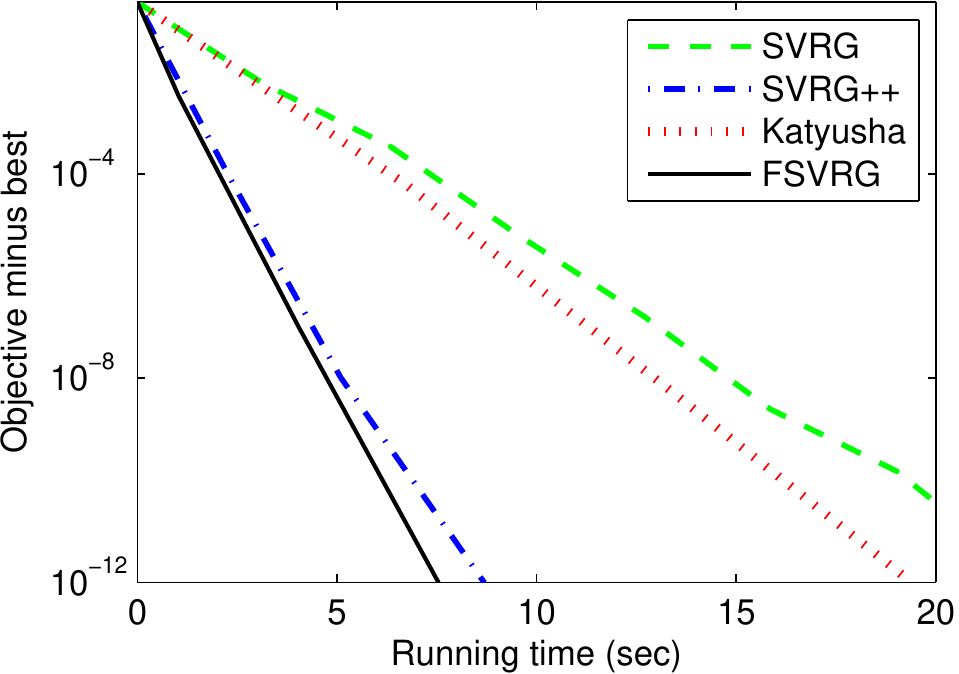}}

\subfigure[IJCNN: $\lambda\!=\!10^{-5}$]{\includegraphics[width=0.243\columnwidth]{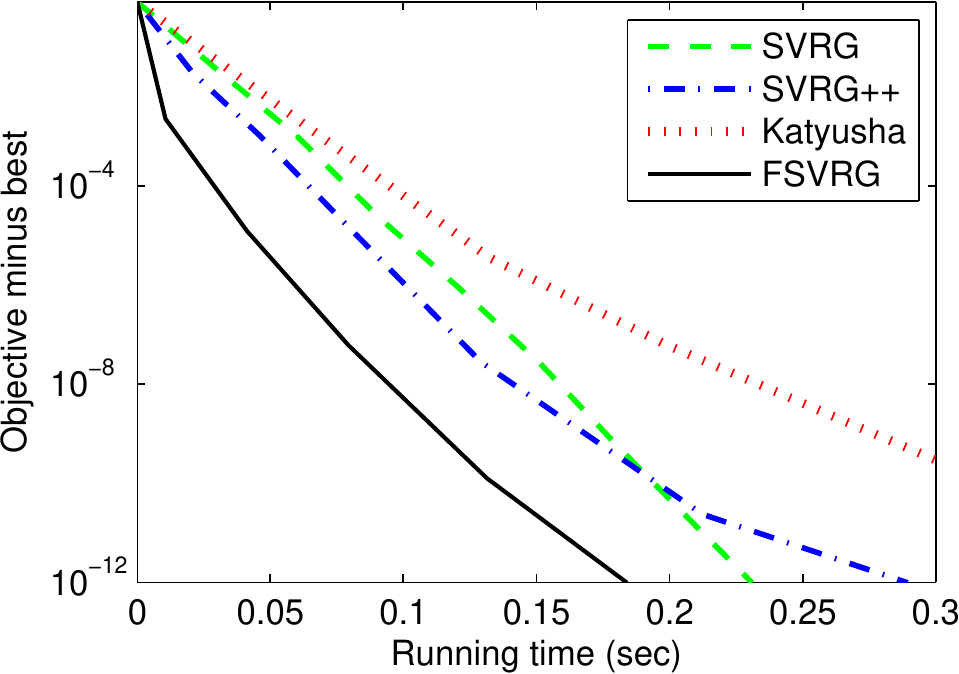}}
\subfigure[Protein: $\lambda\!=\!10^{-5}$]{\includegraphics[width=0.243\columnwidth]{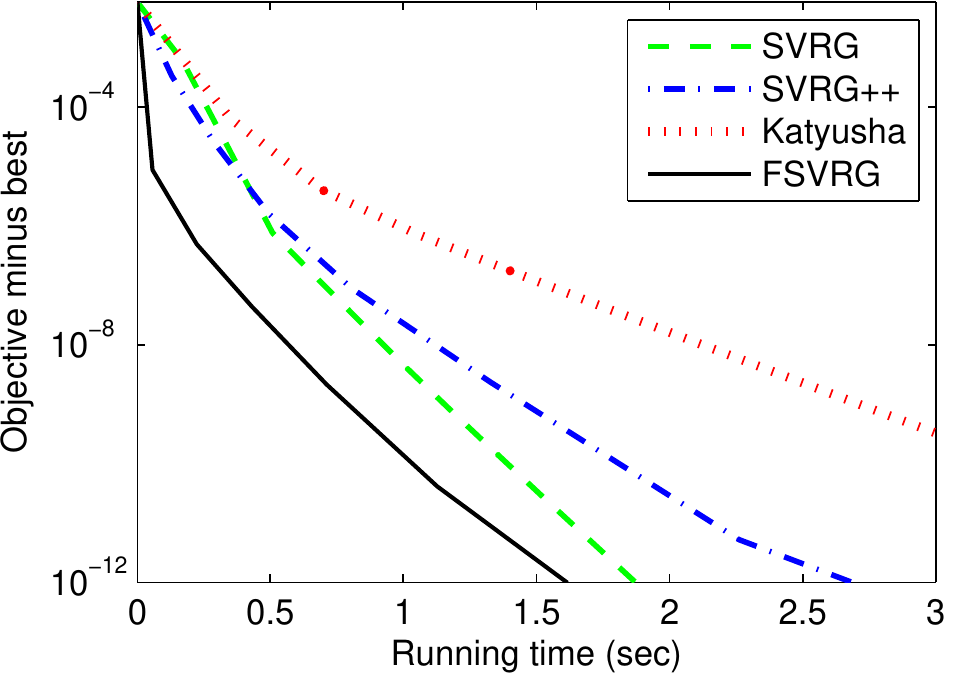}}
\subfigure[Covtype: $\lambda\!=\!10^{-6}$]{\includegraphics[width=0.243\columnwidth]{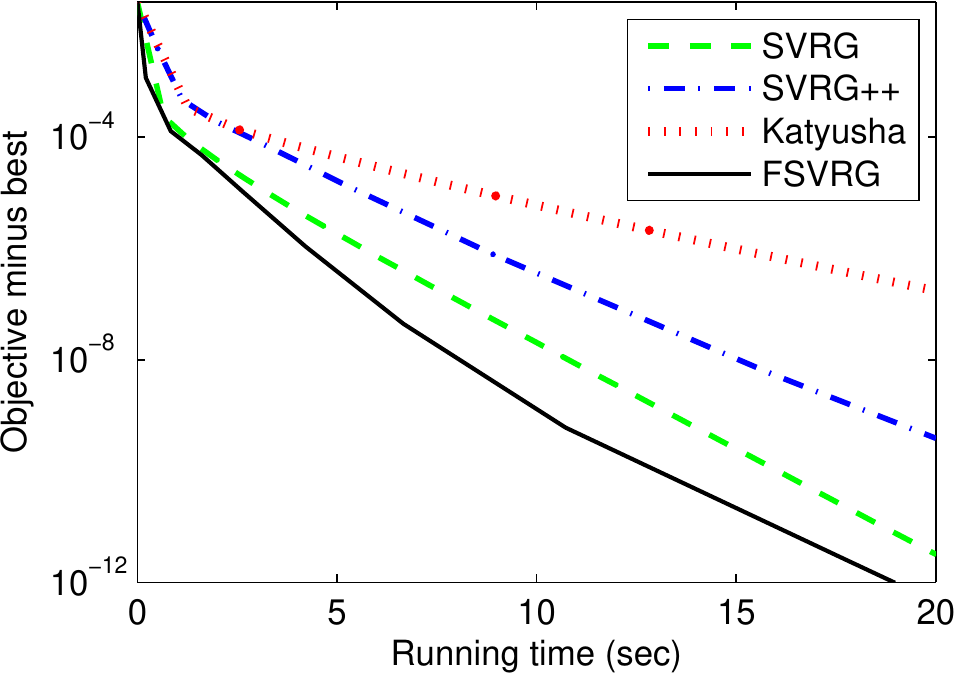}}
\subfigure[SUSY: $\lambda\!=\!10^{-6}$]{\includegraphics[width=0.243\columnwidth]{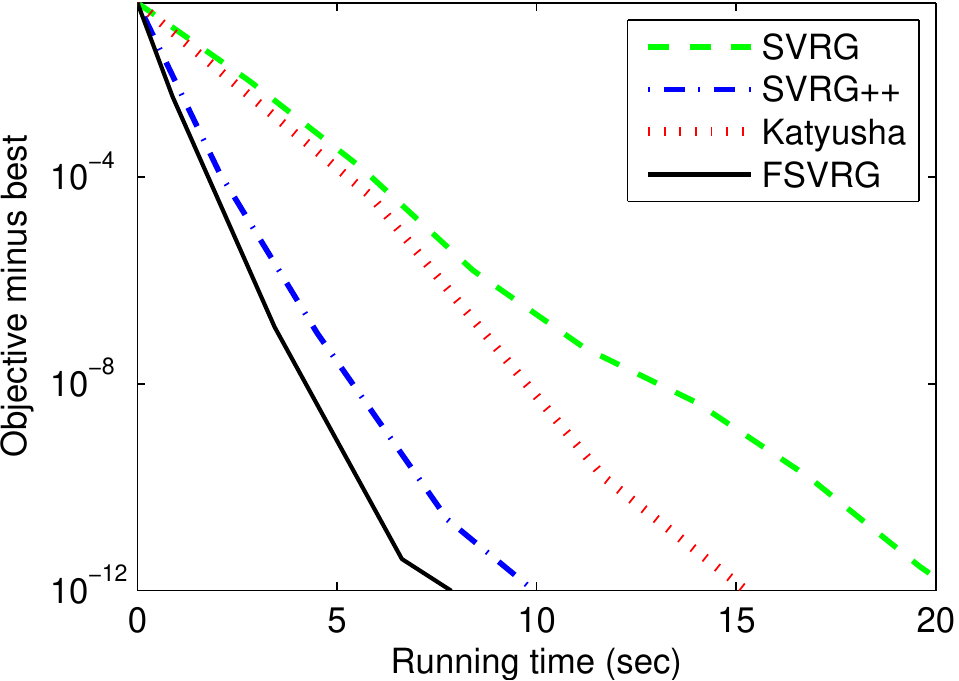}}

\subfigure[IJCNN: $\lambda\!=\!10^{-6}$]{\includegraphics[width=0.243\columnwidth]{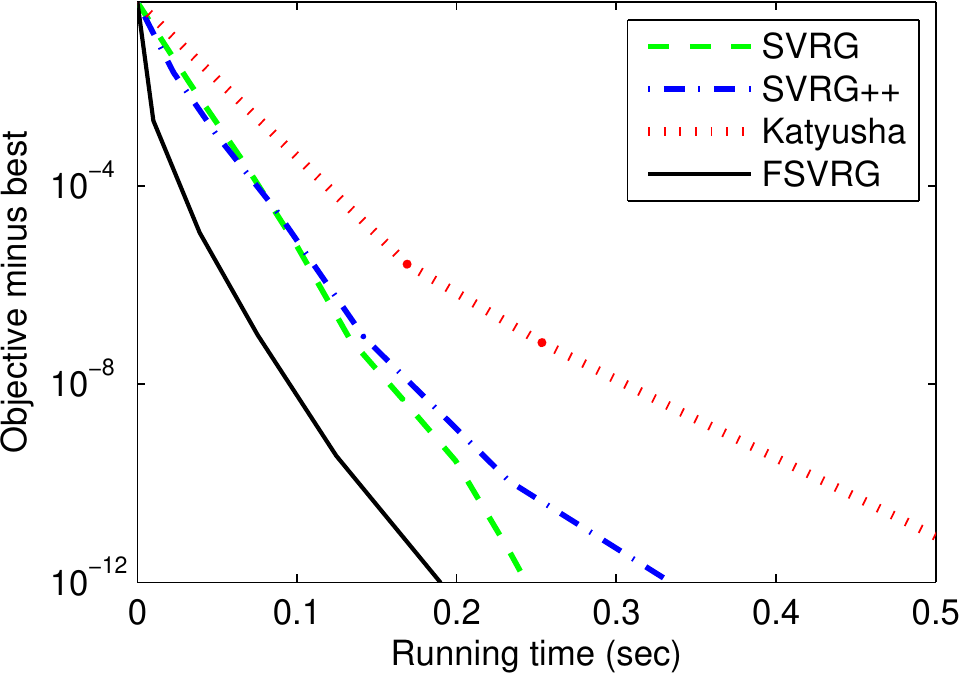}}
\subfigure[Protein: $\lambda\!=\!10^{-6}$]{\includegraphics[width=0.243\columnwidth]{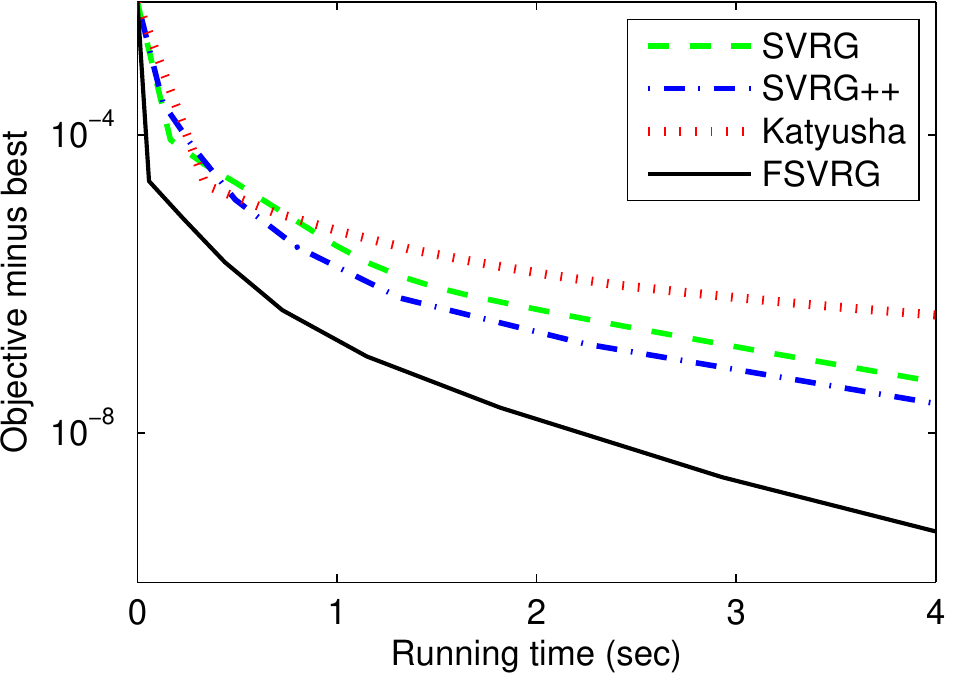}}
\subfigure[Covtype: $\lambda\!=\!10^{-7}$]{\includegraphics[width=0.243\columnwidth]{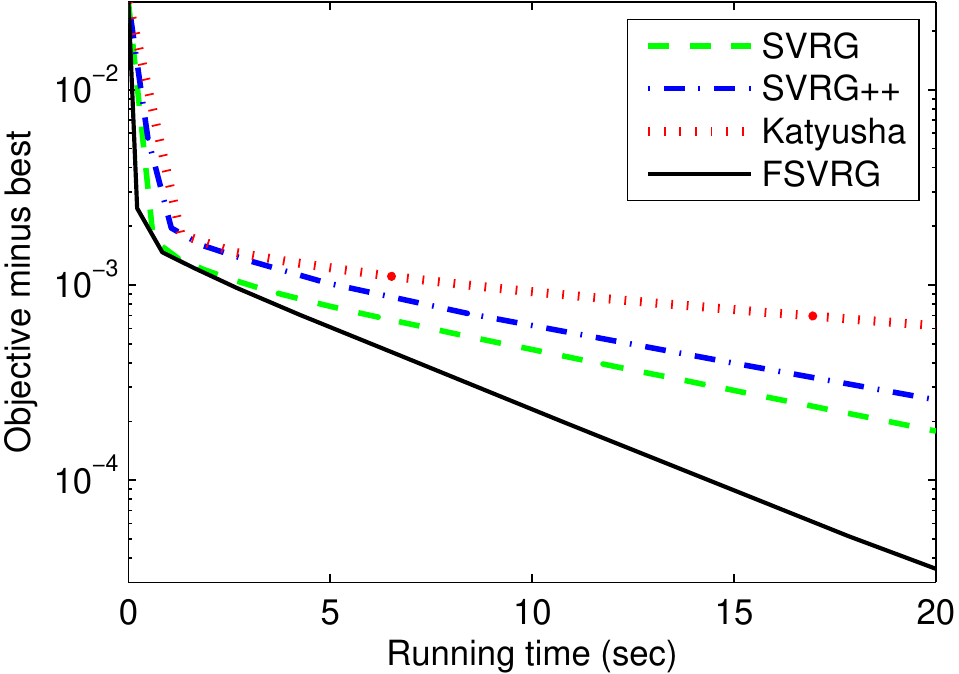}}
\subfigure[SUSY: $\lambda\!=\!10^{-7}$]{\includegraphics[width=0.243\columnwidth]{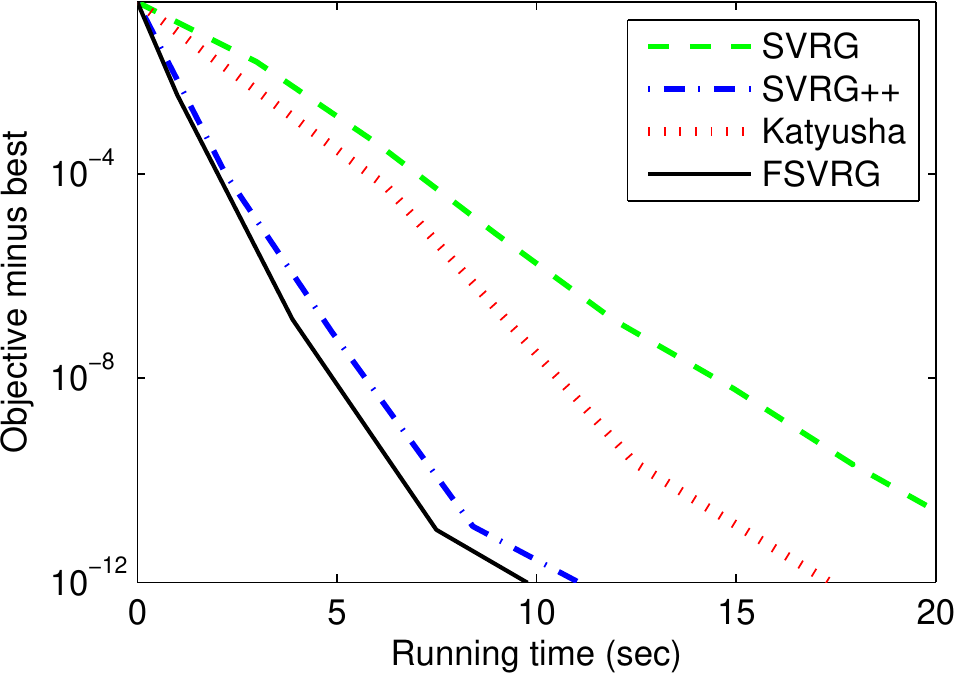}}
\caption{Comparison of SVRG~\cite{johnson:svrg}, SVRG++~\cite{zhu:vrnc}, Katyusha~\cite{zhu:Katyusha}, and FSVRG for solving $\ell_{2}$-norm regularized logistic regression problems with different regularization parameters. The $y$-axis represents the objective value minus the minimum, and the $x$-axis corresponds to the running time (seconds).}
\label{figs8}
\end{figure}

\begin{figure}[!th]
\centering
\subfigure[$\lambda_{1}\!=\!10^{-4}$ and $\lambda_{2}\!=\!10^{-4}$]{\includegraphics[width=0.243\columnwidth]{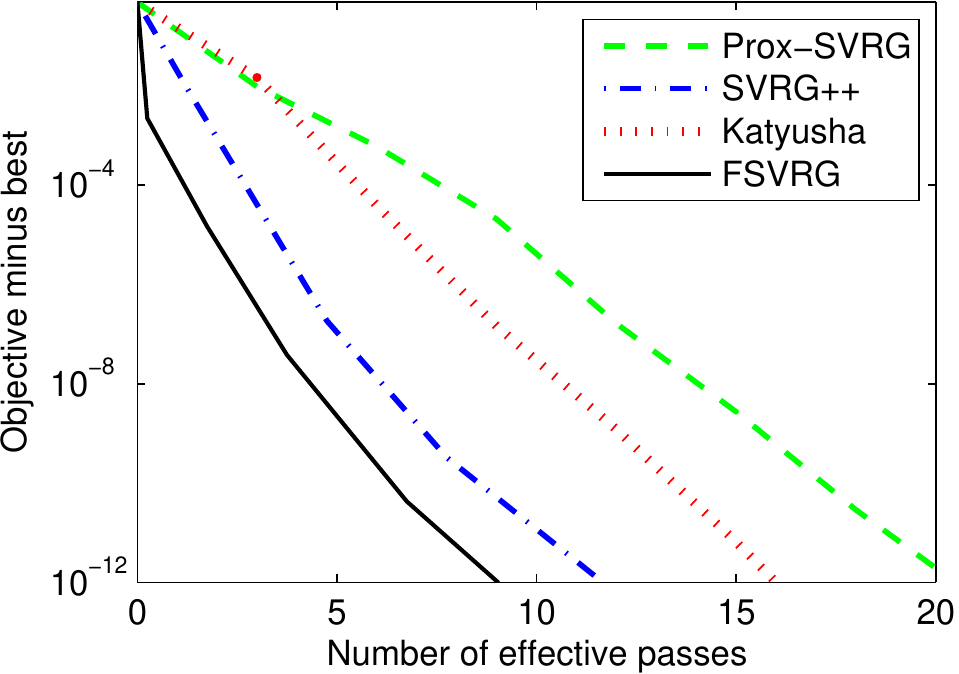}}
\subfigure[$\lambda_{1}\!=\!10^{-4}$ and $\lambda_{2}\!=\!10^{-4}$]{\includegraphics[width=0.243\columnwidth]{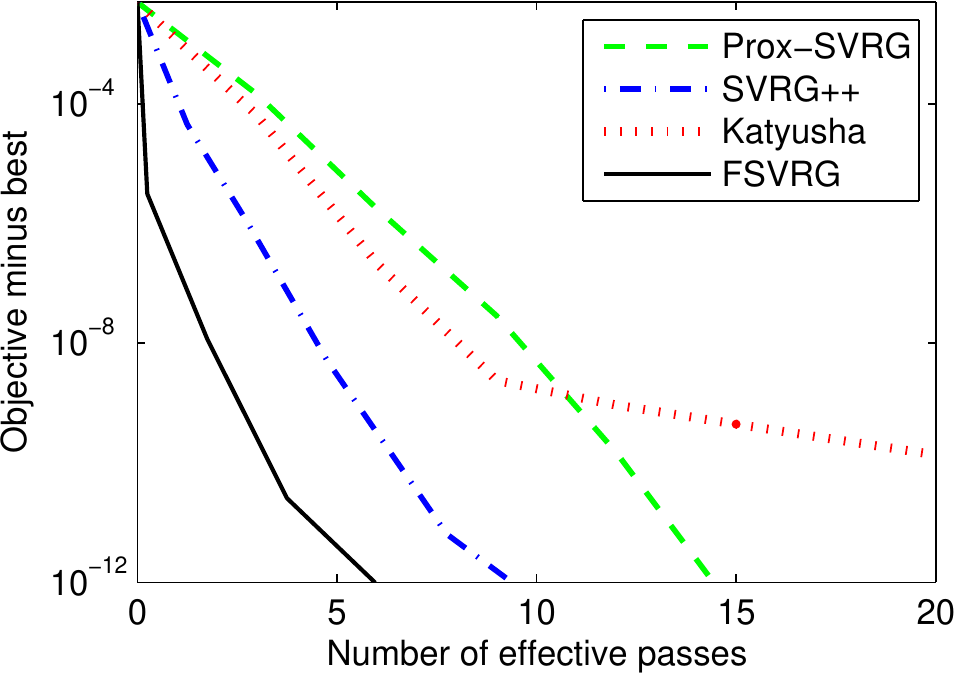}}
\subfigure[$\lambda_{1}\!=\!10^{-4}$ and $\lambda_{2}\!=\!10^{-5}$]{\includegraphics[width=0.243\columnwidth]{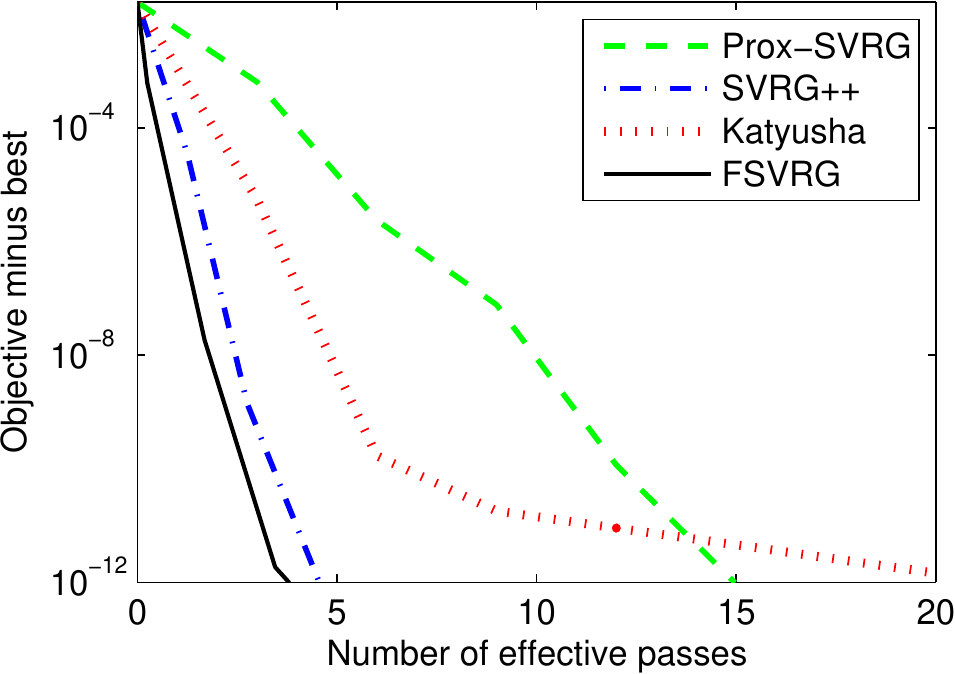}}
\subfigure[$\lambda_{1}\!=\!10^{-4}$ and $\lambda_{2}\!=\!10^{-5}$]{\includegraphics[width=0.243\columnwidth]{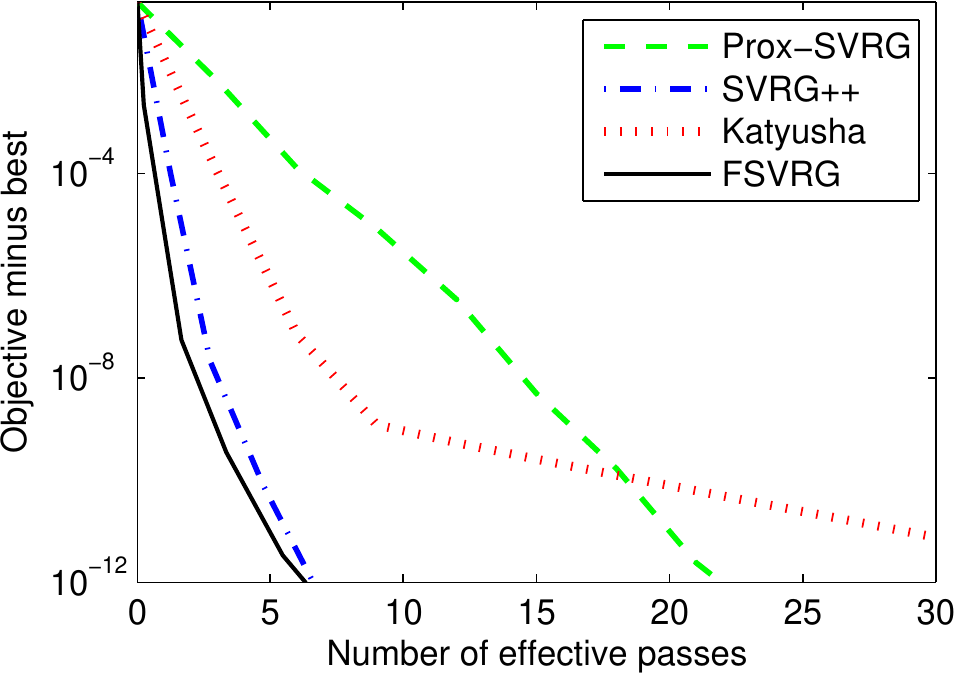}}

\subfigure[$\lambda_{1}\!=\!10^{-4}$ and $\lambda_{2}\!=\!10^{-5}$]{\includegraphics[width=0.243\columnwidth]{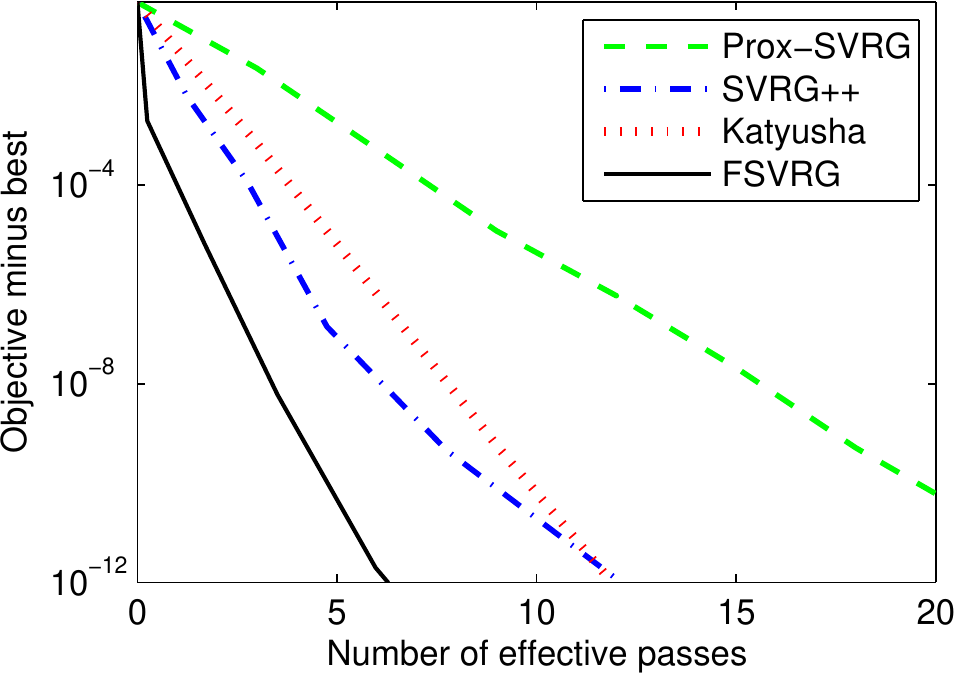}}
\subfigure[$\lambda_{1}\!=\!10^{-4}$ and $\lambda_{2}\!=\!10^{-5}$]{\includegraphics[width=0.243\columnwidth]{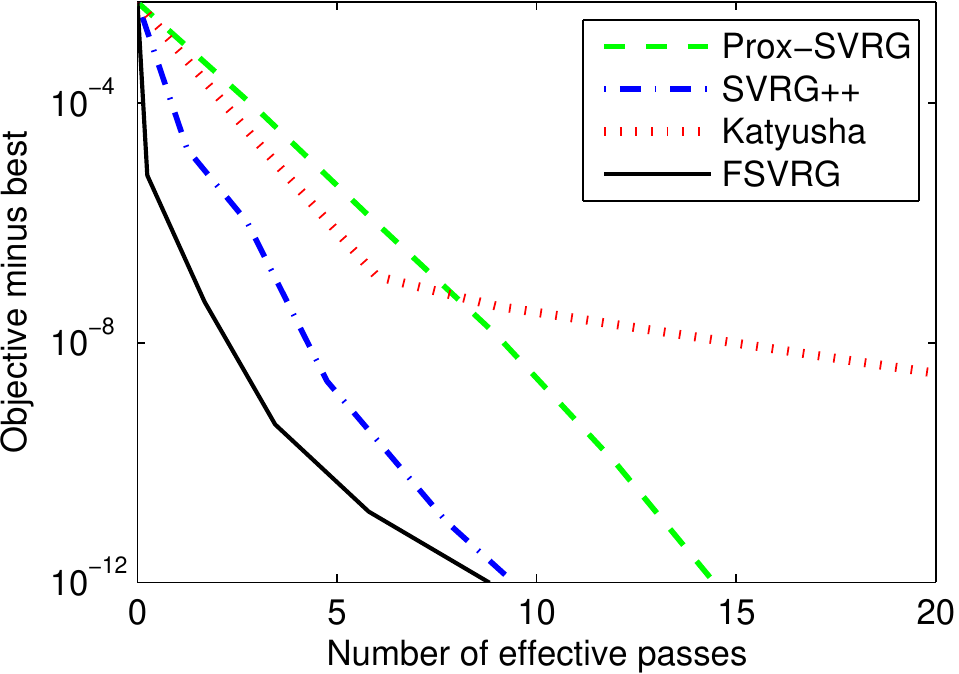}}
\subfigure[$\lambda_{1}\!=\!10^{-5}$ and $\lambda_{2}\!=\!10^{-5}$]{\includegraphics[width=0.243\columnwidth]{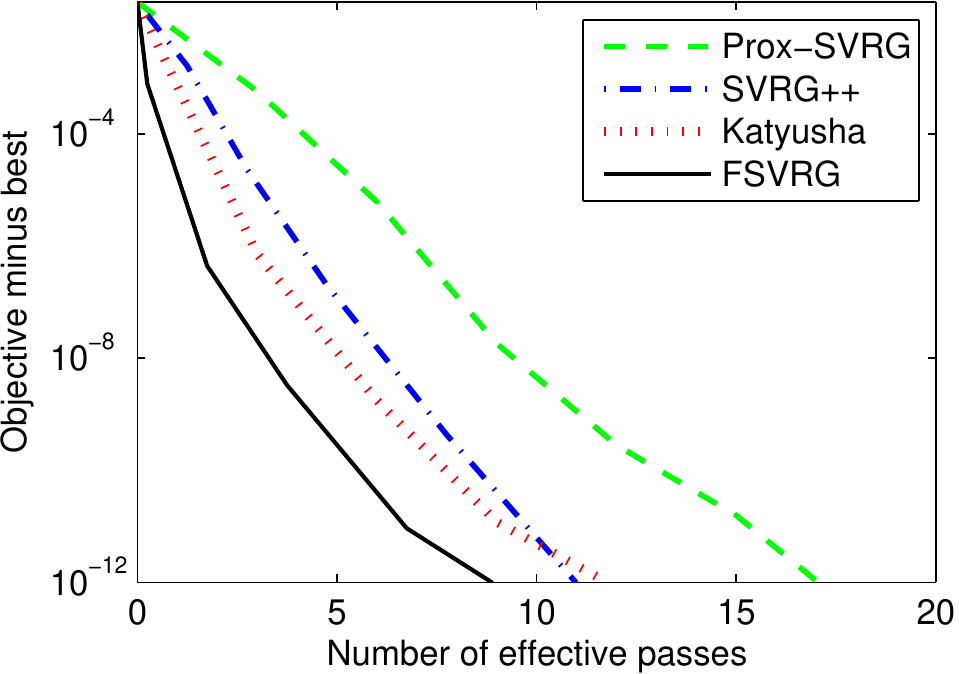}}
\subfigure[$\lambda_{1}\!=\!10^{-5}$ and $\lambda_{2}\!=\!10^{-5}$]{\includegraphics[width=0.243\columnwidth]{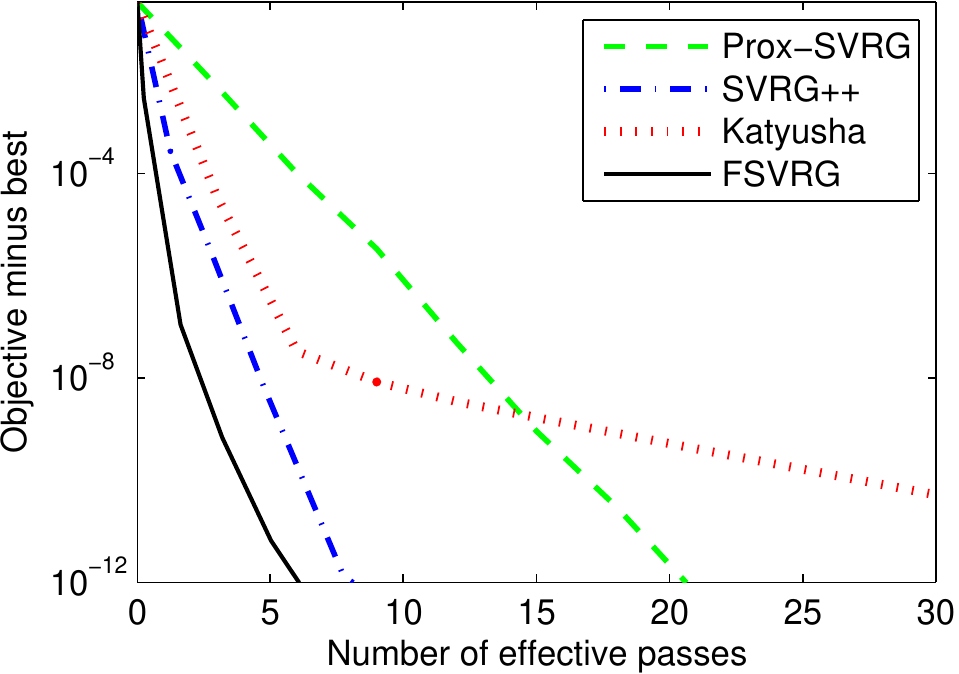}}

\subfigure[$\lambda_{1}\!=\!10^{-5}$ and $\lambda_{2}\!=\!10^{-5}$]{\includegraphics[width=0.243\columnwidth]{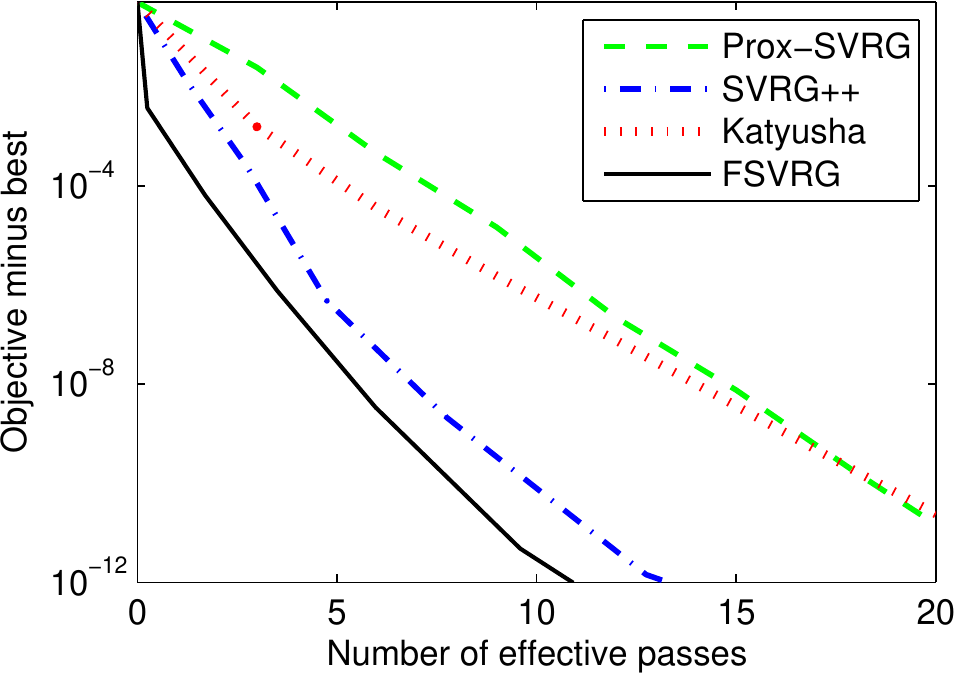}}
\subfigure[$\lambda_{1}\!=\!10^{-5}$ and $\lambda_{2}\!=\!10^{-5}$]{\includegraphics[width=0.243\columnwidth]{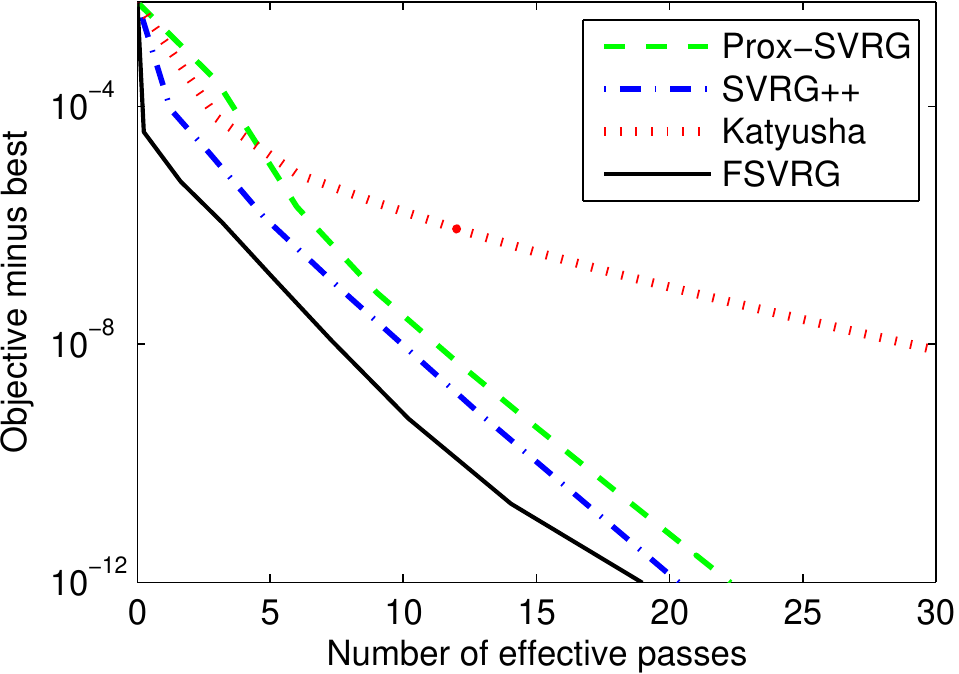}}
\subfigure[$\lambda_{1}\!=\!10^{-5}$ and $\lambda_{2}\!=\!10^{-6}$]{\includegraphics[width=0.243\columnwidth]{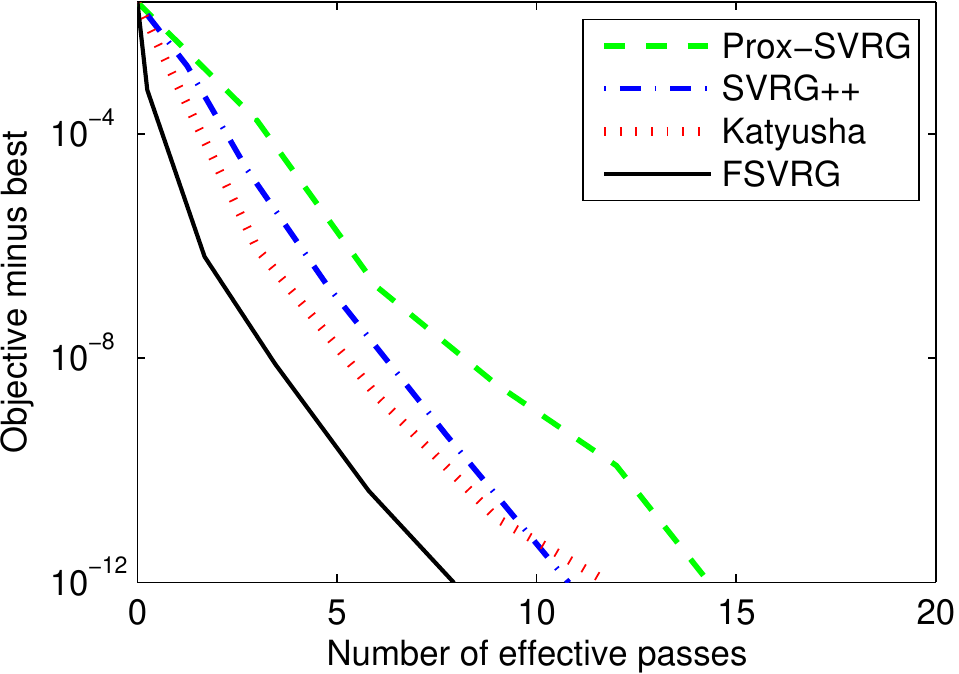}}
\subfigure[$\lambda_{1}\!=\!10^{-5}$ and $\lambda_{2}\!=\!10^{-6}$]{\includegraphics[width=0.243\columnwidth]{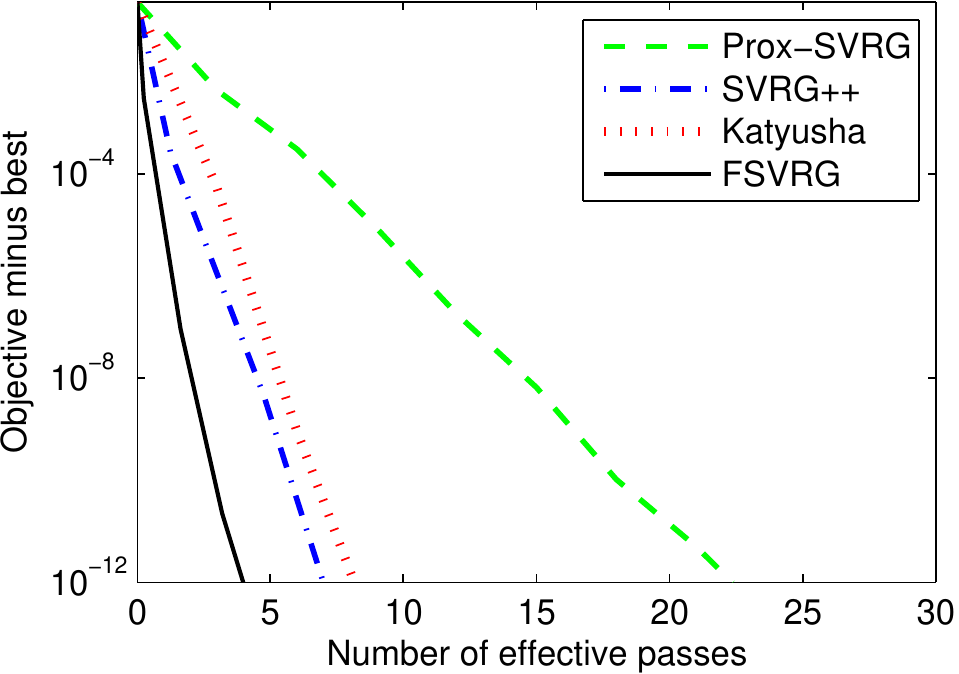}}

\subfigure[$\lambda_{1}\!=\!10^{-5}$ and $\lambda_{2}\!=\!10^{-6}$]{\includegraphics[width=0.243\columnwidth]{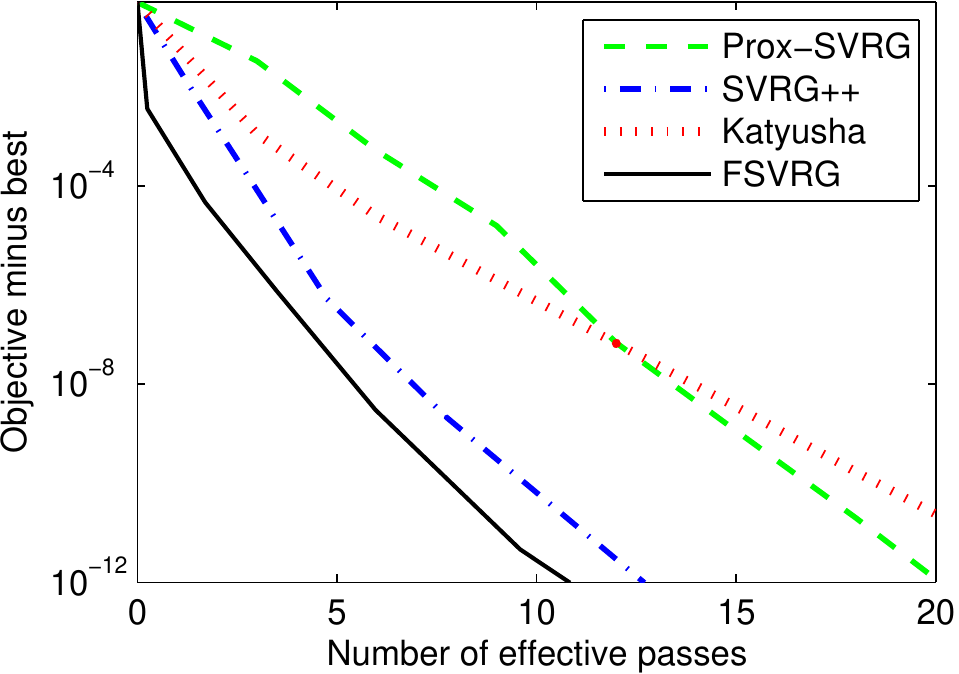}}
\subfigure[$\lambda_{1}\!=\!10^{-5}$ and $\lambda_{2}\!=\!10^{-6}$]{\includegraphics[width=0.243\columnwidth]{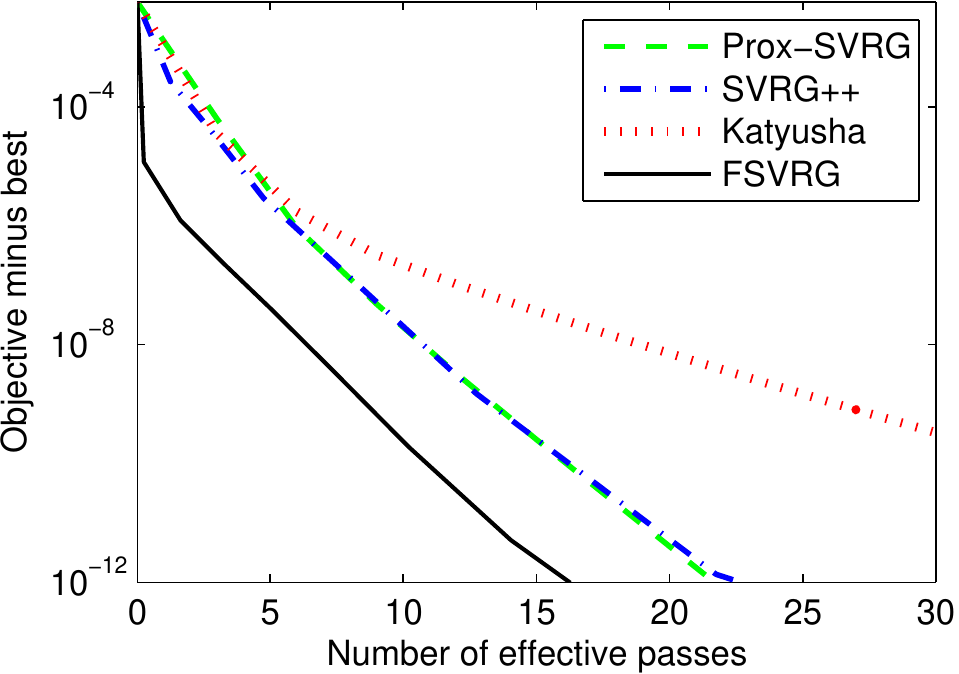}}
\subfigure[$\lambda_{1}\!=\!10^{-6}$ and $\lambda_{2}\!=\!10^{-6}$]{\includegraphics[width=0.243\columnwidth]{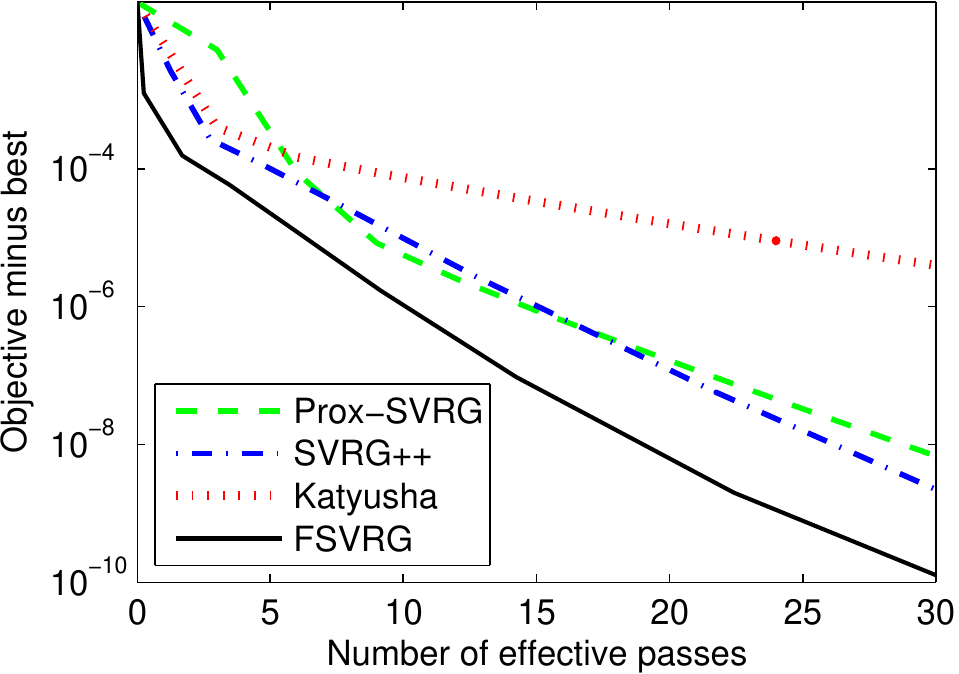}}
\subfigure[$\lambda_{1}\!=\!10^{-6}$ and $\lambda_{2}\!=\!10^{-6}$]{\includegraphics[width=0.243\columnwidth]{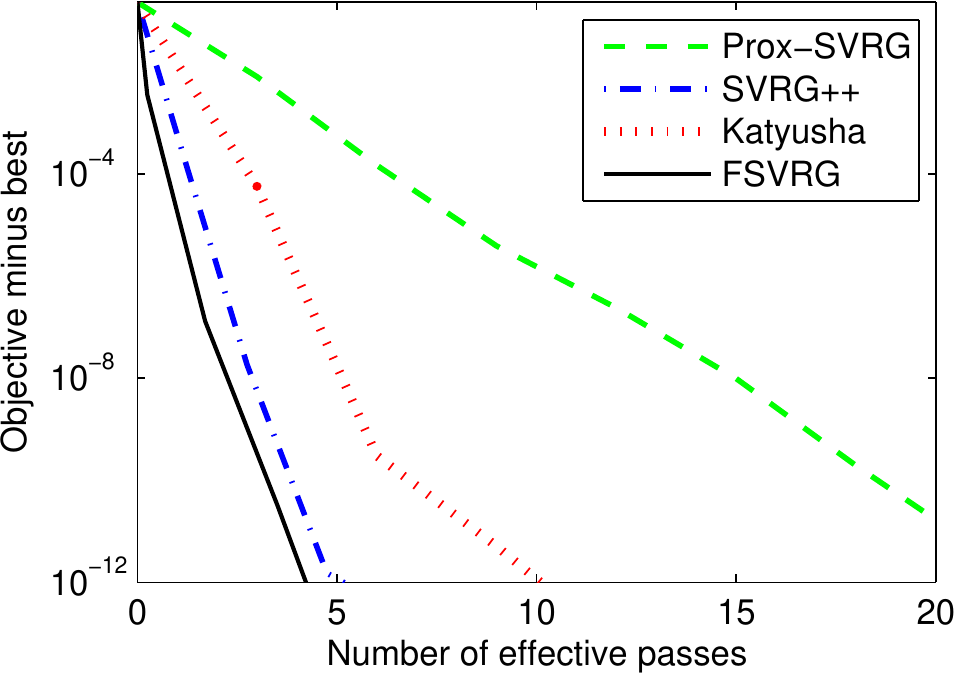}}
\caption{Comparison of Prox-SVRG~\cite{xiao:prox-svrg}, SVRG++~\cite{zhu:vrnc}, Katyusha~\cite{zhu:Katyusha}, and FSVRG for solving elastic net (i.e., $\lambda_{1}\|x\|^{2}\!+\!\lambda_{2}\|x\|_{1}$) regularized logistic regression problems on the four data sets: IJCNN (the first column), Protein (the second column), Covtype (the third column), and SUSY (the fourth column). Note that the $y$-axis represents the objective value minus the minimum, and the $x$-axis corresponds to the number of effective passes.}
\label{figs9}
\end{figure}

\begin{figure}[!th]
\centering
\subfigure[$\lambda_{1}\!=\!10^{-4}$ and $\lambda_{2}\!=\!10^{-4}$]{\includegraphics[width=0.243\columnwidth]{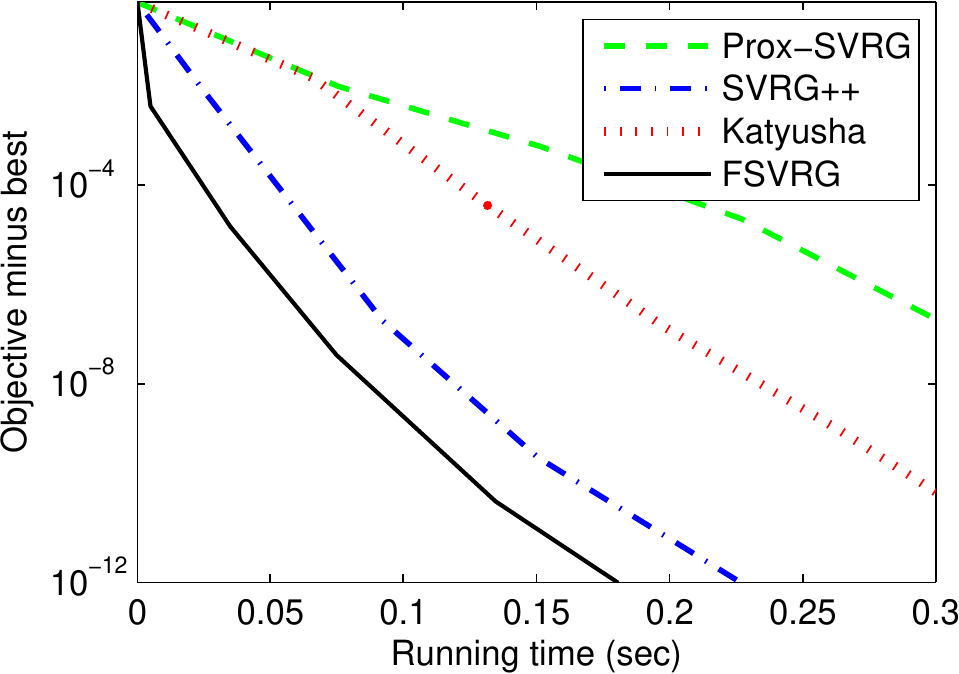}}
\subfigure[$\lambda_{1}\!=\!10^{-4}$ and $\lambda_{2}\!=\!10^{-4}$]{\includegraphics[width=0.243\columnwidth]{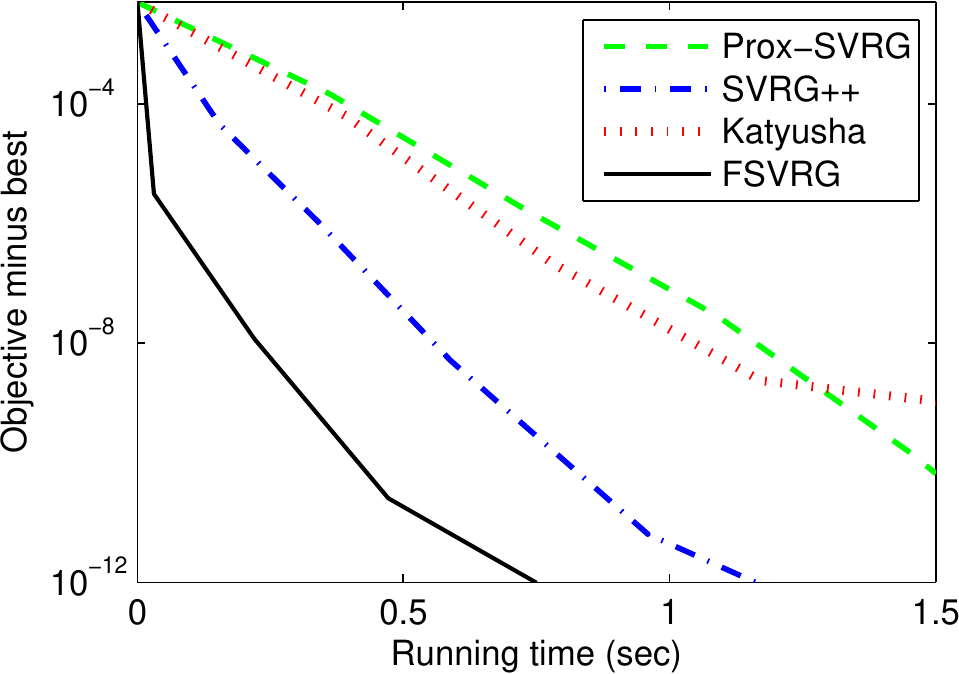}}
\subfigure[$\lambda_{1}\!=\!10^{-4}$ and $\lambda_{2}\!=\!10^{-5}$]{\includegraphics[width=0.243\columnwidth]{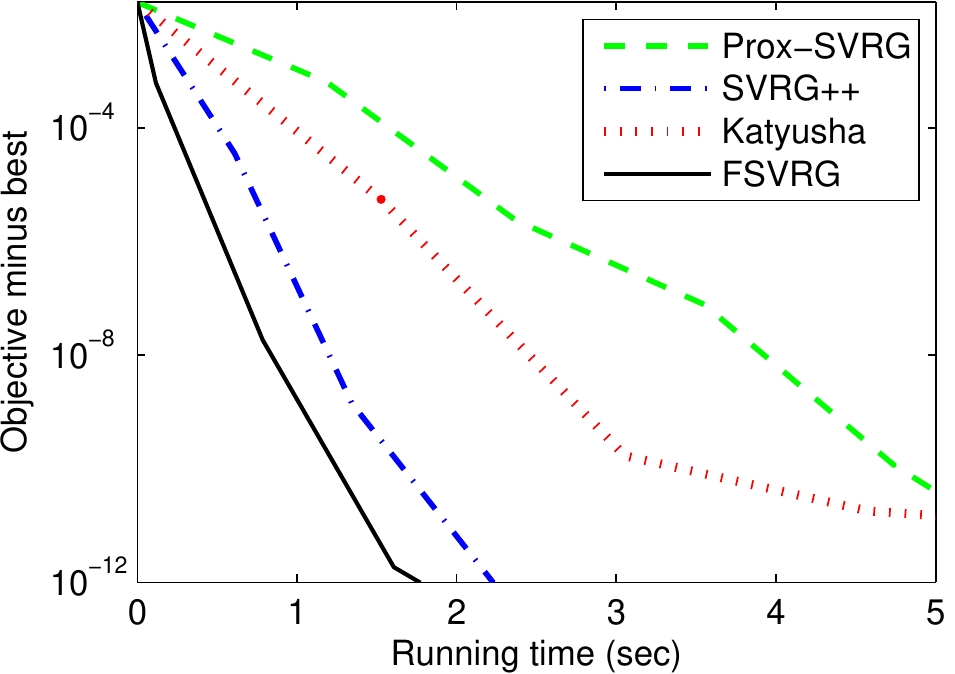}}
\subfigure[$\lambda_{1}\!=\!10^{-4}$ and $\lambda_{2}\!=\!10^{-5}$]{\includegraphics[width=0.243\columnwidth]{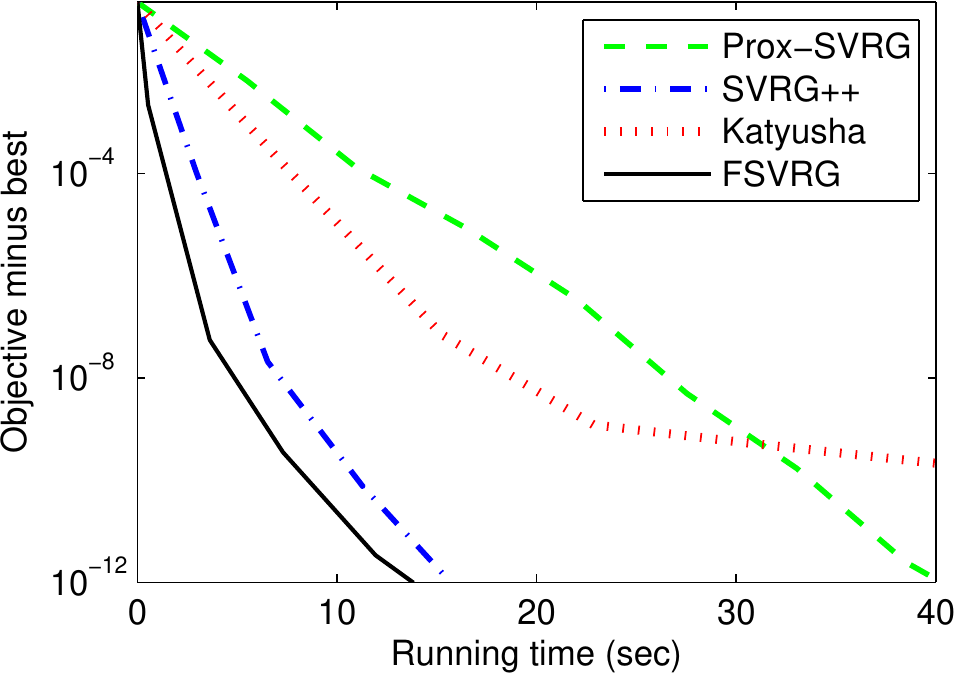}}

\subfigure[$\lambda_{1}\!=\!10^{-4}$ and $\lambda_{2}\!=\!10^{-5}$]{\includegraphics[width=0.243\columnwidth]{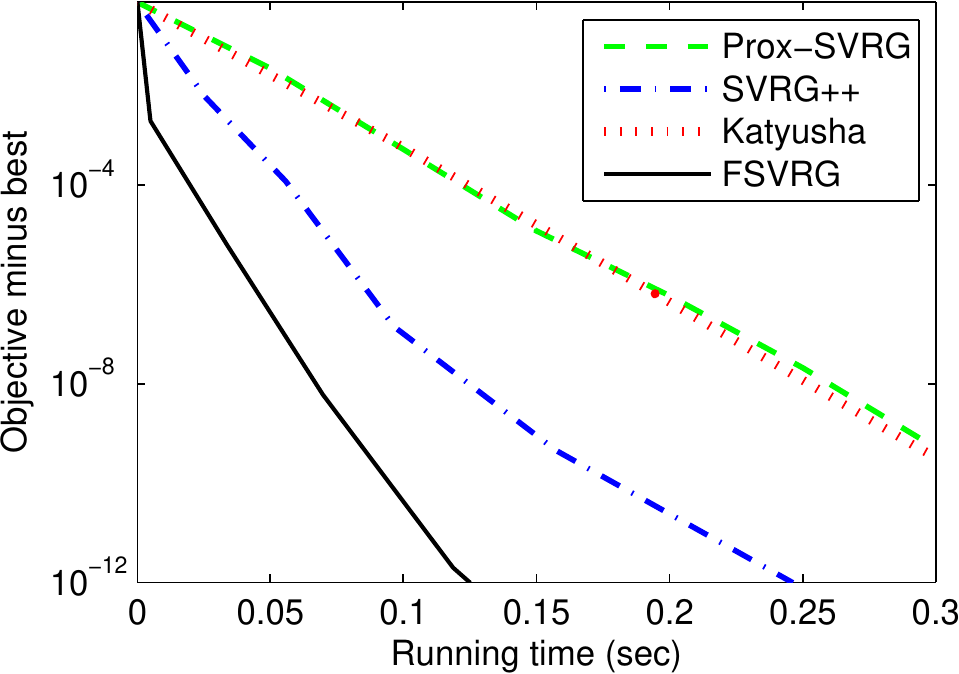}}
\subfigure[$\lambda_{1}\!=\!10^{-4}$ and $\lambda_{2}\!=\!10^{-5}$]{\includegraphics[width=0.243\columnwidth]{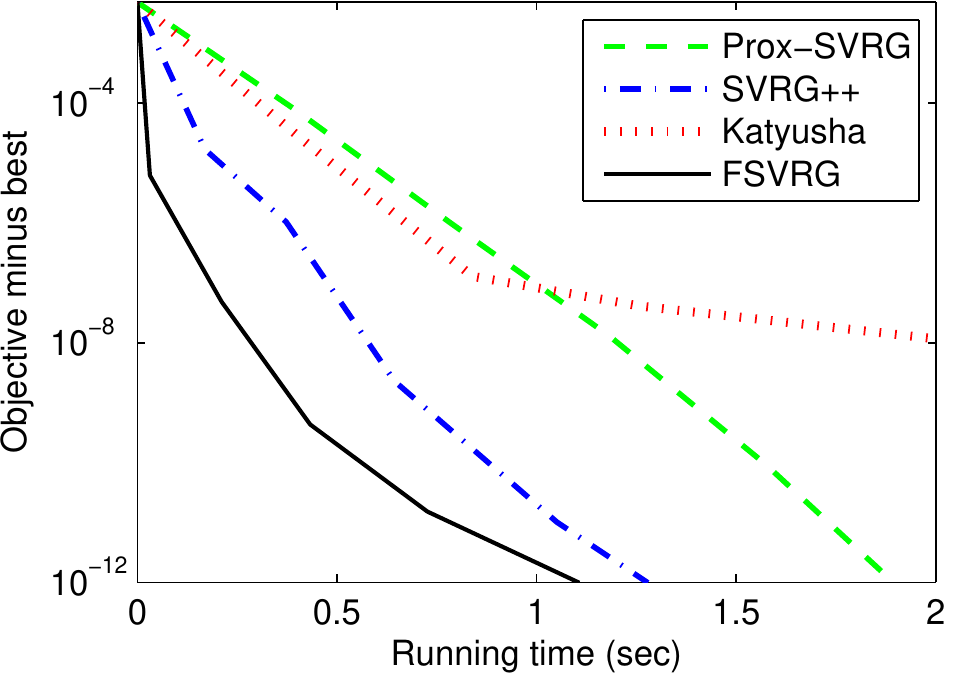}}
\subfigure[$\lambda_{1}\!=\!10^{-5}$ and $\lambda_{2}\!=\!10^{-5}$]{\includegraphics[width=0.243\columnwidth]{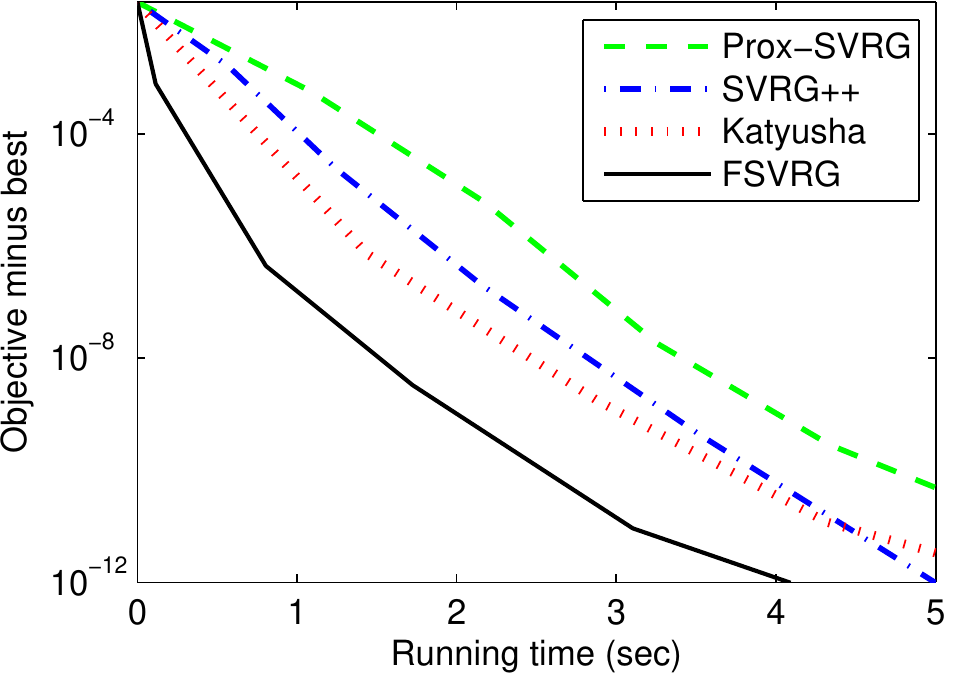}}
\subfigure[$\lambda_{1}\!=\!10^{-5}$ and $\lambda_{2}\!=\!10^{-5}$]{\includegraphics[width=0.243\columnwidth]{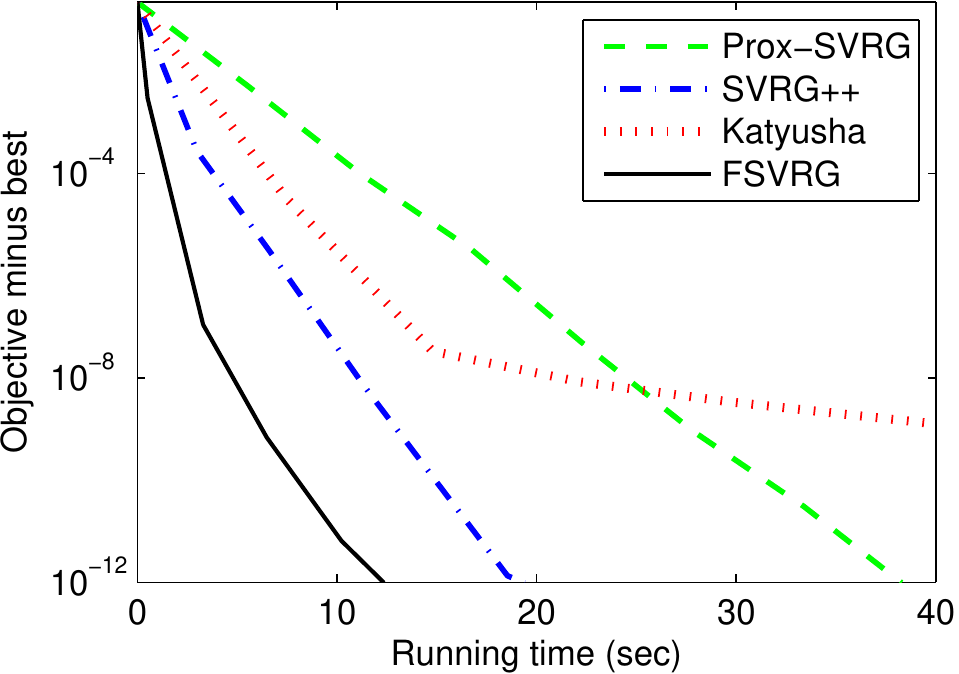}}

\subfigure[$\lambda_{1}\!=\!10^{-5}$ and $\lambda_{2}\!=\!10^{-5}$]{\includegraphics[width=0.243\columnwidth]{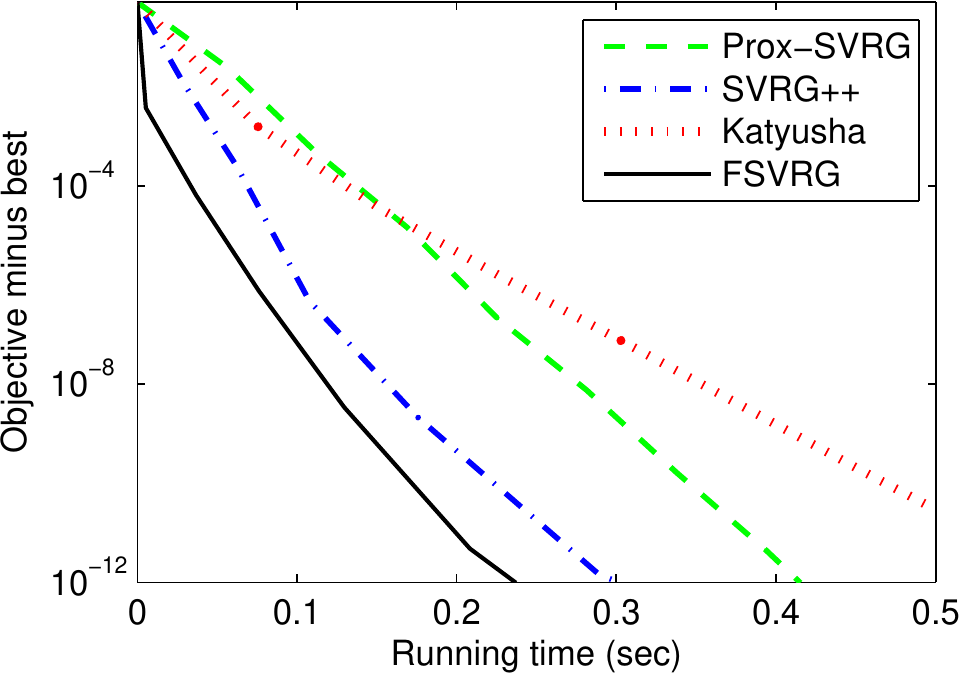}}
\subfigure[$\lambda_{1}\!=\!10^{-5}$ and $\lambda_{2}\!=\!10^{-5}$]{\includegraphics[width=0.243\columnwidth]{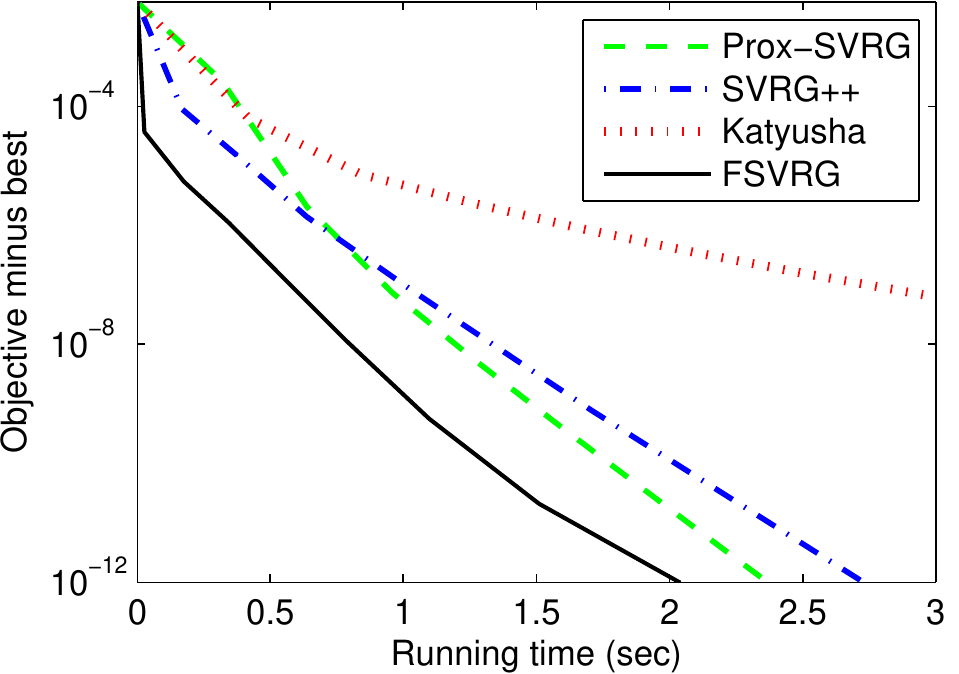}}
\subfigure[$\lambda_{1}\!=\!10^{-5}$ and $\lambda_{2}\!=\!10^{-6}$]{\includegraphics[width=0.243\columnwidth]{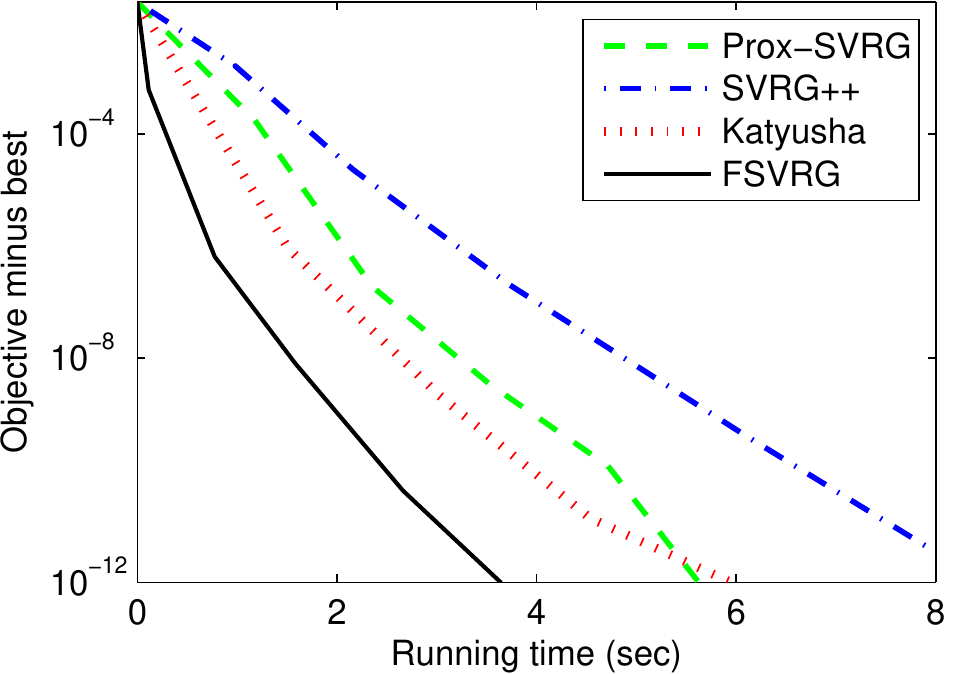}}
\subfigure[$\lambda_{1}\!=\!10^{-5}$ and $\lambda_{2}\!=\!10^{-6}$]{\includegraphics[width=0.243\columnwidth]{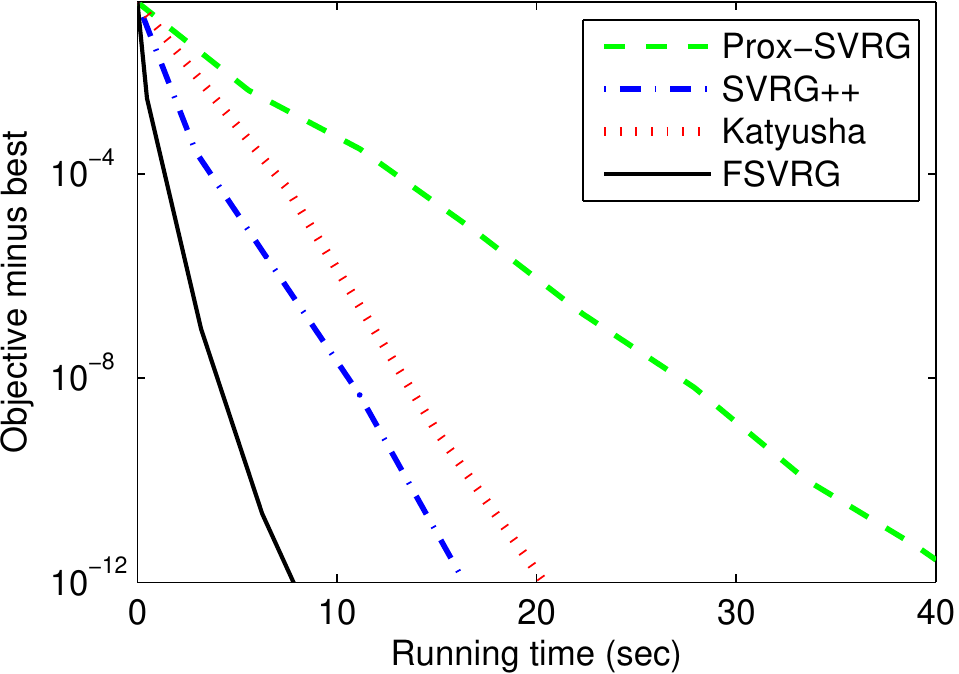}}

\subfigure[$\lambda_{1}\!=\!10^{-5}$ and $\lambda_{2}\!=\!10^{-6}$]{\includegraphics[width=0.243\columnwidth]{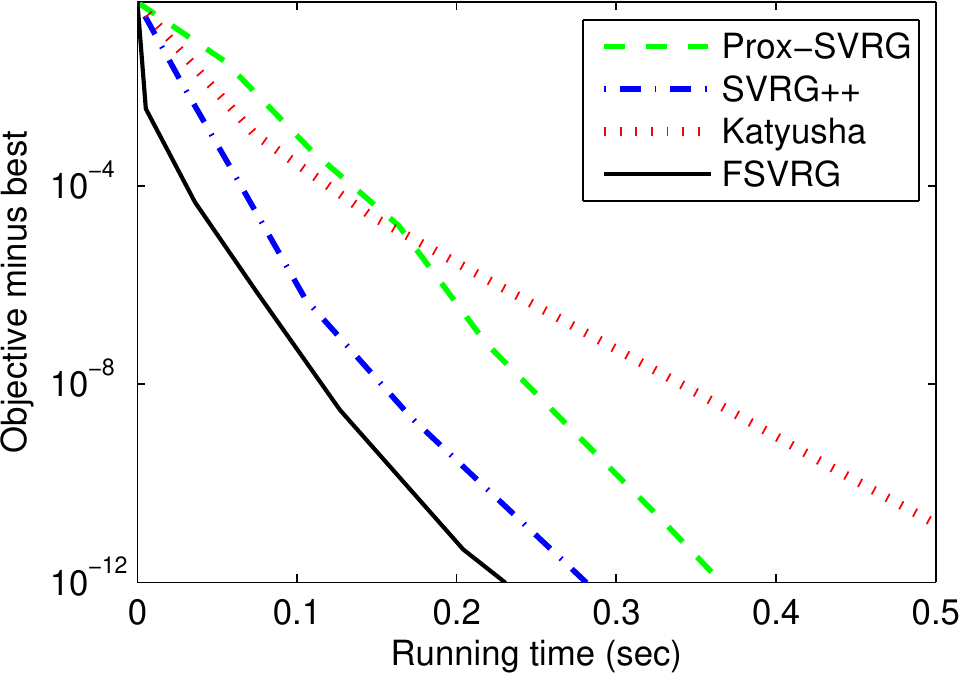}}
\subfigure[$\lambda_{1}\!=\!10^{-5}$ and $\lambda_{2}\!=\!10^{-6}$]{\includegraphics[width=0.243\columnwidth]{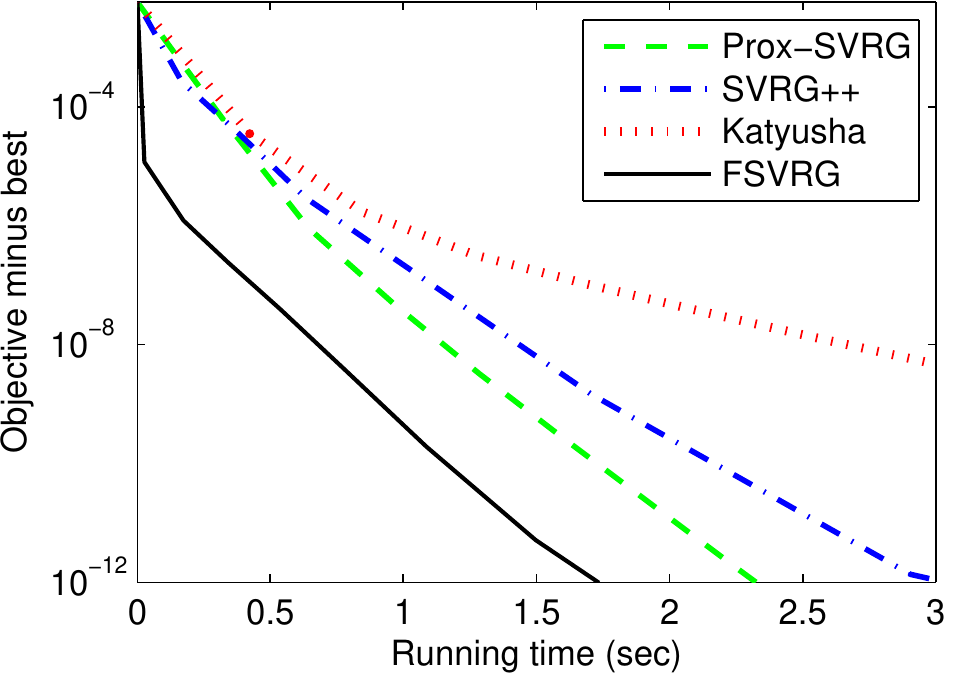}}
\subfigure[$\lambda_{1}\!=\!10^{-6}$ and $\lambda_{2}\!=\!10^{-6}$]{\includegraphics[width=0.243\columnwidth]{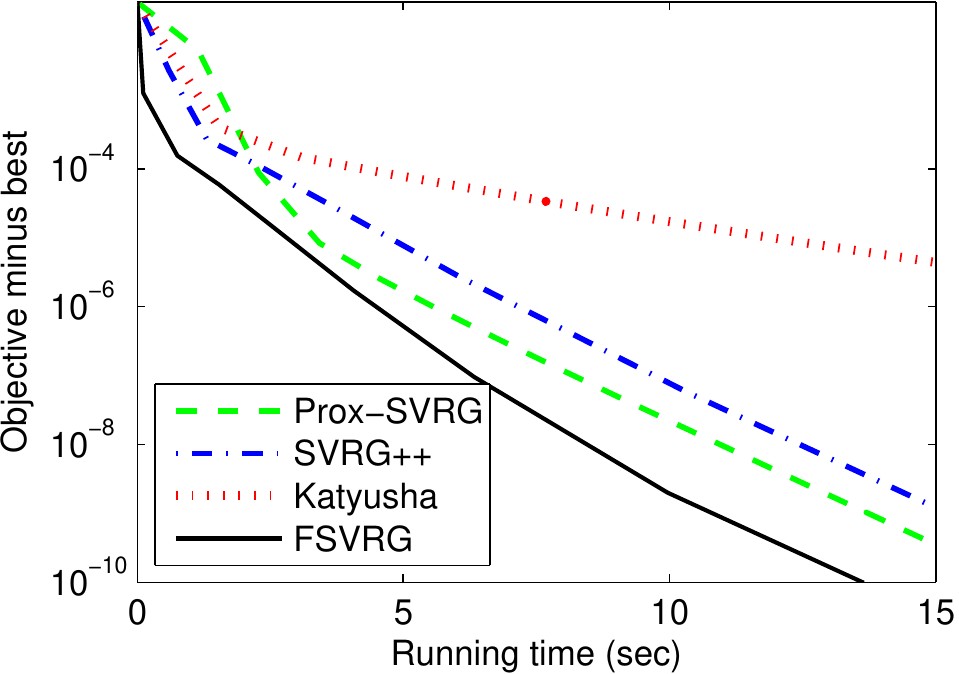}}
\subfigure[$\lambda_{1}\!=\!10^{-6}$ and $\lambda_{2}\!=\!10^{-6}$]{\includegraphics[width=0.243\columnwidth]{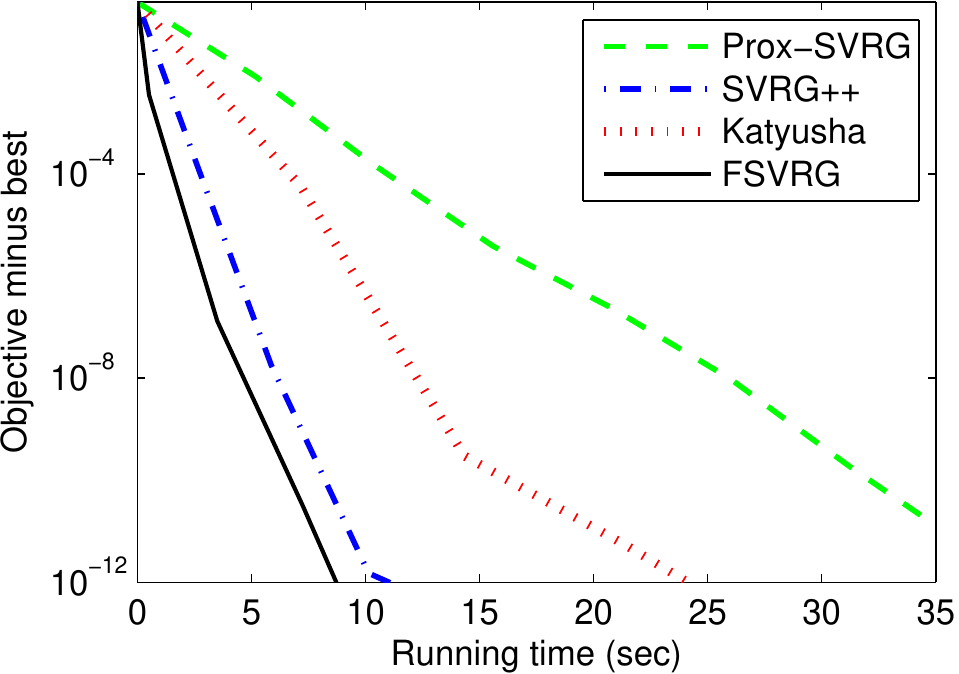}}
\caption{Comparison of Prox-SVRG~\cite{xiao:prox-svrg}, SVRG++~\cite{zhu:vrnc}, Katyusha~\cite{zhu:Katyusha}, and FSVRG for solving elastic net (i.e., $\lambda_{1}\|x\|^{2}\!+\!\lambda_{2}\|x\|_{1}$) regularized logistic regression problems on the four data sets: IJCNN (the first column), Protein (the second column), Covtype (the third column), and SUSY (the fourth column). Note that the $y$-axis represents the objective value minus the minimum, and the $x$-axis corresponds to the running time (seconds).}
\label{figs10}
\end{figure}

\begin{figure}[!th]
\centering
\subfigure[IJCNN: $\lambda\!=\!10^{-3}$]{\includegraphics[width=0.243\columnwidth]{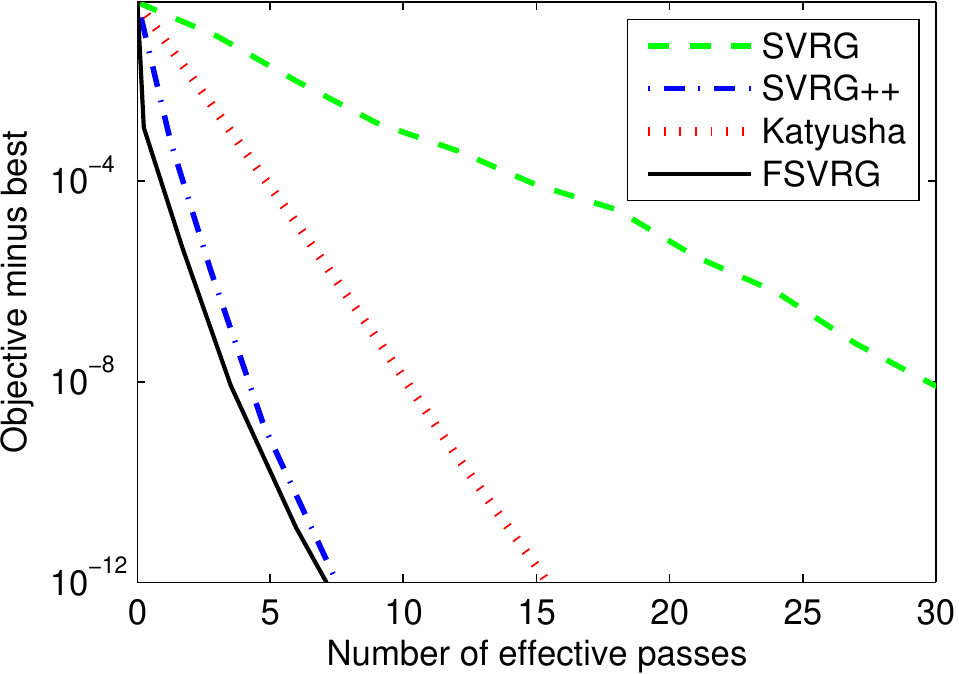}}
\subfigure[Protein: $\lambda\!=\!10^{-3}$]{\includegraphics[width=0.243\columnwidth]{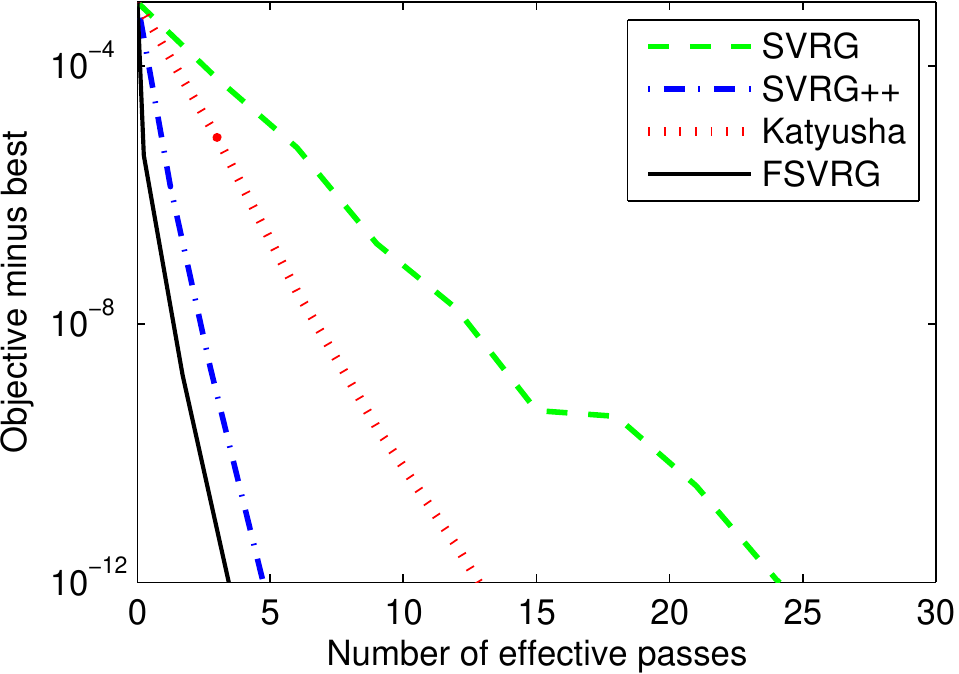}}
\subfigure[Covtype: $\lambda\!=\!10^{-4}$]{\includegraphics[width=0.243\columnwidth]{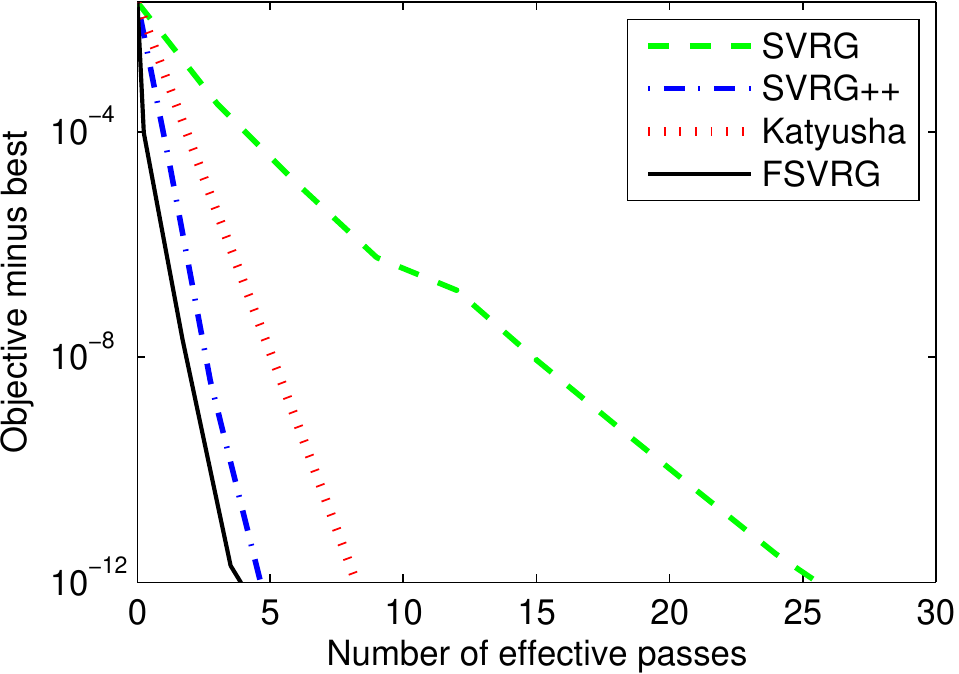}}
\subfigure[SUSY: $\lambda\!=\!10^{-4}$]{\includegraphics[width=0.243\columnwidth]{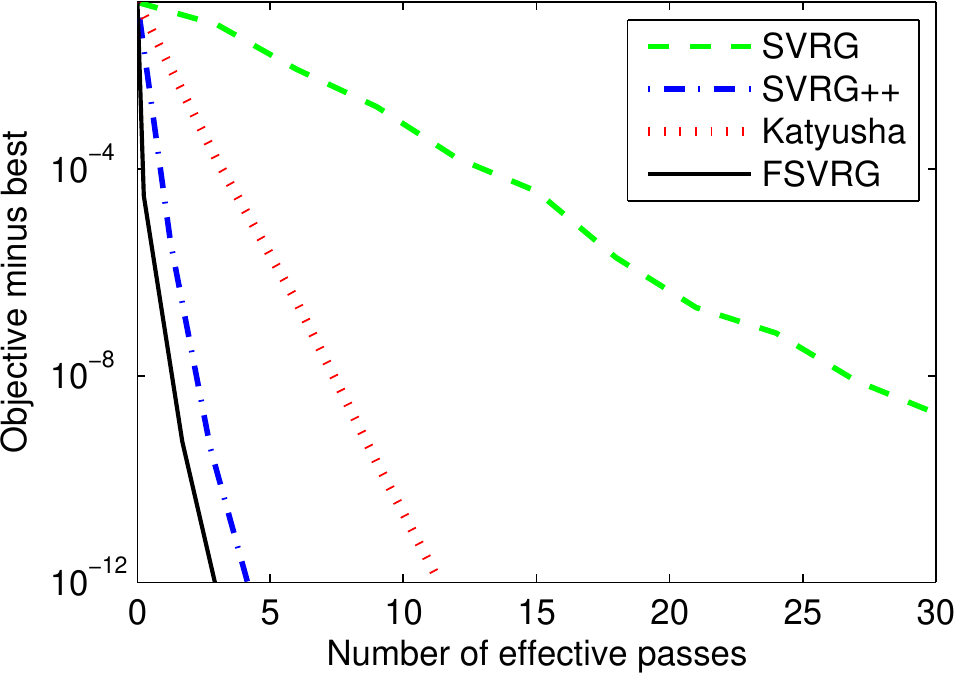}}

\subfigure[IJCNN: $\lambda\!=\!10^{-4}$]{\includegraphics[width=0.243\columnwidth]{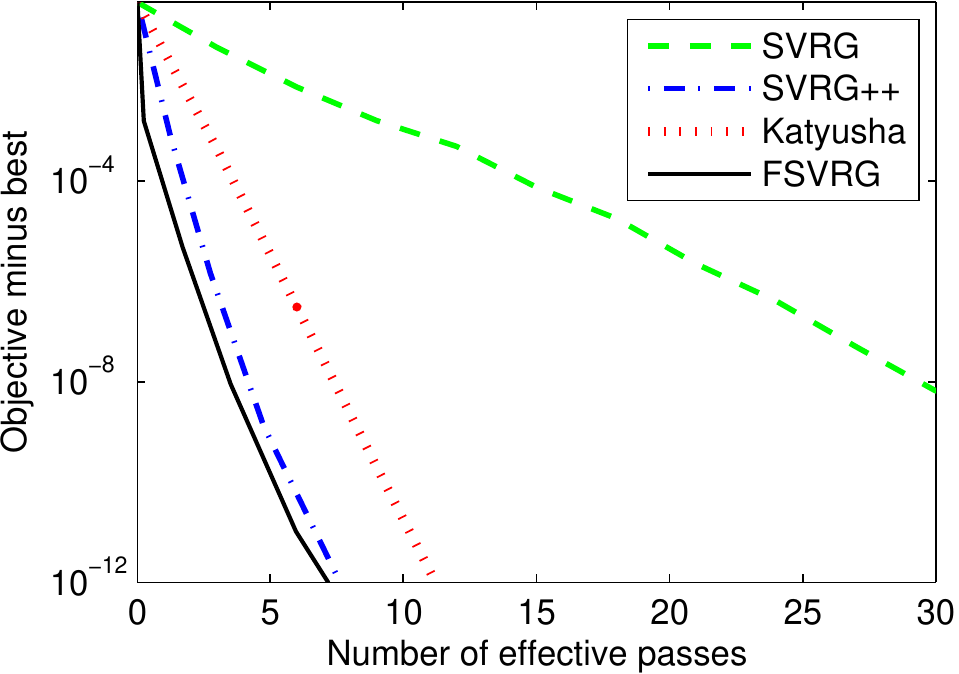}}
\subfigure[Protein: $\lambda\!=\!10^{-4}$]{\includegraphics[width=0.243\columnwidth]{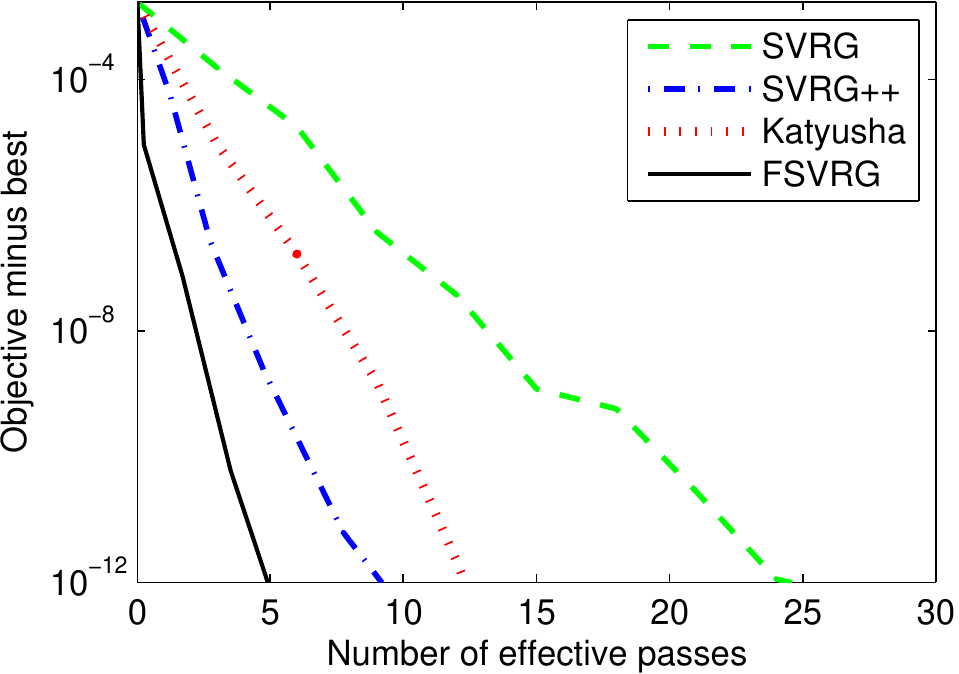}}
\subfigure[Covtype: $\lambda\!=\!10^{-5}$]{\includegraphics[width=0.243\columnwidth]{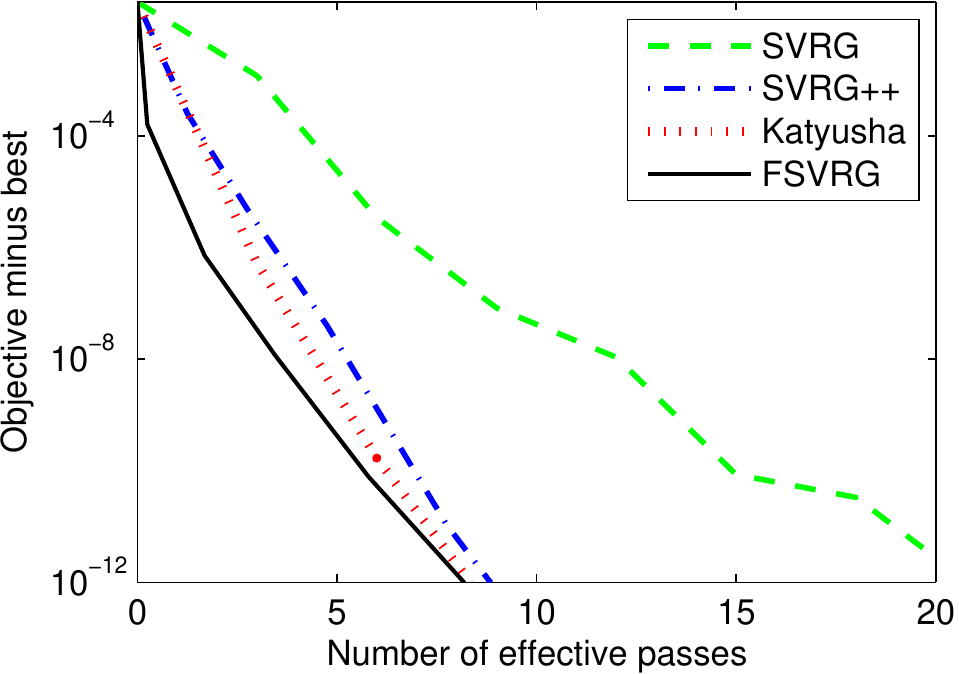}}
\subfigure[SUSY: $\lambda\!=\!10^{-5}$]{\includegraphics[width=0.243\columnwidth]{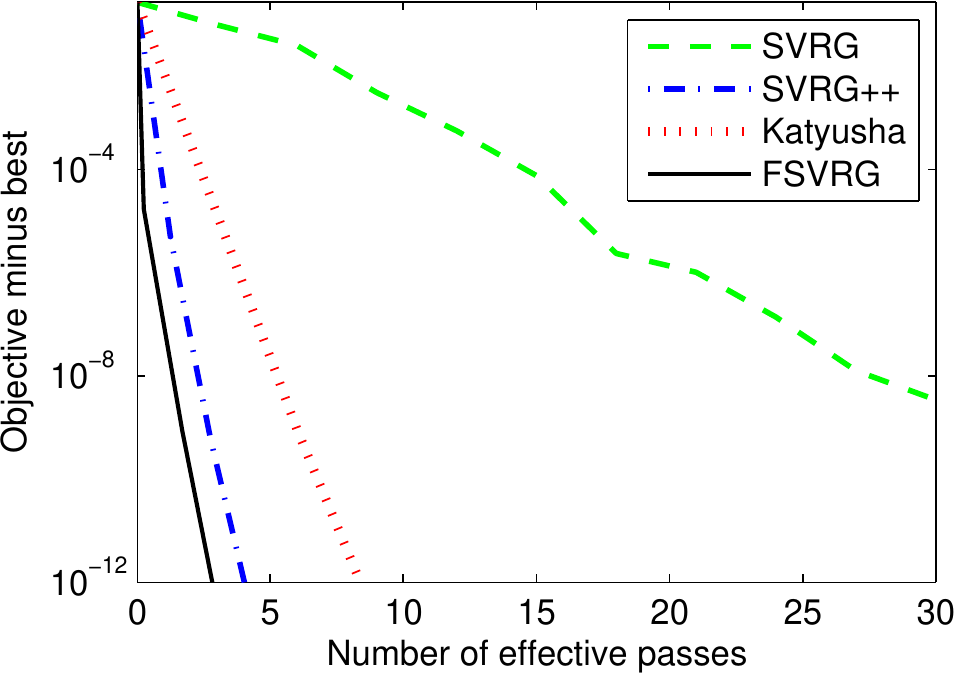}}

\subfigure[IJCNN: $\lambda\!=\!10^{-5}$]{\includegraphics[width=0.243\columnwidth]{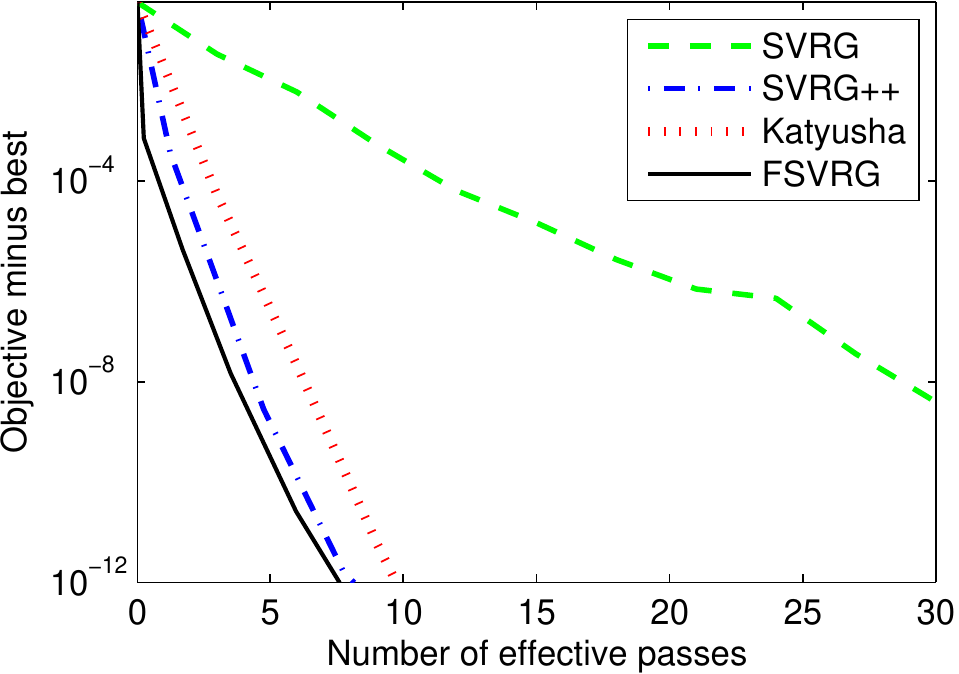}}
\subfigure[Protein: $\lambda\!=\!10^{-5}$]{\includegraphics[width=0.243\columnwidth]{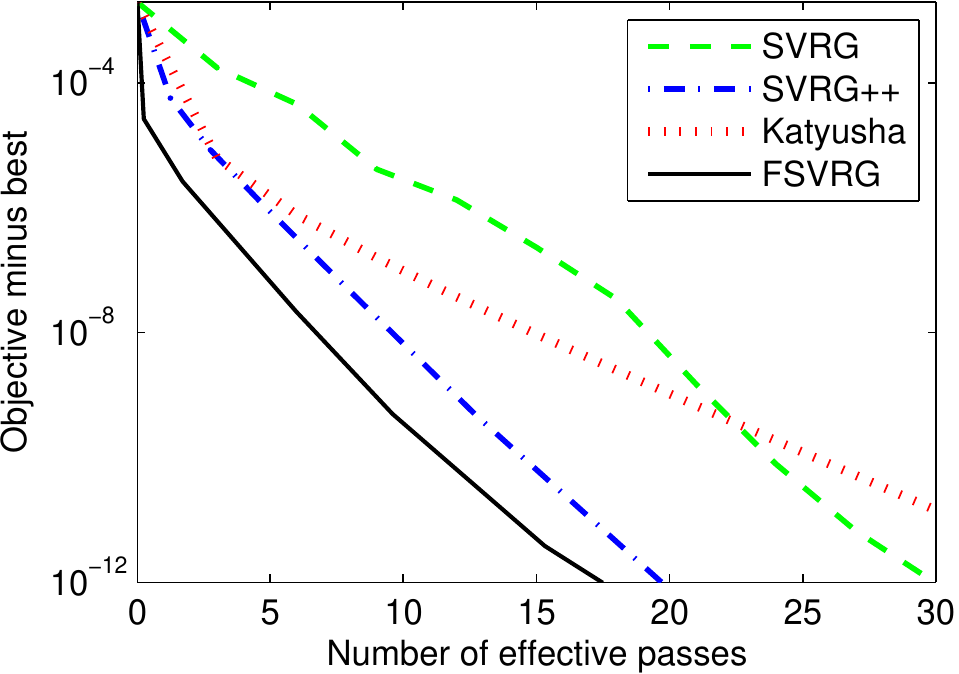}}
\subfigure[Covtype: $\lambda\!=\!10^{-6}$]{\includegraphics[width=0.243\columnwidth]{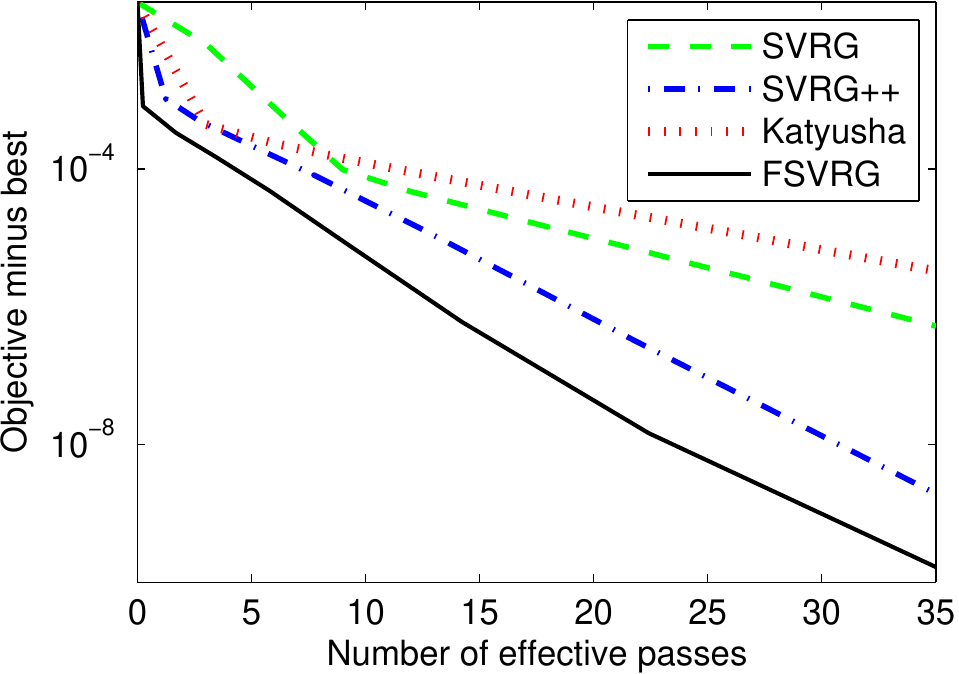}}
\subfigure[SUSY: $\lambda\!=\!10^{-6}$]{\includegraphics[width=0.243\columnwidth]{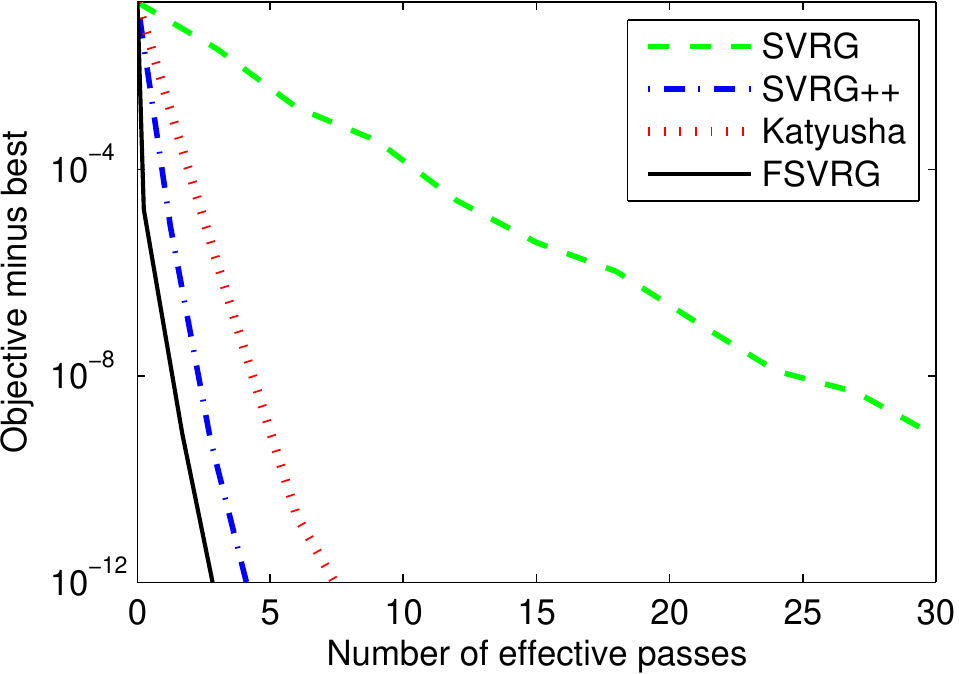}}

\subfigure[IJCNN: $\lambda\!=\!10^{-6}$]{\includegraphics[width=0.243\columnwidth]{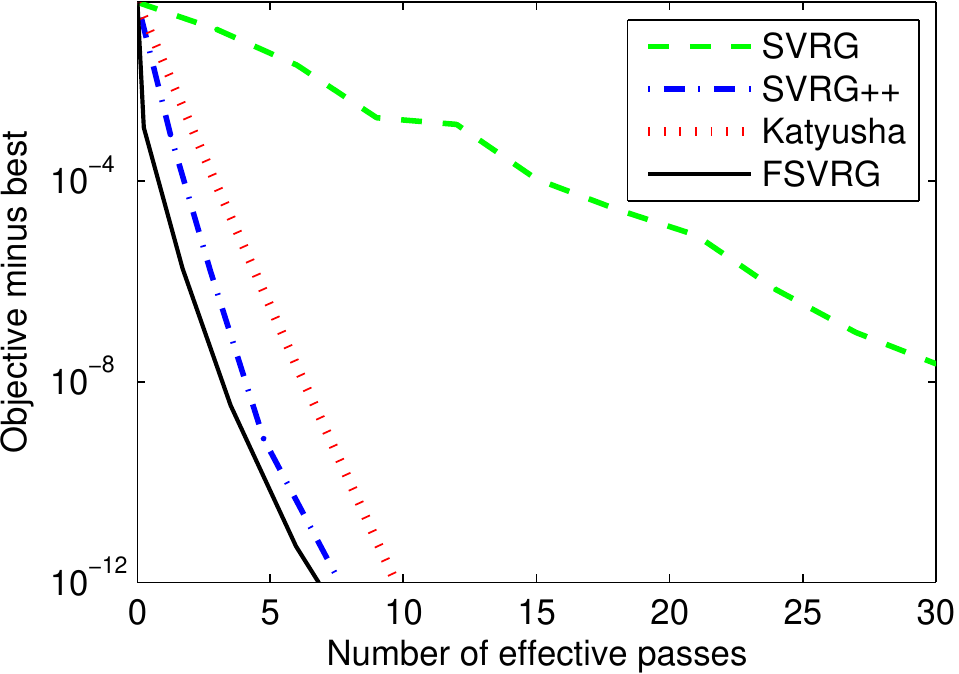}}
\subfigure[Protein: $\lambda\!=\!10^{-6}$]{\includegraphics[width=0.243\columnwidth]{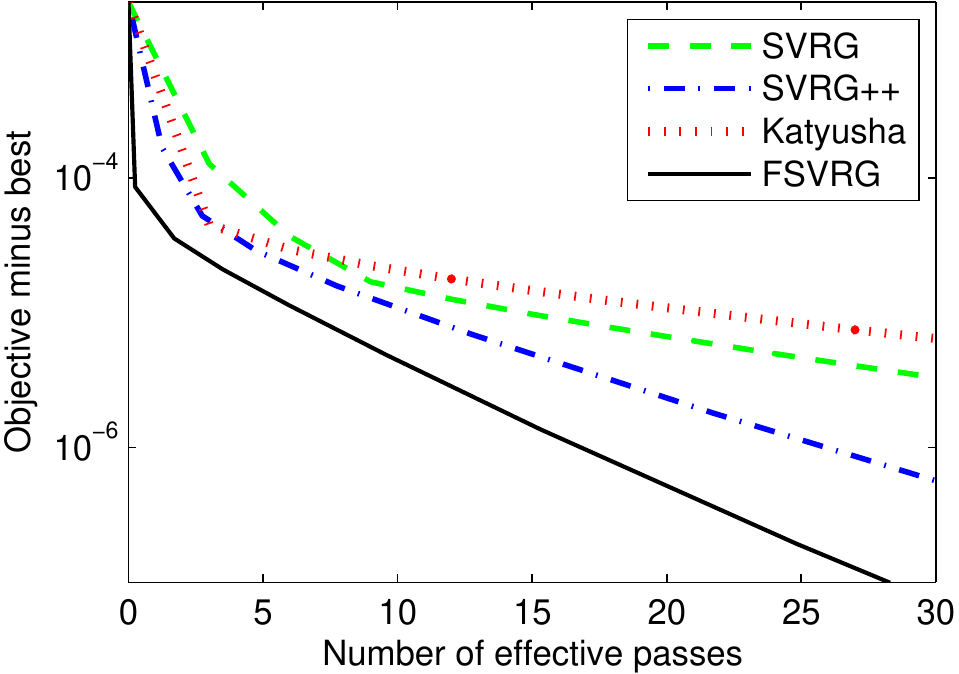}}
\subfigure[Covtype: $\lambda\!=\!10^{-7}$]{\includegraphics[width=0.243\columnwidth]{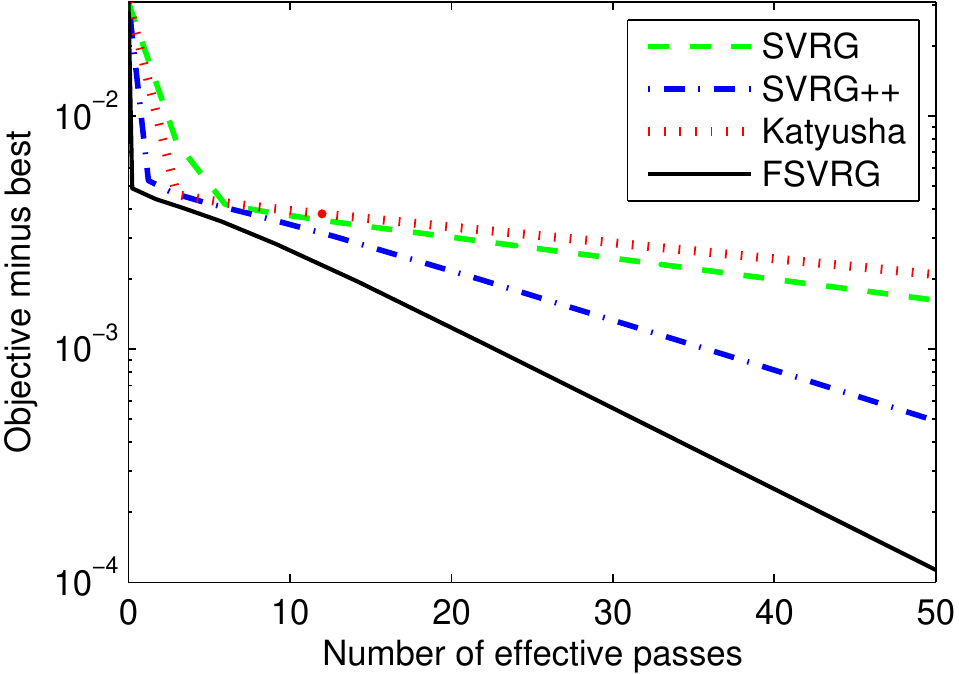}}
\subfigure[SUSY: $\lambda\!=\!10^{-7}$]{\includegraphics[width=0.243\columnwidth]{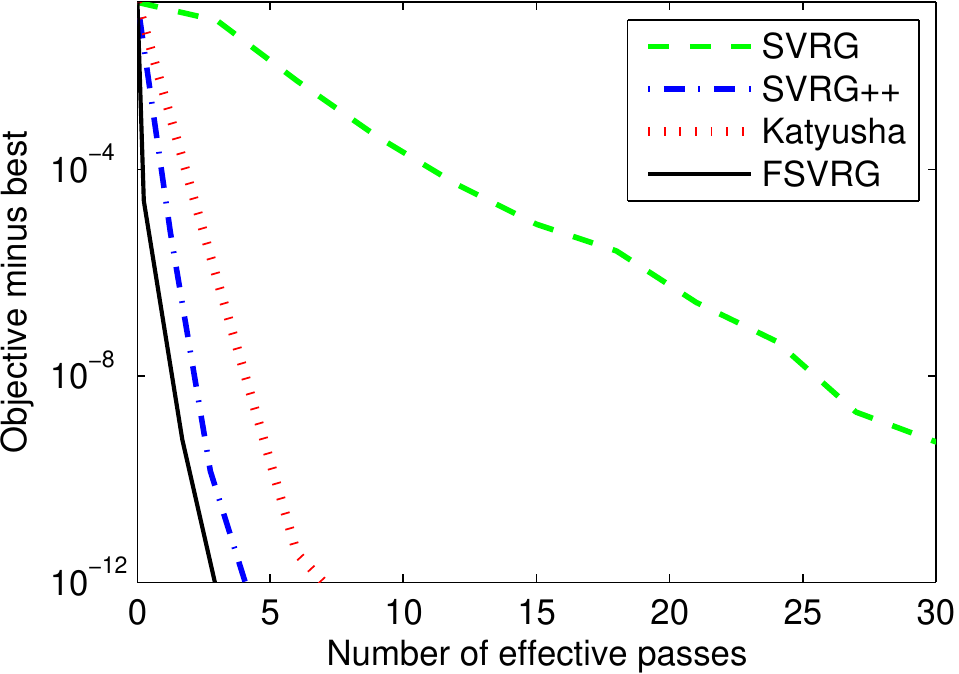}}
\caption{Comparison of SVRG~\cite{johnson:svrg}, SVRG++~\cite{zhu:vrnc}, Katyusha~\cite{zhu:Katyusha}, and FSVRG for solving ridge regression problems with different regularization parameters. The $y$-axis represents the objective value minus the minimum, and the $x$-axis corresponds to the number of effective passes.}
\label{figs11}
\end{figure}

\begin{figure}[!th]
\centering
\subfigure[IJCNN: $\lambda\!=\!10^{-3}$]{\includegraphics[width=0.243\columnwidth]{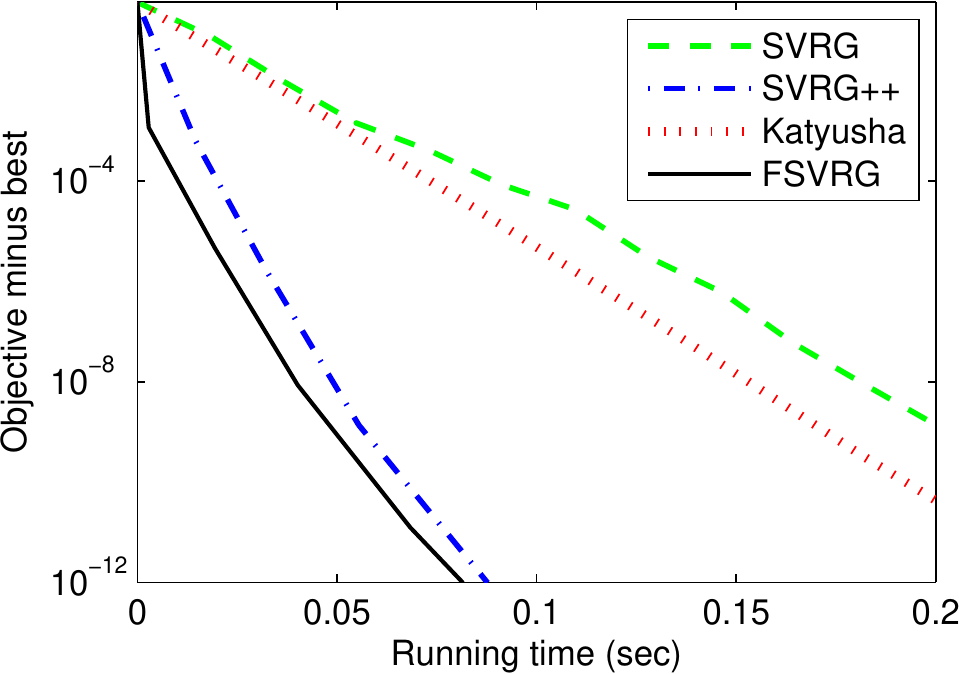}}
\subfigure[Protein: $\lambda\!=\!10^{-3}$]{\includegraphics[width=0.243\columnwidth]{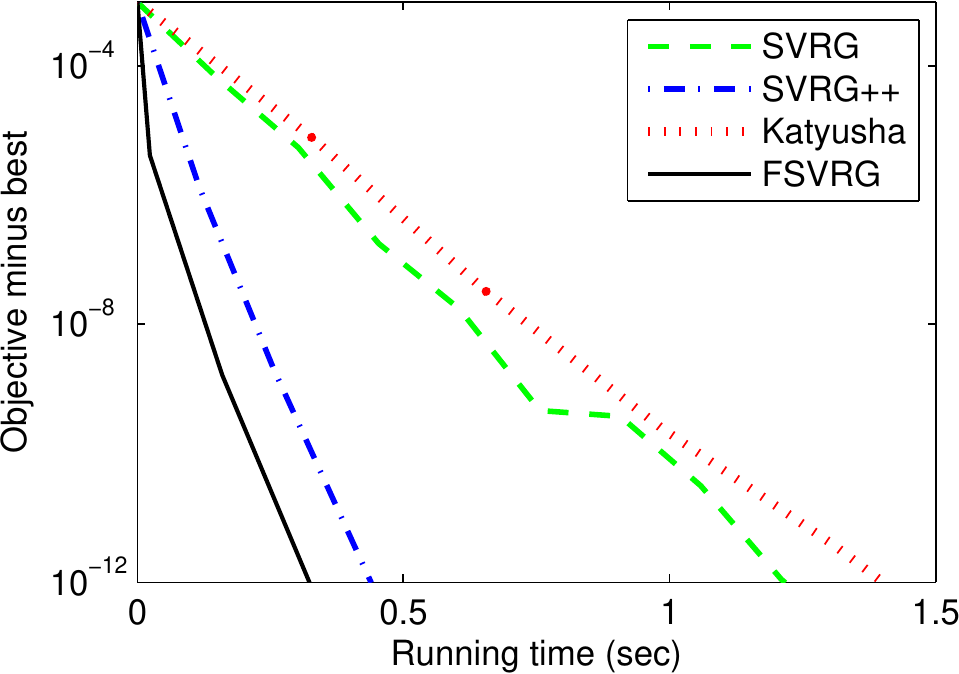}}
\subfigure[Covtype: $\lambda\!=\!10^{-4}$]{\includegraphics[width=0.243\columnwidth]{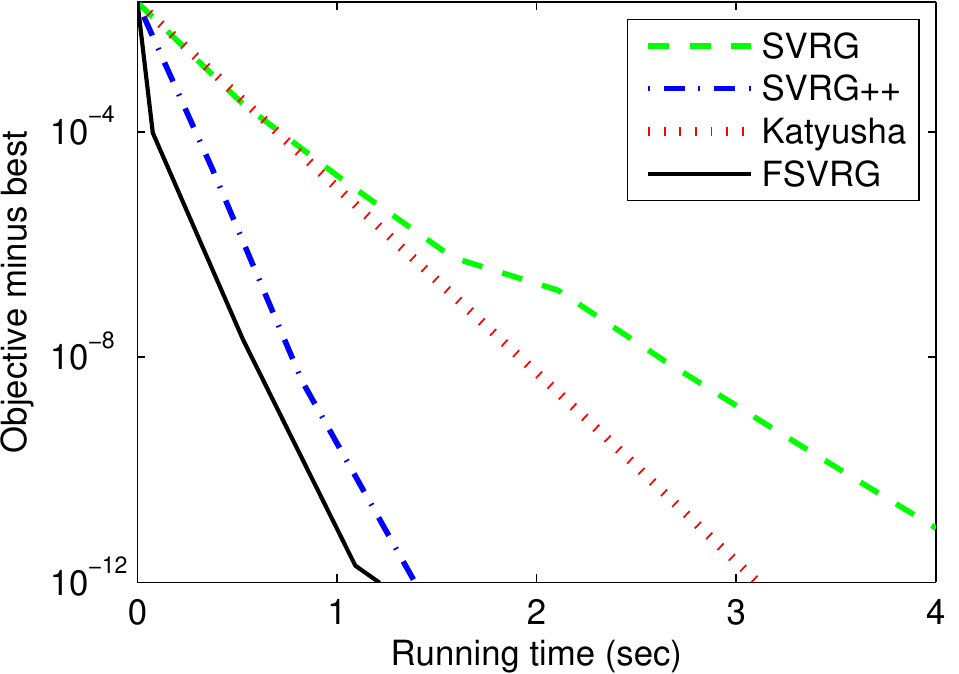}}
\subfigure[SUSY: $\lambda\!=\!10^{-4}$]{\includegraphics[width=0.243\columnwidth]{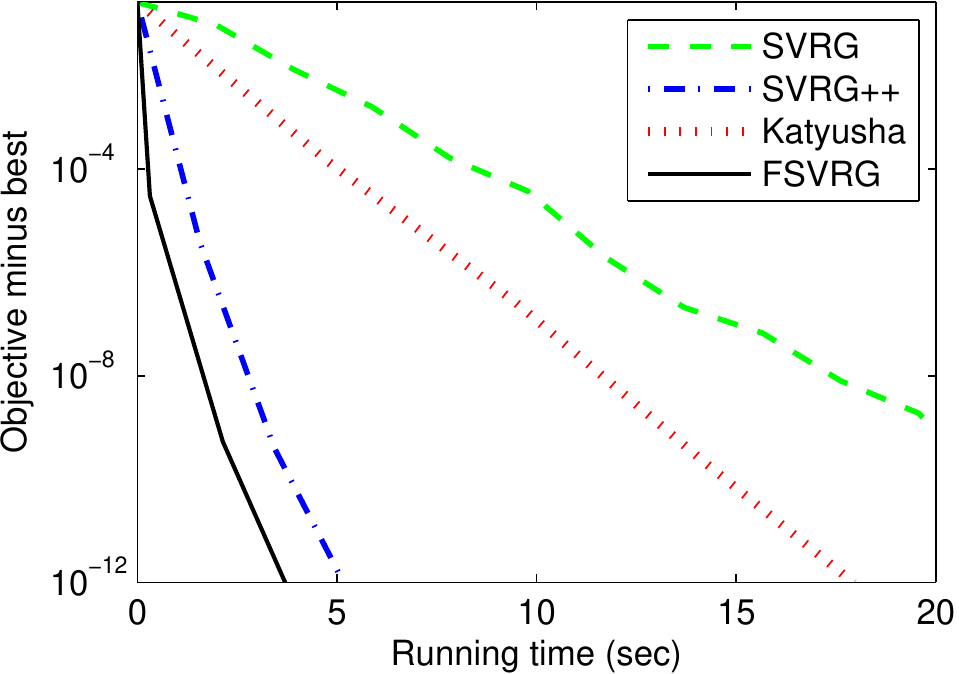}}

\subfigure[IJCNN: $\lambda\!=\!10^{-4}$]{\includegraphics[width=0.243\columnwidth]{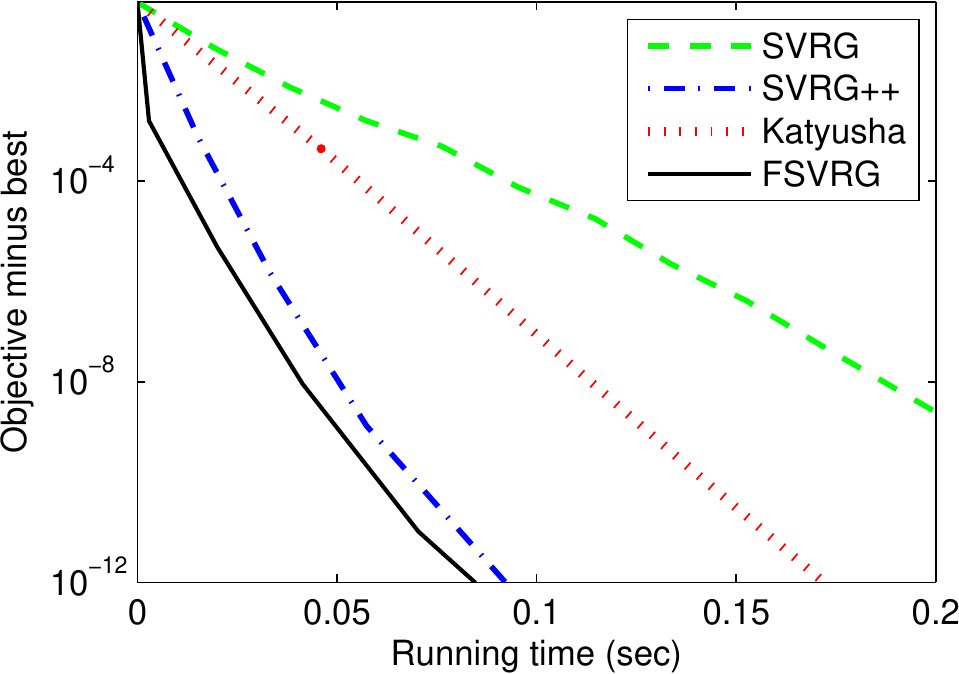}}
\subfigure[Protein: $\lambda\!=\!10^{-4}$]{\includegraphics[width=0.243\columnwidth]{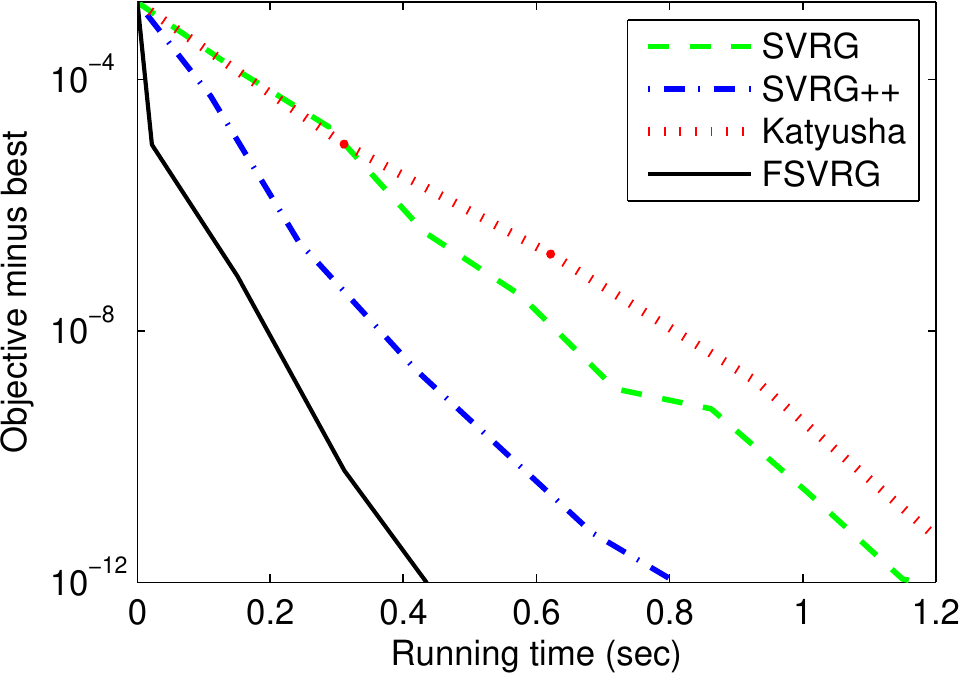}}
\subfigure[Covtype: $\lambda\!=\!10^{-5}$]{\includegraphics[width=0.243\columnwidth]{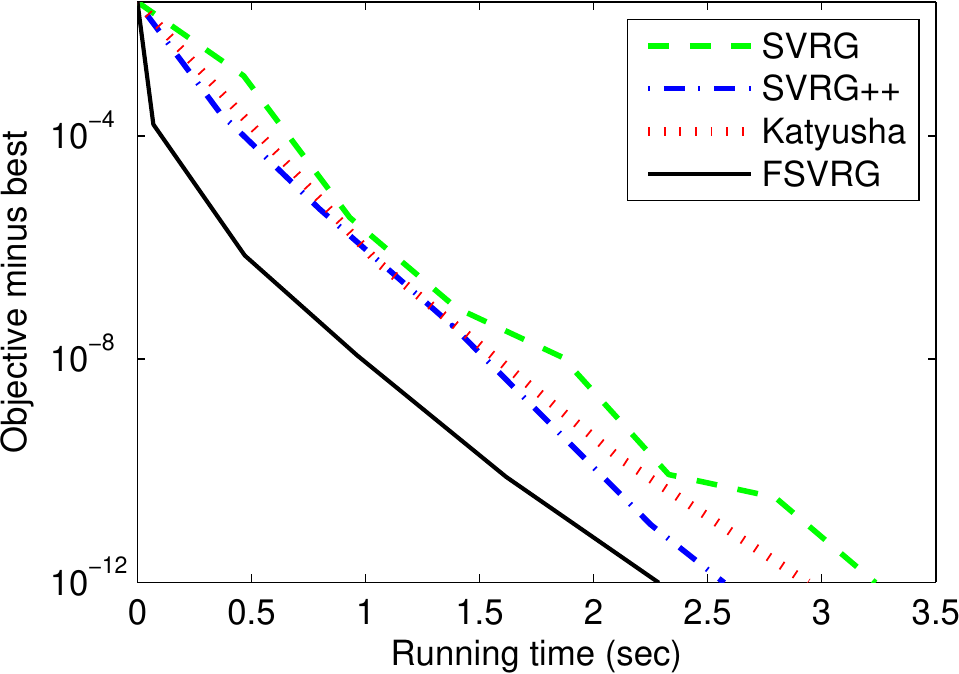}}
\subfigure[SUSY: $\lambda\!=\!10^{-5}$]{\includegraphics[width=0.243\columnwidth]{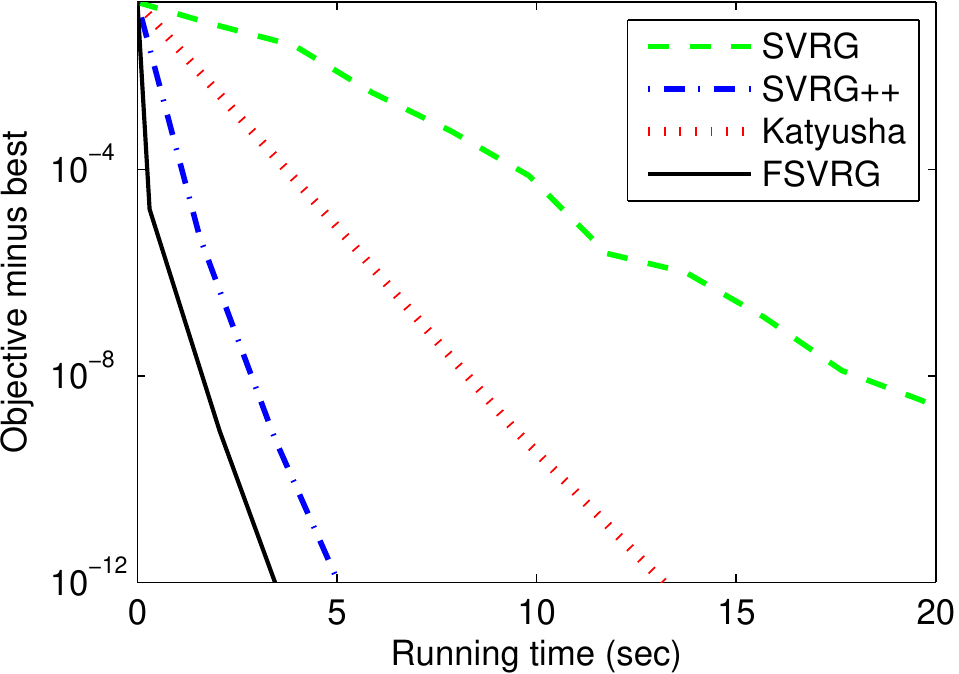}}

\subfigure[IJCNN: $\lambda\!=\!10^{-5}$]{\includegraphics[width=0.243\columnwidth]{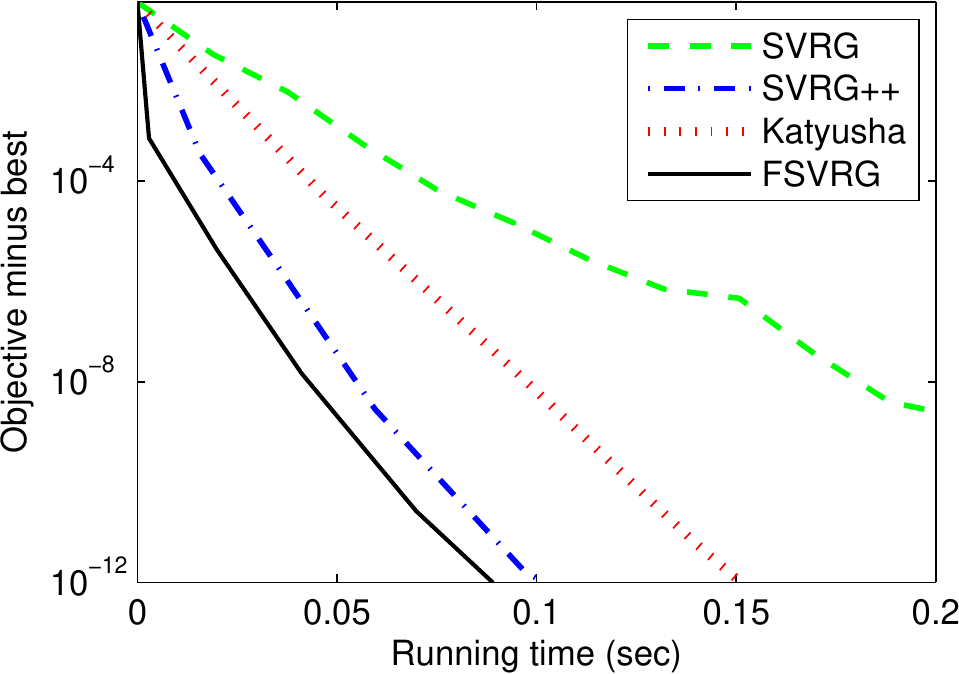}}
\subfigure[Protein: $\lambda\!=\!10^{-5}$]{\includegraphics[width=0.243\columnwidth]{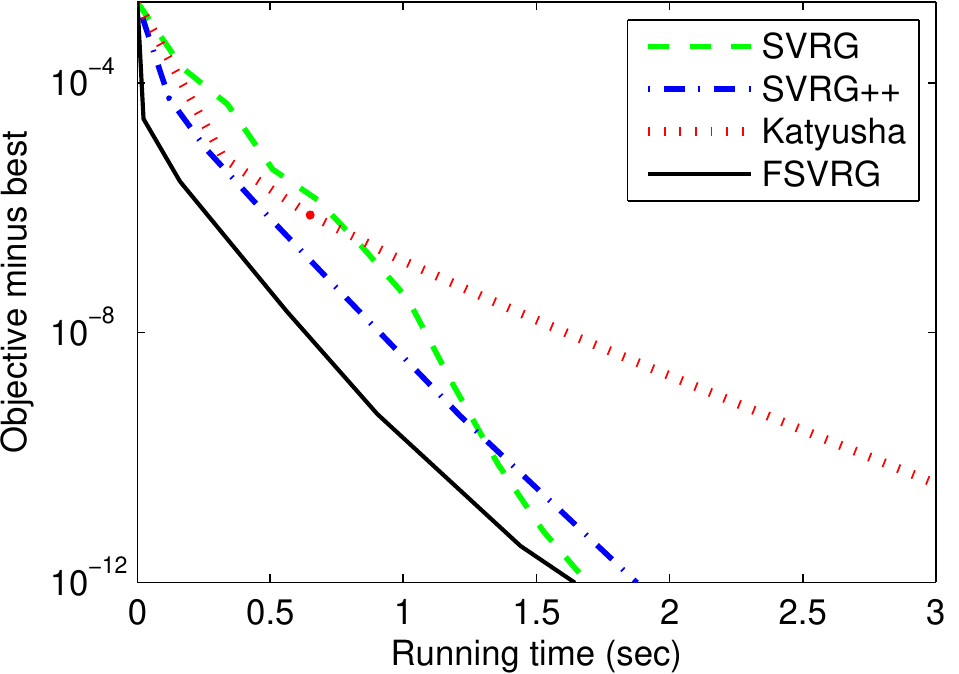}}
\subfigure[Covtype: $\lambda\!=\!10^{-6}$]{\includegraphics[width=0.243\columnwidth]{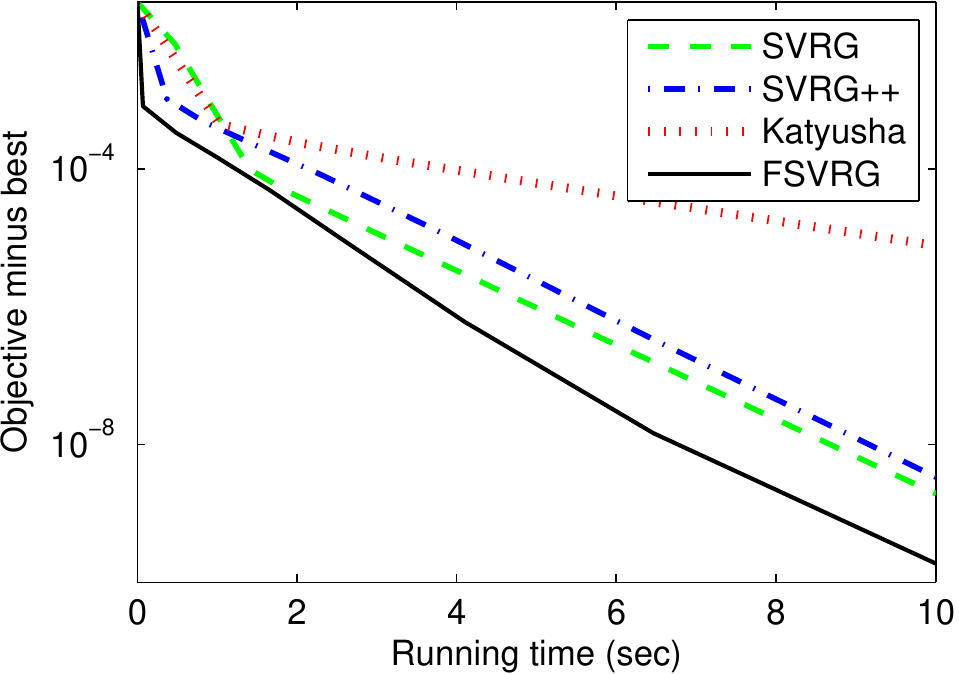}}
\subfigure[SUSY: $\lambda\!=\!10^{-6}$]{\includegraphics[width=0.243\columnwidth]{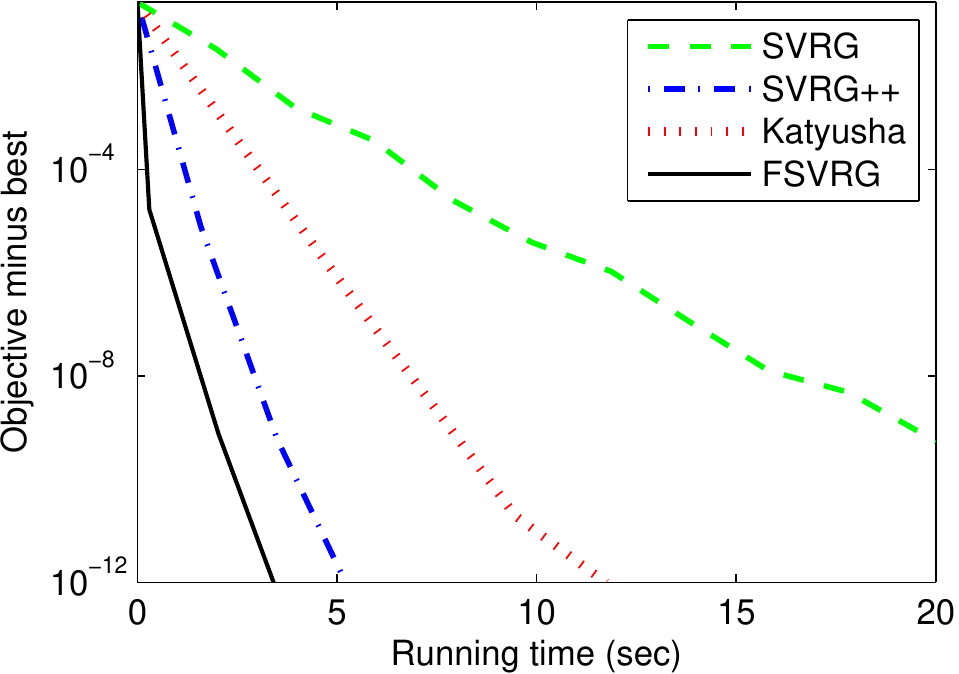}}

\subfigure[IJCNN: $\lambda\!=\!10^{-6}$]{\includegraphics[width=0.243\columnwidth]{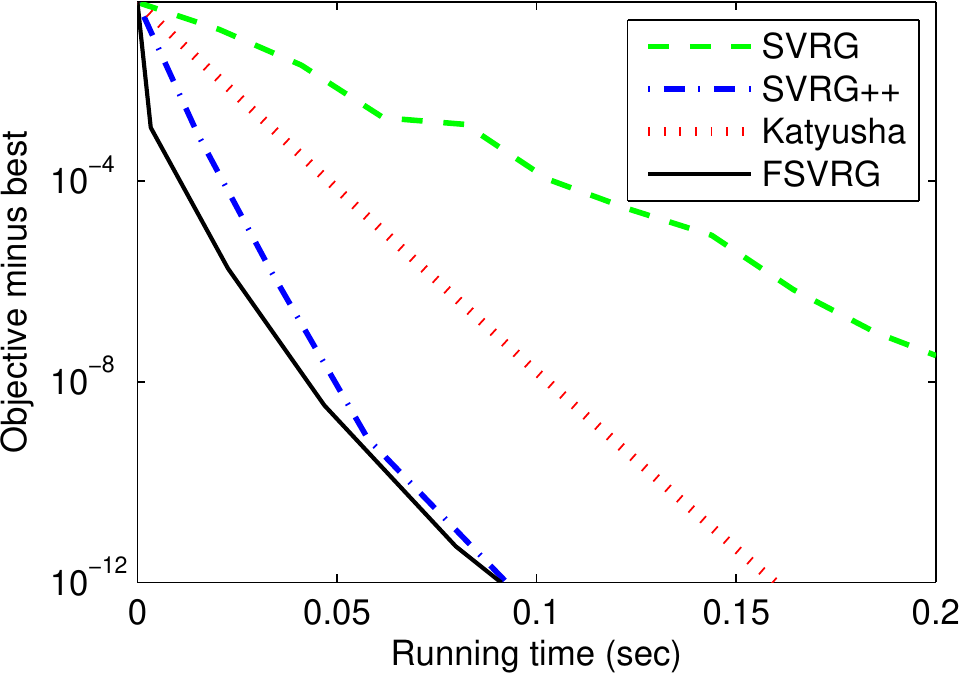}}
\subfigure[Protein: $\lambda\!=\!10^{-6}$]{\includegraphics[width=0.243\columnwidth]{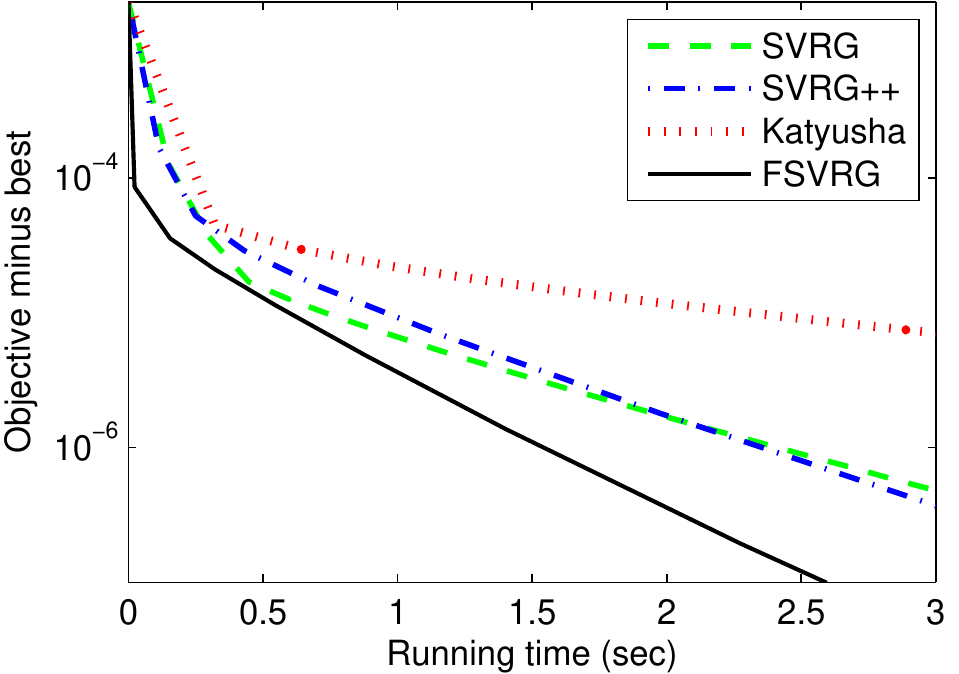}}
\subfigure[Covtype: $\lambda\!=\!10^{-7}$]{\includegraphics[width=0.243\columnwidth]{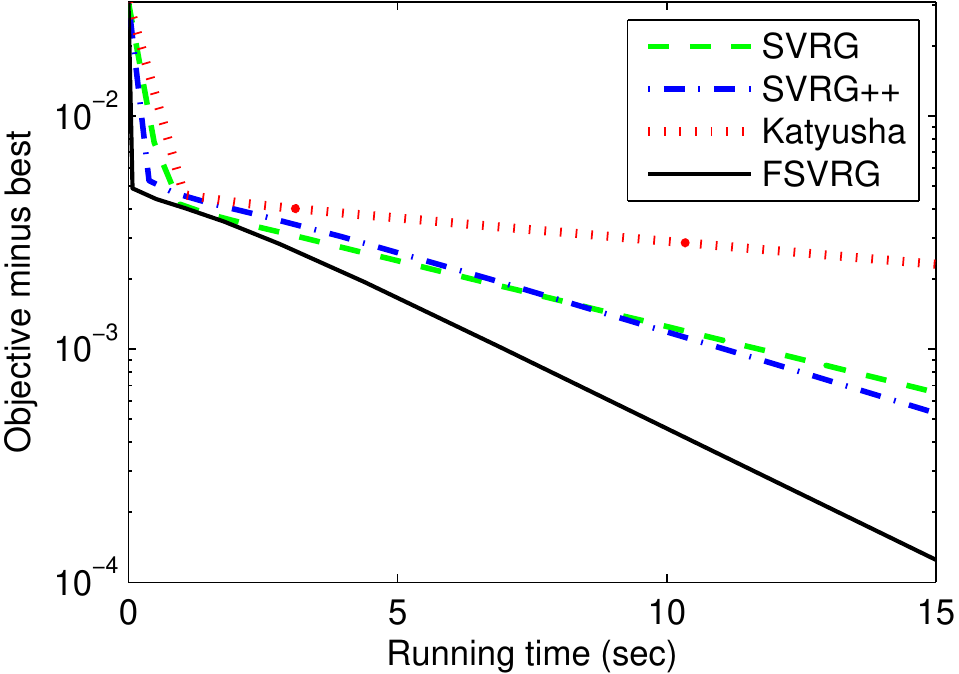}}
\subfigure[SUSY: $\lambda\!=\!10^{-7}$]{\includegraphics[width=0.243\columnwidth]{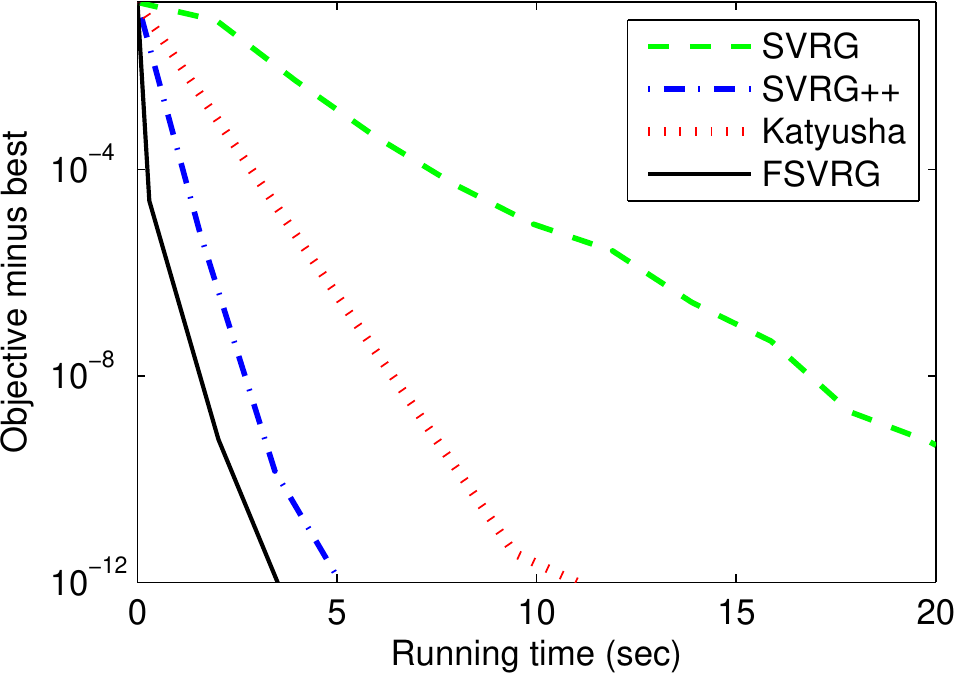}}
\caption{Comparison of SVRG~\cite{johnson:svrg}, SVRG++~\cite{zhu:vrnc}, Katyusha~\cite{zhu:Katyusha}, and FSVRG for solving ridge regression problems with different regularization parameters. The $y$-axis represents the objective value minus the minimum, and the $x$-axis corresponds to the running time (seconds).}
\label{figs12}
\end{figure}

\begin{figure}[!th]
\centering
\subfigure[IJCNN: $\lambda\!=\!10^{-4}$]{\includegraphics[width=0.243\columnwidth]{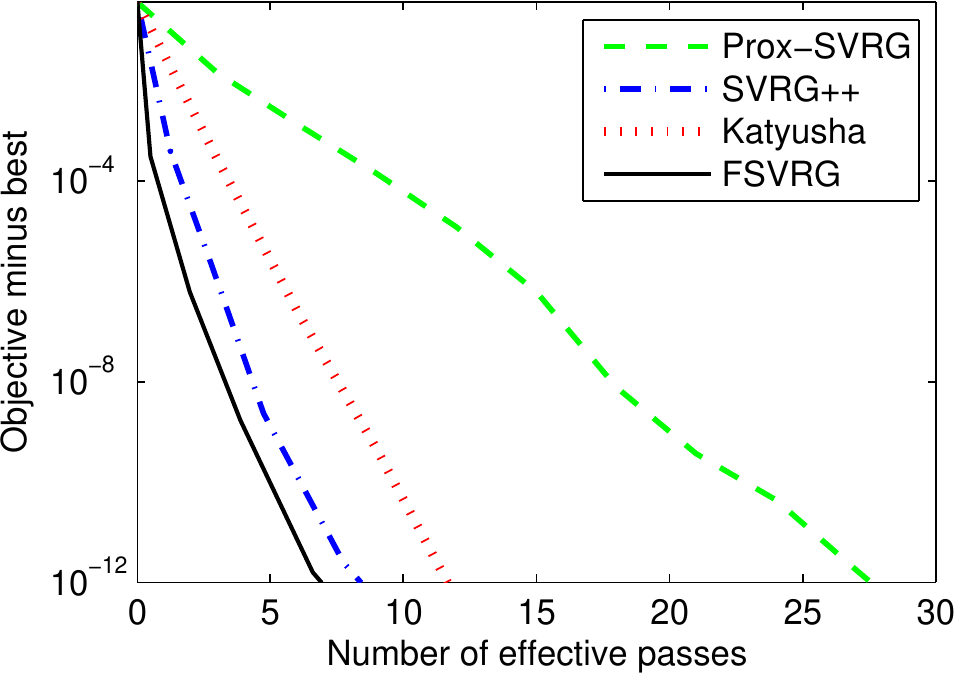}}
\subfigure[Protein: $\lambda\!=\!10^{-3}$]{\includegraphics[width=0.243\columnwidth]{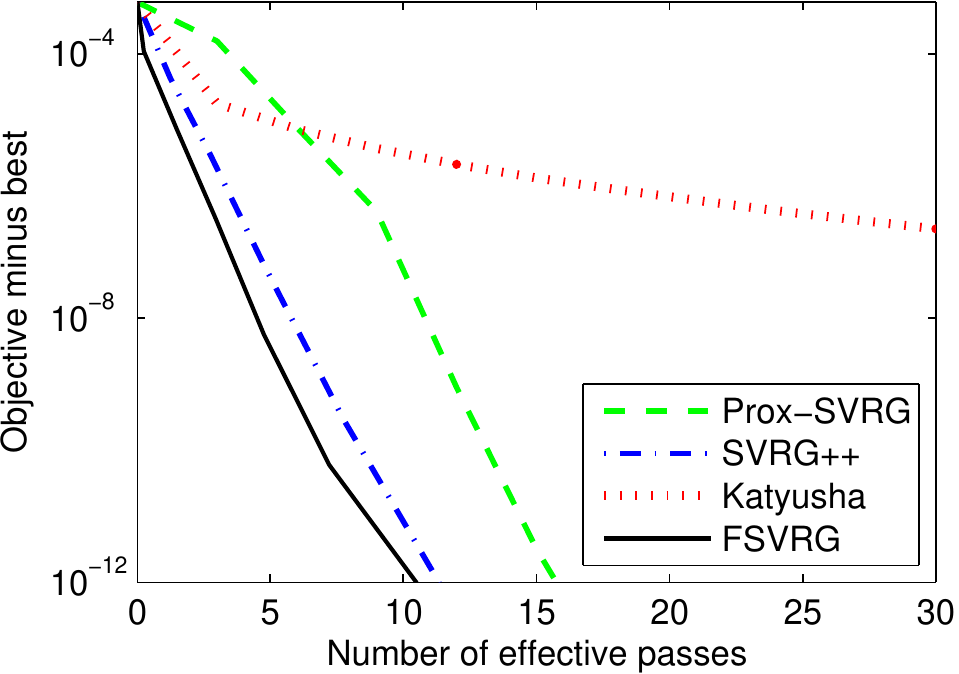}}
\subfigure[Covtype: $\lambda\!=\!10^{-3}$]{\includegraphics[width=0.243\columnwidth]{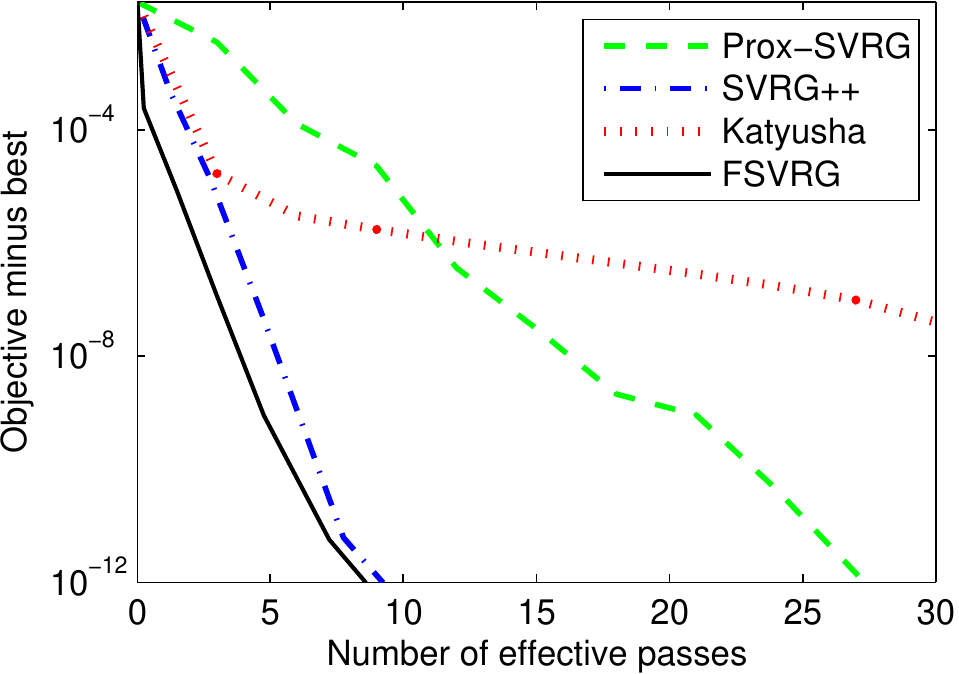}}
\subfigure[SUSY: $\lambda\!=\!10^{-4}$]{\includegraphics[width=0.243\columnwidth]{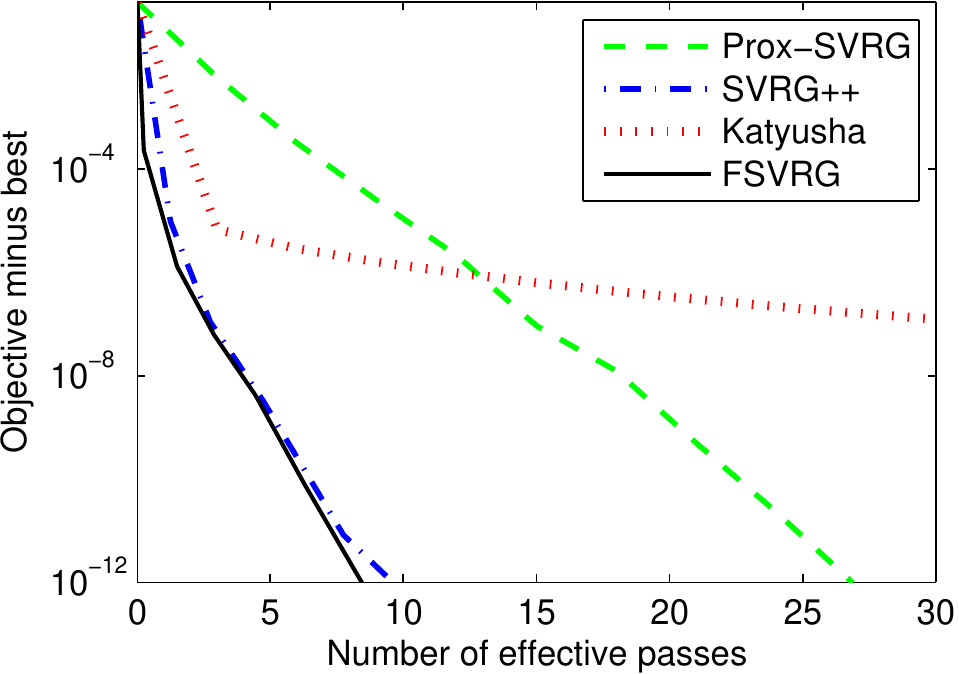}}

\subfigure[IJCNN: $\lambda\!=\!10^{-5}$]{\includegraphics[width=0.243\columnwidth]{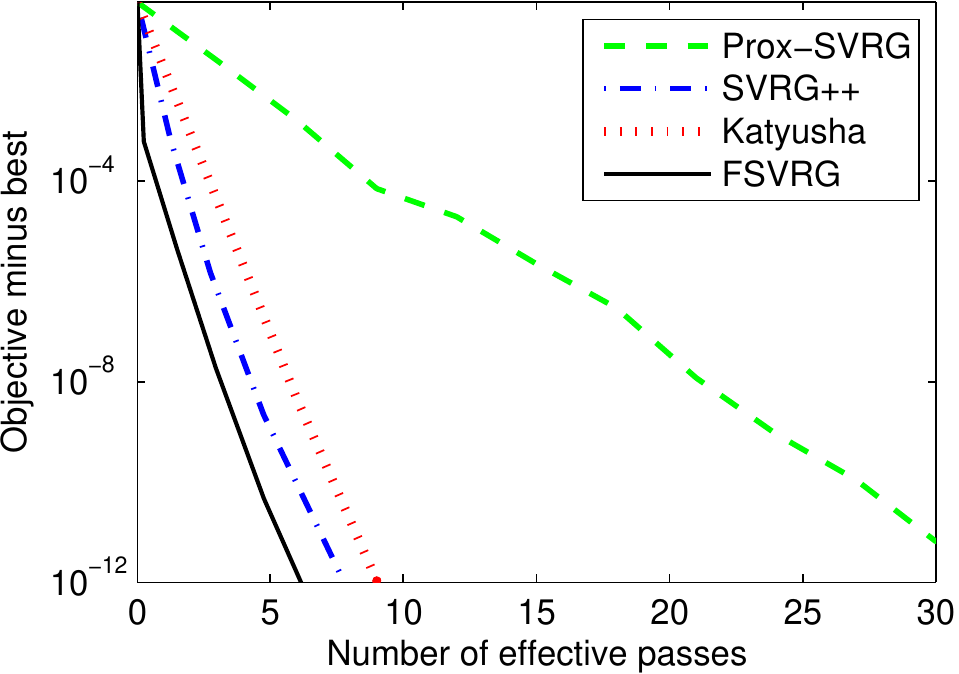}}
\subfigure[Protein: $\lambda\!=\!10^{-4}$]{\includegraphics[width=0.243\columnwidth]{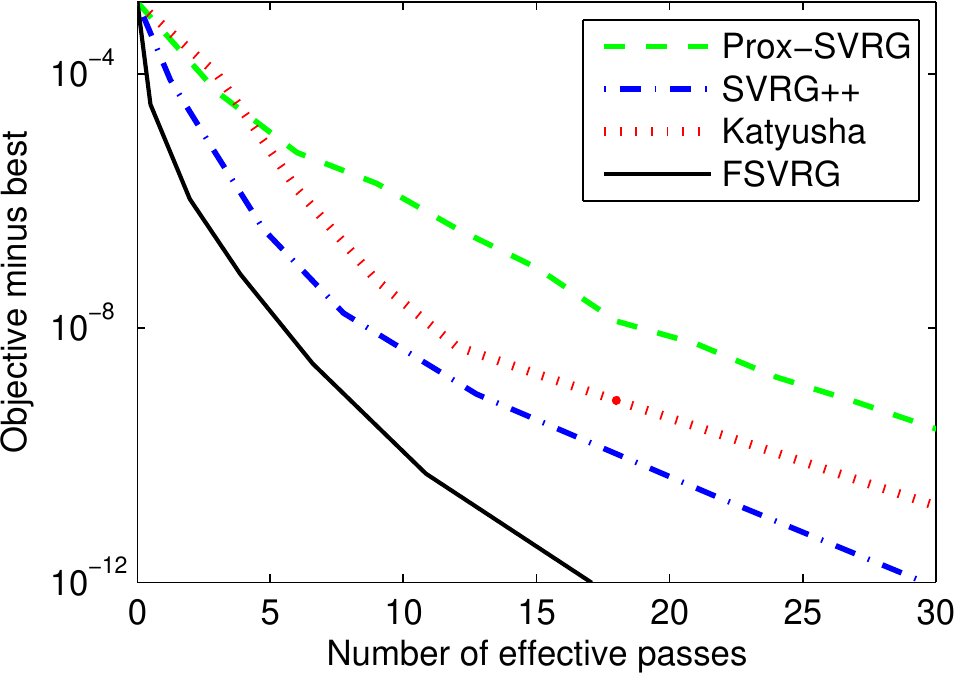}}
\subfigure[Covtype: $\lambda\!=\!10^{-4}$]{\includegraphics[width=0.243\columnwidth]{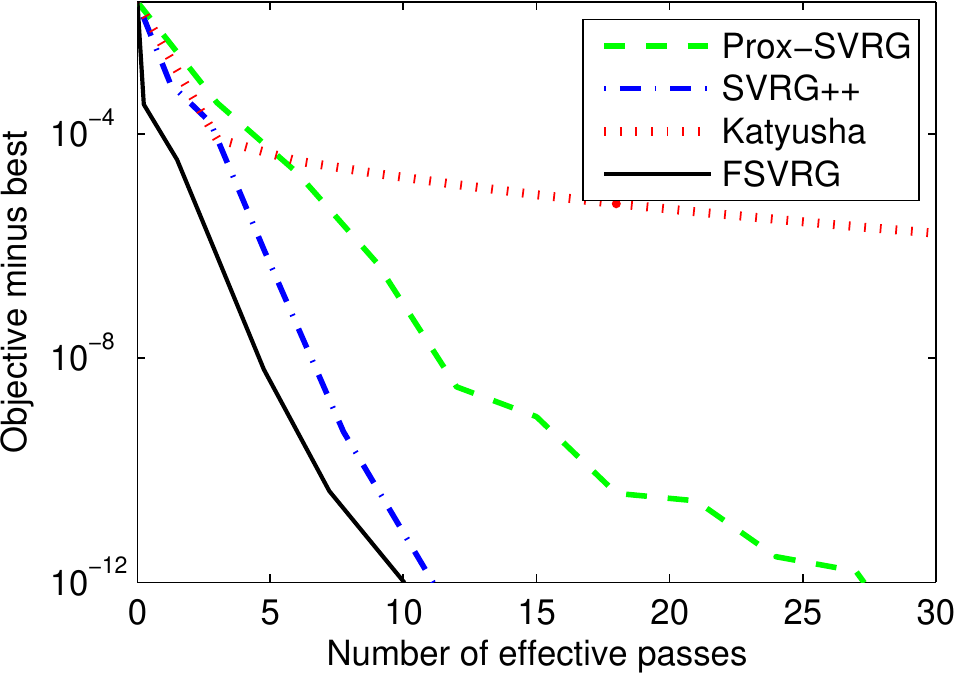}}
\subfigure[SUSY: $\lambda\!=\!10^{-5}$]{\includegraphics[width=0.243\columnwidth]{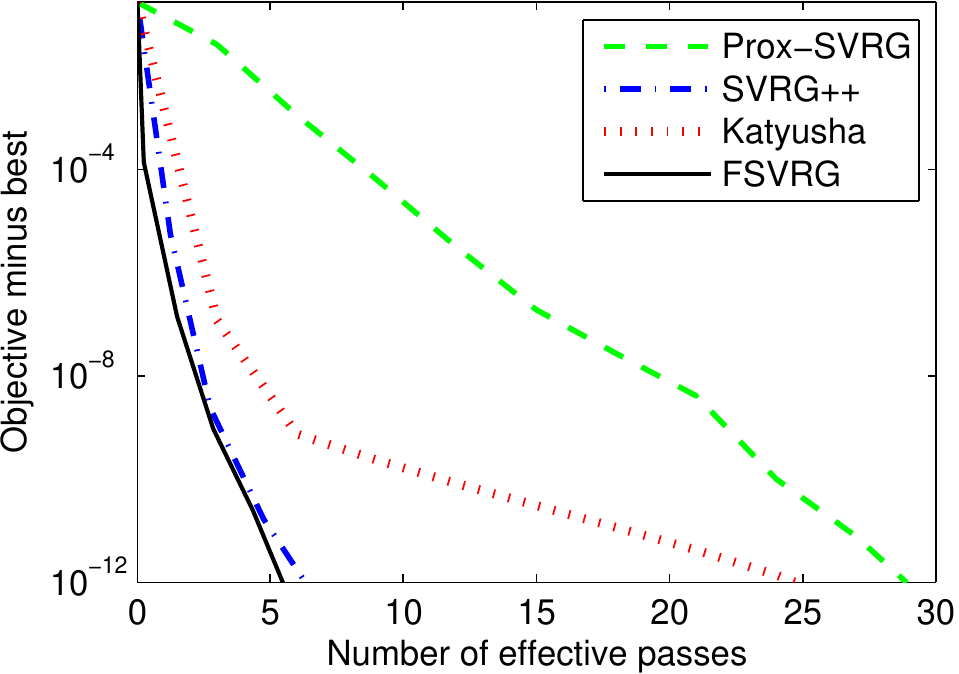}}

\subfigure[IJCNN: $\lambda\!=\!10^{-6}$]{\includegraphics[width=0.243\columnwidth]{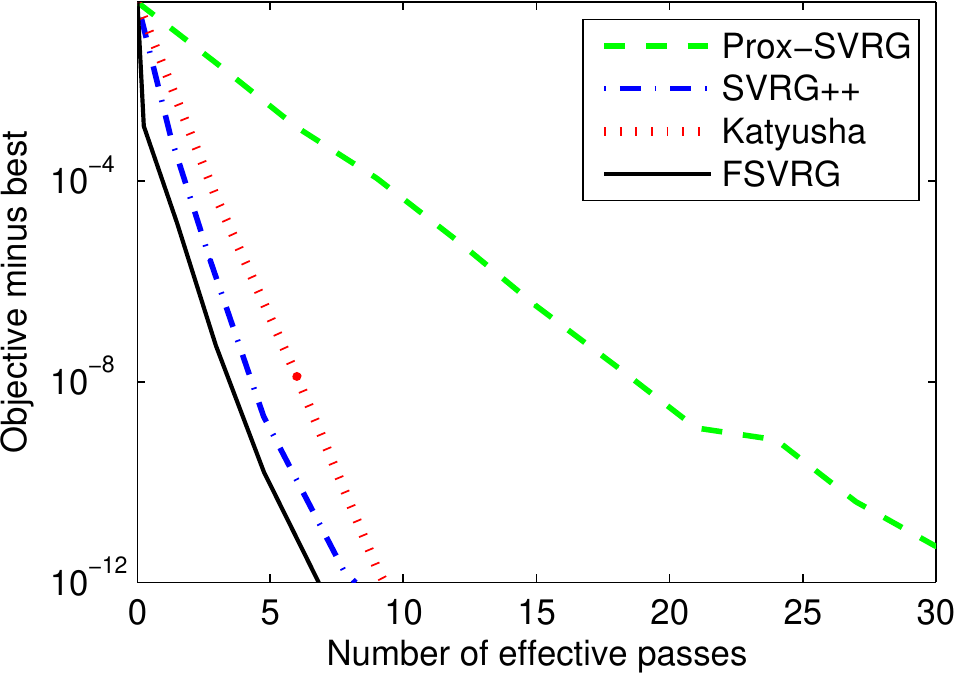}}
\subfigure[Protein: $\lambda\!=\!10^{-5}$]{\includegraphics[width=0.243\columnwidth]{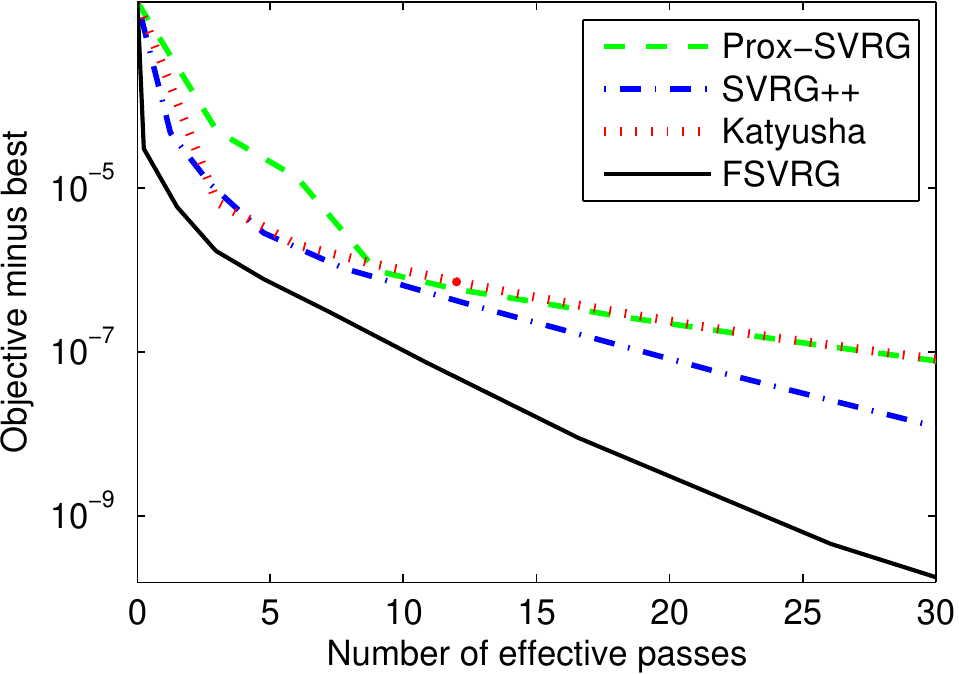}}
\subfigure[Covtype: $\lambda\!=\!10^{-5}$]{\includegraphics[width=0.243\columnwidth]{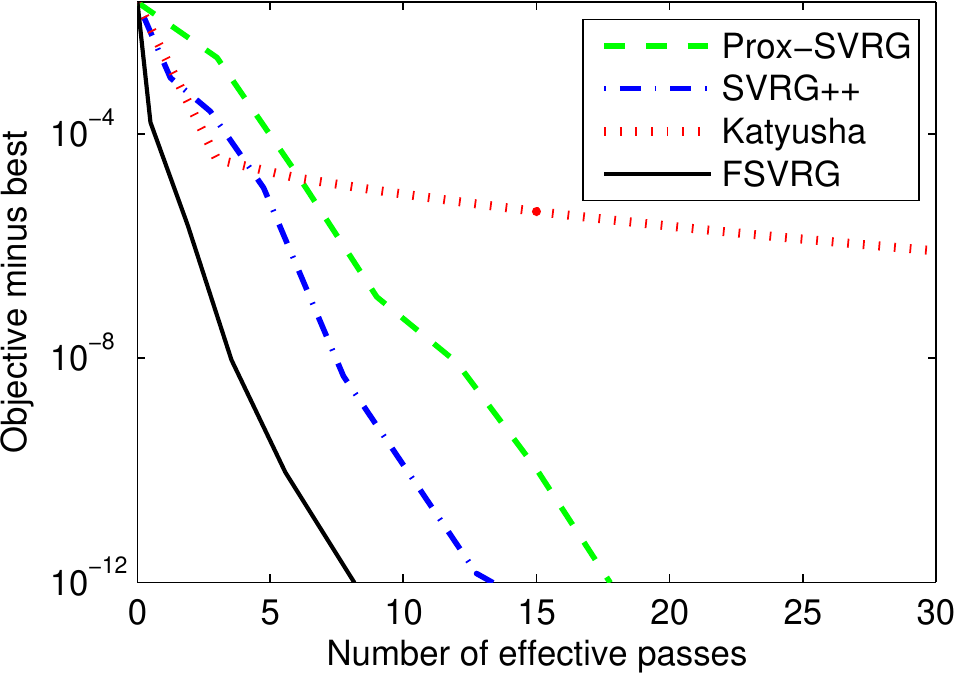}}
\subfigure[SUSY: $\lambda\!=\!10^{-6}$]{\includegraphics[width=0.243\columnwidth]{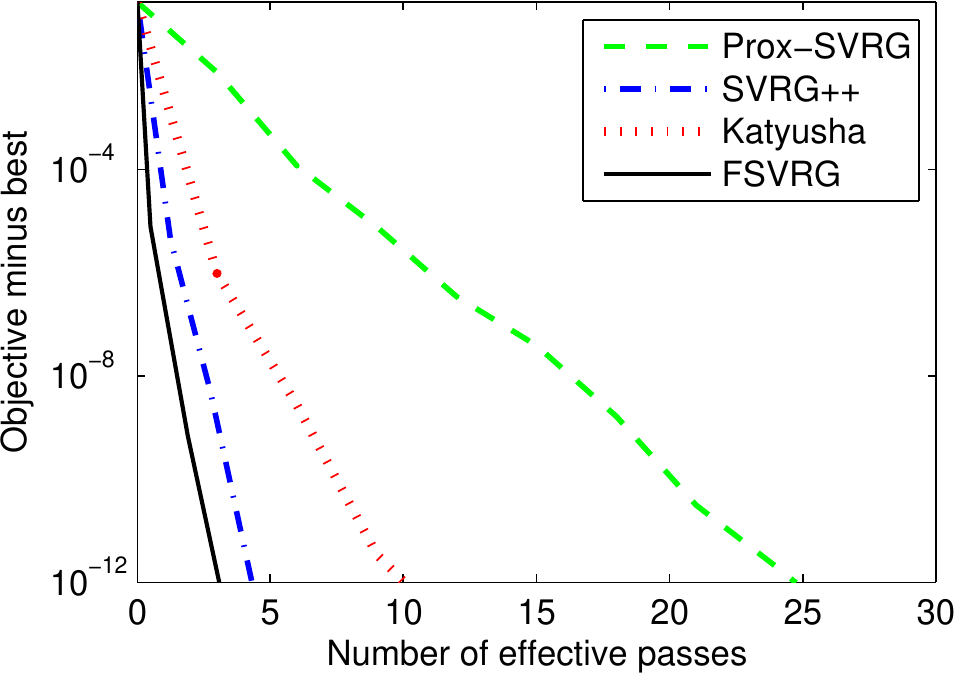}}

\subfigure[IJCNN: $\lambda\!=\!10^{-7}$]{\includegraphics[width=0.243\columnwidth]{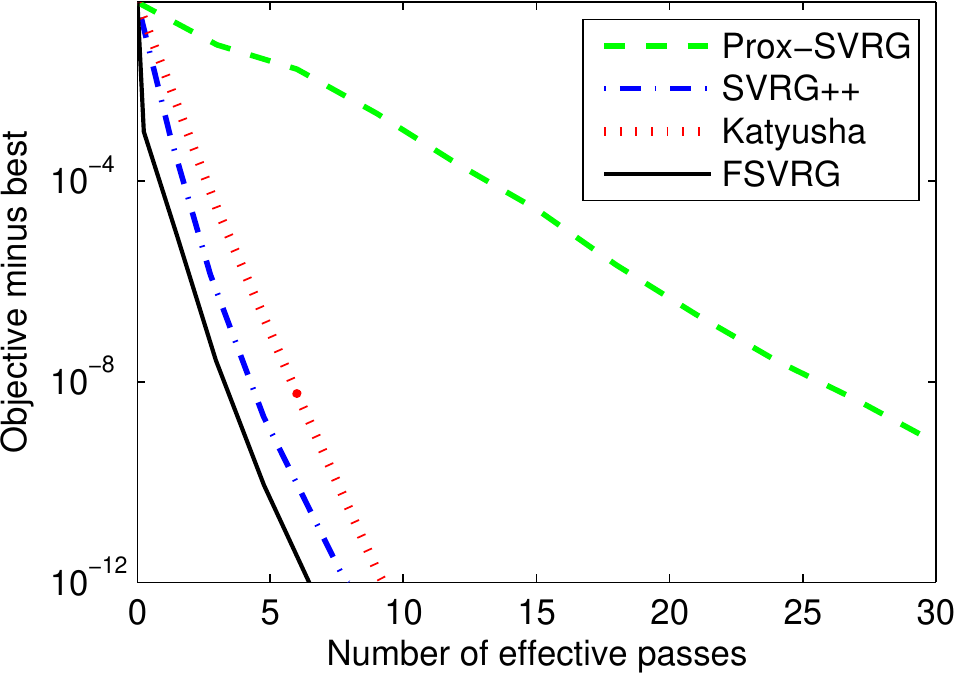}}
\subfigure[Protein: $\lambda\!=\!10^{-6}$]{\includegraphics[width=0.243\columnwidth]{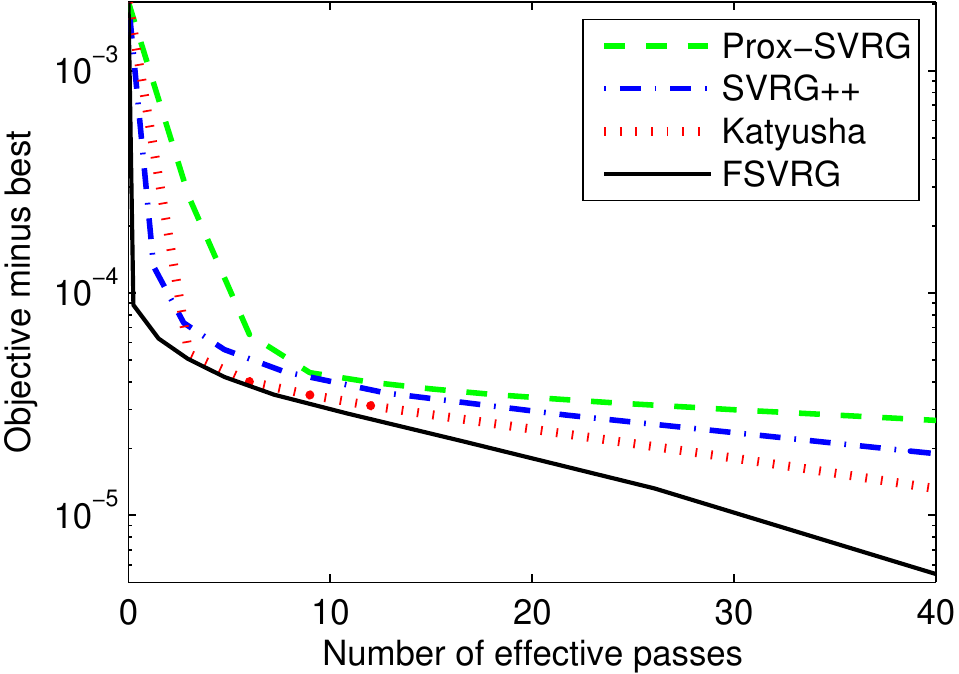}}
\subfigure[Covtype: $\lambda\!=\!10^{-6}$]{\includegraphics[width=0.243\columnwidth]{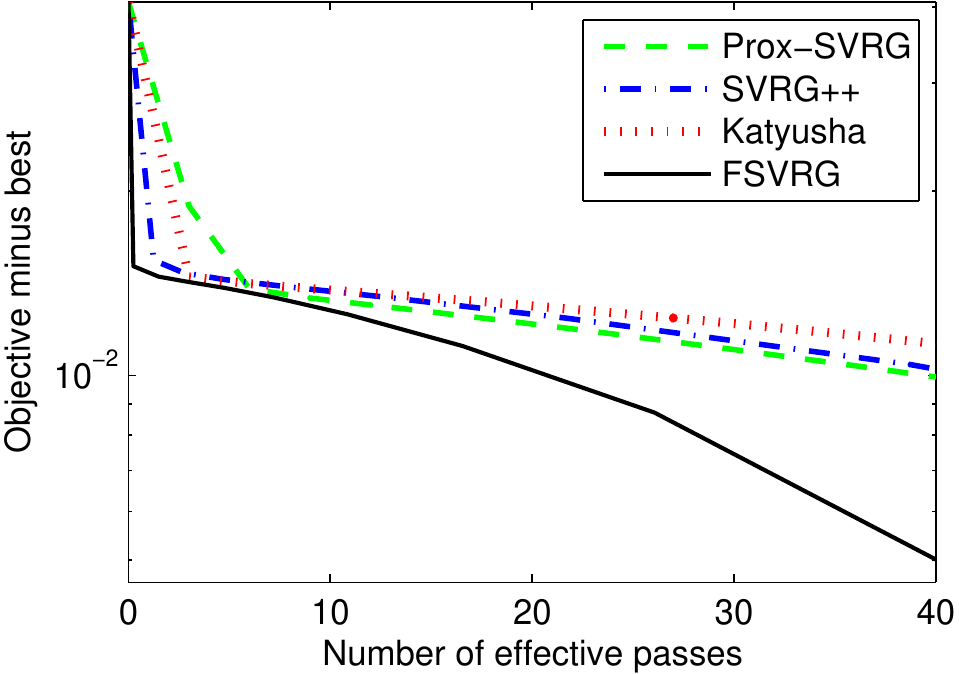}}
\subfigure[SUSY: $\lambda\!=\!10^{-7}$]{\includegraphics[width=0.243\columnwidth]{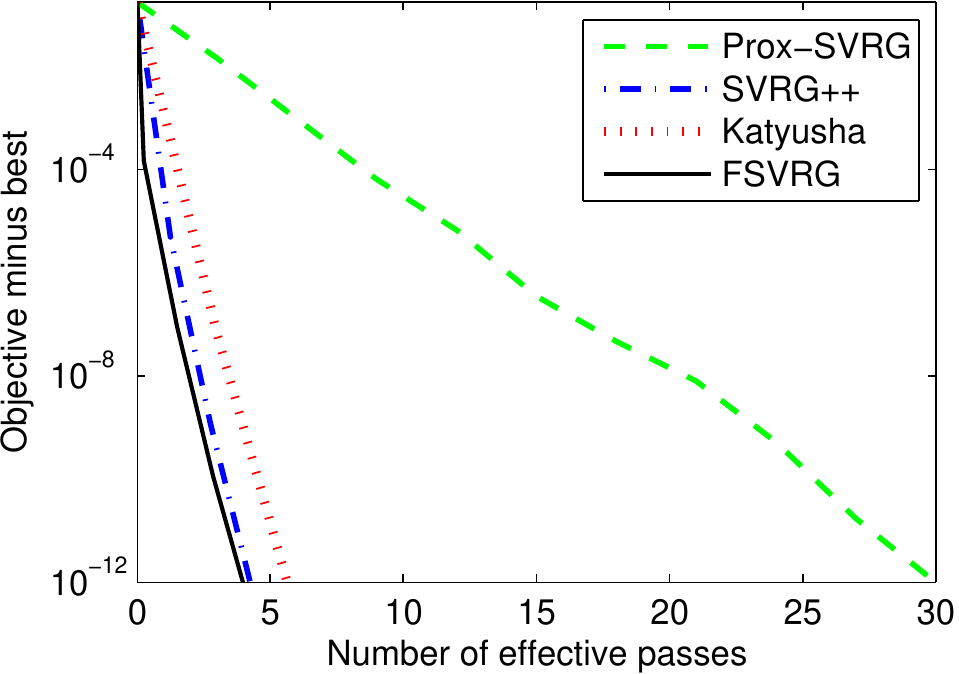}}
\caption{Comparison of Prox-SVRG~\cite{xiao:prox-svrg}, SVRG++~\cite{zhu:vrnc}, Katyusha~\cite{zhu:Katyusha}, and FSVRG for solving Lasso problems on the four data sets: IJCNN, Protein, Covtype, and SUSY. Note that the $y$-axis represents the objective value minus the minimum, and the $x$-axis corresponds to the number of effective passes.}
\label{figs13}
\end{figure}

\begin{figure}[!th]
\centering
\subfigure[IJCNN: $\lambda\!=\!10^{-4}$]{\includegraphics[width=0.243\columnwidth]{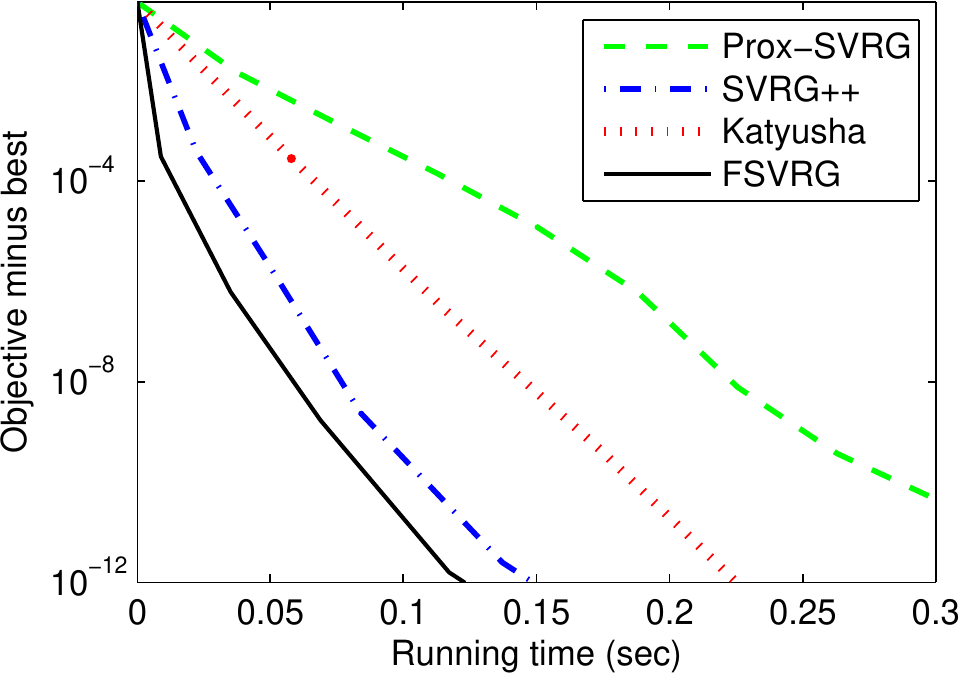}}
\subfigure[Protein: $\lambda\!=\!10^{-3}$]{\includegraphics[width=0.243\columnwidth]{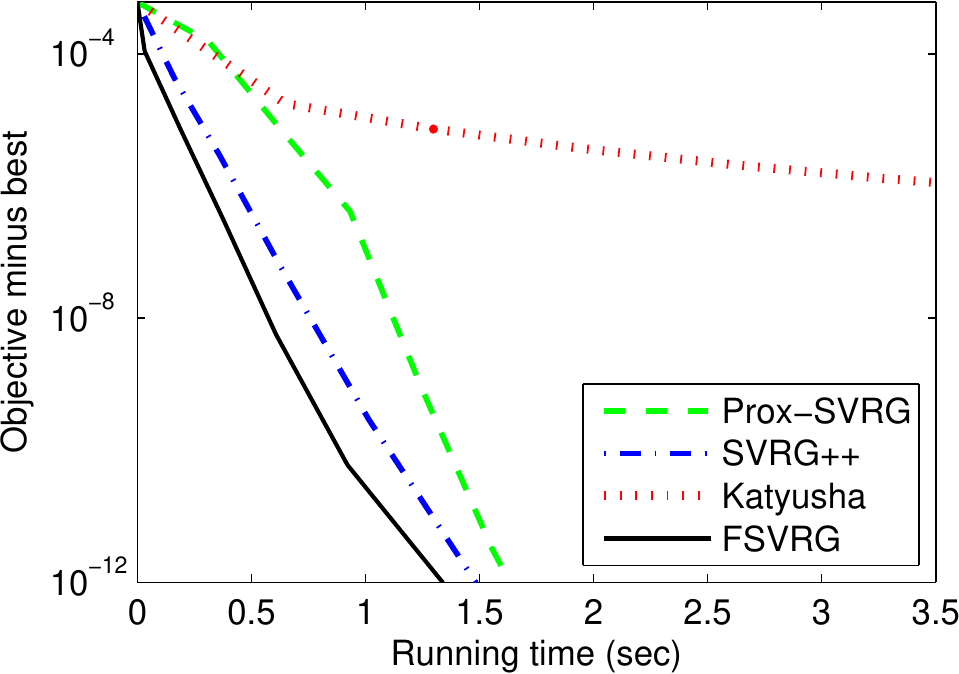}}
\subfigure[Covtype: $\lambda\!=\!10^{-3}$]{\includegraphics[width=0.243\columnwidth]{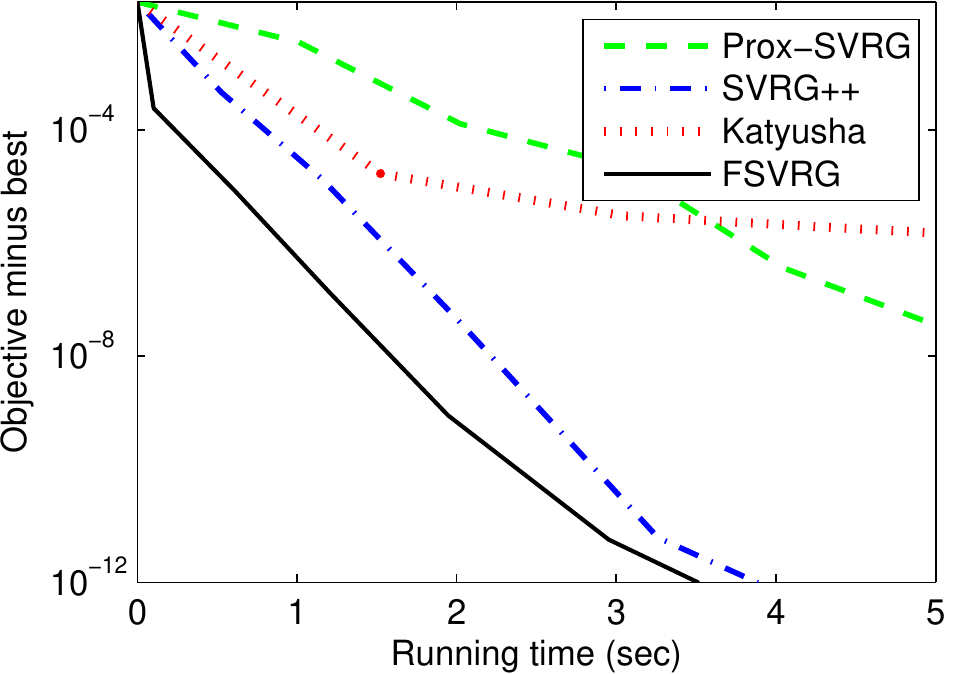}}
\subfigure[SUSY: $\lambda\!=\!10^{-4}$]{\includegraphics[width=0.243\columnwidth]{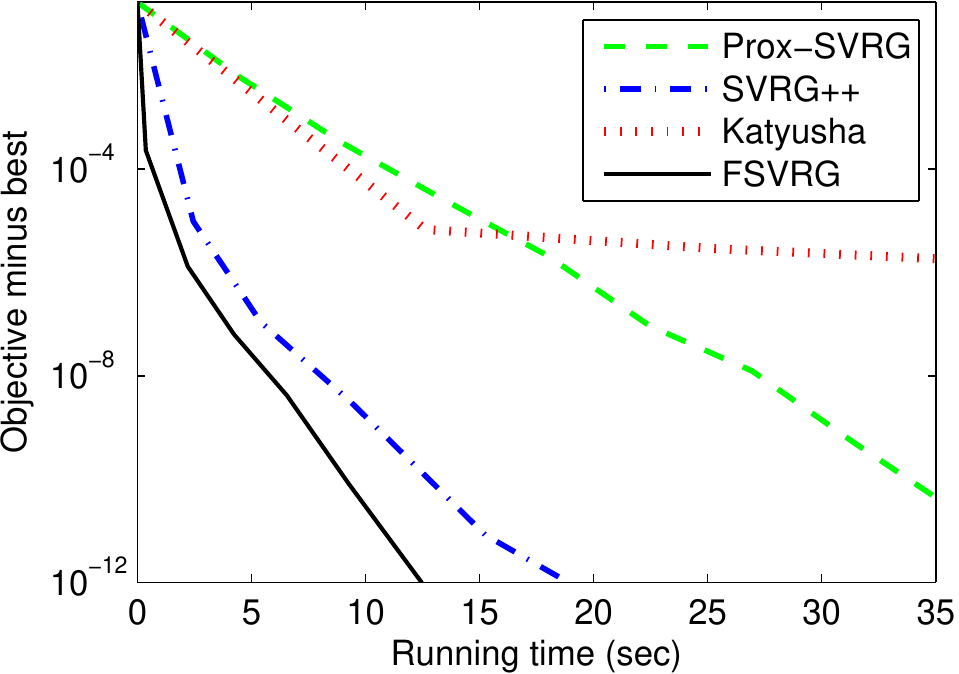}}

\subfigure[IJCNN: $\lambda\!=\!10^{-5}$]{\includegraphics[width=0.243\columnwidth]{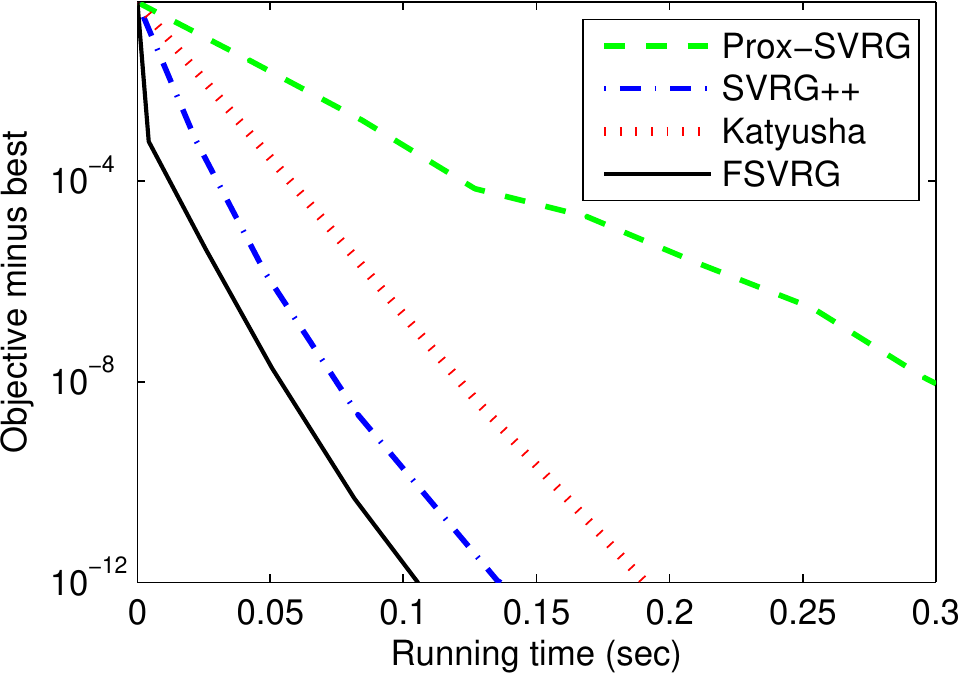}}
\subfigure[Protein: $\lambda\!=\!10^{-4}$]{\includegraphics[width=0.243\columnwidth]{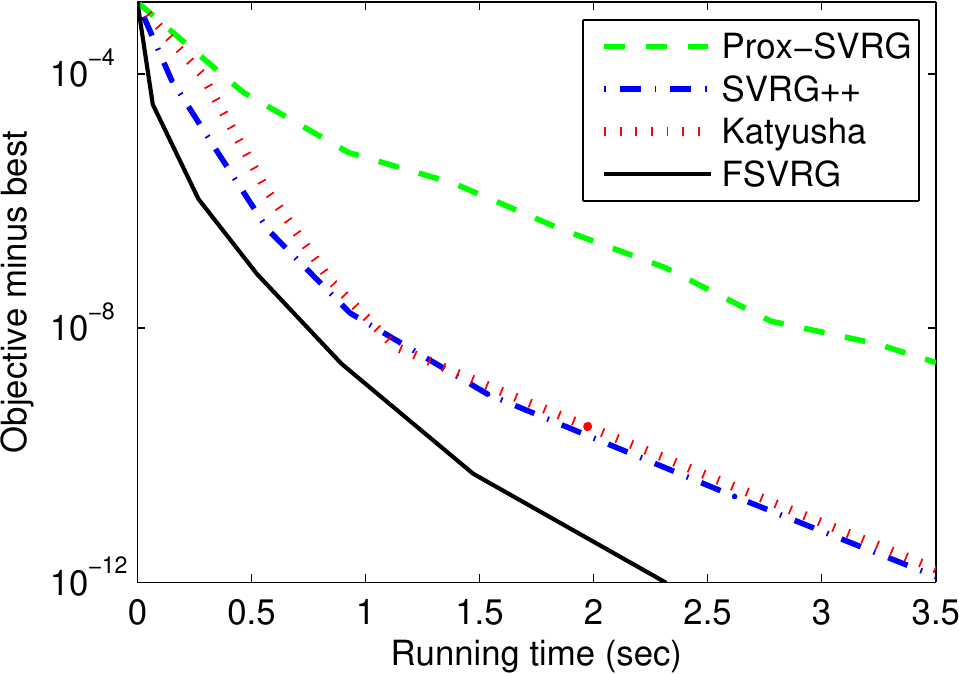}}
\subfigure[Covtype: $\lambda\!=\!10^{-4}$]{\includegraphics[width=0.243\columnwidth]{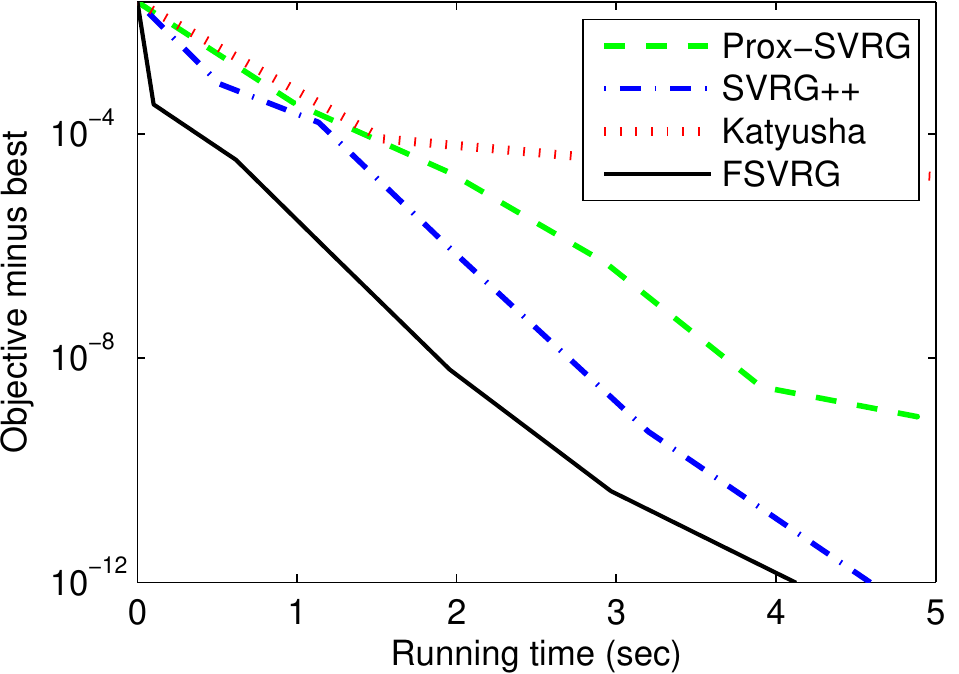}}
\subfigure[SUSY: $\lambda\!=\!10^{-5}$]{\includegraphics[width=0.243\columnwidth]{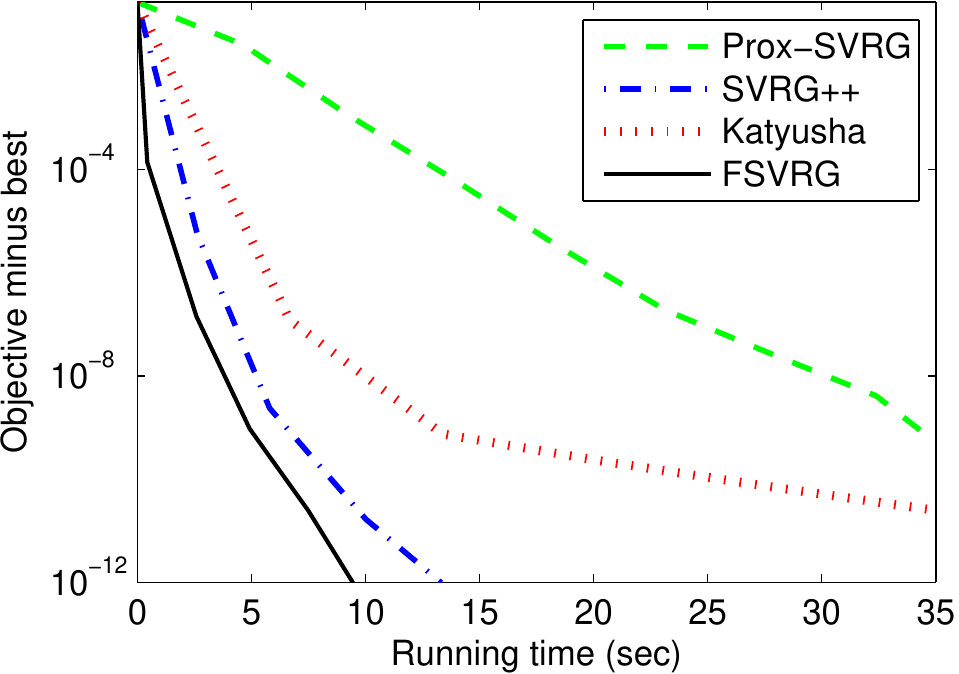}}

\subfigure[IJCNN: $\lambda\!=\!10^{-6}$]{\includegraphics[width=0.243\columnwidth]{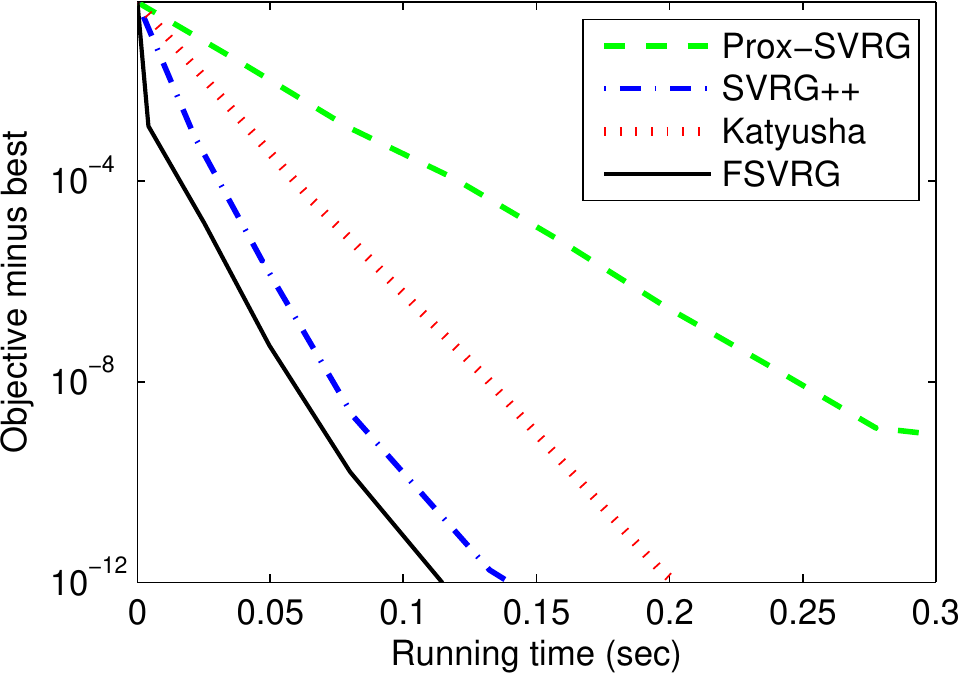}}
\subfigure[Protein: $\lambda\!=\!10^{-5}$]{\includegraphics[width=0.243\columnwidth]{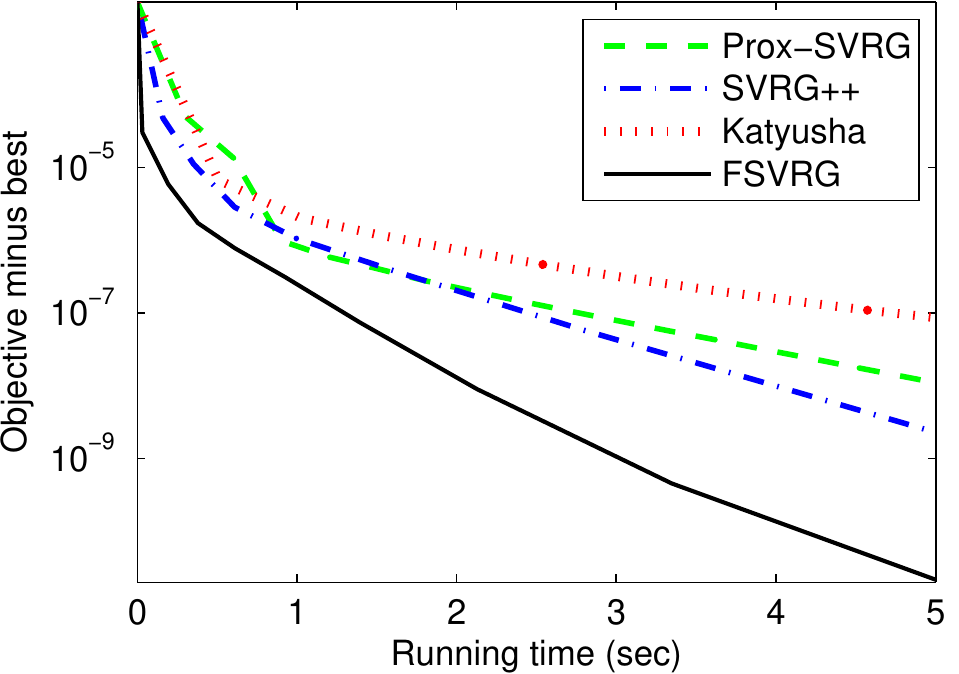}}
\subfigure[Covtype: $\lambda\!=\!10^{-5}$]{\includegraphics[width=0.243\columnwidth]{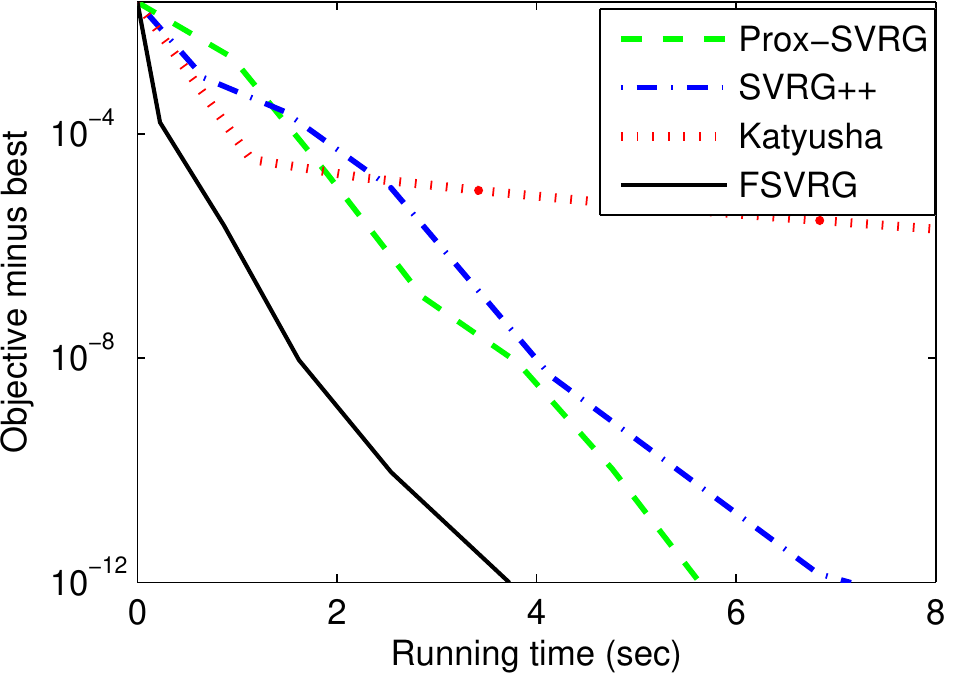}}
\subfigure[SUSY: $\lambda\!=\!10^{-6}$]{\includegraphics[width=0.243\columnwidth]{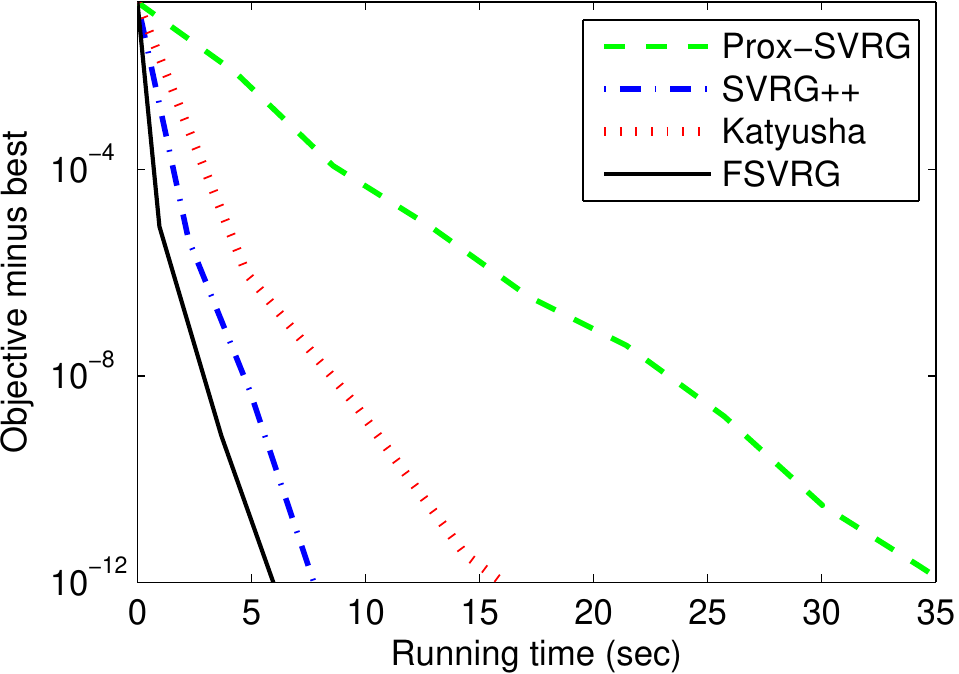}}

\subfigure[IJCNN: $\lambda\!=\!10^{-7}$]{\includegraphics[width=0.243\columnwidth]{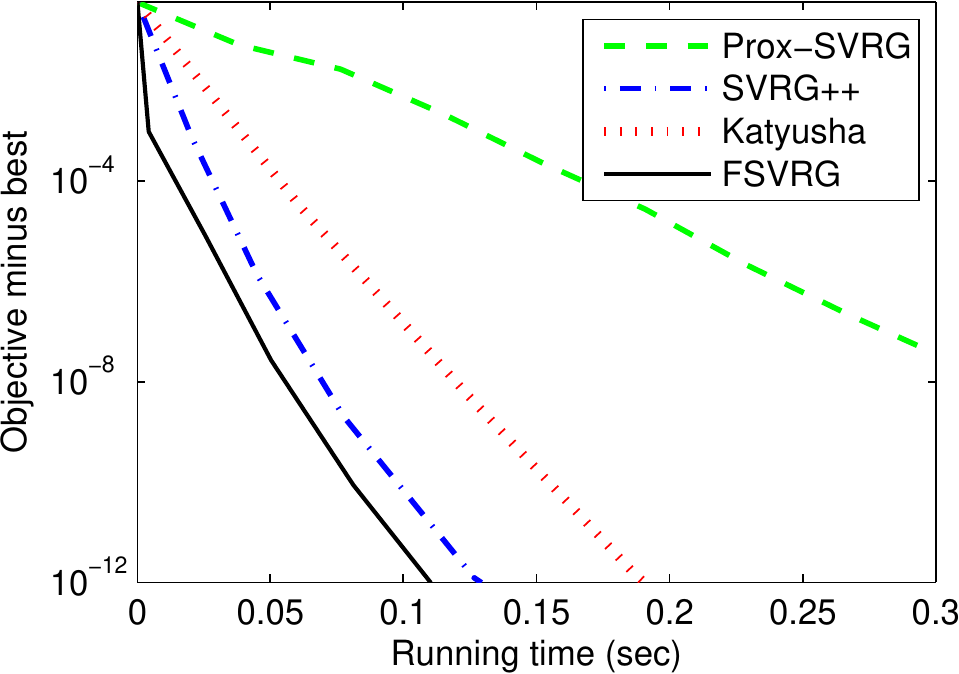}}
\subfigure[Protein: $\lambda\!=\!10^{-6}$]{\includegraphics[width=0.243\columnwidth]{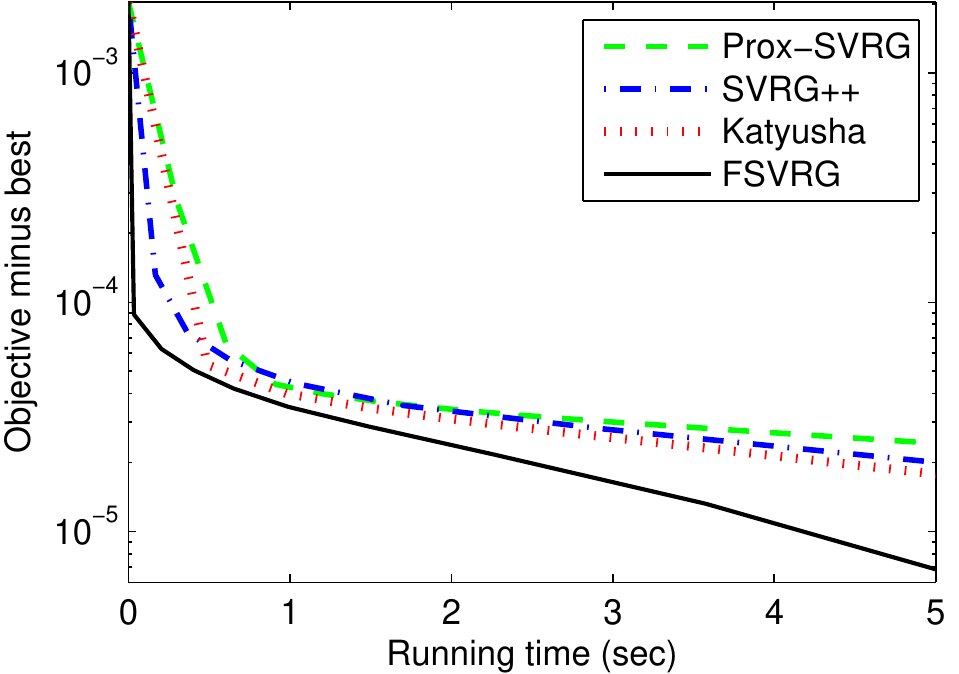}}
\subfigure[Covtype: $\lambda\!=\!10^{-6}$]{\includegraphics[width=0.243\columnwidth]{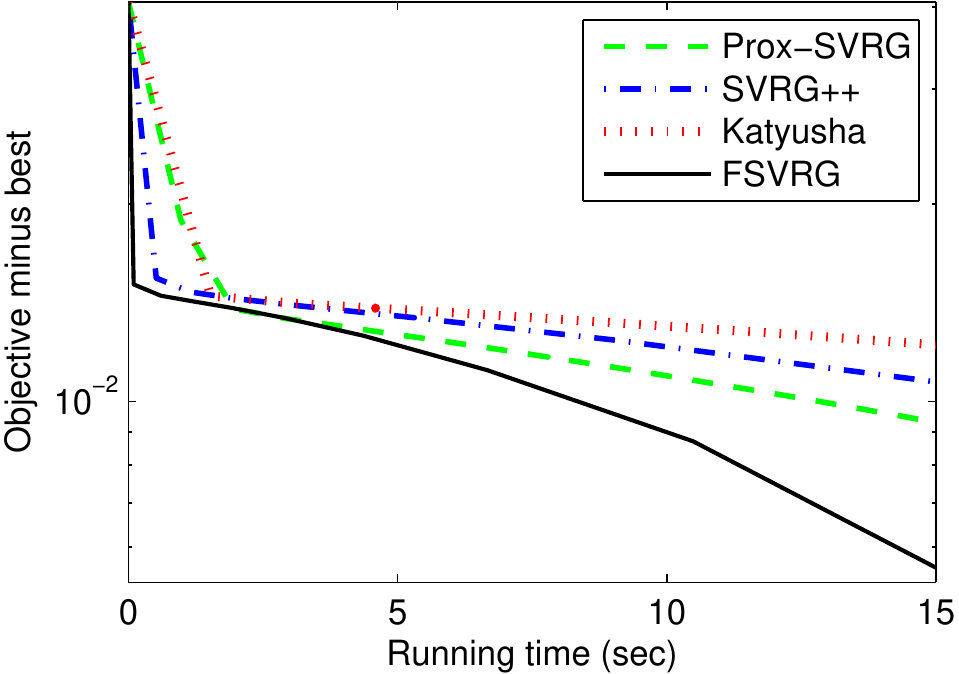}}
\subfigure[SUSY: $\lambda\!=\!10^{-7}$]{\includegraphics[width=0.243\columnwidth]{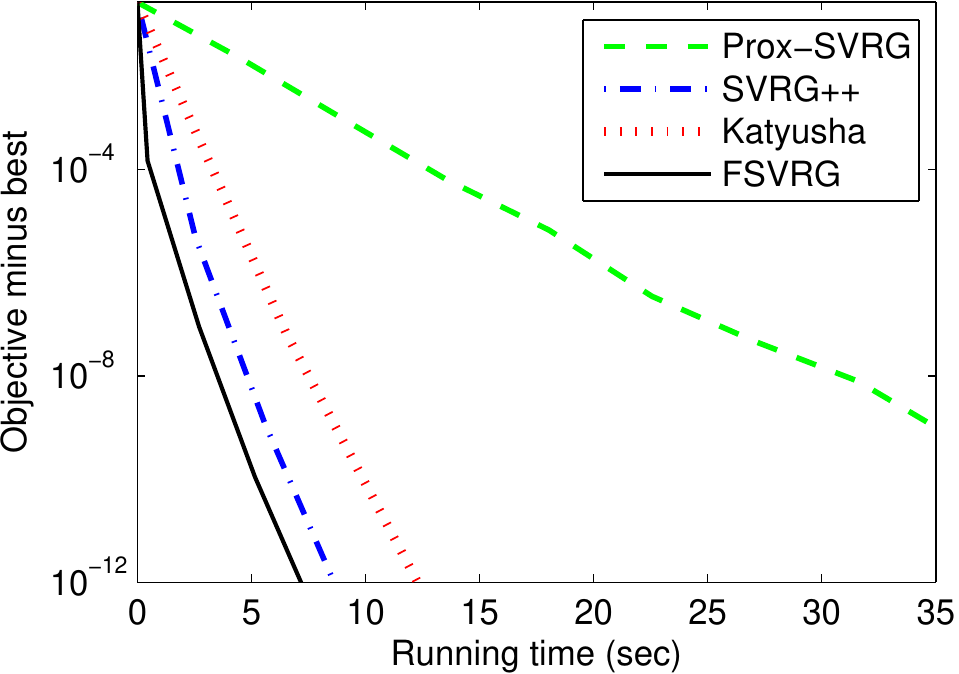}}
\caption{Comparison of Prox-SVRG~\cite{xiao:prox-svrg}, SVRG++~\cite{zhu:vrnc}, Katyusha~\cite{zhu:Katyusha}, and FSVRG for solving Lasso problems on the four data sets: IJCNN, Protein, Covtype, and SUSY. Note that the $y$-axis represents the objective value minus the minimum, and the $x$-axis corresponds to the running time (seconds).}
\label{figs14}
\end{figure}

\end{document}